\icmltitlerunning{Simultaneous Inference for Massive Data: Distributed Bootstrap}
\newtheorem{theo}{Theorem}[section]
\newtheorem{lemma}[theo]{Lemma}
\newtheorem{rem}[theo]{Remark}
\makeatletter\@addtoreset{equation}{section}\makeatother
\newcommand{\defn}{\ensuremath{: \, =}}
\newcommand{\matrixnorm}[1]{\left|\!\left|\!\left|{#1}\right|\!\right|\!\right|}
\newcommand{\lambdamin}{\lambda_{\tiny{\min}}}
\newcommand{\lambdamax}{\lambda_{\tiny{\max}}}
\newcommand{\R}{\mathbb{R}}
\newcommand{\Ee}{\mathbb{E}}
\newcommand{\bg}{\mathbf{g}}
\newcommand{\bh}{\mathbf{h}}
\newcommand{\cL}{\mathcal{L}}
\newcommand{\cLs}{\mathcal{L}^\ast}
\newcommand{\cN}{\mathcal{N}}
\newcommand{\cM}{\mathcal{M}}
\DeclareMathOperator*{\argmin}{\arg\min}
\DeclareMathOperator{\cov}{cov}
\newcommand{\btheta}{\bar \theta}
\newcommand{\thetas} {\theta^\ast}
\newcommand{\htheta}{\widehat \theta}
\newcommand{\ttheta}{\widetilde \theta}
\newcommand{\tT}{\widetilde\Theta}
\newcommand{\labitemc}[2]{%
\def\@itemlabel{\textbf{#1}}
\item
\def\@currentlabel{#1}\label{#2}}
\renewenvironment{proof}[1][\proofname]{{\noindent\bfseries Proof of #1. }}{\qed}
\begin{document}

\twocolumn[
\icmltitle{Simultaneous Inference for Massive Data: Distributed Bootstrap}



\icmlsetsymbol{equal}{*}

\begin{icmlauthorlist}
\icmlauthor{Yang Yu}{pu}
\icmlauthor{Shih-Kang Chao}{um}
\icmlauthor{Guang Cheng}{pu}
\end{icmlauthorlist}

\icmlaffiliation{pu}{Department of Statistics, Purdue University, USA}
\icmlaffiliation{um}{Department of Statistics, University of Missouri, USA}

\icmlcorrespondingauthor{Guang Cheng}{chengg@purdue.edu}

\icmlkeywords{Machine Learning, ICML}

\vskip 0.3in
]



\printAffiliationsAndNotice{}  

\begin{abstract}

In this paper, we propose a bootstrap method applied to massive data processed distributedly in a large number of machines. This new method is computationally efficient in that we bootstrap on the master machine without over-resampling, typically required by existing methods \cite{kleiner2014scalable,sengupta2016subsampled}, while provably achieving optimal statistical efficiency with minimal communication. Our method does not require repeatedly re-fitting the model but only applies multiplier bootstrap in the master machine on the gradients received from the worker machines. Simulations validate our theory.

 


\end{abstract}

\section{Introduction}

\subsection{Background}

Modern massive data, with enormous sample size, are usually too hard to fit on a single machine. A master-slave architecture is often adopted using a cluster of nodes for data storage and processing; for example, Hadoop, as one of the most popular distributed framework, has facilitates distributed data processing; see Figure \ref{fig:master_slave} for a diagram of the master-slave architecture \cite{singh2014hadoop}, where the master node has also a portion of the data. A shortcoming of this architecture is that inter-node communication (between master and worker nodes) is through the TCP/IP protocol, which can be over a thousand times slower than intra-node computation and always comes with significant overhead \cite{lan2018communication,fan2019communication}. For these reasons, statistical inference for modern distributed data is very challenging, and communication efficiency is a desirable feature when developing distributed learning algorithms.



\begin{figure}[ht]
\vskip 0.2in
\begin{center}
\centerline{\includegraphics[width=0.8\columnwidth]{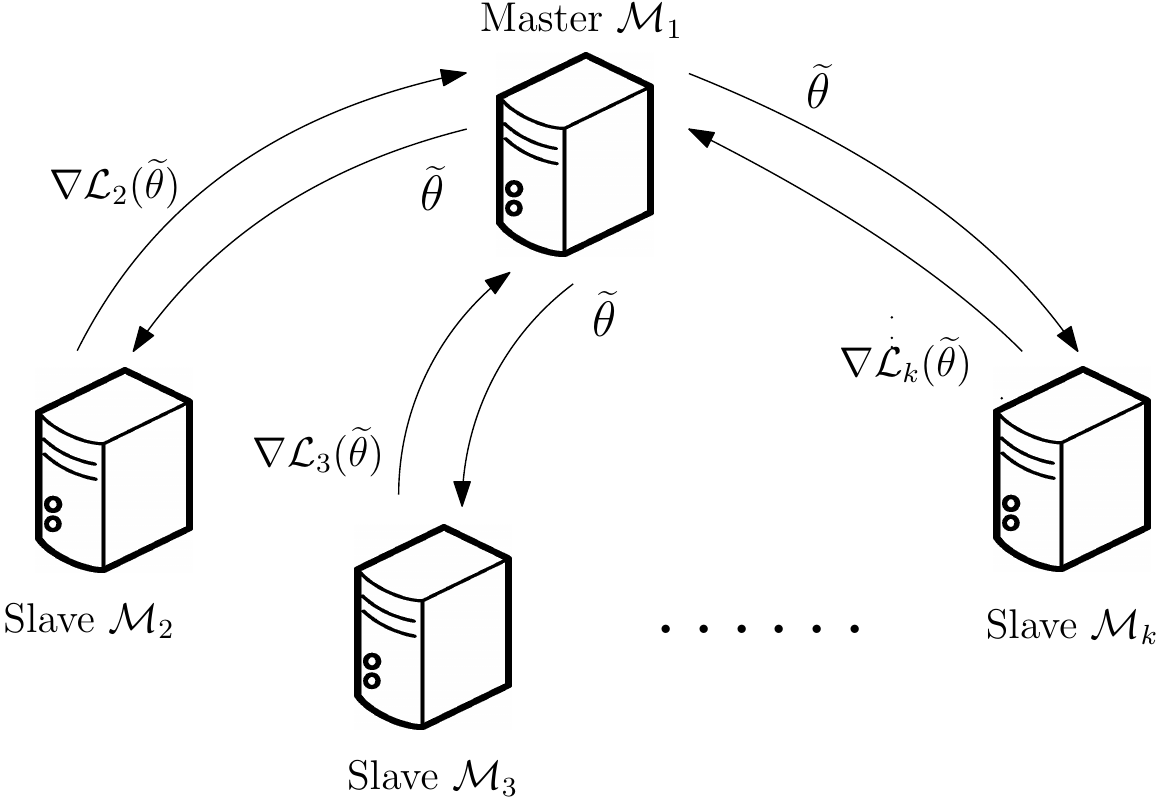}}
\caption{Master-slave architecture for storing and processing distributed data.}
\label{fig:master_slave}
\end{center}
\vskip -2em
\end{figure}

However, classical statistical procedures, which typically require many passes (in hundreds or even thousands) over the entire data set, are very communication-inefficient or even impossible to perform, including popular methods such as bootstrap, Bayesian inference and many maximum likelihood estimation procedures. Over the last few years, many papers proposed computational procedures for estimation from the maximum likelihood criteria \cite{zhang2012communication,li2013statistical,chen2014split,huang2015distributed,battey2015distributed,zhao2016partially,fan2017distributed,lee2017communication,wang2017improved,wang2017efficient,shi2018massive,jordan2019communication,volgushev2019distributed,banerjee2019divide,fan2019communication}. 

As a popular method for approximating the sample distribution of an estimator, Bootstrap, without modifications, is inapplicable in the environment of distributed processing. It typically requires hundreds or thousands of resamples that is of the same size as the original data, which is impossible for large-scale data stored in different locations. 

\subsection{Our Contributions}

In this paper, we first consider a na\"ive bootstrap method, named as \texttt{k-grad}, that uses local gradients from each machine, where $k$ is the number of machines. To provide higher accuracy, an improved version, named as 
\texttt{n+k-1-grad} bootstrap, is introduced. Both are communication (inter-node) and computation (intra-node) efficient 
for 
generalized linear models (GLM). Our methods can be easily extended to other statistical models. The statistical accuracy and efficiency are proved theoretically, and validated by simulations. 

Our \texttt{n+k-1-grad} method overcomes many constraints faced by the existing methods:
\vspace{-1em}
\begin{itemize}[noitemsep]
    \item It preserves bootstrap validity, while relaxing the constraints on the number of machines. 
    \item The computational cost of the bootstrap procedure is as small as it is conducted only on the master node;
    \item It performs statistical inference on 
    a group of parameters simultaneously, rather than on only individual parameters.
\end{itemize}

\subsection{Related Works}


The bag of little bootstraps (BLB) \cite{kleiner2014scalable} is one of the earliest methods that can be used in a distributed setting. However,
to achieve the bootstrap validity, they require that the number of machines has to be smaller than the sample size on local machine, while our methods relax such a requirement. In terms of intra-node computational cost, our methods are more efficient than BLB as expensive model re-fitting on each worker node is not required for obtaining each bootstrap sample (see Table \ref{tab:comp} for an empirical comparison on computational cost). The SDB approach \cite{sengupta2016subsampled} was proposed to improve upon BLB in terms of intra-node computational efficiency; however, it fails for both small and large number of machines, as witnessed in our simulation study.


\subsection{Notations}

We denote the $\ell_p$-norm ($p>0$) of any vector $v=(v_1,\dots,v_n)$ by $\|v\|_p=(\sum_{i=1}^n |v_i|^p)^{1/p}$ ($\|v\|_\infty=\max_{1\leq i\leq n}|v_i|$). We denote the induced $p$-norm and the max-norm of any matrix $M\in\R^{m\times n}$ (with element $M_{ij}$ at $i$-th row and $j$-th column) by $\matrixnorm{M}_p=\sup_{x\in\R^n;\|x\|_p=1} \|Mx\|_p$ and $\matrixnorm{M}_{\max}=\max_{1\leq i\leq m;1\leq j\leq n}|M_{i,j}|$. We write $a\lesssim b$ if $a=O(b)$, and $a\ll b$ if $a=o(b)$.

\section{Methodology}


Suppose i.i.d.\ data $\{Z_i\}_{i=1}^N$ with the same distribution as $Z$ are observed, and $\cL(\theta;Z)$ is a twice-differentiable convex loss function of $\theta=(\theta_1,\dots,\theta_d)\in\R^d$, which depends on a random variable $Z$. Suppose that the parameter of interest $\thetas$ is the minimizer of an expected loss: $$\thetas=\arg\min_{\theta\in\R^d} \cLs(\theta), \mbox{  where $\cLs(\theta)\defn\Ee_Z[\cL(\theta;Z)]$}.$$ 

 
\subsection{Distributed Data Processing} \label{sec:alg}

Assuming the data $\{Z_i\}_{i=1}^N$ is too large to be processed by a single machine, so an estimator for $\thetas$ cannot be straightforwardly obtained by minimizing the empirical loss. Instead, a distributed computation framework will be considered. Suppose the $N$ data are stored distributedly in $k$ machines, where each machine has $n$ data. Denote $\{Z_{ij}\}_{i=1,\dots,n; j=1,...,k}$ the entire data, where $Z_{ij}$ is $i$th datum on the $j$th machine $\cM_j$, and $N=nk$. Without loss of generality, assume that the first machine $\cM_1$ is the master node (see Figure \ref{fig:master_slave}).  
Define the local and global loss functions as 
\begin{align}
\begin{split}
    \mbox{global loss: }\cL_N(\theta)&=\frac1k\sum_{j=1}^k\cL_j(\theta),\quad\mbox{where}\\ \mbox{local loss: }\cL_j(\theta)&=\frac1n\sum_{i=1}^n\cL(\theta;Z_{ij}),\quad j=1,\dots,k. 
\end{split}
\label{eq:loss}
\end{align}
Recall that communication between the master and worker nodes are costly in the parallel processing framework, e.g.\ Hadoop. 

The goal in this paper is to obtain \emph{simultaneous} confidence region for $\thetas$ in low-dimensional regime. Simultaneous inference has become a common problem in many areas of application, such as financial economics, signal processing, marketing analytics, biological sciences, and social science \cite{cai2017large, zhang2017simultaneous}, where researchers want to investigate a group of variables at the same time, instead of a single variable at a time. Variable selection is usually done by simultaneous inference. 

The empirical loss minimizer is defined as:
\begin{align}
	\htheta=\arg\min_{\theta\in\R^d}\cL_N(\theta). \label{eq:ht}
\end{align}
Simultaneous confidence region can be found with confidence $1-\alpha$, for small $0<\alpha<1$, by finding the quantile
\begin{align}
c(\alpha)&\defn\inf\{t\in\R:P(\widehat T \leq t)\geq\alpha\} \quad\text{where} \label{eqn:c} \\
\widehat T&\defn \big\|\sqrt N\big(\htheta-\thetas\big)\big\|_\infty. \label{eqn:that}
\end{align}
The asymptotic distribution of $\htheta$ has been derived \cite{eicker1963asymptotic, gourieroux1981asymptotic}, and confidence regime can be constructed by finding the quantiles of $\widehat T$ in \eqref{eqn:that}.

While the procedure above has been well-developed if the data can be processed with a single machine, implementing $\htheta$ in a distributed framework faces two challenges:
\begin{itemize}
	\item $\htheta$ usually cannot be easily obtained due to significant communication requirement, so statistical inference for $\thetas$ has to be done via 
a surrogate estimator $\ttheta$, 
which imitates the distribution of $\htheta$ that is called the oracle estimator.
	\item Estimating $c(\alpha)$ is usually done via bootstrapping the distribution of \eqref{eqn:that} \cite{dasgupta2008asymptotic, efron1994introduction}. Unfortunately, implementing bootstrap is difficult in the distributed computational framework. The existing methods suffer from high computational cost due to resampling/model refitting in each worker nodes \cite{kleiner2014scalable,sengupta2016subsampled} or requiring a  large number of machines \cite{sengupta2016subsampled}.  
\end{itemize}

To perform statistical inference in distributed computational framework, a surrogate estimator $\ttheta$ satisfying $\|\ttheta-\htheta\|_\infty = o_p(N^{-1/2})$ (if $d$ is fixed) will be obtained (see Section \ref{sec:m}), and then we propose new distributed bootstrap algorithms to estimate the quantile $c(\alpha)$ of $\widehat T$ in \eqref{eqn:that}. 

\subsection{Distributed Bootstrap Algorithms}\label{sec:boot}

The new statistical inferential procedure in this paper is motivated by the fact that $\htheta$ in \eqref{eq:ht} can be expressed like a sample average \cite{he1996general}: 
\begin{align}
\begin{split}
    	&\sqrt N(\htheta-\thetas) \\
	&=\underbrace{-\nabla^2\cLs(\thetas)^{-1}\frac1{\sqrt N}\sum_{i=1}^n\sum_{j=1}^k\nabla\cL(\thetas;Z_{ij})}_{\defn A}+o_P(1). 
\end{split}
\label{eqn:bahadur}
\end{align}
It can be seen that the asymptotic distribution of $\sqrt N(\htheta-\thetas)$ is determined by that of $A$. Note that any surrogate estimator $\ttheta$ satisfying $\|\ttheta-\htheta\|_\infty = o_p(N^{-1/2})$ also has the same expansion. Among many ways to bootstrap the distribution of $A$, we focus on the multiplier bootstrap \cite{chernozhukov2013gaussian, vaart1996weak}. 



Multiplier bootstrap repeatedly generates $N$ i.i.d.\ $\cN(0,1)$ multipliers $\{\epsilon_{ij}^{(b)}\}_{i=1,\dots,n;j=1,\dots,k}$ for each $b=1,...,B$, and then approximate $c(\alpha)$ by the percentile of $\{{W^*}^{(b)}\}_{b=1,\dots,B}$, where
\begin{align}
	{W^*}^{(b)}=\bigg\|-\nabla^2\cLs(\hat\theta)^{-1}\frac1{\sqrt N}\sum_{j=1}^k\sum_{i=1}^n\epsilon_{ij}^{(b)}(\hat \bg_{ij}-\hat \bg)\bigg\|_\infty, \label{eqn:kgrad_2}
\end{align}
with $\hat \bg_{ij}=\nabla\cL(\hat\theta;Z_{ij})$, $\hat\bg=N^{-1}\sum_{j=1}^k\sum_{i=1}^n \hat\bg_{ij}$, and the Hessian $\nabla^2\cL_{N}(\hat\theta)^{-1}$. However, computing ${W^*}^{(b)}$ for one $b$ requires one communication in the distributed computational framework, so the computational cost is formidable when, e.g.\ $B=500$.

To adapt the multiplier bootstrap for distributed computational framework, we propose the $\texttt{k-grad}$ bootstrap, which replaces \eqref{eqn:kgrad_2} by
\begin{align}
	\overline W^{(b)} \defn \bigg\|\underbrace{-\widetilde\Theta\frac1{\sqrt{k}}\sum_{j=1}^k\epsilon_j^{(b)}\sqrt n(\bg_j-\bar\bg)}_{\defn\overline A}\bigg\|_\infty, \label{eqn:wb}
\end{align}
with $\epsilon_j^{(b)}\overset{iid}{\sim}\cN(0,1)$, $\bg_j=\nabla\cL_j(\tilde\theta)$, $\bar\bg = k^{-1}\sum_{j=1}^k \bg_j$, and a surrogate estimator $\tilde\theta$ (Section \ref{sec:m}) to replace $\hat\theta$ for communication efficiency, and a surrogate $\tilde\Theta$ for the Hessian $\nabla^2\cL_{N}(\hat\theta)^{-1}$. Particularly, the computation of $\tilde\Theta$, detailed in Algorithm \ref{alg:kgrad+csl}, will only use the data in the master and $\tilde\theta$. The key advantage of bootstrapping \eqref{eqn:wb} over \eqref{eqn:kgrad_2} is that, once the master has the gradients from the worker nodes, the percentile of $\{{\overline W}^{(b)}\}_{b=1,\dots,B}$ can be computed in the master node only, without the need to communicate with worker nodes. See Algorithm \ref{alg:kgrad} (\texttt{method}=`\texttt{k-grad}') for details.

\begin{algorithm}[tb]
\caption{\texttt{DistBoots$(\text{method},\ttheta,\{\bg_j\}_{j=1,\dots,k},\widetilde\Theta)$}: only need the master node $\cM_1$} \label{alg:kgrad}
\begin{algorithmic}
\STATE {\bfseries Input:} master node $\cM_1$ obtains local gradient $\bg_j$, estimate $\widetilde\Theta$ of inverse population Hessian
\STATE Compute $\bar\bg = k^{-1}\sum_{j=1}^k \bg_j$
\FOR{$b = 1,2,\ldots, B $}
\STATE Generate $k$ independent $\cN(0,1)$: $\{\epsilon_{1}^{(b)},\epsilon_{2}^{(b)},\ldots,\epsilon_{k}^{(b)}\}$\label{line:kmult}
\IF{\texttt{method}=`\texttt{k-grad}'}
\STATE Compute $W^{(b)}$ by \eqref{eqn:wb}
\ELSIF{\texttt{method}=`\texttt{n+k-1-grad}'}
\STATE Compute $W^{(b)}$ by \eqref{eqn:wt}
\ENDIF
\ENDFOR
\STATE Compute the percentile $c_{W}(\alpha)$ of $\{W_1,W_2,...,W_B\}$ for $\alpha\in(0,1)$ 
\STATE Return $\ttheta_l\pm N^{-1/2}c_W(\alpha)$, $l=1,\dots,d$
\end{algorithmic}
\end{algorithm}

%

A problem with the \texttt{k-grad} procedure is that it may perform poorly when $k$ is small, e.g.\ $k=2$ or $3$, as can be seen from the simulation analysis (Section \ref{sec:exp}). This is due to the failure of bootstrapping the variance with only 2 or 3 multipliers. 
This problem can be alleviated by using a unique multiplier to each datum in the master node $\cM_1$; that is, 
\begin{align}
\begin{split}
    	\widetilde W^{(b)}\defn \bigg\|&-\widetilde\Theta\frac1{\sqrt{n+k-1}}\bigg(\sum_{i=1}^n\epsilon_{i1}^{(b)}(\bg_{i1}-\bar\bg) \\
	&\underbrace{\hspace{60pt}+\sum_{j=2}^k\epsilon_j^{(b)}\sqrt n(\bg_j-\bar\bg) \bigg)}_{\defn\widetilde A}\bigg\|_\infty. 
\end{split}
\label{eqn:wt}
\end{align}
where $\epsilon_{i1}^{(b)}$ and $\epsilon_j^{(b)}$ are i.i.d. $\cN(0,1)$ multipliers in $i$, $j$ and $b$, and $\bg_{i1}=\nabla\cL(\tilde\theta;Z_{i1})$ is based on a single datum $Z_{i1}$ in the master. We call this method the \texttt{n+k-1-grad}. Note that the percentile of $\{{\widetilde W}^{(b)}\}_{b=1,\dots,B}$ can still be computed using only $\cM_1$, without needing to communicate with other machines. See Algorithm \ref{alg:kgrad} (\texttt{method}=`\texttt{n+k-1-grad}') for details. $\texttt{n+k-1-grad}$ can apply even when $k$ is small.

Besides simultaneous inference, our methods also apply to other problems, such as pointwise confidence intervals and confidence regions of other shapes, by replacing $\|\cdot\|_\infty$ with $|(\cdot)_l|$, $\|\cdot\|_2$, and so on, where we denote by $(\cdot)_l$ the $l$-th element of a vector.

\subsection{CSL Estimator}\label{sec:m}

To apply \texttt{k-grad} or \texttt{n+k-1-grad}, we need a surrogate estimator $\ttheta$ of $\htheta$.
We adopt the communication-efficient surrogate likelihood algorithm [CSL, \cite{jordan2019communication}], which achieves the same rate as $\htheta$ at the cost of one or more rounds of communication. The CSL estimator converges to $\htheta$ even if $n\leq k$ with sufficient rounds of communication/iteration, and when $n>k$, only one round of communication is required to achieve $\|\ttheta-\htheta\|_\infty=o_p(N^{-1/2})$ if $d$ is fixed. 
See Algorithm \ref{alg:kgrad+csl} for a detailed description.


\begin{algorithm}[tb]
\caption{\texttt{k-grad}/\texttt{n+k-1-grad} with CSL: $\tau$ rounds of communication, $\tau\geq 1$}\label{alg:kgrad+csl}
\begin{algorithmic}
\STATE Compute $\ttheta^{(0)}=\argmin_\theta \cL_1(\theta)$ at $\cM_1$
\FOR{$t = 1,\ldots, \tau $}
\STATE Transmit $\ttheta^{(t-1)}$ to $\{\cM_j\}_{j=2,\dots,k}$
\STATE Compute $\nabla\cL_1(\ttheta^{(t-1)})$ and $\nabla^2\cL_1(\ttheta^{(t-1)})^{-1}$ at $\cM_1$
\FOR{$j = 2,\ldots, k $}
\STATE Compute $\nabla\cL_j(\ttheta^{(t-1)})$ at $\cM_j$
\STATE Transmit $\nabla\cL_j(\ttheta^{(t-1)})$ to $\cM_1$
\ENDFOR
\STATE $\nabla\cL_N(\ttheta^{(t-1)})\gets k^{-1}\sum_{j=1}^k\nabla\cL_j(\ttheta^{(t-1)})$ at $\cM_1$
\STATE $\ttheta^{(t)}\gets\ttheta^{(t-1)}-\nabla^2\cL_1(\ttheta^{(t-1)})^{-1}\nabla\cL_N(\ttheta^{(t-1)})$ at $\cM_1$
\ENDFOR
\STATE Run \texttt{DistBoots}$(\text{`\texttt{k-grad}' or `\texttt{n+k-1-grad}'},$ \\
\hspace{78pt}$\ttheta=\ttheta^{(\tau)},\{\bg_j=\nabla\cL_j(\ttheta^{(\tau-1)})\}_{j=1}^k,$ \\
\hspace{78pt}$\tT=\nabla^2\cL_1(\ttheta^{(\tau-1)})^{-1})$ at $\cM_1$ \label{line:save}
\end{algorithmic}
\end{algorithm}
\vskip -2em

\section{Theoretical Results}

Section \ref{sec:over} provides an overview of the theoretical results.
 Section \ref{sec:ld_lm} presents the theory in a linear model framework for \texttt{k-grad} and \texttt{n+k-1-grad}. Section \ref{sec:ld_glm} shows the results for the generalized linear models (GLM).

\subsection{An Overview}\label{sec:over}

Figure \ref{fig:tau} shows the minimal number of iterations $\tau_{\min}$ (communication rounds) that is sufficient for the bootstrap validity. Panels in the top row of Figure \ref{fig:tau} illustrate the lower bound of $\tau$ for linear models given in Theorems \ref{theo:ld0_csl} and \ref{theo:ld_csl} of Section \ref{sec:ld_lm}, and those in the bottom row illustrating the results for the generalized linear models given in Theorem \ref{theo:ld0_glm_csl} and \ref{theo:ld_glm_csl} of Section \ref{sec:ld_glm}

As a general pattern of Figure \ref{fig:tau}, 
$\tau_{\min}$ is increasing in $k$ (decreasing in $n$) for both \texttt{k-grad} and \texttt{n+k-1-grad} and (generalized) linear model; 
in addition, $\tau_{\min}$ is (logarithmically) increasing in $d$. 

For the difference between \texttt{k-grad} and \texttt{n+k-1-grad}, we compare the left and right panel of Figure \ref{fig:tau}. With fixed $(n,k,d)$, the $\tau_{\min}$ for \texttt{n+k-1-grad} is always no larger than that for \texttt{k-grad}, which indicates a greater efficiency of \texttt{n+k-1-grad}. As $k$ is small, \texttt{k-grad} would not work, while \texttt{n+k-1-grad} can provably work. In addition, $\tau_{\min}=1$ can work for certain instances of \texttt{n+k-1-grad} but never for \texttt{k-grad}.

For the comparison between the linear model (top panels) and generalized linear model (bottom panels), GLMs require larger $n$ than linear models in order to ensure our bootstrap procedures work.

\begin{figure}[ht]
\vskip 0.2in
\begin{center}
\centerline{\includegraphics[width=\columnwidth]{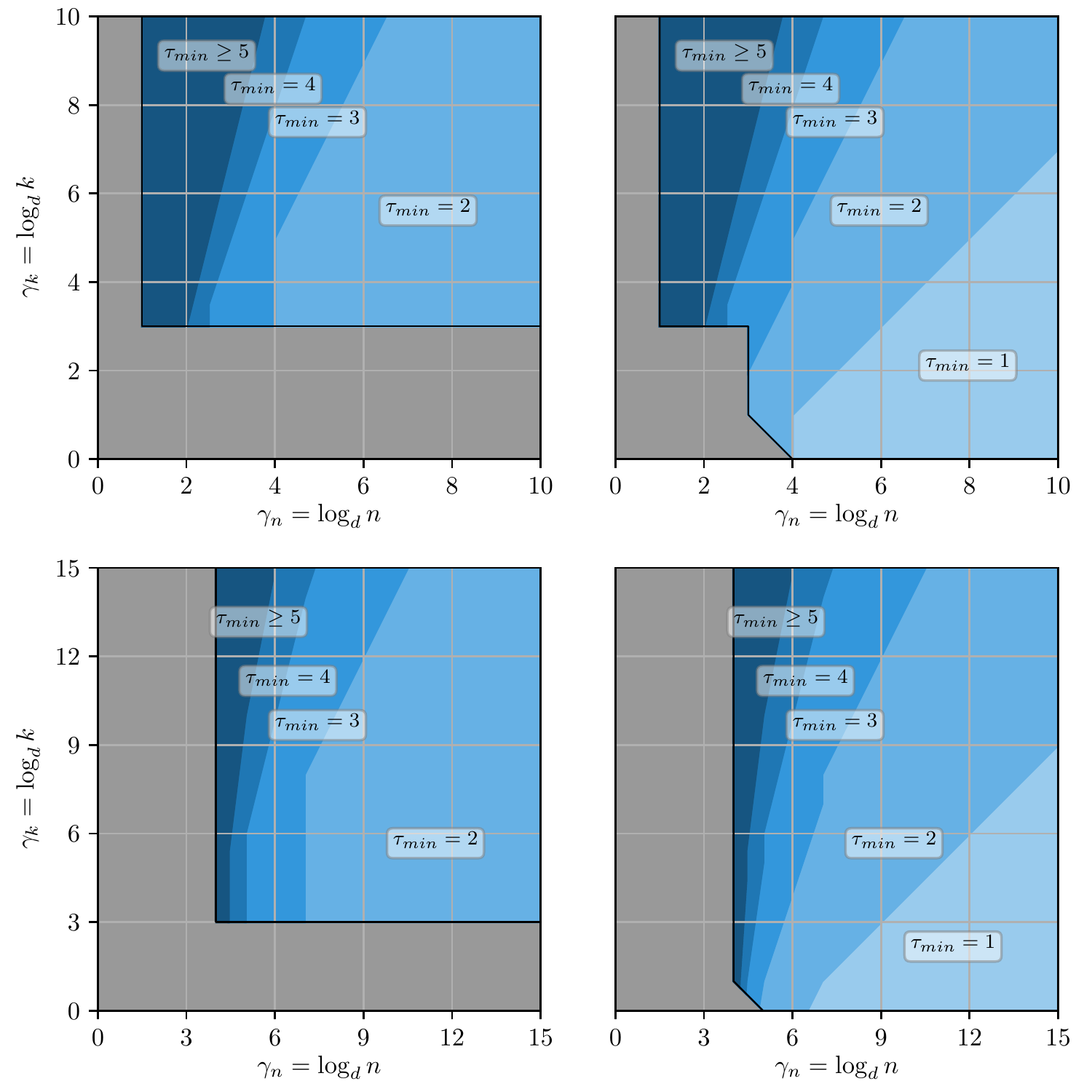}}
\caption{Illustration of Theorems \ref{theo:ld0_csl} (\textbf{top left}: linear model, \texttt{k-grad}), \ref{theo:ld_csl} (\textbf{top right}: linear model, \texttt{n+k-1-grad}), \ref{theo:ld0_glm_csl} (\textbf{bottom left}: GLM, \texttt{k-grad}), and \ref{theo:ld_glm_csl} (\textbf{bottom right}: GLM, \texttt{n+k-1-grad}). Gray area represents the region where the theorems do not validate the bootstrap procedures, and the other area is colored blue of varying lightness according to the lower bound of iteration $\tau$. 
}
\label{fig:tau}
\end{center}
\vskip -2em
\end{figure}

\subsection{Linear Model} \label{sec:ld_lm}

For simplicity, we start with the linear model. Suppose that $N$ i.i.d.\ observations come from a linear model, $y=x^\top\thetas+e$,
with unknown coefficient vector $\thetas\in\R^d$, covariate random vector $x\in\R^d$, and noise $e\in\R$ independent of $x$ with zero mean and variance of $\sigma^2$.  We define $\Sigma=\Ee[xx^\top]$ with its inverse $\Theta=\Sigma^{-1}$.  We consider the least-squares loss $\cL(\theta;z)=\cL(\theta;x,y)=(y-x^\top\theta)^2/2$.  We impose the following assumptions on the linear model.
\begin{itemize}
	\labitemc{(A1)}{as:design} $x$ is sub-Gaussian, that is,
	$$\sup_{\|w\|_2\leq1}\Ee\big[\exp[(w^\top x)^2/L^2]\big]=O(1),$$
	for some absolute constant $L>0$.  Moreover, $1/\lambdamin(\Sigma)\leq\mu$ for some absolute constant $\mu>0$.
	
	\labitemc{(A2)}{as:noise} $e$ is sub-Gaussian, that is,
	$$\Ee\big[\exp[e^2/L'^2]\big]=O(1),$$
	for some absolute constant $L'>0$.  Moreover, $\sigma>0$ is an absolute constant.
	
\end{itemize}

Under the assumptions, we first investigate the theoretical property of Algorithm \ref{alg:kgrad+csl}, where we apply \texttt{k-grad} along with the CSL estimator that takes advantage of multiple rounds of communication.  We define
\begin{align}
    T&\defn \big\|\sqrt N\big(\ttheta-\thetas\big)\big\|_\infty,\quad\text{and} \label{eqn:t} \\
    c_{\overline W}(\alpha)&\defn\inf\{t\in\R:P_\epsilon(\overline W\leq t)\geq\alpha\}, \notag
\end{align}
where $P_\epsilon$ denotes the probability with respect to the randomness from all the multipliers, $\overline W$ has the same distribution as $\overline W^{(b)}$ in \eqref{eqn:wb}, and $\ttheta$ and $\btheta$ are the $\tau$-step and $\tau-1$-step CSL estimators as specified in Algorithm \ref{alg:kgrad+csl}. Now, we state a result for \texttt{k-grad} bootstrap procedure with the CSL estimator.

\begin{theo}[\texttt{k-grad}, linear model]\label{theo:ld0_csl}
	Suppose \ref{as:design}-\ref{as:noise} hold, and that we run Algorithm \ref{alg:kgrad+csl} with \texttt{k-grad} method in linear model.  Assume $n=d^{\gamma_n}$ and $k=d^{\gamma_k}$ for some constants $\gamma_n,\gamma_k\geq0$. If $\gamma_n>1$, $\gamma_k>3$, $\tau\geq\tau_{\min}$, where
	$$\tau_{\min}=1+\bigg\lfloor\max\bigg\{\frac{\gamma_k+1}{\gamma_n-1},1+\frac{3}{\gamma_n-1}\bigg\}\bigg\rfloor,$$
	then we have
	\begin{align}
	\sup_{\alpha\in(0,1)}|P(T\leq c_{\overline W}(\alpha))-\alpha|=o(1). \label{eqn:kgradthm}
	\end{align}
	  In addition, \eqref{eqn:kgradthm} also holds if $T$ is replaced by $\widehat T$.
\end{theo}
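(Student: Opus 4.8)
The plan is to reduce the distributional statement to a high-dimensional Gaussian approximation for a max-type statistic and then verify that the multiplier bootstrap reproduces the correct limiting covariance. First I would linearize: using the Bahadur-type expansion \eqref{eqn:bahadur}, both $T$ and $\widehat T$ equal $\|A\|_\infty$ up to a remainder, where for the least-squares loss $\nabla\cL(\thetas;Z_{ij})=-e_{ij}x_{ij}$, so $A$ is a normalized sum of the i.i.d.\ mean-zero vectors $\Theta e_{ij}x_{ij}$ with covariance $\sigma^2\Theta$. The crucial point is that replacing $\htheta$ by the surrogate $\ttheta$ adds only $\sqrt N\,\|\ttheta-\htheta\|_\infty$; the contraction property of CSL (each round contracts the error toward $\htheta$ by a factor of order $(d/n)^{1/2}$, up to logarithmic and polynomial-in-$d$ factors coming from the $\ell_\infty$ norm, down to the statistical floor) yields a bound that is $o_p(N^{-1/2})$ precisely once $\tau\geq\tau_{\min}$. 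This is where the branch $1+3/(\gamma_n-1)$ of $\tau_{\min}$ enters, from balancing the per-round rate against the tolerance $N^{-1/2}$. Because linearization treats $T$ and $\widehat T$ identically, the final sentence (with $\widehat T$) follows at no extra cost.

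Second, I would invoke a high-dimensional central limit theorem \cite{chernozhukov2013gaussian} to show $\sup_t|P(\|A\|_\infty\leq t)-P(\|G\|_\infty\leq t)|=o(1)$ with $G\sim\cN(0,\sigma^2\Theta)$, together with an analogous conditional Gaussian approximation for $\overline W=\|\overline A\|_\infty$, where $\overline A$ from \eqref{eqn:wb} is Gaussian over the multipliers with covariance $\widehat\Xi=\widetilde\Theta\,\widehat V\,\widetilde\Theta$ and $\widehat V=(n/k)\sum_{j=1}^k(\bg_j-\bar\bg)(\bg_j-\bar\bg)^\top$. The two conclusions are glued by the Gaussian comparison lemma together with Nazarov-type anti-concentration: since the density of $\|G\|_\infty$ near its quantiles is bounded, the covariance closeness $\matrixnorm{\widehat\Xi-\sigma^2\Theta}_{\max}=o_p(1)$ suffices to make the quantiles $c_{\overline W}(\alpha)$ consistent uniformly in $\alpha$, which yields \eqref{eqn:kgradthm}.

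The main obstacle is the bootstrap side, and it explains both $\gamma_k>3$ and the first branch $(\gamma_k+1)/(\gamma_n-1)$ of $\tau_{\min}$. Conditionally on the data, $\overline A$ is a sum of only $k$ independent summands (one per machine), so the Gaussian approximation for $\overline W$ has a rate governed by $k$ rather than $N$; pushing a CCK-type bound of the form $\mathrm{poly}(\log d)\,k^{-1/6}$ below the anti-concentration scale forces $k$ to be polynomially large in $d$, which is the origin of $\gamma_k>3$. On top of this, $\overline A$ uses gradients evaluated at $\ttheta$ (equivalently $\btheta$) rather than $\thetas$, so each $\bg_j-\bar\bg$ inherits a bias proportional to $\ttheta-\thetas$; requiring this bias, after being amplified by $\sqrt n$ and propagated into $\widehat\Xi=\widetilde\Theta\,\widehat V\,\widetilde\Theta$ in max norm, to fall below the tolerance is what couples $\tau$ to $\gamma_k$ through $(\gamma_k+1)/(\gamma_n-1)$. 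Simultaneously controlling the sample-covariance error $\matrixnorm{\widehat V-\sigma^2\Sigma}_{\max}=O_p(\sqrt{\log d/k})$, the optimization bias, and the two Gaussian-approximation rates so that they all sit below the $N^{-1/2}$ scale is the delicate accounting at the heart of the argument; the remaining steps (anti-concentration and the comparison lemma) are then routine.
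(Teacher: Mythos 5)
Your architecture matches the paper's proof exactly (Bahadur linearization plus CSL contraction as in Lemmas~\ref{lem:m}~and~\ref{lem:csl}, a CCK Gaussian approximation for the oracle statistic, Gaussian comparison with anti-concentration, and max-norm covariance closeness as in Lemma~\ref{lem:ld0}), but the quantitative accounting on the bootstrap side---which is precisely where the theorem's constants $\gamma_k>3$ and $\tau_{\min}$ come from---is wrong in a way that would prevent you from recovering the statement. First, there is no conditional central limit theorem over the $k$ machine-level summands to be proved: the multipliers $\epsilon_j^{(b)}$ are Gaussian, so conditionally on the data $\overline A$ is \emph{exactly} $\cN(0,\overline\Omega)$, and the paper never invokes a $k^{-1/6}$ Berry--Esseen rate; CCK's Theorem~3.2/Corollary~2.1 is applied only to the $N$-sample statistic $T_0$ against the oracle bootstrap $W^*$ of \eqref{eqn:wsdef}, after which two exact conditional Gaussians are compared via the comparison lemma, requiring $\matrixnorm{\overline\Omega-\widehat\Omega}_{\max}$ small relative to $\log^{-(2+\kappa)}d$ (mere $o_p(1)$ is not enough). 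Second, even granting your phantom CLT step, a bound of the form $\mathrm{poly}(\log d)\,k^{-1/6}$ would only force $k\gg\mathrm{polylog}(d)$, so it cannot be ``the origin of $\gamma_k>3$.'' The true origin is the term $nd\,r_{\btheta}^2$ in Lemma~\ref{lem:gbd0_ld} evaluated at the \emph{statistical floor} $r_{\btheta}\asymp d\sqrt{\log d/N}$ of $\|\btheta-\thetas\|_1$ (a floor no number of iterations removes), which gives $d^3\log d/k$ and hence $\gamma_k>3$ after multiplying by the anti-concentration factor $\log^{2+\kappa}d$; the sample-covariance fluctuation over the $k$ blocks contributes only $d\sqrt{\log d/k}$ (the $d$ being $\matrixnorm{\widetilde\Theta}_\infty^2$ from Lemma~\ref{lem:hes_ld}), i.e.\ the weaker requirement $k\gg d^2\log^{5+\kappa}d$.

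You have also swapped the origins of the two branches of $\tau_{\min}$. The branch $(\gamma_k+1)/(\gamma_n-1)$ comes from the \emph{centering} requirement $\|\ttheta-\htheta\|_\infty\ll N^{-1/2}\log^{-1/2-\kappa}d$: since $N=nk$, the CSL error $(\sqrt{d/n})^{\tau+1}\sqrt{\log d}$ must beat $1/\sqrt{nk}$, and solving $(\tau+1)(\gamma_n-1)>\gamma_n+\gamma_k$ is how $\tau$ couples to $\gamma_k$. The branch $1+3/(\gamma_n-1)$ comes instead from the covariance-bias condition $\|\btheta-\thetas\|_1\ll(\sqrt{nd}\log^{1+\kappa}d)^{-1}$ needed to control $nd\,r_{\btheta}^2$, whose iterative part $(\sqrt{d/n})^{\tau}\sqrt{d\log d}$ involves no $k$ at all ($\tau(\gamma_n-1)>\gamma_n+2$). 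With your attributions reversed, solving your inequalities for $\tau$ would not reproduce the stated $\tau_{\min}$; so while the skeleton is the paper's, the proposal as written does not establish the theorem.
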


Theorem \ref{theo:ld0_csl} states that under certain conditions, simultaneous confidence region given by Algorithm \ref{alg:kgrad+csl} with \texttt{k-grad} method provides sufficient coverage. It also suggests that the bootstrap quantile approximates the quantile of the centralized estimator $\htheta$, and therefore, the bootstrap procedure is also statistically efficient.

Next, we present a theorem that establishes the validity and the efficiency of \texttt{n+k-1-grad} bootstrap procedure in Algorithm \ref{alg:kgrad+csl}.  
We define 
$$c_{\widetilde W}(\alpha)\defn\inf\{t\in\R:P_\epsilon(\widetilde W\leq t)\geq\alpha\},$$
where $\widetilde W$ has the same distribution as $\widetilde W^{(b)}$ in \eqref{eqn:wt}.

\begin{theo}[\texttt{n+k-1-grad}, linear model]\label{theo:ld_csl}
	Suppose \ref{as:design}-\ref{as:noise} hold, and that we run Algorithm \ref{alg:kgrad+csl} with \texttt{n+k-1-grad} method in linear model.  Assume $n=d^{\gamma_n}$ and $k=d^{\gamma_k}$ for some constants $\gamma_n,\gamma_k\geq0$. If $\gamma_n>1$, $\gamma_n\vee\gamma_k>3$, $\gamma_n+\gamma_k>4$, $\tau\geq\tau_{\min}$, where
	$$\tau_{\min}=1+\bigg\lfloor\frac{(\gamma_k-1)\vee(\gamma_n\wedge\gamma_k)\vee1+2}{\gamma_n-1}\bigg\rfloor,$$
	then we have
	\begin{align}
	\sup_{\alpha\in(0,1)}|P(T\leq c_{\widetilde W}(\alpha))-\alpha|=o(1). \label{eqn:nk1gradthm}
	\end{align}
	  In addition, \eqref{eqn:nk1gradthm} also holds if $T$ is replaced by $\widehat T$.
\end{theo}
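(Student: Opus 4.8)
The plan is to establish validity of the \texttt{n+k-1-grad} bootstrap by reducing the claim to a high-dimensional Gaussian approximation for the oracle statistic $\widehat T$ together with a Gaussian comparison between the conditional law of $\widetilde W$ and the target Gaussian maximum. For the least-squares loss the expansion \eqref{eqn:bahadur} can be made exact up to the gap between empirical and population precision matrices: writing $\bg_{ij}^\ast=-x_{ij}e_{ij}=\nabla\cL(\thetas;Z_{ij})$, one has $\sqrt N(\htheta-\thetas)=\Theta\,\tfrac1{\sqrt N}\sum_{i,j}\bg_{ij}^\ast+r_N$, with $r_N$ controlled by $\matrixnorm{\widehat\Sigma^{-1}-\Theta}_{\max}$ and the sub-Gaussian moments from \ref{as:design}--\ref{as:noise}. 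I would then apply the high-dimensional CLT of \cite{chernozhukov2013gaussian} to $\tfrac1{\sqrt N}\sum_{i,j}\bg_{ij}^\ast$, whose coordinatewise variance is $\sigma^2\Sigma$, and premultiply by $\Theta$ to get $\sup_t|P(\widehat T\leq t)-P(\|Y\|_\infty\leq t)|=o(1)$ with $Y\sim\cN(0,\sigma^2\Theta)$ (using $\Theta\Sigma\Theta=\Theta$). Since $N=d^{\gamma_n+\gamma_k}$ grows polynomially while the CLT error is a power of $\log d$ over a root of $N$, this step, combined with the plug-in errors below, is where the joint growth condition $\gamma_n+\gamma_k>4$ is consumed.

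Next I would pass from $\htheta$ to the surrogate $\ttheta$ and pin down $\tau_{\min}$. The gap obeys $|T-\widehat T|\leq\sqrt N\,\|\ttheta-\htheta\|_\infty$, and the CSL iteration of Algorithm \ref{alg:kgrad+csl} contracts $\|\ttheta^{(t)}-\htheta\|_\infty$ by a factor of order $d^{-(\gamma_n-1)/2}$ per round (reflecting $\sqrt{d/n}$ with $n=d^{\gamma_n}$, hence the requirement $\gamma_n>1$ for a genuine contraction and for an invertible master Hessian). After $\tau$ rounds both $\sqrt N\|\ttheta^{(\tau)}-\htheta\|_\infty$ and the analogous error at $\btheta=\ttheta^{(\tau-1)}$, which feeds the bootstrap gradients, become negligible exactly when the accumulated error budget $d^{(\gamma_n+\gamma_k)/2}$ is beaten by the per-step contraction; matching these yields the stated floor $\tau_{\min}=1+\lfloor((\gamma_k-1)\vee(\gamma_n\wedge\gamma_k)\vee1+2)/(\gamma_n-1)\rfloor$.

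The heart of the argument, and the step I expect to be the main obstacle, is the conditional covariance of the bootstrap. Given the data, $\widetilde A$ is centered Gaussian with covariance $\tT\,\widehat V\,\tT$, where
$$\widehat V=\frac{1}{n+k-1}\bigg(\sum_{i=1}^n(\bg_{i1}-\bar\bg)(\bg_{i1}-\bar\bg)^\top+n\sum_{j=2}^k(\bg_j-\bar\bg)(\bg_j-\bar\bg)^\top\bigg).$$
I would show $\|\widehat V-\sigma^2\Sigma\|_{\max}$, and hence $\|\tT\widehat V\tT-\sigma^2\Theta\|_{\max}$, is $o_P$ of the threshold required by the Gaussian comparison. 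This decomposition is precisely the source of the \texttt{n+k-1-grad} improvement: the first block is an average of $n$ i.i.d.\ terms $x_{i1}e_{i1}(x_{i1}e_{i1})^\top$ supplying within-master variance with max-norm fluctuation of order $\sqrt{(\log d)/n}$, while the second averages $k-1$ between-machine outer products fluctuating at order $\sqrt{(\log d)/k}$; together they estimate $\sigma^2\Sigma$ with effective sample size of order $\max(n,k)$, which is why the condition $\gamma_k>3$ of Theorem \ref{theo:ld0_csl} relaxes to $\gamma_n\vee\gamma_k>3$. The plug-in error from evaluating gradients at $\btheta$ instead of $\thetas$ must be absorbed here, re-using the CSL bound. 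Tracking these concentration bounds, which are polynomial in $d$, against the anti-concentration threshold is the delicate bookkeeping that produces the precise conditions and is where I anticipate the bulk of the work.

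Finally, with the covariance match in hand, the comparison inequality of \cite{chernozhukov2013gaussian} gives $\sup_t|P_\epsilon(\widetilde W\leq t)-P(\|Y\|_\infty\leq t)|=o_P(1)$, and the anti-concentration of $\|Y\|_\infty$ converts this into uniform closeness of quantiles, so $c_{\widetilde W}(\alpha)$ approximates the $\alpha$-quantile of $\|Y\|_\infty$ uniformly in $\alpha$. Chaining this with the first two steps yields $P(T\leq c_{\widetilde W}(\alpha))=P(\|Y\|_\infty\leq c_{\widetilde W}(\alpha))+o(1)=\alpha+o(1)$ uniformly over $\alpha\in(0,1)$, establishing \eqref{eqn:nk1gradthm}; repeating the argument with $\widehat T$ in place of $T$, now omitting the CSL step, gives the final assertion.
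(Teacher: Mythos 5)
Your proposal is correct and follows essentially the same route as the paper's proof: CSL contraction bounds (Lemmas \ref{lem:m} and \ref{lem:csl}) controlling $\sqrt N\|\ttheta-\htheta\|_\infty$ and $\|\btheta-\thetas\|_1$, a Bahadur-representation step, and the Chernozhukov--Chetverikov--Kato Gaussian approximation/comparison machinery driven by the conditional bootstrap covariance --- your $\tT\widehat V\tT$ is exactly the paper's $\widetilde\Omega$ in \eqref{eqn:ot}, whose max-norm error (Lemmas \ref{lem:gbd_ld} and \ref{lem:vcov_reg}) produces the same binding conditions, including the $\frac{nk}{n+k}r_{\btheta}^2$ plug-in term that yields $\gamma_n\vee\gamma_k>3$ and the statistical floor $d\sqrt{\log d/N}$ that yields $\gamma_n+\gamma_k>4$. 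The only cosmetic difference (modulo an immaterial sign slip in your expansion, since $\nabla\cL(\thetas;Z)=-xe$) is that you compare $\widetilde W$ directly to the Gaussian limit $\|Y\|_\infty$, whereas the paper routes both $T$ and $\widetilde W$ through the oracle multiplier statistic $W^*$ via Theorem 3.2 and Lemma 3.2 of \cite{chernozhukov2013gaussian}; the two intermediaries are interchangeable within that framework.
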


For a deeper look into the difference between \texttt{k-grad} and \texttt{n+k-1-grad}, we compare 
the difference between the covariance of the oracle score $A$ [defined in \eqref{eqn:bahadur}] and the conditional covariance of $\overline A$ (for \texttt{k-grad} [defined in \eqref{eqn:wb}], and $\widetilde A$ for \texttt{n+k-1-grad} [defined in \eqref{eqn:wt}]) conditioning on the data. These key quantities which determine how well the bootstrap procedure approximates the distribution of $\widehat T$. Conditioning on the data, we have the bounds 

\begin{align}
\begin{split}
&\matrixnorm{\cov_\epsilon(\overline A)-\cov(A)}_{\max}\leq d\|\ttheta^{(\tau-1)}-\thetas\|_1 \\
&\hspace{10pt}+nd\|\ttheta^{(\tau-1)}-\thetas\|_1^2 + O_P(\sqrt{d^2/k}+\sqrt{d/n}), 
\end{split}
\label{eqn:kgrad_err} \\
\begin{split}
&\matrixnorm{\cov_\epsilon(\widetilde A)-\cov(A)}_{\max}\leq d\|\ttheta^{(\tau-1)}-\thetas\|_1 \\
&+(n\wedge k)d\|\ttheta^{(\tau-1)}-\thetas\|_1^2 + O_P(\sqrt{d^2/(n+k)}+\sqrt{d/n}), 
\end{split}
\label{eqn:nk1grad_err}
\end{align}

up to logarithmic factors, provided $n\gtrsim d$. Comparing the two preceding equations, we first see that overall, \texttt{n+k-1-grad} \eqref{eqn:nk1grad_err} has a smaller error than \texttt{k-grad} \eqref{eqn:kgrad_err}. In particular, \texttt{k-grad} requires both $n$ and $k$ to be large, while \texttt{n+k-1-grad} requires a large $n$ but not a large $k$. 
In addition, a single round of communication could be enough for \texttt{n+k-1-grad}, but not for \texttt{k-grad}. To see it, if $\tau=1$, $\|\ttheta^{(0)}-\thetas\|_1$ is of order $O_P(d/\sqrt n)$, and the right-hand side of \eqref{eqn:kgrad_err} will grow with $d$; by contrast, the error in \eqref{eqn:nk1grad_err} still shrinks to zero as long as $k\ll n$.

\begin{rem}
Given that $d$ is fixed, $\tau=\lceil\log k/\log n\rceil$ is enough for CSL to achieve the optimal statistical rate \cite{jordan2019communication}. Under same circumstance, bootstrap consistency is warranted at the expense of at most one additional communication round $\tau_{\min}=1+\lfloor\log k/\log n\rfloor$ (Theorem \ref{theo:ld_csl}). 
\end{rem}

\begin{rem}
To apply BLB in the distributed setting, $k\lesssim n$ is required to achieve the higher order correctness of the bootstrap procedure \cite{kleiner2014scalable}. 
We conjecture that SDB requires $k\lesssim n$ as well, based on the  observations from simulation study in Section \ref{sec:exp2}. In contrast to BLB and SDB, \texttt{k-grad} (if $k\gg d^3$) and \texttt{n+k-1-grad} are both scalable to $k\gg n$, at the cost of a larger $\tau$. 
\end{rem}

\begin{rem} The non-asymptotic rate of $\sup_{\alpha\in(0,1)}\left|P(T\leq c_{\overline W}(\alpha))-\alpha\right|$ may be proven to be polynomial in $n$ and $k$, with a more delicate analysis. As an alternative, simultaneous inference can also be done with the the alternative extreme value distribution approach, but the convergence rate is at best logarithmic \cite{chernozhukov2013gaussian,zhang2017simultaneous}.
\end{rem}


\subsection{Generalized Linear Model} \label{sec:ld_glm}

In this section, we consider generalized linear models (GLMs), which generate i.i.d.\ observations $(x,y)\in\R^d\times\R$.
We assume that the loss function $\cL$ is of the form $\cL(\theta;z)=g(y,x^\top\theta)$ for $\theta,x\in\R^d$ and $y\in\R$ with $g:\R\times\R\to\R$, and $g(a,b)$ is three times differentiable with respect to $b$, and denote $\frac{\partial}{\partial b}g(a,b)$, $\left(\frac{\partial}{\partial b}\right)^2 g(a,b)$, $\left(\frac{\partial}{\partial b}\right)^3 g(a,b)$ by $g'(a,b)$, $g''(a,b)$, $g'''(a,b)$ respectively.  We let $\thetas$ be the unique minimizer of the expected loss $\cLs(\theta)$.  We impose the following assumptions on the GLM.

\begin{itemize}
	\labitemc{(B1)}{as:smth_glm} For some $\Delta>0$, and $\Delta'>0$ such that $|x^\top\thetas|\leq\Delta'$ almost surely,
	\begin{align*}
	\sup_{|b|\vee|b'|\leq\Delta+\Delta'}&\sup_a\frac{|g''(a,b)-g''(a,b')|}{|b-b'|}\leq1, \\
	\max_{|b_0|\leq\Delta} &\sup_a |g'(a,b_0)|=O(1),\quad\text{and} \\
	\max_{|b|\leq\Delta+\Delta'} &\sup_a |g''(a,b)|=O(1).
	\end{align*}
	
	\labitemc{(B2)}{as:design_glm} $\|x\|_\infty=O(1)$.
	
	\labitemc{(B3)}{as:hes_glm} The smallest and largest eigenvalues of $\nabla^2\cLs(\thetas)$ and $\Ee\left[\nabla\cL(\thetas;Z)\nabla\cL(\thetas;Z)^\top\right]$ are bounded away from zero and infinity respectively.
	
	
	\labitemc{(B4)}{as:subexp_glm} For some constant $L>0$,
	$$\max_l\max_{q=1,2}\Ee[|\bh_l^{2+q}|/L^q]+\Ee[\exp(|\bh_l|/L)]=O(1),\quad\text{or}$$
	$$\max_l\max_{q=1,2}\Ee[|\bh_l^{2+q}|/L^q]+\Ee[(\max_l|\bh_l|/L)^4]=O(1),$$
	where $\bh=\nabla^2\cLs(\thetas)^{-1}\nabla\cL(\thetas;Z)$ and $\bh_l$ is the $l$-th coordinate.


	
\end{itemize}

 Assumption \ref{as:smth_glm} imposes smoothness conditions on the loss function. For example, the logistic regression model has $g(a,b)=-ab+\log(1+\exp[b])$.  It is easy to see that $|g'(a,b)|\leq2$, $|g''(a,b)|\leq1$, $|g'''(a,b)|\leq1$. Therefore, Assumption \ref{as:smth_glm} is met for the loss function of the logistic regression model. Assumption \ref{as:design_glm} imposes boundedness condition on the input variables.  Assumption \ref{as:hes_glm} is a standard assumption in GLM literature. Assumption \ref{as:subexp_glm} is required for proving the validity of multiplier bootstrap \cite{chernozhukov2013gaussian}.

The following two theorems states the validity and the efficiency of \texttt{k-grad} and \texttt{n+k-1-grad} in GLM.  Recall the definitions of $T$, $\overline W$, and $\widetilde W$ in \eqref{eqn:t}, \eqref{eqn:wb}, and \eqref{eqn:wt}, respectively.

\begin{theo}[\texttt{k-grad}, GLM]\label{theo:ld0_glm_csl}
	Suppose \ref{as:smth_glm}-\ref{as:subexp_glm} hold, and that we run Algorithm \ref{alg:kgrad+csl} with \texttt{k-grad} method in GLM.  Assume $n=d^{\gamma_n}$ and $k=d^{\gamma_k}$ for some constants $\gamma_n,\gamma_k\geq0$. If $\gamma_n>4$, $\gamma_k>3$, $\tau\geq\tau_{\min}$, where
	\begin{align*}
	\tau_{\min}&=\tau_0+\max\bigg\{\bigg\lfloor\frac{\gamma_k-2}{\gamma_n-1}+\nu_0\bigg\rfloor, 1\bigg\},
	\end{align*}
	$$\tau_0=1+\left\lfloor\log_2\frac{\gamma_n-1}{\gamma_n-4}\right\rfloor,\quad\nu_0=2-\frac{2^{\tau_0}(\gamma_n-4)}{\gamma_n-1}\in(0,1],$$ then we have \eqref{eqn:kgradthm}.   In addition, \eqref{eqn:kgradthm} also holds if $T$ is replaced by $\widehat T$.
\end{theo}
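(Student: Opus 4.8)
The plan is to prove \eqref{eqn:kgradthm} by a three-way comparison mediated by a single Gaussian surrogate. Introduce $Y\sim\cN(0,\cov(A))$, a centered Gaussian vector whose covariance equals that of the oracle score $A$ in \eqref{eqn:bahadur}, and let $c_Y(\alpha)$ denote the $\alpha$-quantile of $\|Y\|_\infty$. I would prove two facts: (i) the law of $T$ (and of $\widehat T$) is uniformly close, in Kolmogorov distance, to that of $\|Y\|_\infty$; and (ii) the bootstrap quantile $c_{\overline W}(\alpha)$ is uniformly close to $c_Y(\alpha)$. These are glued into \eqref{eqn:kgradthm} using the anti-concentration inequality for the maximum of a Gaussian vector from \cite{chernozhukov2013gaussian}, which turns uniform closeness of quantiles into uniform closeness of coverage probabilities.

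For (i), I would start from the Bahadur representation \eqref{eqn:bahadur}, writing $\widehat T=\|A\|_\infty+o_P(1)$ with $A=-N^{-1/2}\sum_{i,j}\bh_{ij}$ a normalized sum of i.i.d.\ mean-zero vectors $\bh_{ij}$ (copies of $\bh$ in \ref{as:subexp_glm}), so that $\cov(A)=\cov(\bh)$. The high-dimensional central limit theorem of \cite{chernozhukov2013gaussian} for maxima of sums then yields $\sup_t|P(\|A\|_\infty\leq t)-P(\|Y\|_\infty\leq t)|=o(1)$, the moment and tail hypotheses in \ref{as:subexp_glm} being exactly those required; since $N=d^{\gamma_n+\gamma_k}$ is a fixed power of $d$, the Gaussian-approximation rate (a power of $N^{-1}$ against $\mathrm{poly}(\log d)$) is harmless and is not what drives the constraints on $\gamma_n,\gamma_k$. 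To move from $\widehat T$ to the surrogate statistic $T$, I would show $\|\ttheta^{(\tau)}-\htheta\|_\infty=o_P(N^{-1/2})$ up to logarithmic factors; this is where the iteration budget $\tau\geq\tau_{\min}$ is spent, via the CSL error analysis described below.

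For (ii), conditionally on the data the multiplier statistic $\overline A$ of \eqref{eqn:wb} is exactly Gaussian with covariance $\cov_\epsilon(\overline A)$, so $\overline W=\|\overline A\|_\infty$ is the maximum of a conditional Gaussian vector. The Gaussian comparison inequality of \cite{chernozhukov2013gaussian} then bounds $\sup_t|P_\epsilon(\overline W\leq t)-P(\|Y\|_\infty\leq t)|$ by $\matrixnorm{\cov_\epsilon(\overline A)-\cov(A)}_{\max}$ up to a $\log^2 d$ factor. I would control this max-norm discrepancy by the GLM analogue of \eqref{eqn:kgrad_err}: a deterministic part governed by the CSL error $\|\ttheta^{(\tau-1)}-\thetas\|_1$, with both a linear term and a quadratic term carrying the extra factor of $n$ that forces \texttt{k-grad} to demand a large $k$, together with a stochastic between-machine sampling part of order $\sqrt{d^2/k}+\sqrt{d/n}$. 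The condition $\gamma_k>3$ keeps the $\sqrt{d^2/k}$ sampling term below the $1/\log^2 d$ threshold required by the comparison inequality, while $\gamma_n>4$ absorbs the $\sqrt{d/n}$ term and, more importantly, the third-order Taylor remainders generated by the non-constant GLM Hessian (which is why GLMs require $\gamma_n>4$ whereas the linear model needed only $\gamma_n>1$).

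The main obstacle, and the origin of the unusual form of $\tau_{\min}$ with its $\tau_0$ and $\nu_0$, is the CSL error recursion for GLMs. Because $\nabla^2\cLs(\theta)$ varies with $\theta$, the surrogate-likelihood Newton step leaves a Hessian-mismatch bias, producing a recursion of the schematic form $\|\ttheta^{(t)}-\thetas\|_1\lesssim(\sqrt{d/n}+\|\ttheta^{(t-1)}-\thetas\|_1)\,\|\ttheta^{(t-1)}-\thetas\|_1$ above the statistical floor $\sqrt{d/N}$. The self-squaring term eventually gives super-linear contraction, but starting from the crude local iterate $\ttheta^{(0)}$ (whose $\ell_1$-error is of order $d^{1-\gamma_n/2}$) the error at first only contracts geometrically; the count $\tau_0=1+\lfloor\log_2\frac{\gamma_n-1}{\gamma_n-4}\rfloor$ is exactly the number of rounds needed to reach the fast regime, and $\nu_0\in(0,1]$ records the residual fractional progress at that transition. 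The delicate calculation is to solve this recursion in the exponents $(\gamma_n,\gamma_k)$ so that after $\tau$ rounds one simultaneously has $\|\ttheta^{(\tau)}-\htheta\|_\infty=o_P(N^{-1/2})$ and the deterministic part of the covariance bound is $o(1/\log^2 d)$; balancing these against the $\gamma_k$-dependent communication requirement is what yields the stated $\tau_{\min}$. Feeding (i) and (ii) into the anti-concentration argument gives \eqref{eqn:kgradthm} for $T$, and since the oracle computation already established $\widehat T=\|A\|_\infty+o_P(1)$, the same conclusion holds with $T$ replaced by $\widehat T$.
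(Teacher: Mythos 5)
Your overall architecture coincides with the paper's. The paper likewise reduces $T$ and $\widehat T$ to coordinatewise maxima, controls the Bahadur-representation error $|T-T_0|$ (Lemma on $|T-T_0|$ for GLM, which costs $nk\gg d^5$ up to logs — automatic here since $\gamma_n+\gamma_k>7$), and applies the Gaussian approximation, Gaussian comparison, and anti-concentration machinery of \cite{chernozhukov2013gaussian}; the only organizational difference is that the paper routes through an oracle multiplier bootstrap statistic $W^*$ with empirical covariance $\widehat\Omega$ (so that Theorem~3.2, Corollary~2.1 and Lemma~3.2 of \cite{chernozhukov2013gaussian} apply verbatim) and then splits $\matrixnorm{\overline\Omega-\widehat\Omega}_{\max}\leq\matrixnorm{\overline\Omega-\Omega_0}_{\max}+\matrixnorm{\widehat\Omega-\Omega_0}_{\max}$, whereas you compare directly to the population Gaussian $Y\sim\cN(0,\Omega_0)$; that is the same chain with the intermediate statistic collapsed.

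However, your account of the part that actually produces $\tau_{\min}$ — the CSL error recursion — is inverted, and as described it cannot yield the stated $\tau_0$ and $\nu_0$. The paper's recursion is $\|\ttheta^{(t)}-\htheta\|_2\lesssim\big(\sqrt{d\log d/n}+d^{3/2}\|\ttheta^{(t-1)}-\htheta\|_2\big)\|\ttheta^{(t-1)}-\htheta\|_2$ with initial error $\|\ttheta^{(0)}-\htheta\|_2\asymp\sqrt{d\log d/n}$, so the self-squaring term dominates \emph{at the start}, when the error is largest, giving exponent-doubling decay $\|\ttheta^{(t)}-\htheta\|_2\asymp d^{-3/2}\big(d^2\sqrt{\log d/n}\big)^{2^t}$; only once $d^{3/2}\|\ttheta^{(t)}-\htheta\|_2\lesssim\sqrt{d\log d/n}$ — after exactly $\tau_0$ rounds — does the linear term take over, and contraction becomes geometric with per-round factor $\sqrt{d\log d/n}$, which is in fact the \emph{faster} per-round factor. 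You assert the opposite order ("at first only contracts geometrically," with super-linear contraction "eventually"). This matters: the $\log_2$ in $\tau_0=1+\lfloor\log_2\frac{\gamma_n-1}{\gamma_n-4}\rfloor$ comes precisely from exponent doubling in the initial phase (solve $2^t(\gamma_n-4)\geq\gamma_n-1$ with $d^2\sqrt{\log d/n}\asymp d^{-(\gamma_n-4)/2}$), and $\nu_0$ is the exponent overshoot at the phase transition that is then amortized over the subsequent geometric rounds; a geometric-first dynamic would produce a ratio of exponents, not a logarithm. A second, smaller misattribution: $\gamma_k>3$ is not forced by the $\sqrt{d^2/k}$ between-machine sampling term — against the $\log^{-2}d$ comparison threshold that needs only $\gamma_k>2$. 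It is forced by the quadratic term $nd\,\|\btheta-\thetas\|_1^2$ evaluated at the statistical floor $\|\btheta-\thetas\|_1\gtrsim d\sqrt{\log d/N}$ (which no amount of communication removes), contributing $d^3\log d/k$ and hence requiring $k\gg d^3$ up to logarithmic factors. Neither error invalidates the strategy, but both sit exactly where this theorem differs from its linear-model counterpart, so the derivation of $\tau_{\min}$ as you sketch it does not go through without correcting them.
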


\begin{theo}[\texttt{n+k-1-grad}, GLM]\label{theo:ld_glm_csl}
	Suppose \ref{as:smth_glm}-\ref{as:subexp_glm} hold, and that we run Algorithm \ref{alg:kgrad+csl} with \texttt{n+k-1-grad} method in GLM.  Assume $n=d^{\gamma_n}$ and $k=d^{\gamma_k}$ for some constants $\gamma_n,\gamma_k\geq0$. If $\gamma_n>4$, $\gamma_n+\gamma_k>5$, $\tau\geq\tau_{\min}$, where
	\begin{align*}
	\tau_{\min}=\tau_0+\bigg\lfloor\frac{(\gamma_k-1)\vee(\gamma_n\wedge\gamma_k)-1}{\gamma_n-1}+\nu_0\bigg\rfloor,
	\end{align*}
	$$\tau_0=1+\bigg\lfloor\log_2\frac{\gamma_n-1}{\gamma_n-4}\bigg\rfloor,\quad\nu_0=2-\frac{2^{\tau_0}(\gamma_n-4)}{\gamma_n-1}\in(0,1],$$then we have \eqref{eqn:nk1gradthm}.   In addition, \eqref{eqn:nk1gradthm} also holds if $T$ is replaced by $\widehat T$.
\end{theo}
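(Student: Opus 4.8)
The plan is to establish \eqref{eqn:nk1gradthm} through a chain of three Gaussian approximations, mirroring the argument behind Theorem \ref{theo:ld0_glm_csl} but substituting the sharper covariance bound \eqref{eqn:nk1grad_err} for \eqref{eqn:kgrad_err} and the effective sample size $n+k-1$ for $k$. Introduce two Gaussian vectors in $\R^d$: $\cZ\sim\cN(0,\cov(A))$, with $A$ the oracle score in \eqref{eqn:bahadur}, and, conditionally on the data, $\widetilde\cZ\sim\cN(0,\cov_\epsilon(\widetilde A))$, with $\widetilde A$ as in \eqref{eqn:wt}. The goal then reduces to controlling and combining by the triangle inequality: (i) $\sup_t|P(T\leq t)-P(\|\cZ\|_\infty\leq t)|$; (ii) $\sup_t|P_\epsilon(\widetilde W\leq t)-P(\|\widetilde\cZ\|_\infty\leq t)|$; and (iii) the Gaussian comparison $\sup_t|P(\|\cZ\|_\infty\leq t)-P(\|\widetilde\cZ\|_\infty\leq t)|$, which by the Chernozhukov--Chetverikov--Kato comparison inequality is at most $\big(\matrixnorm{\cov_\epsilon(\widetilde A)-\cov(A)}_{\max}\big)^{1/3}(\log d)^{2/3}$ up to constants \cite{chernozhukov2013gaussian}. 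Anti-concentration of the Gaussian maximum then upgrades closeness of these three distribution functions to closeness of the bootstrap quantile $c_{\widetilde W}(\alpha)$ to the true quantile of $T$, uniformly in $\alpha$.

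For term (i) I would first invoke the expansion \eqref{eqn:bahadur} together with $\|\ttheta-\htheta\|_\infty=o_P(N^{-1/2})$, guaranteed by the CSL analysis of \cite{jordan2019communication} once $\tau\geq\tau_{\min}$, to write $\sqrt N(\ttheta-\thetas)=A+o_P(1)$ in sup-norm, so $T$ and $\|A\|_\infty$ share the same limiting distribution function; the high-dimensional central limit theorem for the maximum of the normalized sum of the independent sub-exponential summands $\bh_{ij}=\nabla^2\cLs(\thetas)^{-1}\nabla\cL(\thetas;Z_{ij})$ then gives $\|A\|_\infty\approx\|\cZ\|_\infty$, with \ref{as:subexp_glm} supplying exactly the moment input the theorem requires and $N=d^{\gamma_n+\gamma_k}$ dominating the polylogarithmic-in-$d$ threshold. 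Term (ii) is the conditional multiplier-bootstrap counterpart: given the data, $\widetilde A$ is a normalized sum of $n+k-1$ independent Gaussian-multiplied centered gradients, so the same machinery yields $\widetilde W\approx\|\widetilde\cZ\|_\infty$ with effective sample size $n+k-1\geq n=d^{\gamma_n}$, which is large because $\gamma_n>4$.

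The crux is term (iii): making $\matrixnorm{\cov_\epsilon(\widetilde A)-\cov(A)}_{\max}$ small enough that its cube root times $(\log d)^{2/3}$ vanishes. Here I would feed the CSL error $\|\ttheta^{(\tau-1)}-\thetas\|_1$ into \eqref{eqn:nk1grad_err} and track it across iterations. Because the Newton-type CSL update re-evaluates the local Hessian at $\ttheta^{(t-1)}$ each round, the error obeys an approximately quadratic recursion $\|\ttheta^{(t)}-\thetas\|_1\lesssim d\,\|\ttheta^{(t-1)}-\thetas\|_1^2+\text{(oracle floor)}$ up to logarithmic factors, started from $\|\ttheta^{(0)}-\thetas\|_1\asymp d/\sqrt n=d^{1-\gamma_n/2}$. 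Writing the error as $d^{-a_t}$, this gives $a_t=1+2^{t-1}(\gamma_n-4)$, so the exponent grows only when $\gamma_n>4$ and doubles per step; this quadratic phase is precisely what produces $\tau_0=1+\lfloor\log_2\frac{\gamma_n-1}{\gamma_n-4}\rfloor$ and the offset $\nu_0$, after which a slower linear phase improves the $d$-exponent by a fixed amount governed by $\gamma_n-1$ per round, down to the floor $d^{1-(\gamma_n+\gamma_k)/2}$; the number of such rounds is $\lfloor\frac{(\gamma_k-1)\vee(\gamma_n\wedge\gamma_k)-1}{\gamma_n-1}+\nu_0\rfloor$. Under $\tau\geq\tau_{\min}$ the linear term $d\|\ttheta^{(\tau-1)}-\thetas\|_1$ (whose floor forces $\gamma_n+\gamma_k>5$ in the GLM), the quadratic term $(n\wedge k)d\|\ttheta^{(\tau-1)}-\thetas\|_1^2$, and the stochastic remainder $O_P(\sqrt{d^2/(n+k)}+\sqrt{d/n})$ in \eqref{eqn:nk1grad_err} are all $o((\log d)^{-2})$, closing term (iii).

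The main obstacle I anticipate is this covariance comparison in the GLM setting: unlike the linear model, $\nabla\cL(\theta;Z)$ is nonlinear in $\theta$, so bounding $\cov_\epsilon(\widetilde A)-\cov(A)$ requires a Taylor expansion that brings in the second and third derivatives controlled by \ref{as:smth_glm}, and the resulting $d$-dependent amplification is exactly what forces $\gamma_n>4$ and the extra communication round relative to the linear-model result. Propagating $\|\ttheta^{(\tau-1)}-\thetas\|_1$ correctly through both the quadratic and linear CSL phases, keeping every logarithmic factor subordinate, and verifying that the three error contributions balance precisely at $\tau_{\min}$ is the delicate bookkeeping at the heart of the proof; by contrast, the Gaussian-approximation inputs (i)--(ii) reduce to applications of \cite{chernozhukov2013gaussian} once \ref{as:subexp_glm} is in hand.
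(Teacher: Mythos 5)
Your overall architecture is the one the paper actually uses: the paper proves this theorem by feeding the CSL error bounds of Lemmas~\ref{lem:m_glm} and~\ref{lem:csl_glm} (the quadratic-then-linear recursion in \eqref{eqn:csl_1}--\eqref{eqn:csl_2}, which is exactly your $a_t=1+2^{t-1}(\gamma_n-4)$ bookkeeping and produces $\tau_0$ and $\nu_0$) into Lemma~\ref{lem:ld_glm}, whose proof is the Chernozhukov--Chetverikov--Kato scheme you describe --- Gaussian approximation, a conditional comparison driven by the max-norm covariance difference $\matrixnorm{\widetilde\Omega-\widehat\Omega}_{\max}$, and anti-concentration --- with your two intermediate Gaussian vectors replaced by an oracle multiplier statistic $W^*$ built from the true scores at $\thetas$; the two routes are interchangeable. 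So steps (ii), (iii) and the recursion are on track.

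The genuine gap is your treatment of step (i), the Bahadur representation error. You invoke \eqref{eqn:bahadur} only qualitatively ($\sqrt N(\ttheta-\thetas)=A+o_P(1)$) and assert that $N=d^{\gamma_n+\gamma_k}$ need only dominate a polylogarithmic threshold. But the anti-concentration step forces $|T-T_0|\cdot\sqrt{\log d}$ to vanish, and in GLM this remainder is \emph{polynomial} in $d$: Lemma~\ref{lem:tbd_ld_glm} gives $|T-T_0|=O_P\left(r_{\ttheta}\sqrt N+d^{5/2}\log d/\sqrt N\right)$, the $d^{5/2}$ factor coming from $\matrixnorm{\Theta}_\infty\leq\sqrt d\matrixnorm{\Theta}_2$, the Hessian-Lipschitz bound $\lesssim\|\htheta-\thetas\|_1$ from \ref{as:smth_glm}--\ref{as:design_glm}, and $\|\htheta-\thetas\|_1\lesssim d\sqrt{\log d/N}$. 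This is precisely the source of the requirement $nk\gg d^5\log^{3+\kappa}d$, i.e.\ $\gamma_n+\gamma_k>5$. You instead attribute $\gamma_n+\gamma_k>5$ to the floor of the linear term $d\|\ttheta^{(\tau-1)}-\thetas\|_1$ in the covariance comparison; that floor is $d\cdot d\sqrt{\log d/N}=d^{2-(\gamma_n+\gamma_k)/2}$ up to logarithms and only yields $\gamma_n+\gamma_k>4$, so your sketch as written would certify the theorem under a strictly weaker, insufficient condition and cannot be closed without the quantitative GLM Bahadur analysis. A secondary caveat: you feed the CSL error into the linear-model display \eqref{eqn:nk1grad_err}, but the GLM analogue (Lemma~\ref{lem:gbd_ld_glm}) has the different linear coefficient $\frac{n+k\sqrt{\log d}+k^{3/4}\log^{3/4}d}{n+k}\,d\,r_{\btheta}$ together with the constraint $r_{\btheta}\ll\frac1{\sqrt d\log^{1+\kappa}d}\sqrt{\frac1n+\frac1k}$, and it is this pair that generates the case analysis $(\gamma_k-1)\vee(\gamma_n\wedge\gamma_k)$ in the numerator of $\tau_{\min}$; with the linear-model form your bookkeeping would not reproduce the stated formula.
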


See Figure \ref{fig:tau} for a comparison between the results of linear models and GLMs.

\begin{rem}
In both Theorems \ref{theo:ld0_glm_csl} and \ref{theo:ld_glm_csl}, $\tau_0$ is the communication rounds needed for the CSL estimator to go through the regions which are far from $\thetas$. As $d$ grows, the time spent in these regions can increase. However, when $n$ is large, e.g. $n\gg d^7$, the loss function is more well-behaved, and the time required reduces to $\tau_0=1$. 
\end{rem}



\section{Experiments} \label{sec:exp}

\subsection{Accuracy and Efficiency} \label{sec:exp1}

Fix total sample size $N=2^{16}$. Choose $d$ from $\{2^1,2^3,2^5,2^7\}$ and $k$ from $\{2^0,2^1,2^2,\dots,2^{11}\}$. $\thetas$ is determined by drawing uniformly from $[-0.5,0.5]^d$ and keep it fixed for all replications.  We generate each covariate vector $x$ independently from $\cN(0,\Sigma)$ and specify two different covariance matrices: Toeplitz ($\Sigma_{l,l'}=0.9^{|l-l'|}$) and equi-correlation ($\Sigma_{l,l'}=0.8$ for all $l\neq l'$, $\Sigma_{l,l}=1$ for all $l$), and the results for the latter are deferred to the appendix as they are similar to that under the Toeplitz design. For linear model, we generate $e$ independently from $\cN(0,1)$, simulate the response from $y=x^\top\thetas+e$; for GLM, we consider logistic regression and obtain each response from $y\sim\text{Ber}(1/(1+\exp[-x^\top\thetas]))$.  Under each choice of $d$ and $k$, we run \texttt{k-grad} and \texttt{n+k-1-grad} with CSL on $1000$ independent data sets, and compute the empirical coverage probability and the average width based on the results from these $1000$ replications.  At each replication, we draw $B=500$ bootstrap samples, from which we calculate the $95\%$ empirical quantile to further obtain the $95\%$ simultaneous confidence interval (the level $95\%$ is represented by a black solid line in all figures).

The average width is compared with the oracle width. We compute the oracle width (represented by a black dashed line in all figures) for each model as follows.  For a fixed $N$ and $d$, we generate $500$ independent data sets, and for each data set, we compute the centralized $\htheta$.  The oracle width is defined as two times the $95\%$ empirical quantile of $\|\htheta-\thetas\|_\infty$.

The empirical coverage probabilities and the average widths of \texttt{k-grad} and \texttt{n+k-1-grad} are displayed in Figures \ref{fig:lm_tp} (linear regression with Toeplitz design) and \ref{fig:glm_tp} (logistic regression with Toeplitz design). 
Note that the sub-sample size $n$ is determined by $k$ as $N$ is fixed, and therefore, a larger $k$ indicates a smaller $n$.

When $k$ is small, \texttt{k-grad} fails because $k$ multipliers cannot provide enough perturbation to approximate the sampling distribution whereas \texttt{n+k-1-grad} has a good coverage (Theorems \ref{theo:ld_csl} and \ref{theo:ld_glm_csl}). When $k$ gets too large (or $n$ gets too small), the coverage of both algorithms starts to fall, due to both the deviation of the center (the estimator $\ttheta^{(\tau)}$) from the centralized estimator $\htheta$ and the deviation of the width from the oracle width [\eqref{eqn:kgrad_err} and \eqref{eqn:nk1grad_err}].  We also see that the larger the dimension, the harder for both algorithms to achieve $95\%$ coverage, and the earlier both algorithm fail as $k$ grows (or $n$ decreases) [\eqref{eqn:kgrad_err} and \eqref{eqn:nk1grad_err}].  However, increasing the number of communication rounds improves the coverage, and thus, the coverage of both algorithms, even when $k\geq n$. When $k$ is too large (or $n$ is too small; see, for example, Figure \ref{fig:lm_tp}, \texttt{n+k-1-grad}, $d=2^7$), the width could go further away from the oracle width as the number of communication rounds increases, as predicted by the increase of the right-hand sides of both \eqref{eqn:kgrad_err} and \eqref{eqn:nk1grad_err} as $n$ decreases.

The cases of $d=2^3$ and $2^5$ and the equi-correlation case are deferred to the appendix, as the patterns are similar to Figure \ref{fig:lm_tp} and Figure \ref{fig:glm_tp}. Results on pointwise confidence intervals are also included in the appendix.

\begin{figure}[ht]
\vskip 0.2in
\begin{center}
\centerline{\includegraphics[width=\columnwidth]{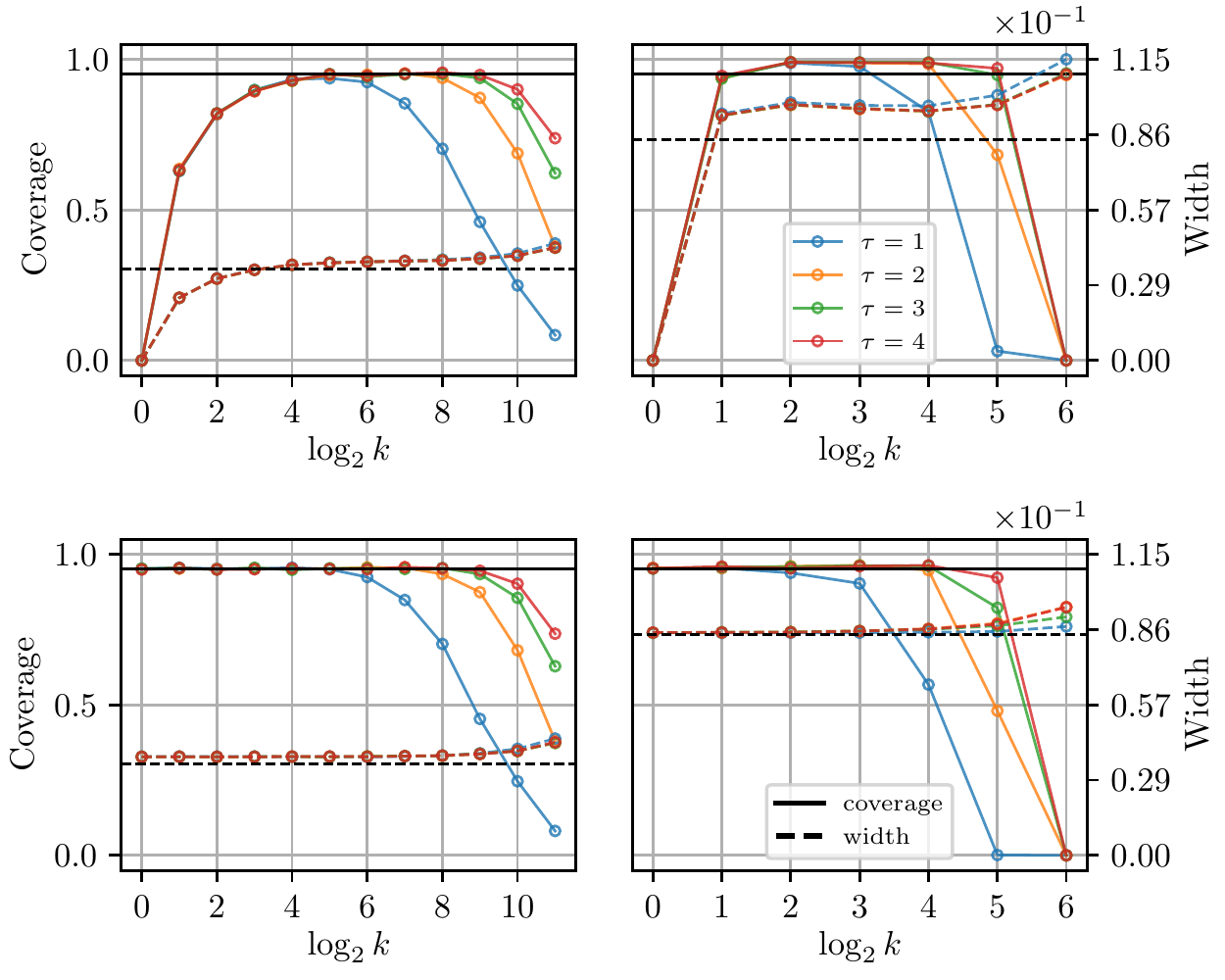}}
\caption{Empirical coverage probability (\textbf{left axis}) and average width (\textbf{right axis}) of simultaneous confidence intervals by \texttt{k-grad} (\textbf{top}) and \texttt{n+k-1-grad} (\textbf{bottom}) in a linear regression model with varying dimension (\textbf{left}: $d=2^1$, \textbf{right}: $d=2^7$).  Black solid line represents nominal confidence level ($95\%$) and black dashed line represents oracle width.}
\label{fig:lm_tp}
\end{center}
\vskip -2em
\end{figure}

\begin{figure}[ht]
\vskip 0.2in
\begin{center}
\centerline{\includegraphics[width=\columnwidth]{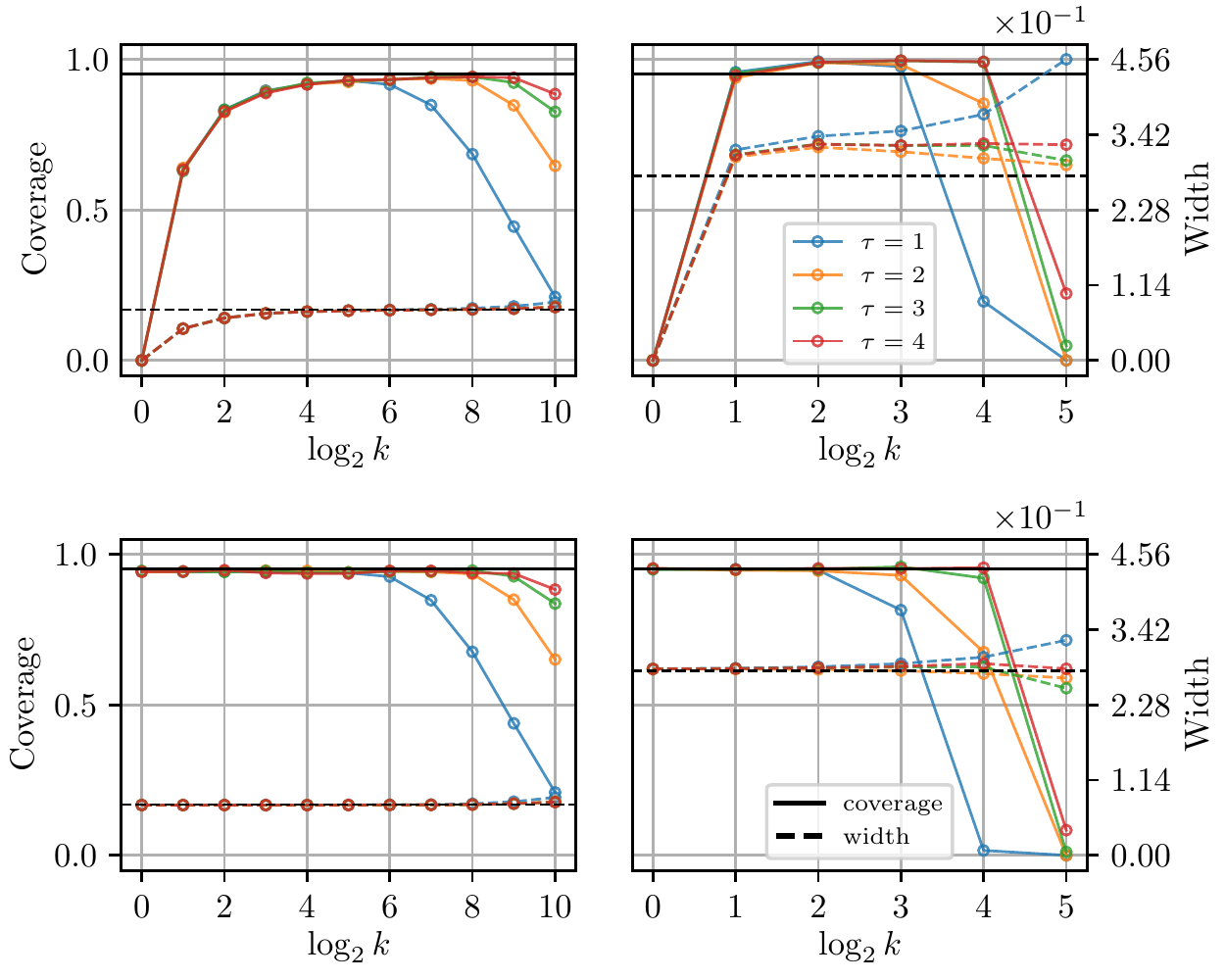}}
\caption{Empirical coverage probability (\textbf{left axis}) and average width (\textbf{right axis}) of simultaneous confidence intervals by \texttt{k-grad} (\textbf{top}) and \texttt{n+k-1-grad} (\textbf{bottom}) in a logistic regression model with varying dimension (\textbf{left}: $d=2^1$, \textbf{right}: $d=2^7$).  Black solid line represents nominal confidence level ($95\%$) and black dashed line represents oracle width.}
\label{fig:glm_tp}
\end{center}
\vskip -2em
\end{figure}

\subsection{Comparisons to existing methods: BLB and SDB} \label{sec:exp2}

We compare the width of \texttt{k-grad} and \texttt{n+k-1-grad} against two bootstrap procedures, BLB \cite{kleiner2014scalable} and SDB \cite{sengupta2016subsampled}, using Toeplitz design and similar experiment setting in Section \ref{sec:exp1}. 
We use BLB and SDB to compute the width of a confidence interval and compare it against the oracle width, instead of constructing the entire confidence interval. The results are displayed in Figures \ref{fig:cmpr_tp}.

SDB always has a significant deviation from the oracle width for small $k$ and has the same behavior as BLB when $k$ is large. 
The width of \texttt{n+k-1-grad} is closer to the oracle width than \texttt{k-grad}, as discussed in Section \ref{sec:exp1}. 

\begin{figure}[ht]
\vskip 0.2in
\begin{center}
\centerline{\includegraphics[width=\columnwidth]{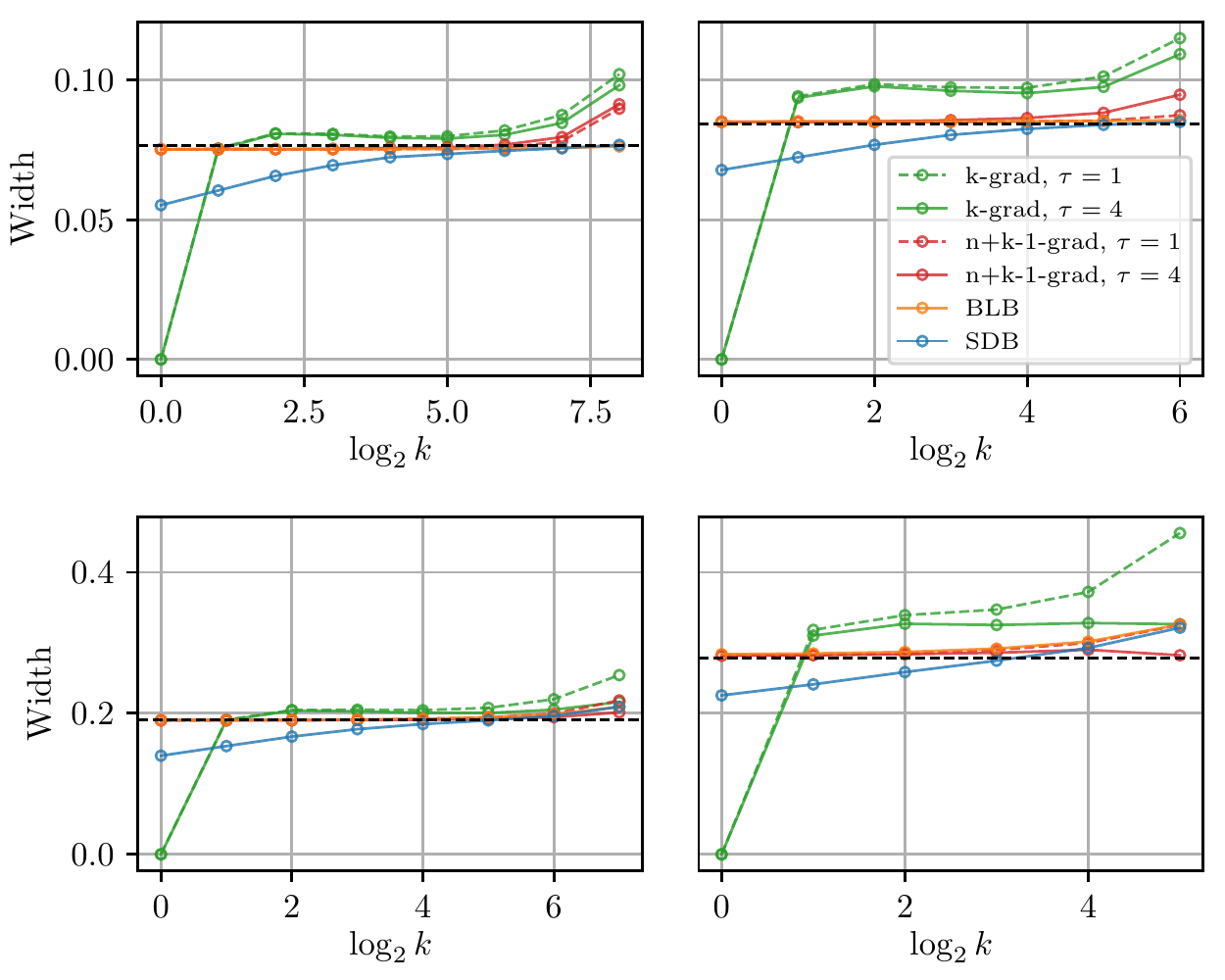}}
\caption{Comparison of \texttt{k-grad}, \texttt{n+k-1-grad}, BLB, and SDB in average width of simultaneous confidence intervals in linear regression (\textbf{top}) and logistic regression (\textbf{bottom}) with varying dimension (\textbf{left}: $d=2^5$, \textbf{right}: $d=2^7$). Black dashed line represents oracle width.}
\label{fig:cmpr_tp}
\end{center}
\vskip -2em
\end{figure}

As \texttt{n+k-1-grad} and BLB appear to be the two best-performing methods, we compare the two into more details. For linear regression, \texttt{n+k-1-grad} performs as well as BLB, except in a few cases of large $k$. 
For logistic regression, the width of both \texttt{n+k-1-grad} and BLB deviate from the oracle width for large $k$, but \texttt{n+k-1-grad} mostly outperforms BLB, because $n/k$ is too small for BLB, while \texttt{n+k-1-grad} improves as the number of communications $\tau$ increases.  

\subsection{Computational time}

Table \ref{tab:comp} shows the computational time of different bootstrap methods. The average run time (in second) is computed with $50$ independent runs, and in each run a bootstrap method is carried out for linear regression model with Toeplitz design. We set $\tau=1$ for \texttt{k-grad} and \texttt{n+k-1-grad}. 

Both BLB and SDB require each worker node to repeatedly resample and re-fit the model, so we expect they require more time. Particularly, Table \ref{tab:comp} shows that BLB is much more computationally expensive than the others, and its computational time greatly increases as $k$ and $d$ grows. SDB has much lower computational time than BLB, but the computational time grows rapidly with the number of machines. On the other hand, computational time of \texttt{k-grad} and \texttt{n+k-1-grad} remains low as $k$ grows, since the bootstrap is done only on the master node. We have even observed a decrease in the run time as $k$ increases for \texttt{k-grad} and \texttt{n+k-1-grad}, which show that our methods can better take advantage of parallelism.

\section{Discussion}

We propose two communication-efficient and computation-efficient bootstrap methods, \texttt{k-grad} and \texttt{n+k-1-grad}, for simultaneous inference on distributed massive data. Our methods are robust to the number of machines. The accuracy and efficiency of the algorithms are theoretically proven and validated through simulations.

Our methods can potentially be extended to high-dimensional input variables, where the problem of simultaneous inference can be even more challenging. 

\begin{table}[H]
\caption{Average run times (in second) of \texttt{k-grad}, \texttt{n+k-1-grad}, SDB, and BLB with different $k$ and $d$ (\textbf{top}: $d=2^3$, \textbf{bottom}: $d=2^7$).}
\label{tab:comp}
\vskip 0.15in
\begin{center}
\begin{small}
\begin{tabular}{lrrr}
\toprule
Methods &       $k=2^2$ &      $k=2^6$ &      $k=2^9$ \\
\midrule
\texttt{k-grad} &  0.29 & 0.29 & 0.30 \\
\texttt{n+k-1-grad} &  0.85 & 0.45 & 0.45 \\
SDB & 0.08 & 0.30 &  5.39 \\
BLB &   22.66 &  35.12 &  159.88 \\
\bottomrule
\end{tabular}
\begin{tabular}{lrrr}
\toprule
Methods &       $k=2^2$ &      $k=2^6$ &      $k=2^9$ \\
\midrule
\texttt{k-grad} &  0.82 & 0.51 & 0.50 \\
\texttt{n+k-1-grad} &  1.49 & 0.67 & 0.64 \\
SDB & 3.44 & 3.83 &  12.66 \\
BLB &   981.17 &  842.50 &  1950.91 \\
\bottomrule
\end{tabular}
\end{small}
\end{center}
\vskip -1em
\end{table}

\bibliography{Reference}
\bibliographystyle{icml2020}


\clearpage
\onecolumn
\appendix

\section{Additional Simulation Results}

\subsection{Simultaneous Confidence Interval} \label{sec:exp3}

Figures \ref{fig:lm_tp_d35} and \ref{fig:glm_tp_d35} display the empirical coverage probability and the average width for the linear regression and logistic regression models under Toeplitz design with $d=2^3$ and $d=2^5$.  Figures \ref{fig:lm_eq} and \ref{fig:glm_eq} display the empirical coverage probability and the average width for the linear regression and logistic regression models under equi-correlation design with $d\in\{2^1,2^3,2^5,2^7\}$.  See Section \ref{sec:exp1} for the full details on the simulation setup.  The observations made in Section \ref{sec:exp1} also apply to all the cases here.  Moreover, we see that the results for equi-correlation design are similar to those for Toeplitz design.

\begin{figure}[h!]
\vskip 0.2in
\begin{center}
\centerline{\includegraphics[width=0.5\columnwidth]{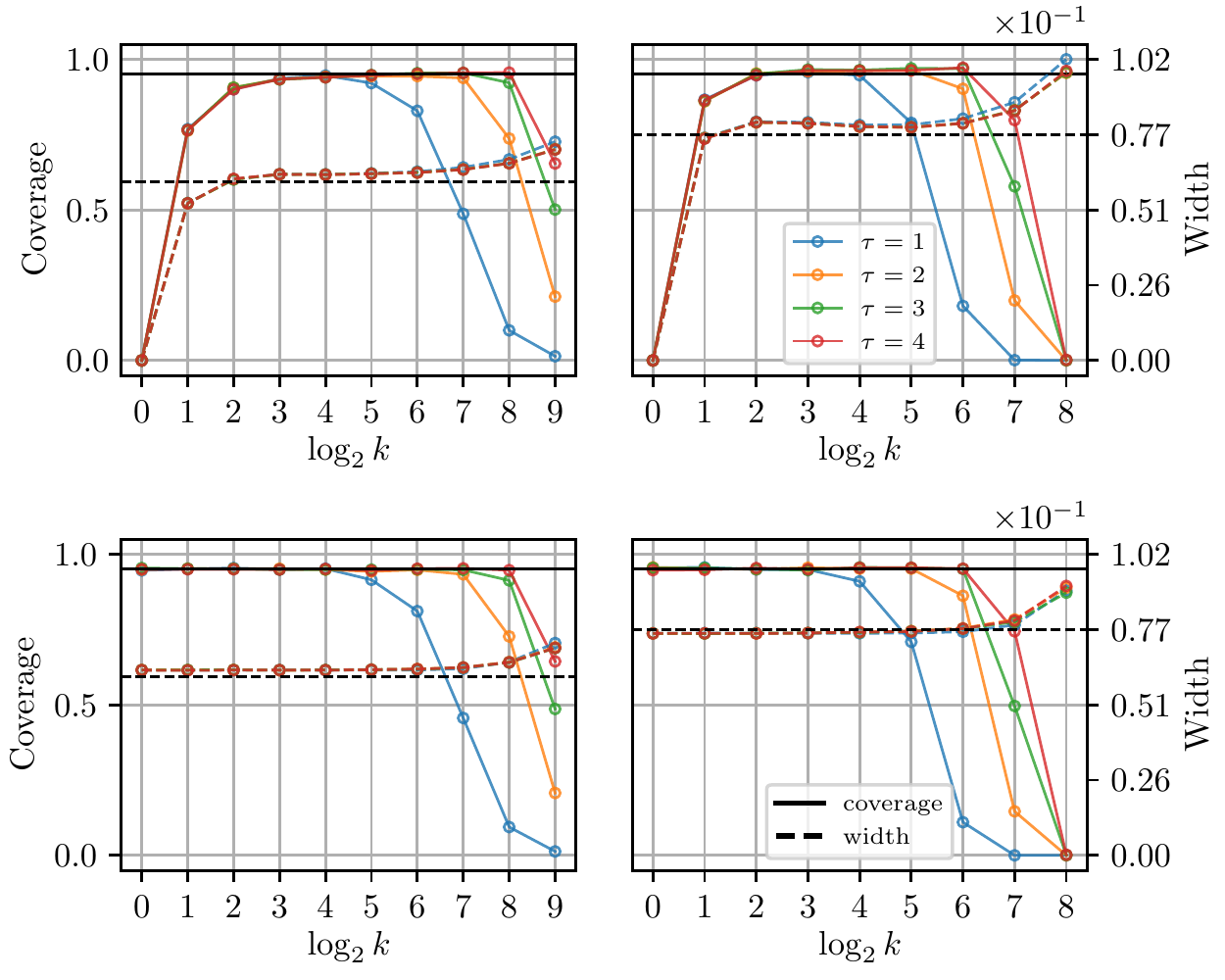}}
\caption{Empirical coverage probability (\textbf{left axis}) and average width (\textbf{right axis}) of simultaneous confidence intervals by \texttt{k-grad} (\textbf{top}) and \texttt{n+k-1-grad} (\textbf{bottom}) in a linear regression model with Toeplitz design and varying dimension (\textbf{left}: $d=2^3$, \textbf{right}: $d=2^5$).  Black solid line represents nominal confidence level ($95\%$) and black dashed line represents oracle width.}
\label{fig:lm_tp_d35}
\end{center}
\vskip -0.2in
\end{figure}

\begin{figure}[h!]
\vskip 0.2in
\begin{center}
\centerline{\includegraphics[width=0.5\columnwidth]{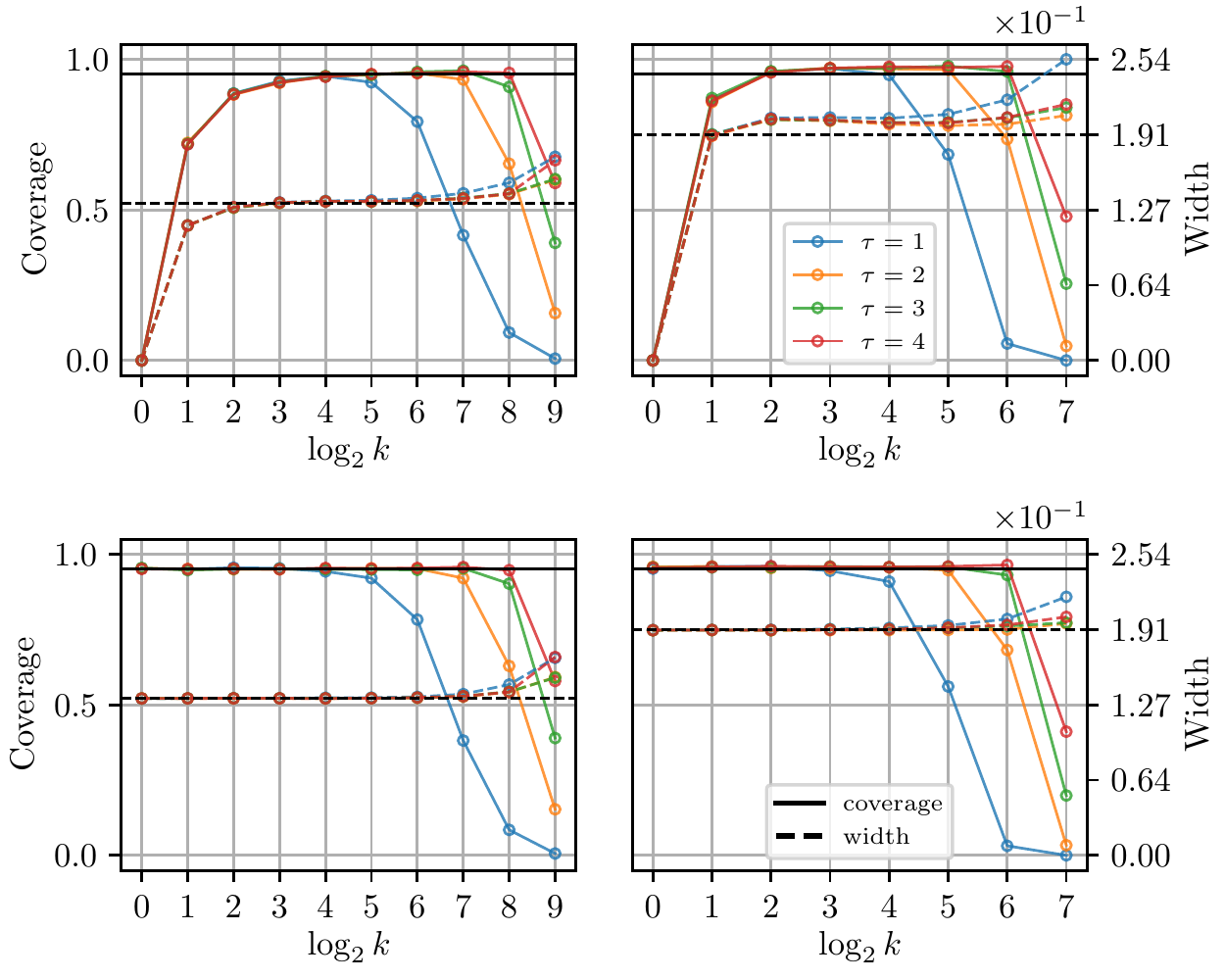}}
\caption{Empirical coverage probability (\textbf{left axis}) and average width (\textbf{right axis}) of simultaneous confidence intervals by \texttt{k-grad} (\textbf{top}) and \texttt{n+k-1-grad} (\textbf{bottom}) in a logistic regression model with Toeplitz design and varying dimension (\textbf{left}: $d=2^3$, \textbf{right}: $d=2^5$).  Black solid line represents nominal confidence level ($95\%$) and black dashed line represents oracle width.}
\label{fig:glm_tp_d35}
\end{center}
\vskip -0.2in
\end{figure}

\begin{figure}[h!]
\vskip 0.2in
\begin{center}
\centerline{\includegraphics[width=\columnwidth]{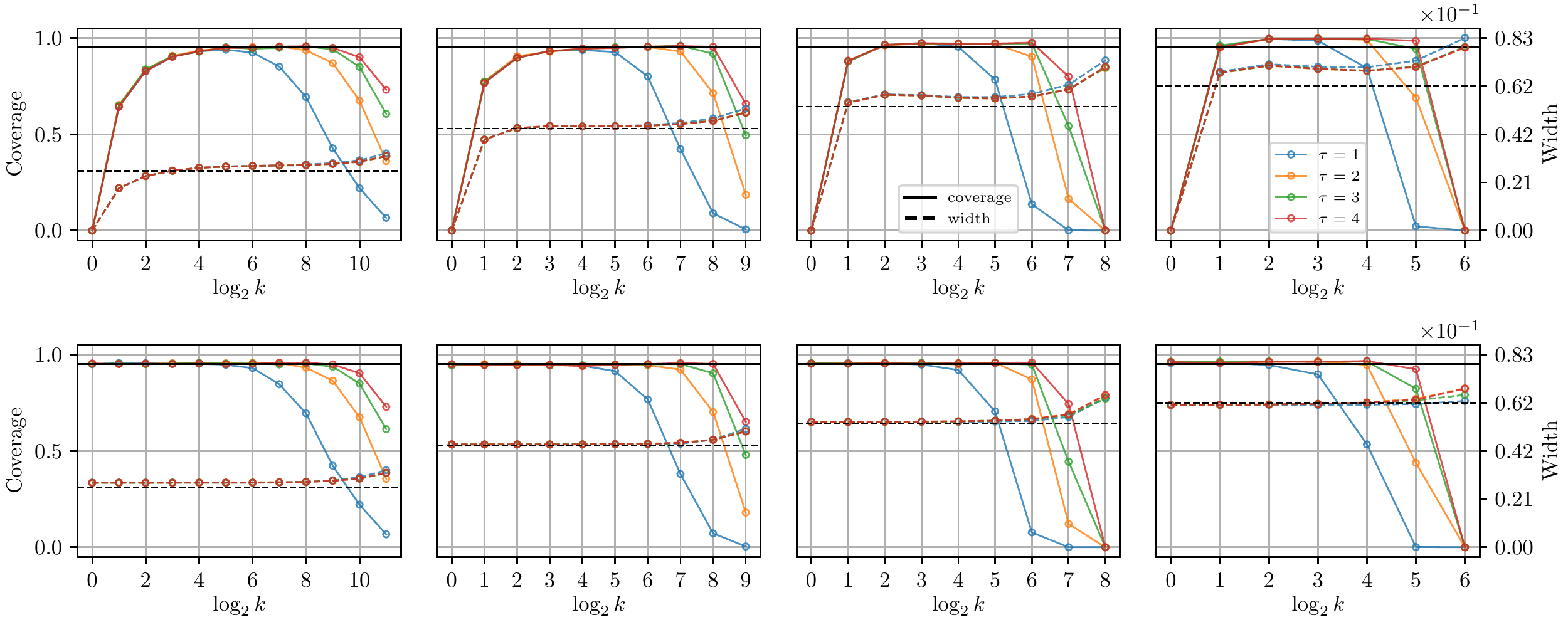}}
\caption{Empirical coverage probability (\textbf{left axis}) and average width (\textbf{right axis}) of simultaneous confidence intervals by \texttt{k-grad} (\textbf{top}) and \texttt{n+k-1-grad} (\textbf{bottom}) in a linear regression model with equi-correlation design and varying dimension (from \textbf{left} to \textbf{right}: $d=2^1,2^3,2^5,2^7$).  Black solid line represents nominal confidence level ($95\%$) and black dashed line represents oracle width.}
\label{fig:lm_eq}
\end{center}
\vskip -0.2in
\end{figure}

\begin{figure}[h!]
\vskip 0.2in
\begin{center}
\centerline{\includegraphics[width=\columnwidth]{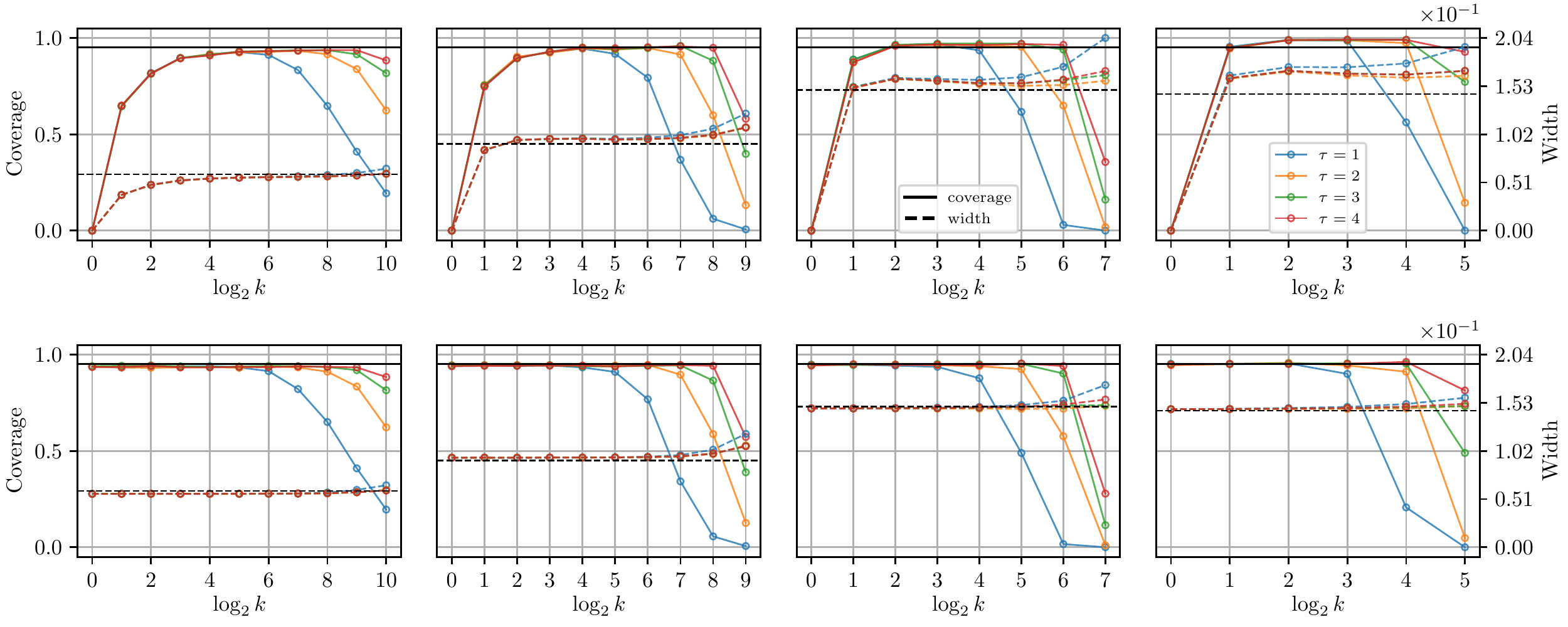}}
\caption{Empirical coverage probability (\textbf{left axis}) and average width (\textbf{right axis}) of simultaneous confidence intervals by \texttt{k-grad} (\textbf{top}) and \texttt{n+k-1-grad} (\textbf{bottom}) in a logistic regression model with equi-correlation design and varying dimension (from \textbf{left} to \textbf{right}: $d=2^1,2^3,2^5,2^7$).  Black solid line represents nominal confidence level ($95\%$) and black dashed line represents oracle width.}
\label{fig:glm_eq}
\end{center}
\vskip -0.2in
\end{figure}

\subsection{Pointwise Confidence Interval}

Figures \ref{fig:lm_tp_d35} and \ref{fig:glm_tp_d35} display the empirical coverage probability and the average width for the linear regression and logistic regression models under Toeplitz design with $d\in\{2^1,2^3,2^5,2^7\}$.  The simulation setup is the same as in Section \ref{sec:exp1}.  All the pointwise confidence intervals are constructed for the second coordinate of $\thetas$.  The algorithm is modified by replacing $\|\cdot\|_\infty$ with $|(\cdot)_2|$ as discussed in Section \ref{sec:alg}.  Comparing the results to those in Sections \ref{sec:exp1} and \ref{sec:exp3}, we see that the performance of \texttt{k-grad} and \texttt{n+k-1-grad} in constructing pointwise confidence intervals is similar to that in constructing simultaneous confidence intervals.  Therefore, the discussions on simultaneous confidence intervals in \ref{sec:exp1} can apply to the cases here.

\begin{figure}[h!]
\vskip 0.2in
\begin{center}
\centerline{\includegraphics[width=\columnwidth]{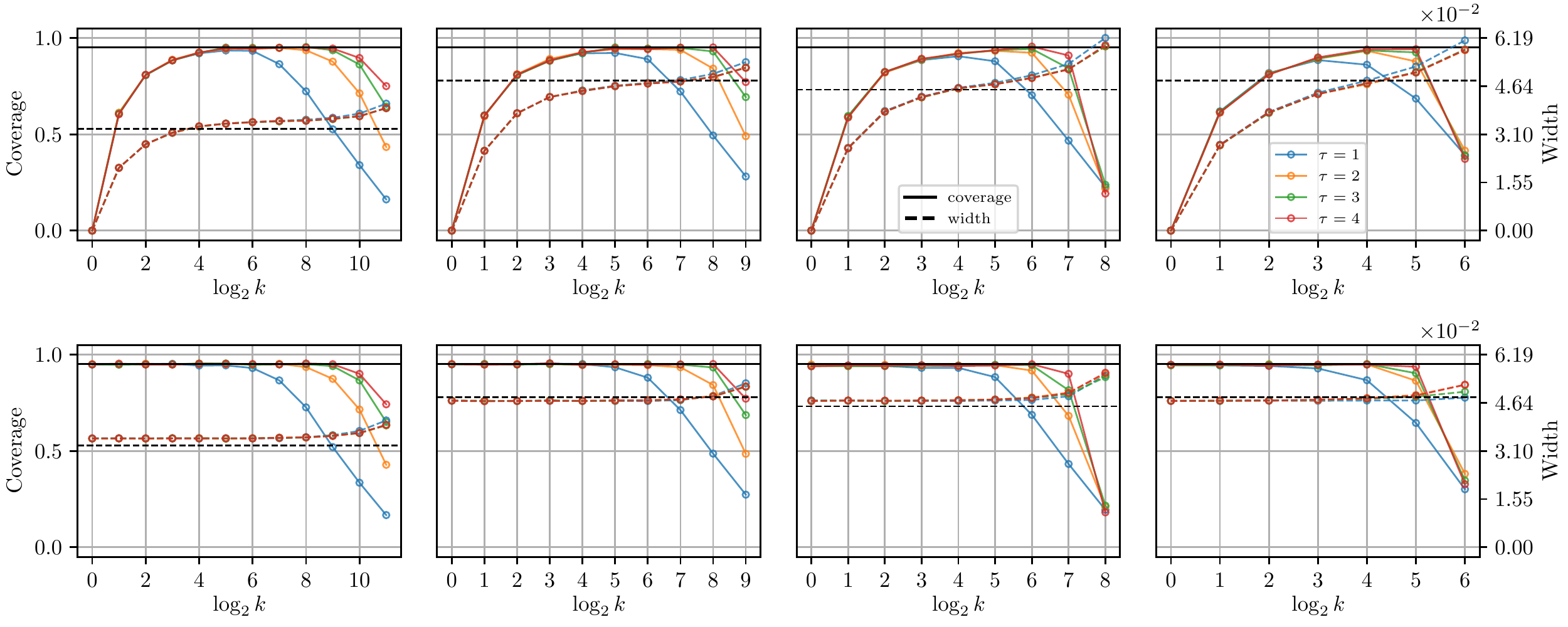}}
\caption{Empirical coverage probability (\textbf{left axis}) and average width (\textbf{right axis}) of pointwise confidence intervals by \texttt{k-grad} (\textbf{top}) and \texttt{n+k-1-grad} (\textbf{bottom}) in a linear regression model with Toeplitz design and varying dimension (from \textbf{left} to \textbf{right}: $d=2^1,2^3,2^5,2^7$).  Black solid line represents nominal confidence level ($95\%$) and black dashed line represents oracle width.}
\label{fig:lm_tp_pt}
\end{center}
\vskip -0.2in
\end{figure}

\begin{figure}[h!]
\vskip 0.2in
\begin{center}
\centerline{\includegraphics[width=\columnwidth]{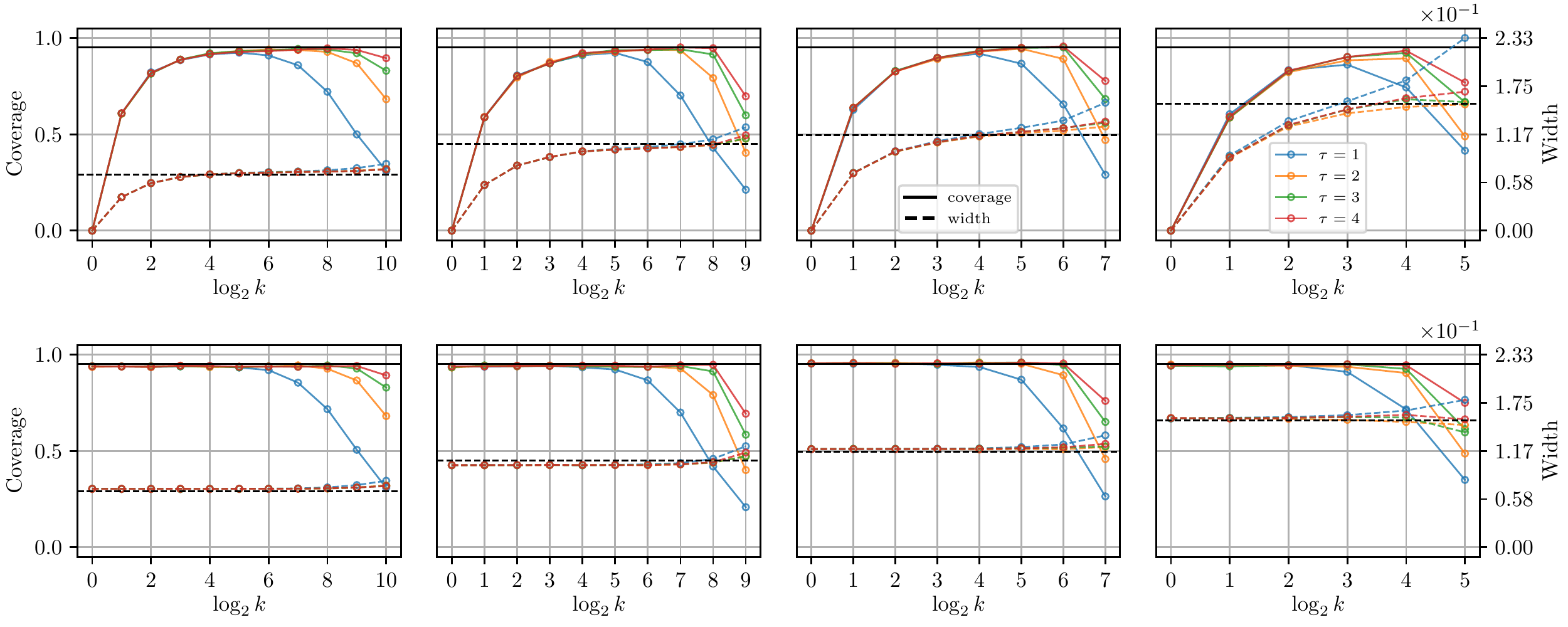}}
\caption{Empirical coverage probability (\textbf{left axis}) and average width (\textbf{right axis}) of pointwise confidence intervals by \texttt{k-grad} (\textbf{top}) and \texttt{n+k-1-grad} (\textbf{bottom}) in a logistic regression model with Toeplitz design and varying dimension (from \textbf{left} to \textbf{right}: $d=2^1,2^3,2^5,2^7$).  Black solid line represents nominal confidence level ($95\%$) and black dashed line represents oracle width.}
\label{fig:glm_tp_pt}
\end{center}
\vskip -0.2in
\end{figure}

\section{Proofs of Main Results}

\begin{proof}[Theorem \ref{theo:ld0_csl}]
	By Lemmas~\ref{lem:m}~and~\ref{lem:csl}, we obtain that
	$$\left\|\ttheta-\htheta\right\|_\infty = \left\|\ttheta^{(\tau)}-\htheta\right\|_\infty \leq \left\|\ttheta^{(\tau)}-\htheta\right\|_2 = O_P\left(\left(\sqrt{\frac dn}\right)^{\tau+1} \sqrt{\log d}\right),\quad\text{and}$$
	$$\left\|\btheta-\thetas\right\|_1 = \left\|\ttheta^{(\tau-1)}-\thetas\right\|_1 \leq \sqrt d\left\|\ttheta^{(\tau-1)}-\htheta\right\|_2 + \sqrt d\left\|\htheta-\thetas\right\|_2 = O_P\left(d\sqrt{\frac{\log d}N}+\left(\sqrt{\frac dn}\right)^{\tau} \sqrt{d\log d}\right),$$
	if $N\gtrsim d\log d$ and $n\gtrsim d$.  Then, by Lemma~\ref{lem:ld0}, we have $\sup_{\alpha\in(0,1)}\left|P(T\leq c_{\overline W}(\alpha))-\alpha\right|=o(1)$ and $\sup_{\alpha\in(0,1)}\left|P(\widehat T\leq c_{\overline W}(\alpha))-\alpha\right|=o(1)$, as long as $n\gg d\log^{4+\kappa} d$, $k\gg d^2\log^{5+\kappa} d$, and
	$$\left(\sqrt{\frac dn}\right)^{\tau+1} \sqrt{\log d} \ll\frac1{\sqrt N\log^{1/2+\kappa}d},\quad\text{and}$$
	$$d\sqrt{\frac{\log d}N}+\left(\sqrt{\frac dn}\right)^{\tau} \sqrt{d\log d}\ll\min\left\{\frac1{d\sqrt{\log k}\log^{2+\kappa}d},\frac1{ \sqrt{nd}\log^{1+\kappa}d}\right\}.$$
	We complete the proof by solving these inequalities for $\tau$.
\end{proof}

\begin{proof}[Theorem \ref{theo:ld_csl}]
	By the argument in the proof of Theorem~\ref{theo:ld0_csl} with applying Lemma~\ref{lem:ld}, we have $\sup_{\alpha\in(0,1)}\left|P(T\leq c_{\widetilde W}(\alpha))-\alpha\right|=o(1)$ and $\sup_{\alpha\in(0,1)}\left|P(\widehat T\leq c_{\widetilde W}(\alpha))-\alpha\right|=o(1)$, as long as $n\gg d\log^{4+\kappa} d$, $n+k\gg d^2\log^{5+\kappa} d$, and
	$$\left(\sqrt{\frac dn}\right)^{\tau+1} \sqrt{\log d}\ll\frac1{\sqrt N\log^{1/2+\kappa}d},\quad\text{and}$$
	$$d\sqrt{\frac{\log d}N}+\left(\sqrt{\frac dn}\right)^{\tau} \sqrt{d\log d}\ll\min\left\{\frac1{d\sqrt{\log((n+k)d)}\log^{2+\kappa}d},\frac1{ \sqrt{d}\log^{1+\kappa}d}\sqrt{\frac1n+\frac1k}\right\}.$$
	We complete the proof by solving these inequalities for $\tau$.
\end{proof}

\begin{proof}[Theorem \ref{theo:ld0_glm_csl}]
	By Lemmas~\ref{lem:m_glm}~and~\ref{lem:csl_glm}, we obtain that
	\begin{align}
	\begin{split}
	\left\|\ttheta-\htheta\right\|_\infty = \left\|\ttheta^{(\tau)}-\htheta\right\|_\infty \leq \left\|\ttheta^{(\tau)}-\htheta\right\|_2 = \begin{cases}
    O_P\left(\frac1{d^{3/2}}\left(d^2\sqrt{\frac{\log d}n}\right)^{2^\tau}\right), & \tau\leq\tau_0,\\
    O_P\left(\frac1{d^{3/2}}\left(d^2\sqrt{\frac{\log d}n}\right)^{2^{\tau_0}}\left(\sqrt{\frac{d\log d}n}\right)^{\tau-\tau_0}\right), & \tau>\tau_0,
\end{cases}
    \end{split} \quad\text{and}\label{eqn:csl_1}
    \end{align}
	\begin{align}
	\begin{split}
	\left\|\btheta-\thetas\right\|_1 &= \left\|\ttheta^{(\tau-1)}-\thetas\right\|_1 \leq \sqrt d\left\|\ttheta^{(\tau-1)}-\thetas\right\|_2 \leq \sqrt d\left\|\ttheta^{(\tau-1)}-\htheta\right\|_2 + \sqrt d\left\|\htheta-\thetas\right\|_2 \\
	&= \begin{cases}
    O_P\left(d\sqrt{\frac{\log d}N}+\frac1d\left(d^2\sqrt{\frac{\log d}n}\right)^{2^{\tau-1}}\right), & \tau\leq\tau_0+1,\\
    O_P\left(d\sqrt{\frac{\log d}N}+\frac1d\left(d^2\sqrt{\frac{\log d}n}\right)^{2^{\tau_0}}\left(\sqrt{\frac{d\log d}n}\right)^{\tau-\tau_0-1}\right), & \tau>\tau_0+1,
\end{cases}
	\end{split} \label{eqn:csl_2}
    \end{align}
	if $n\gtrsim d^4\log d$, where $\tau_0$ is the smallest integer $t$ such that
	$$\left(d^2\sqrt{\frac{\log d}n}\right)^{2^t}\lesssim \sqrt{\frac{d\log d}n},$$
	that is,
	$$\tau_0=\left\lceil\log_2\left(\frac{\log n-\log d-\log\log d}{\log n-\log(d^4)-\log\log d}\right)\right\rceil.$$
	Then, by Lemma~\ref{lem:ld0_glm}, we have $\sup_{\alpha\in(0,1)}\left|P(T\leq c_{\overline W}(\alpha))-\alpha\right|=o(1)$ and $\sup_{\alpha\in(0,1)}\left|P(\widehat T\leq c_{\overline W}(\alpha))-\alpha\right|=o(1)$, as long as $n\gg d^4\log d$, $k\gg d^2\log^{5+\kappa} d$, $nk\gg d^5\log^{3+\kappa} d$,
	$$\text{RHS of }\eqref{eqn:csl_1}\ll\frac1{\sqrt N\log^{1/2+\kappa}d},\quad\text{and}\quad\text{RHS of }\eqref{eqn:csl_2}\ll\frac1{ \sqrt{nd}\log^{1+\kappa}d}.$$
	We complete the proof by solving these inequalities for $\tau$.

\end{proof}

\begin{proof}[Theorem \ref{theo:ld_glm_csl}]
	By the argument in the proof of Theorem~\ref{theo:ld0_csl} with applying Lemma~\ref{lem:ld_glm}, we have $\sup_{\alpha\in(0,1)}\left|P(T\leq c_{\widetilde W}(\alpha))-\alpha\right|=o(1)$ and $\sup_{\alpha\in(0,1)}\left|P(\widehat T\leq c_{\widetilde W}(\alpha))-\alpha\right|=o(1)$, as long as $n\gg d^4\log d$, $n+k\gg d^2\log^{5+\kappa} d$, $nk\gg d^5\log^{3+\kappa} d$,
	$$\text{RHS of }\eqref{eqn:csl_1}\ll\frac1{\sqrt N\log^{1/2+\kappa}d},\quad\text{and}\quad\text{RHS of }\eqref{eqn:csl_2}\ll\min\left\{\frac1{d\log^{11/4+\kappa}d},\frac1{ \sqrt{d}\log^{1+\kappa}d}\sqrt{\frac1n+\frac1k}\right\}.$$
	We complete the proof by solving these inequalities for $\tau$.

\end{proof}

\section{Lemmas on Bounding Bootstrap Errors}

\begin{lemma}[\texttt{k-grad}]\label{lem:ld0}
	In linear model, under Assumptions~\ref{as:design}~and~\ref{as:noise}, if $n\gg d\log^{4+\kappa} d$, $k\gg d^2\log^{5+\kappa} d$,
	$$\left\|\ttheta-\htheta\right\|_\infty\ll\frac1{\sqrt N\log^{1/2+\kappa}d},\quad\text{and}\quad \left\|\btheta-\thetas\right\|_1\ll\min\left\{\frac1{d\sqrt{\log k}\log^{2+\kappa}d},\frac1{ \sqrt{nd}\log^{1+\kappa}d}\right\},$$
	for some $\kappa>0$, then we have that
	\begin{align}
	\sup_{\alpha\in(0,1)}\left|P(T\leq c_{\overline W}(\alpha))-\alpha\right|&=o(1), \quad\text{and} \label{eqn:lem_wb_t} \\
	\sup_{\alpha\in(0,1)}\left|P(\widehat T\leq c_{\overline W}(\alpha))-\alpha\right|&=o(1). \label{eqn:lem_wb_ht}
	\end{align}
\end{lemma}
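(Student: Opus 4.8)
The plan is to place $T$, $\widehat T$ and the bootstrap statistic $\overline W$ into the Gaussian approximation and comparison framework of \cite{chernozhukov2013gaussian}: each is the $\ell_\infty$-norm (maximum) of a high-dimensional vector, so it suffices to show that the relevant covariances are close and then convert covariance closeness into Kolmogorov-distance closeness via Gaussian anti-concentration. I would first establish \eqref{eqn:lem_wb_ht} for the oracle statistic $\widehat T$ and then deduce \eqref{eqn:lem_wb_t} for the surrogate statistic $T$ by a perturbation argument.

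First I would use the Bahadur representation \eqref{eqn:bahadur} to write $\widehat T=\|A\|_\infty$ up to an $o_P$ remainder, where $A=-\nabla^2\cLs(\thetas)^{-1}N^{-1/2}\sum_{i,j}\nabla\cL(\thetas;Z_{ij})$ is a normalized sum of $N$ i.i.d.\ centered vectors whose sub-Gaussian moments are controlled by Assumptions~\ref{as:design}--\ref{as:noise}. Applying the high-dimensional central limit theorem for maxima yields, under $n\gg d\log^{4+\kappa}d$, a Gaussian vector $G\sim\cN(0,\cov(A))$ with $\sup_t|P(\|A\|_\infty\le t)-P(\|G\|_\infty\le t)|=o(1)$; the remainder in \eqref{eqn:bahadur} is absorbed through the anti-concentration bound, under which the density of $\|G\|_\infty$ is $O(\sqrt{\log d})$, and this is where the $\log^{4+\kappa}d$ margin on $n$ enters. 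Consequently $\widehat T$ is uniformly close in distribution to $\|G\|_\infty$.

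Next I would treat the bootstrap side. Conditional on the data, $\overline A$ in \eqref{eqn:wb} is exactly a centered Gaussian vector with covariance $\cov_\epsilon(\overline A)$, so $\overline W=\|\overline A\|_\infty$ is a Gaussian maximum and $c_{\overline W}(\alpha)$ is precisely its conditional quantile. The crux is the covariance comparison: invoking \eqref{eqn:kgrad_err}, the hypotheses $\|\btheta-\thetas\|_1\ll\{d\sqrt{\log k}\log^{2+\kappa}d\}^{-1}\wedge\{\sqrt{nd}\log^{1+\kappa}d\}^{-1}$, $k\gg d^2\log^{5+\kappa}d$ and $n\gg d\log^{4+\kappa}d$ are exactly calibrated so that the bias terms $d\|\btheta-\thetas\|_1+nd\|\btheta-\thetas\|_1^2$ and the sampling terms $\sqrt{d^2/k}+\sqrt{d/n}$ are each $o(\log^{-2}d)$, whence $\matrixnorm{\cov_\epsilon(\overline A)-\cov(A)}_{\max}=o_P(\log^{-2}d)$. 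The Gaussian comparison inequality, whose error scales like $\{\matrixnorm{\cov_\epsilon(\overline A)-\cov(A)}_{\max}\}^{1/3}\log^{2/3}d$, then gives $\sup_t|P_\epsilon(\overline W\le t)-P(\|G\|_\infty\le t)|=o_P(1)$.

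Finally I would assemble the pieces. The two approximations combine to $\sup_t|P(\widehat T\le t)-P_\epsilon(\overline W\le t)|=o_P(1)$, and since $\overline W$ is conditionally a Gaussian maximum with covariance close to $\cov(A)$ it inherits anti-concentration, which converts this uniform closeness of distribution functions into the quantile statement $\sup_\alpha|P(\widehat T\le c_{\overline W}(\alpha))-\alpha|=o(1)$, i.e.\ \eqref{eqn:lem_wb_ht}. For \eqref{eqn:lem_wb_t}, note $|T-\widehat T|\le\sqrt N\|\ttheta-\htheta\|_\infty\ll\log^{-(1/2+\kappa)}d$ by hypothesis, so anti-concentration (a shift of this order changes any such probability by $O(\log^{-\kappa}d)$) shows the conclusion persists with $\widehat T$ replaced by $T$. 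The hard part will be the covariance comparison step: one must verify that evaluating the gradients at the surrogate $\btheta$ rather than $\thetas$, and estimating the covariance from only the $k$ machine-level averages $\bg_j$, perturbs $\cov(A)$ in max-norm by no more than \eqref{eqn:kgrad_err}, and that every resulting term is beaten by the anti-concentration rate $\log^{-2}d$ --- which is exactly what fixes the exponents $4+\kappa$ and $5+\kappa$ and the two bounds on $\|\btheta-\thetas\|_1$.
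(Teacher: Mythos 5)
Your proposal is correct and follows essentially the same route as the paper: a Bahadur-representation reduction, verification of the Chernozhukov--Chetverikov--Kato moment/variance conditions for $\nabla^2\cLs(\thetas)^{-1}\nabla\cL(\thetas;Z)$, Gaussian approximation plus Gaussian comparison with the $u^{1/3}(1\vee\log(d/u))^{2/3}$ anti-concentration rate, and exactly the covariance calibration $\matrixnorm{\cov_\epsilon(\overline A)-\cov(A)}_{\max}=o_P(\log^{-2}d)$ that fixes the exponents in the hypotheses. The only (cosmetic) differences are that the paper interposes an explicit oracle multiplier bootstrap statistic $W^*$ so it can invoke Theorem 3.2 and Lemma 3.2 of \cite{chernozhukov2013gaussian} verbatim and then compares quantiles via $\matrixnorm{\overline\Omega-\widehat\Omega}_{\max}$ (itself bounded through $\Omega_0$ by the triangle inequality, matching your direct comparison to $\cov(A)$), and that it handles $T$ and $\widehat T$ each through $T_0$ rather than by perturbing $\widehat T$ --- the technical burden you correctly flag (gradients at $\btheta$, only $k$ machine-level averages, the surrogate $\widetilde\Theta$ contributing the $\sqrt{d/n}$ term) is discharged in the paper's Lemmas~\ref{lem:gbd0_ld}, \ref{lem:vcov0_reg}, \ref{lem:kgrad_decomp}, and~\ref{lem:hes_ld}.
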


\begin{proof}[Lemma \ref{lem:ld0}]
    As noted by \cite{zhang2017simultaneous}, since $\|\sqrt N(\ttheta-\thetas)\|_\infty=\max_l\sqrt N|\ttheta_l-\thetas_l|=\sqrt N\max_l\big((\ttheta_l-\thetas_l)\vee(\thetas_l-\ttheta_l)\big)$, the arguments for the bootstrap consistency result with
    \begin{align}
        T&=\max_l\sqrt N(\ttheta-\thetas)_l\quad\text{and} \label{eqn:tp} \\
         \widehat T&=\max_l\sqrt N(\htheta-\thetas)_l \label{eqn:thp}
    \end{align}
    imply the bootstrap consistency result for $T=\|\sqrt N(\ttheta-\thetas)\|_\infty$ and $\widehat T=\|\sqrt N(\htheta-\thetas)\|_\infty$. Hence, from now on, we redefine $T$ and $\widehat T$ as \eqref{eqn:tp} and \eqref{eqn:thp}. Define an oracle multiplier bootstrap statistic as
	\begin{align}
	W^*\defn\max_{1\leq l\leq d}-\frac1{\sqrt{N}}\sum_{i=1}^n\sum_{j=1}^k \left(\nabla^2\cLs(\thetas)^{-1}\nabla\cL(\thetas;Z_{ij})\right)_l\epsilon_{ij}^*, \label{eqn:wsdef}
	\end{align}
	where $\{\epsilon_{ij}^*\}_{i=1,\dots,n;j=1,\dots,k}$ are $N$ independent standard Gaussian variables, also independent of the entire data set.  The proof consists of two steps; the first step is to show that $W^*$ achieves bootstrap consistency, i.e., $\sup_{\alpha\in(0,1)}|P(T\leq c_{W^*}(\alpha))-\alpha|$ converges to $0$, where $c_{W^*}(\alpha)=\inf\{t\in\R:P_\epsilon(W^*\leq t)\geq\alpha\},$ and the second step is to show the bootstrap consistency of our proposed bootstrap statistic by showing the quantiles of $W$ and $W^*$ are close.
	
	Note that $\nabla^2\cLs(\thetas)^{-1}\nabla\cL(\thetas;Z)=\Ee[xx^\top]^{-1}x(x^\top\thetas-y)=\Theta xe$ and
	$$\Ee\left[\left(\nabla^2\cLs(\thetas)^{-1}\nabla\cL(\thetas;Z)\right)\left(\nabla^2\cLs(\thetas)^{-1}\nabla\cL(\thetas;Z)\right)^\top\right]=\Theta\Ee\left[xx^\top e^2\right]\Theta=\sigma^2\Theta\Sigma\Theta=\sigma^2\Theta.$$
	Then, under Assumptions~\ref{as:design}~and~\ref{as:noise},
	\begin{align}
	\min_l\Ee\left[\left(\nabla^2\cLs(\thetas)^{-1}\nabla\cL(\thetas;Z)\right)_l^2\right]=\sigma^2\min_l\Theta_{l,l}\geq\sigma^2\lambdamin(\Theta)=\frac{\sigma^2}{\lambdamax(\Sigma)}, \label{eqn:chkvar}
	\end{align}
	is bounded away from zero.  Under Assumption \ref{as:design}, $x$ is sub-Gaussian, that is, $w^\top x$ is sub-Gaussian with uniformly bounded $\psi_2$-norm for all $w\in S^{d-1}$.  To show $w^\top\Theta x$ is also sub-Gaussian with uniformly bounded $\psi_2$-norm, we write it as
	$$w^\top\Theta x = (\Theta w)^\top x = \left\|\Theta w\right\|_2 \left(\frac{\Theta w}{\left\|\Theta w\right\|_2}\right)^\top x.$$
	Since $\Theta w/\left\|\Theta w\right\|_2\in S^{d-1}$, we have that $\left(\Theta w/\left\|\Theta w\right\|_2\right) x$ is sub-Gaussian with $O(1)$ $\psi_2$-norm, and hence, $w^\top\Theta x$ is sub-Gaussian with $O(\left\|\Theta w\right\|_2)=O(\lambdamax(\Theta))=O(\lambdamin(\Sigma)^{-1})=O(1)$ $\psi_2$-norm, under Assumption~\ref{as:design}.  Since $e$ is also sub-Gaussian under Assumption~\ref{as:noise} and is independent of $w^\top\Theta x$, we have that $w^\top\Theta xe$ is sub-exponential with uniformly bounded $\psi_1$-norm for all $w\in S^{d-1}$, and also, all $\left(\nabla^2\cLs(\thetas)^{-1}\nabla\cL(\thetas;Z)\right)_l$ are sub-exponential with uniformly bounded $\psi_1$-norm.  Combining this with \eqref{eqn:chkvar}, we have verified Assumption~(E.1) of \cite{chernozhukov2013gaussian} for $\nabla^2\cLs(\thetas)^{-1}\nabla\cL(\thetas;Z)$.
	
	Define
	\begin{align}
	T_0\defn\max_{1\leq l\leq d}-\sqrt{N}\left(\nabla^2\cLs(\thetas)^{-1}\nabla\cL_N(\thetas)\right)_l, \label{eqn:t0}
	\end{align}
	which is a Bahadur representation of $T$.  Under the condition $\log^7(dN)/N\lesssim N^{-c}$ for some constant $c>0$, which holds if $N\gtrsim\log^{7+\kappa}d$ for some $\kappa>0$, applying Theorem 3.2 and Corollary 2.1 of \cite{chernozhukov2013gaussian}, we obtain that for some constant $c>0$ and for every $v,\zeta>0$, 
	\begin{align}
	\begin{split}
	\sup_{\alpha\in(0,1)}\left|P(T\leq c_{W^*}(\alpha))-\alpha\right| &\lesssim N^{-c}+v^{1/3}\left(1\vee\log\frac dv\right)^{2/3}+P\left(\matrixnorm{\widehat\Omega-\Omega_0}_{\max}>v\right) \\
	&\quad+\zeta\sqrt{1\vee\log\frac d\zeta}+P\left(|T-T_0|>\zeta\right), \label{eqn:ws}
	\end{split}
	\end{align}
	where
	\begin{align}
	\begin{split}
	\widehat\Omega&\defn\cov_\epsilon\left(-\frac1{\sqrt{N}}\sum_{i=1}^n\sum_{j=1}^k\nabla^2\cLs(\thetas)^{-1}\nabla\cL(\thetas;Z_{ij})\epsilon_{ij}^* \right) \\
	&=\nabla^2\cLs(\thetas)^{-1}\left(\frac1N\sum_{i=1}^n\sum_{j=1}^k\nabla\cL(\thetas;Z_{ij})\nabla\cL(\thetas;Z_{ij})^\top\right)\nabla^2\cLs(\thetas)^{-1}, \quad\text{and} \label{eqn:oh}
	\end{split}
	\end{align}
	\begin{align}
	\Omega_0&\defn\cov\left(-\nabla^2\cLs(\thetas)^{-1}\nabla\cL(\thetas;Z)\right)=\nabla^2\cLs(\thetas)^{-1}\Ee\left[\nabla\cL(\thetas;Z)\nabla\cL(\thetas;Z)^\top\right]\nabla^2\cLs(\thetas)^{-1}. \label{eqn:o0}
	\end{align}
	To show the quantiles of $\overline W$ and $W^*$ are close, we first have that for any $\omega$ such that $\alpha+\omega,\alpha-\omega\in(0,1)$,
	\begin{align*}
	&P(\{T\leq c_{\overline W}(\alpha)\}\ominus\{T\leq c_{W^*}(\alpha)\}) \\
	&\leq 2P(c_{W^*}(\alpha-\omega)<T\leq c_{W^*}(\alpha+\omega)) + P(c_{W^*}(\alpha-\omega)>c_{\overline W}(\alpha)) + P(c_{\overline W}(\alpha)>c_{W^*}(\alpha+\omega)),
	\end{align*}
	where $\ominus$ denotes symmetric difference.
	Following the arguments in the proof of Lemma 3.2 of \cite{chernozhukov2013gaussian}, we have that
	$$P(c_{\overline W}(\alpha)>c_{W^*}(\alpha+\pi(u)))\leq P\left(\matrixnorm{\overline\Omega-\widehat\Omega}_{\max}>u\right),\quad\text{and}$$
	$$P(c_{W^*}(\alpha-\pi(u))>c_{\overline W}(\alpha))\leq P\left(\matrixnorm{\overline\Omega-\widehat\Omega}_{\max}>u\right),$$
	where $\pi(u)\defn u^{1/3}\left(1\vee\log(d/u)\right)^{2/3}$ and
	\begin{align}
	\begin{split}
	\overline\Omega&\defn\cov_\epsilon\left(-\frac1{\sqrt{k}}\sum_{j=1}^k \widetilde\Theta \sqrt n\left(\nabla\cL_j(\btheta)-\nabla\cL_N(\btheta)\right)\epsilon_j \right) \\
	&=\widetilde\Theta\left(\frac1k\sum_{j=1}^k n\left(\nabla\cL_j(\btheta)-\nabla\cL_N(\btheta)\right)\left(\nabla\cL_j(\btheta)-\nabla\cL_N(\btheta)\right)^\top\right)\widetilde\Theta^\top. \label{eqn:ob}
	\end{split}
	\end{align}
	By letting $\omega=\pi(u)$, we have that
	\begin{align*}
	&P(\{T\leq c_{\overline W}(\alpha)\}\ominus\{T\leq c_{W^*}(\alpha)\}) \\
	&\leq 2P(c_{W^*}(\alpha-\pi(u))<T\leq c_{W^*}(\alpha+\pi(u))) + P(c_{W^*}(\alpha-\pi(u))>c_{\overline W}(\alpha)) + P(c_{\overline W}(\alpha)>c_{W^*}(\alpha+\pi(u))) \\
	&\leq 2P(c_{W^*}(\alpha-\pi(u))<T\leq c_{W^*}(\alpha+\pi(u))) + 2P\left(\matrixnorm{\overline\Omega-\widehat\Omega}_{\max}>u\right),
	\end{align*}
	where by \eqref{eqn:ws},
	\begin{align*}
	P(c_{W^*}(\alpha-\pi(u))<T\leq c_{W^*}(\alpha+\pi(u))) &=P(T\leq c_{W^*}(\alpha+\pi(u)))-P(T\leq c_{W^*}(\alpha-\pi(u))) \\
	&\lesssim \pi(u)+N^{-c}+\zeta\sqrt{1\vee\log\frac d\zeta}+P\left(|T-T_0|>\zeta\right),
	\end{align*}
	and then,
	\begin{align}
	\sup_{\alpha\in(0,1)}\left|P(T\leq c_{\overline W}(\alpha))-\alpha\right| &\lesssim N^{-c}+v^{1/3}\left(1\vee\log\frac dv\right)^{2/3}+P\left(\matrixnorm{\widehat\Omega-\Omega_0}_{\max}>v\right) \nonumber \\
	&\quad+\zeta\sqrt{1\vee\log\frac d\zeta}+P\left(|T-T_0|>\zeta\right) + u^{1/3}\left(1\vee\log\frac du\right)^{2/3} + P\left(\matrixnorm{\overline\Omega-\widehat\Omega}_{\max}>u\right). \label{eqn:wbbd}
	\end{align}
	
	Applying Lemmas~\ref{lem:tbd_ld},~\ref{lem:gbdo_ld},~and~\ref{lem:gbd0_ld}, we have that there exist some $\zeta,u,v>0$ such that
	\begin{align}
	\zeta\sqrt{1\vee\log\frac d\zeta}&+P\left(|T-T_0|>\zeta\right)=o(1),\quad\text{and} \label{eqn:zeta} \\
	u^{1/3}\left(1\vee\log\frac du\right)^{2/3}&+P\left(\matrixnorm{\overline\Omega-\widehat\Omega}_{\max}>u\right)=o(1),\quad\text{and} \label{eqn:u} \\
	v^{1/3}\left(1\vee\log\frac dv\right)^{2/3}&+P\left(\matrixnorm{\widehat\Omega-\Omega_0}_{\max}>v\right)=o(1), \label{eqn:v}
	\end{align}
	and hence, after simplifying the conditions, obtain the first result in the lemma. To obtain the second result, we use Lemma~\ref{lem:thbd_ld}, which yields
	\begin{align}
	\xi\sqrt{1\vee\log\frac d\xi}+P\left(|\widehat T-T_0|>\xi\right)=o(1). \label{eqn:xi}
	\end{align}
\end{proof}

\begin{lemma}[\texttt{n+k-1-grad}]\label{lem:ld}
	In linear model, under Assumptions~\ref{as:design}~and~\ref{as:noise}, if $n\gg d\log^{4+\kappa} d$, $n+k\gg d^2\log^{5+\kappa} d$,
	$$\left\|\ttheta-\htheta\right\|_\infty\ll\frac1{\sqrt N\log^{1/2+\kappa}d},\quad\text{and}\quad\left\|\btheta-\thetas\right\|_1\ll\min\left\{\frac1{d\sqrt{\log((n+k)d)}\log^{2+\kappa}d},\frac1{ \sqrt{d}\log^{1+\kappa}d}\sqrt{\frac1n+\frac1k}\right\},$$
	for some $\kappa>0$, then we have that
	\begin{align}
	\sup_{\alpha\in(0,1)}\left|P(T\leq c_{\widetilde W}(\alpha))-\alpha\right|&=o(1),\quad\text{and} \label{eqn:lem_wt_t} \\
	\sup_{\alpha\in(0,1)}\left|P(\widehat T\leq c_{\widetilde W}(\alpha))-\alpha\right|&=o(1). \label{eqn:lem_wt_ht}
	\end{align}
\end{lemma}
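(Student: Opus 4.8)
The plan is to run the two-step architecture of the proof of Lemma~\ref{lem:ld0} essentially verbatim, changing only the covariance-comparison step, because the oracle statistic and its Gaussian approximation do not see how the master redistributes the multipliers. I would keep the same oracle multiplier bootstrap statistic $W^*$ of \eqref{eqn:wsdef} and the same Bahadur surrogate $T_0$ of \eqref{eqn:t0}. The proof of Lemma~\ref{lem:ld0} has already verified condition (E.1) of \cite{chernozhukov2013gaussian} for $\nabla^2\cLs(\thetas)^{-1}\nabla\cL(\thetas;Z)=\Theta x e$ (sub-exponential coordinates with variances bounded below), so the Gaussian-approximation bound \eqref{eqn:ws} and the controls \eqref{eqn:zeta}, \eqref{eqn:v}, and---for the $\widehat T$ claim---\eqref{eqn:xi} all carry over unchanged, none of them involving $\widetilde W$. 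This reduces \eqref{eqn:lem_wt_t} to comparing the quantiles of $\widetilde W$ and $W^*$.

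For that comparison I would repeat the symmetric-difference argument (the invocation of Lemma~3.2 of \cite{chernozhukov2013gaussian}) that produced \eqref{eqn:wbbd}, now with the conditional covariance
\begin{align*}
\widetilde\Omega\defn\widetilde\Theta\left(\frac1{n+k-1}\left(\sum_{i=1}^n(\bg_{i1}-\bar\bg)(\bg_{i1}-\bar\bg)^\top+n\sum_{j=2}^k(\bg_j-\bar\bg)(\bg_j-\bar\bg)^\top\right)\right)\widetilde\Theta^\top
\end{align*}
playing the role that $\overline\Omega$ of \eqref{eqn:ob} played for \texttt{k-grad}. This produces exactly \eqref{eqn:wbbd} with $P(\matrixnorm{\overline\Omega-\widehat\Omega}_{\max}>u)$ replaced by $P(\matrixnorm{\widetilde\Omega-\widehat\Omega}_{\max}>u)$; every other summand is already $o(1)$ by the previous paragraph.

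The one genuinely new estimate is therefore an \texttt{n+k-1-grad} analog of Lemma~\ref{lem:gbdo_ld}, bounding $\matrixnorm{\widetilde\Omega-\widehat\Omega}_{\max}$. I would use the linear-model identity $\nabla\cL(\theta;x,y)=-xe+xx^\top(\theta-\thetas)$, which is affine in $\theta$ and hence lets me expand $\bg_{i1}$, $\bg_j$, and $\bar\bg$ exactly about $\thetas$; the replacement $\widetilde\Theta=\nabla^2\cL_1(\btheta)^{-1}\to\Theta$ and the centering at $\bar\bg$ rather than the true mean each contribute bias terms controlled by $\|\btheta-\thetas\|_1$, matching the first two summands of \eqref{eqn:nk1grad_err}---and crucially the unit weight on each master term (no $\sqrt n$ scaling) is what degrades the quadratic bias only to $(n\wedge k)d\|\btheta-\thetas\|_1^2$ rather than the $nd\|\btheta-\thetas\|_1^2$ of \texttt{k-grad}. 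The stochastic part splits into a master piece, an $n$-sample covariance of the products $(\Theta x_{i1}e_{i1})(\Theta x_{i1}e_{i1})^\top$, and a worker piece, a $(k-1)$-sample covariance of the within-machine centered averages $\sqrt n\,\Theta(\nabla\cL_j(\thetas)-\Ee[\nabla\cL(\thetas;Z)])$. Both are, after conjugation, approximately unbiased estimators of $\Omega_0$ whose $1/(n+k-1)$ weights sum to one, so their weighted combination concentrates with effective sample size $n+k-1$, improving the worker-only rate $\sqrt{d^2/k}$ to $\sqrt{d^2/(n+k)}+\sqrt{d/n}$ up to logarithmic factors.

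Finally I would substitute this bound into \eqref{eqn:wbbd} and choose $u,v,\zeta>0$ making each summand vanish, checking that $n\gg d\log^{4+\kappa}d$, $n+k\gg d^2\log^{5+\kappa}d$ together with the hypothesized rates on $\|\ttheta-\htheta\|_\infty$ and $\|\btheta-\thetas\|_1$ force every term to $o(1)$; this gives \eqref{eqn:lem_wt_t}, and replacing $T$ by $\widehat T$ via Lemma~\ref{lem:thbd_ld} and \eqref{eqn:xi} gives \eqref{eqn:lem_wt_ht}. I expect the main obstacle to be the max-norm concentration of the master piece: after $\Theta$-conjugation its entries are products of sub-exponentials (sub-Weibull, hence heavy-tailed), so controlling the maximum over $d^2$ coordinates of the $n$-sample mean requires a truncation/Bernstein argument that yields the $\sqrt{d/n}$ rate without an extra power of $d$, and the coupling of the master and worker pieces through the shared centering $\bar\bg$ must be handled carefully so that the effective sample size is genuinely $n+k-1$ and not merely $k$.
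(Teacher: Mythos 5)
Your proposal follows the paper's proof essentially verbatim: the paper likewise reuses the argument of Lemma~\ref{lem:ld0} to obtain \eqref{eqn:wbbd2} with $\widetilde\Omega$ of \eqref{eqn:ot} in place of $\overline\Omega$, bounds $\matrixnorm{\widetilde\Omega-\widehat\Omega}_{\max}$ via Lemma~\ref{lem:gbd_ld} (whose ingredients---the exact cancellation of the cross terms from the shared centering in Lemma~\ref{lem:nk1grad_decomp}, the master/worker split with weights summing to one in Lemma~\ref{lem:vcov_reg}, and pulling out $\matrixnorm{\widetilde\Theta}_\infty^2=O_P(d)$ \emph{before} applying the sub-exponential concentration from Corollary~3.1 of \cite{chernozhukov2013gaussian}---are exactly the steps you sketch), and closes with Lemmas~\ref{lem:tbd_ld},~\ref{lem:gbdo_ld},~and~\ref{lem:thbd_ld}. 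The only quibbles are cosmetic: the variance lemma you describe is the paper's Lemma~\ref{lem:gbd_ld} (not an analog of Lemma~\ref{lem:gbdo_ld}), and the $\sqrt{d/n}$ term arises from the $\widetilde\Theta$-versus-$\Theta$ error ($I_2$ in that proof, via Lemma~\ref{lem:hes_ld}) rather than from the master-piece concentration you flag as the main obstacle, which the paper sidesteps by conjugating only after the max-norm bound.
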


\begin{proof}[Lemma \ref{lem:ld}]
	By the argument in the proof of Lemma~\ref{lem:ld0}, we have that
	\begin{align}
	\sup_{\alpha\in(0,1)}\left|P(T\leq c_{\widetilde W}(\alpha))-\alpha\right| &\lesssim N^{-c}+v^{1/3}\left(1\vee\log\frac dv\right)^{2/3}+P\left(\matrixnorm{\widehat\Omega-\Omega_0}_{\max}>v\right) \nonumber \\
	&\quad+\zeta\sqrt{1\vee\log\frac d\zeta}+P\left(|T-T_0|>\zeta\right) + u^{1/3}\left(1\vee\log\frac du\right)^{2/3} + P\left(\matrixnorm{\widetilde\Omega-\widehat\Omega}_{\max}>u\right), \label{eqn:wbbd2}
	\end{align}
	where
	\begin{align}
	\begin{split}
	\widetilde\Omega&\defn\cov_\epsilon\left(-\frac1{\sqrt{n+k-1}}\left(\sum_{i=1}^n\widetilde\Theta\left(\nabla\cL(\btheta;Z_{i1})-\nabla\cL_N(\btheta)\right)\epsilon_{i1}+\sum_{j=2}^k\widetilde\Theta\sqrt n\left(\nabla\cL_j(\btheta)-\nabla\cL_N(\btheta)\right)\epsilon_j\right)\right) \\
	&=\widetilde\Theta\frac1{n+k-1}\Bigg(\sum_{i=1}^n\left(\nabla\cL(\theta;Z_{i1})-\nabla\cL_N(\theta)\right)\left(\nabla\cL(\theta;Z_{i1})-\nabla\cL_N(\theta)\right)^\top \\
	&\quad+\sum_{j=2}^k n\left(\nabla\cL_j(\theta)-\nabla\cL_N(\theta)\right)\left(\nabla\cL_j(\theta)-\nabla\cL_N(\theta)\right)^\top\Bigg)\widetilde\Theta^\top, \label{eqn:ot}
	\end{split}
	\end{align}
	if $N\gtrsim\log^{7+\kappa}d$ for some $\kappa>0$.  Applying Lemmas~\ref{lem:tbd_ld},~\ref{lem:gbdo_ld},~and~\ref{lem:gbd_ld}, we have that there exist some $\zeta,u,v>0$ such that \eqref{eqn:zeta},
	\begin{align}
	u^{1/3}\left(1\vee\log\frac du\right)^{2/3}+P\left(\matrixnorm{\widetilde\Omega-\widehat\Omega}_{\max}>u\right)=o(1), \label{eqn:ut}
	\end{align}
	and \eqref{eqn:v} hold, and hence, after simplifying the conditions, obtain the first result in the lemma. To obtain the second result, we use Lemma~\ref{lem:thbd_ld}, which yields \eqref{eqn:xi}.
\end{proof}

\begin{lemma}[\texttt{k-grad}]\label{lem:ld0_glm}
	In GLM, under Assumptions~\ref{as:smth_glm}--\ref{as:subexp_glm}, if $n\gg d\log^{5+\kappa} d$, $k\gg d^2\log^{5+\kappa} d$, $nk\gg d^5\log^{3+\kappa} d$,
	$$\left\|\ttheta-\htheta\right\|_\infty\ll\frac1{\sqrt N\log^{1/2+\kappa}d},\quad\text{and}\quad
	\left\|\btheta-\thetas\right\|_1\ll\frac1{ \sqrt{nd}\log^{1+\kappa}d},$$
	for some $\kappa>0$, then we have that \eqref{eqn:lem_wb_t} and \eqref{eqn:lem_wb_ht} hold.
\end{lemma}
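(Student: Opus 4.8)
The plan is to mirror the two-step architecture of the \texttt{k-grad} linear-model proof (Lemma~\ref{lem:ld0}), replacing each closed-form linear-algebraic computation by its GLM counterpart. First I would reduce the two-sided max-norm statistic to the one-sided maximum $T=\max_l\sqrt N(\ttheta-\thetas)_l$ via the observation of \cite{zhang2017simultaneous}, and introduce the oracle multiplier bootstrap statistic $W^*$ built from the true score $\bh=\nabla^2\cLs(\thetas)^{-1}\nabla\cL(\thetas;Z)$ exactly as in \eqref{eqn:wsdef}. The target is then the decomposition \eqref{eqn:wbbd}: bound $\sup_\alpha|P(T\le c_{\overline W}(\alpha))-\alpha|$ by (i) a Gaussian-approximation/anti-concentration term from \cite{chernozhukov2013gaussian}, (ii) the Bahadur remainder $|T-T_0|$, (iii) the oracle covariance error $\matrixnorm{\widehat\Omega-\Omega_0}_{\max}$, and (iv) the covariance error $\matrixnorm{\overline\Omega-\widehat\Omega}_{\max}$ introduced by the \texttt{k-grad} surrogate.

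For Step~1 (consistency of $W^*$), the key change from the linear case is that $\bh$ is no longer the explicit product $\Theta xe$, so its tail behavior cannot be read off directly. Instead I would invoke Assumption~\ref{as:subexp_glm} to supply the $(2+q)$-moment and tail/maximal-moment bounds on the coordinates $\bh_l$, together with Assumption~\ref{as:hes_glm} to bound the eigenvalues of $\nabla^2\cLs(\thetas)$ and of the score covariance away from zero and infinity; this provides both the variance lower bound playing the role of \eqref{eqn:chkvar} and the high-moment control needed to verify condition (E.1) of \cite{chernozhukov2013gaussian}. Applying Theorem~3.2 and Corollary~2.1 of \cite{chernozhukov2013gaussian} then yields the bound \eqref{eqn:ws}, and the quantile-comparison argument of their Lemma~3.2 reduces the $\overline W$-versus-$W^*$ gap to $\matrixnorm{\overline\Omega-\widehat\Omega}_{\max}$, giving \eqref{eqn:wbbd}.

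It then remains to bound the four error terms, for which I would establish GLM analogues of Lemmas~\ref{lem:tbd_ld}, \ref{lem:gbdo_ld}, \ref{lem:gbd0_ld}, and \ref{lem:thbd_ld}. The Bahadur remainders $|T-T_0|$ and $|\widehat T-T_0|$ are handled by the CSL convergence guarantees (Lemmas~\ref{lem:m_glm} and~\ref{lem:csl_glm}) combined with a second-order Taylor expansion of the gradient around $\thetas$; the hypothesis $\|\ttheta-\htheta\|_\infty\ll(\sqrt N\log^{1/2+\kappa}d)^{-1}$ guarantees that the surrogate does not inflate this remainder. The two covariance errors are the crux.

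The main obstacle is bounding those covariance errors under GLM's nonlinearity and weaker moment conditions. For $\matrixnorm{\overline\Omega-\widehat\Omega}_{\max}$ (the \texttt{k-grad} surrogate error, the analogue of \eqref{eqn:kgrad_err}): unlike the linear model, where $\nabla^2\cL_j$ is a fixed sample matrix and the gradient is affine in $\theta$, here both the per-machine gradients $\nabla\cL_j(\btheta)$ and the surrogate inverse Hessian $\widetilde\Theta=\nabla^2\cL_1(\btheta)^{-1}$ depend nonlinearly on the plug-in point $\btheta$. I would Taylor-expand each around $\thetas$, using the Lipschitz bound on $g''$ and the boundedness of $g'$, $g''$ from Assumption~\ref{as:smth_glm} together with $\|x\|_\infty=O(1)$ from Assumption~\ref{as:design_glm} to control the $g'''$-type remainders uniformly over the $d^2$ entries; the resulting bias scales like $d\|\btheta-\thetas\|_1+nd\|\btheta-\thetas\|_1^2$, while the $k$-machine sample-covariance fluctuation contributes the $\sqrt{d^2/k}$ and $\sqrt{d/n}$ stochastic terms, yielding the $k\gg d^2$ requirement and the stated bound on $\|\btheta-\thetas\|_1$. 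For $\matrixnorm{\widehat\Omega-\Omega_0}_{\max}$ (the full-sample oracle error): because Assumption~\ref{as:subexp_glm} may grant only polynomial moments of $\bh_l$, I would use a Fuk--Nagaev/Rosenthal-type maximal inequality over the $d^2$ entries, which is where the total-sample requirement $nk\gg d^5\log^{3+\kappa}d$ enters, in sharp contrast to the linear case where the sub-exponential score gives a far milder condition. Matching all four contributions against the nominal $o(1)$ rate, together with the anti-concentration cost $u^{1/3}(1\vee\log(d/u))^{2/3}$ from \cite{chernozhukov2013gaussian}, produces exactly the stated sample-size and deviation conditions, and the argument for $\widehat T$ is identical once $|\widehat T-T_0|$ is controlled.
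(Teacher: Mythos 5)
Your overall architecture coincides with the paper's proof: the reduction to the one-sided maximum $T=\max_l\sqrt N(\ttheta-\thetas)_l$, the oracle multiplier statistic $W^*$ of \eqref{eqn:wsdef}, verification of condition~(E.1) of \cite{chernozhukov2013gaussian} from the eigenvalue bounds of Assumption~\ref{as:hes_glm} together with the moment conditions of Assumption~\ref{as:subexp_glm}, the quantile-comparison decomposition \eqref{eqn:wbbd}, and the division of labor among GLM analogues of the linear-model auxiliary lemmas (the paper's Lemmas~\ref{lem:tbd_ld_glm}, \ref{lem:thbd_ld_glm}, \ref{lem:gbdo_ld_glm}, and~\ref{lem:gbd0_ld_glm}). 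Your treatment of $\matrixnorm{\overline\Omega-\widehat\Omega}_{\max}$ is also right in substance: Taylor expansion of the local gradients with the $g'''$-control from Assumptions~\ref{as:smth_glm}--\ref{as:design_glm}, the bound $\matrixnorm{\widetilde\Theta(\btheta)}_\infty=O_P(\sqrt d)$, and concentration over the $k$ machines, reproducing the bias terms $d\,r_{\btheta}+nd\,r_{\btheta}^2$ and the stochastic terms.

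There is, however, a concrete misstep in your final accounting. You assert that $\matrixnorm{\widehat\Omega-\Omega_0}_{\max}$ must be handled by a Fuk--Nagaev/Rosenthal-type inequality because Assumption~\ref{as:subexp_glm} ``may grant only polynomial moments,'' and that this is where $nk\gg d^5\log^{3+\kappa}d$ enters. Both halves are wrong. Assumption~\ref{as:subexp_glm} is exactly the (E.1)/(E.2)-type condition of \cite{chernozhukov2013gaussian}, so the paper bounds $\matrixnorm{\widehat\Omega-\Omega_0}_{\max}=O_P\bigl(\sqrt{\log d/N}+\log^2(dN)\log d/N\bigr)$ directly via the proof of their Corollary~3.1 (Lemma~\ref{lem:gbdo_ld_glm}), which costs only $N\gg\log^{5+\kappa}d$: the maximum over the $d^2$ entries contributes logarithmic factors, never a power of $d$. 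The condition $nk\gg d^5\log^{3+\kappa}d$ is instead consumed by the Bahadur representation error (Lemmas~\ref{lem:tbd_ld_glm} and~\ref{lem:thbd_ld_glm}): in GLM the linearization remainder is $|T-T_0|=O_P\bigl(r_{\ttheta}\sqrt N+d^{5/2}\log d/\sqrt N\bigr)$, the $d^{5/2}$ arising from $\matrixnorm{\Theta}_\infty=O(\sqrt d)$, the Hessian perturbation of order $\|\htheta-\thetas\|_1+\sqrt{\log d/N}$, and $\|\htheta-\thetas\|_1=O_P\bigl(d\sqrt{\log d/N}\bigr)$ from Lemma~\ref{lem:m_glm}; the anti-concentration requirement $\zeta\sqrt{1\vee\log(d/\zeta)}=o(1)$ then forces precisely $N\gg d^5\log^{3+\kappa}d$. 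Your plan mentions the Taylor expansion for this remainder but does not track its $d^{5/2}$ size, so as written you would not verify that the stated hypotheses cover the dominant error term. A minor point in the same vein: Lemma~\ref{lem:csl_glm} plays no role inside this lemma's proof---here $\|\ttheta-\htheta\|_\infty$ and $\|\btheta-\thetas\|_1$ are hypotheses, and the CSL convergence guarantees are invoked only later, in the proof of Theorem~\ref{theo:ld0_glm_csl}.
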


\begin{proof}[Lemma \ref{lem:ld0_glm}]
    We redefine $T$ and $\widehat T$ as \eqref{eqn:tp} and \eqref{eqn:thp}.  We define an oracle multiplier bootstrap statistic as in \eqref{eqn:wsdef}.  Under Assumption~\ref{as:hes_glm},
	\begin{align*}
	\min_l\Ee\left[\left(\nabla^2\cLs(\thetas)^{-1}\nabla\cL(\thetas;Z)\right)_l^2\right]&=\min_l \left(\nabla^2\cLs(\thetas)^{-1}\Ee\left[\nabla\cL(\thetas;Z)\nabla\cL(\thetas;Z)^\top\right]\nabla^2\cLs(\thetas)^{-1}\right)_{l,l} \\
	&\geq\lambdamin\left(\nabla^2\cLs(\thetas)^{-1}\Ee\left[\nabla\cL(\thetas;Z)\nabla\cL(\thetas;Z)^\top\right]\nabla^2\cLs(\thetas)^{-1}\right) \\
	&\geq\lambdamin\left(\nabla^2\cLs(\thetas)^{-1}\right)^2\lambdamin\left(\Ee\left[\nabla\cL(\thetas;Z)\nabla\cL(\thetas;Z)^\top\right]\right) \\
	&=\frac{\lambdamin\left(\Ee\left[\nabla\cL(\thetas;Z)\nabla\cL(\thetas;Z)^\top\right]\right)}{\lambdamax\left(\nabla^2\cLs(\thetas)\right)^2}
	\end{align*}
	is bounded away from zero.  Combining this with Assumption~\ref{as:subexp_glm}, we have verified Assumption~(E.1) of \cite{chernozhukov2013gaussian} for $\nabla^2\cLs(\thetas)^{-1}\nabla\cL(\thetas;Z)$.  Then, we use the same argument as in the proof of Lemma~\ref{lem:ld0}, and obtain \eqref{eqn:wbbd} with
	\begin{align}
	\begin{split}
	\overline\Omega&\defn\widetilde\Theta(\btheta)\left(\frac1k\sum_{j=1}^k n\left(\nabla\cL_j(\btheta)-\nabla\cL_N(\btheta)\right)\left(\nabla\cL_j(\btheta)-\nabla\cL_N(\btheta)\right)^\top\right)\widetilde\Theta(\btheta)^\top, \label{eqn:ob_glm}
	\end{split}
	\end{align}
	under the condition $\log^7(dN)/N\lesssim N^{-c}$ for some constant $c>0$, which holds if $N\gtrsim\log^{7+\kappa}d$ for some $\kappa>0$.  Applying Lemmas~\ref{lem:tbd_ld_glm},~\ref{lem:gbdo_ld_glm},~and~\ref{lem:gbd0_ld_glm}, we have that there exist some $\zeta,u,v>0$ such that \eqref{eqn:zeta}, \eqref{eqn:u}, and \eqref{eqn:v} hold, and hence, after simplifying the conditions, obtain the first result in the lemma. To obtain the second result, we use Lemma~\ref{lem:thbd_ld_glm}, which yields \eqref{eqn:xi}.
\end{proof}

\begin{lemma}[\texttt{n+k-1-grad}]\label{lem:ld_glm}
	In GLM, under Assumptions~\ref{as:smth_glm}--\ref{as:subexp_glm}, if $n\gg d\log^{5+\kappa} d$, $n+k\gg d^2\log^{5+\kappa} d$, $nk\gg d^5\log^{3+\kappa} d$,
	$$\left\|\ttheta-\htheta\right\|_\infty\ll\frac1{\sqrt N\log^{1/2+\kappa}d},\quad\text{and}$$
	$$\left\|\btheta-\thetas\right\|_1\ll\min\left\{\frac{n+k}{d\left(n+k\sqrt{\log d}+k^{3/4}\log^{3/4}d\right)\log^{2+\kappa}d},\frac1{ \sqrt{d}\log^{1+\kappa}d}\sqrt{\frac1n+\frac1k}\right\},$$
	for some $\kappa>0$, then we have that \eqref{eqn:lem_wt_t} and \eqref{eqn:lem_wt_ht} hold.
\end{lemma}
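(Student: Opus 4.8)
The plan is to follow the template of the proof of Lemma~\ref{lem:ld0}, reusing the Gaussian comparison machinery of \cite{chernozhukov2013gaussian} essentially verbatim and only swapping in the GLM-specific ingredients, exactly as the \texttt{k-grad} GLM proof (Lemma~\ref{lem:ld0_glm}) did relative to the linear case. First I would redefine $T$ and $\widehat T$ as in \eqref{eqn:tp}--\eqref{eqn:thp} and introduce the oracle multiplier statistic $W^*$ of \eqref{eqn:wsdef}. The verification of Assumption~(E.1) of \cite{chernozhukov2013gaussian} for the score $\nabla^2\cLs(\thetas)^{-1}\nabla\cL(\thetas;Z)$ is identical to the one already carried out in Lemma~\ref{lem:ld0_glm}: Assumption~\ref{as:hes_glm} lower-bounds the per-coordinate variance away from zero, while Assumption~\ref{as:subexp_glm} supplies the sub-exponential tail and moment control. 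With (E.1) in hand and under $N\gtrsim\log^{7+\kappa}d$, the same symmetric-difference argument comparing the quantiles of $\widetilde W$ and $W^*$ as in Lemma~\ref{lem:ld0} yields the master bound \eqref{eqn:wbbd2}, now with the \texttt{n+k-1-grad} covariance $\widetilde\Omega$ of \eqref{eqn:ot} modified for GLM by replacing the plug-in Hessian $\widetilde\Theta$ with its estimated counterpart $\widetilde\Theta(\btheta)$ and evaluating the gradients at $\btheta$, exactly as \eqref{eqn:ob_glm} modifies \eqref{eqn:ob}.

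It then remains to drive the three error contributions on the right of \eqref{eqn:wbbd2} to $o(1)$ by choosing $\zeta,u,v>0$ appropriately. For the Bahadur term $\zeta$ I would invoke Lemma~\ref{lem:tbd_ld_glm} to obtain \eqref{eqn:zeta}, and for the oracle-covariance term $v$ I would invoke Lemma~\ref{lem:gbdo_ld_glm} to obtain \eqref{eqn:v}; both are shared with the \texttt{k-grad} GLM proof and require no change. The only genuinely new input is the bootstrap-covariance term $u$, which I would control by the \texttt{n+k-1-grad} GLM analog of Lemma~\ref{lem:gbd0_ld_glm}, namely Lemma~\ref{lem:gbd_ld_glm}, producing \eqref{eqn:ut}. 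Feeding these rates back into \eqref{eqn:wbbd2} gives the first conclusion \eqref{eqn:lem_wt_t} after collecting the requirements $n\gg d\log^{5+\kappa}d$, $n+k\gg d^2\log^{5+\kappa}d$, $nk\gg d^5\log^{3+\kappa}d$ and the stated bound on $\|\btheta-\thetas\|_1$; the second conclusion \eqref{eqn:lem_wt_ht} then follows by replacing Lemma~\ref{lem:tbd_ld_glm} with Lemma~\ref{lem:thbd_ld_glm} to produce \eqref{eqn:xi}.

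The main obstacle is the covariance bound \eqref{eqn:ut}, i.e.\ controlling $\matrixnorm{\widetilde\Omega-\widehat\Omega}_{\max}$ for the mixed \texttt{n+k-1-grad} weighting. Unlike the \texttt{k-grad} case, $\widetilde\Omega$ in \eqref{eqn:ot} is an inhomogeneous average: the master contributes $n$ single-datum outer products while each of the remaining $k-1$ workers contributes one machine-averaged outer product scaled by $n$. One must simultaneously absorb (i) the perturbation from evaluating both the gradients and the plug-in Hessian $\widetilde\Theta(\btheta)$ at $\btheta$ rather than at $\thetas$, which in the GLM setting is nonlinear and produces a quadratic-in-$\|\btheta-\thetas\|_1$ remainder controlled through $g''$ and $g'''$ via Assumption~\ref{as:smth_glm}, and (ii) the sampling fluctuation of the two heterogeneous variance estimators, whose effective sample size is $n+k$. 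This is precisely what forces the intricate first entry $\frac{n+k}{d(n+k\sqrt{\log d}+k^{3/4}\log^{3/4}d)\log^{2+\kappa}d}$ of the minimum in the hypothesis on $\|\btheta-\thetas\|_1$, and matching that rate term-by-term is the delicate part of establishing Lemma~\ref{lem:gbd_ld_glm}.
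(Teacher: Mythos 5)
Your proposal matches the paper's proof of Lemma~\ref{lem:ld_glm} essentially verbatim: the paper likewise obtains \eqref{eqn:wbbd2} with $\widetilde\Omega$ as in \eqref{eqn:ot_glm} by the argument of Lemma~\ref{lem:ld0_glm}, then applies Lemmas~\ref{lem:tbd_ld_glm}, \ref{lem:gbdo_ld_glm}, and \ref{lem:gbd_ld_glm} to secure \eqref{eqn:zeta}, \eqref{eqn:v}, and \eqref{eqn:ut}, and finally invokes Lemma~\ref{lem:thbd_ld_glm} for \eqref{eqn:xi} to get \eqref{eqn:lem_wt_ht}. Your identification of the \texttt{n+k-1-grad} covariance bound \eqref{eqn:ut} (Lemma~\ref{lem:gbd_ld_glm}) as the only genuinely new ingredient, and of the source of the rate $\frac{n+k}{d(n+k\sqrt{\log d}+k^{3/4}\log^{3/4}d)\log^{2+\kappa}d}$, is also consistent with the paper.
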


\begin{proof}[Lemma \ref{lem:ld_glm}]
	By the argument in the proof of Lemma~\ref{lem:ld0_glm}, we obtain \eqref{eqn:wbbd2} with
	\begin{align}
	\begin{split}
	\widetilde\Omega&\defn\widetilde\Theta(\btheta)\frac1{n+k-1}\Bigg(\sum_{i=1}^n\left(\nabla\cL(\theta;Z_{i1})-\nabla\cL_N(\theta)\right)\left(\nabla\cL(\theta;Z_{i1})-\nabla\cL_N(\theta)\right)^\top \\
	&\quad+\sum_{j=2}^k n\left(\nabla\cL_j(\theta)-\nabla\cL_N(\theta)\right)\left(\nabla\cL_j(\theta)-\nabla\cL_N(\theta)\right)^\top\Bigg)\widetilde\Theta(\btheta)^\top, \label{eqn:ot_glm}
	\end{split}
	\end{align}
	if $N\gtrsim\log^{7+\kappa}d$ for some $\kappa>0$.  Applying Lemmas~\ref{lem:tbd_ld_glm},~\ref{lem:gbdo_ld_glm},~and~\ref{lem:gbd_ld_glm}, we have that there exist some $\zeta,u,v>0$ such that \eqref{eqn:zeta}, \eqref{eqn:ut}, and \eqref{eqn:v} hold, and hence, after simplifying the conditions, obtain the first result in the lemma. To obtain the second result, we use Lemma~\ref{lem:thbd_ld_glm}, which yields \eqref{eqn:xi}.
\end{proof}

\section{Lemmas on Bounding Bahadur Representation Errors}

For both linear model and GLM, we denote the global design matrix and the local design matrices by $X_N=(X_1^\top,\dots,X_k^\top)^\top\in\R^{N\times d}$ and $X_j=(x_{1j},\dots,x_{nj})^\top\in\R^{n\times d}$ for $j=1,\dots,k$. We write each covariate vector as $x_{ij}=(x_{ij,1},\dots,x_{ij,d})^\top\in\R^{d\times1}$ for $i=1,\dots,n$ and $j=1,\dots,k$. Also, we denote the global response vector and the local response vectors by $y_N=(y_1^\top,\dots,y_k^\top)^\top\in\R^{N\times1}$ and $y_j=(y_{1j},\dots,y_{nj})\in\R^{n\times1}$ for $j=1,\dots,k$. For linear model, we define the global noise vector and the local noise vectors by $e_N=(e_1^\top,\dots,e_k^\top)^\top\in\R^{N\times1}$ and $e_j=(e_{1j},\dots,e_{nj})\in\R^{n\times1}$ for $j=1,\dots,k$.

\begin{lemma}\label{lem:tbd_ld}
	$T$ and $T_0$ are defined as in \eqref{eqn:tp} and \eqref{eqn:t0} respectively.  In linear model, under Assumptions~\ref{as:design}~and~\ref{as:noise}, provided that $\left\|\ttheta-\htheta\right\|_\infty=O_P(r_{\ttheta})$, we have that
	$$|T-T_0| = O_P\left(r_{\ttheta} \sqrt{N} + \frac{d\sqrt{\log d}}{\sqrt N}\right).$$
	Moreover, if $N\gg d^2\log^{2+\kappa} d$ and
	$$\left\|\ttheta-\htheta\right\|_\infty\ll\frac1{\sqrt N\log^{1/2+\kappa}d},$$
	for some $\kappa>0$, then there exists some $\zeta>0$ such that \eqref{eqn:zeta} holds.
\end{lemma}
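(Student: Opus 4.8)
The plan is to reduce $|T-T_0|$ to two controllable pieces and then run an elementary thresholding argument for the second assertion. Since the maximum is $1$-Lipschitz in the sup-norm, i.e.\ $|\max_l a_l-\max_l b_l|\leq\|a-b\|_\infty$, I would first write, with $\Theta=\nabla^2\cLs(\thetas)^{-1}=\Sigma^{-1}$ in the linear model,
$$|T-T_0|\leq\sqrt N\,\big\|(\ttheta-\thetas)+\Theta\nabla\cL_N(\thetas)\big\|_\infty\leq\sqrt N\,\|\ttheta-\htheta\|_\infty+\sqrt N\,\big\|(\htheta-\thetas)+\Theta\nabla\cL_N(\thetas)\big\|_\infty.$$
The first term is immediately $O_P(r_{\ttheta}\sqrt N)$ by hypothesis. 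The advantage of the linear model is that the second term, the Bahadur remainder of the centralized estimator, has an \emph{exact} expression: writing $\widehat\Sigma\defn\nabla^2\cL_N(\thetas)=N^{-1}\sum_{i,j}x_{ij}x_{ij}^\top$ (constant in $\theta$), the normal equation $\nabla\cL_N(\htheta)=0$ gives $\htheta-\thetas=-\widehat\Sigma^{-1}\nabla\cL_N(\thetas)$, so that $(\htheta-\thetas)+\Theta\nabla\cL_N(\thetas)=(\Sigma^{-1}-\widehat\Sigma^{-1})\nabla\cL_N(\thetas)$ with no higher-order Taylor error to handle.

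To control this remainder I would pass to the Euclidean norm, $\|(\Sigma^{-1}-\widehat\Sigma^{-1})\nabla\cL_N(\thetas)\|_\infty\leq\matrixnorm{\Sigma^{-1}-\widehat\Sigma^{-1}}_2\,\|\nabla\cL_N(\thetas)\|_2$, and use $\Sigma^{-1}-\widehat\Sigma^{-1}=-\widehat\Sigma^{-1}(\widehat\Sigma-\Sigma)\Sigma^{-1}$. Three standard sub-Gaussian facts, valid once $N\gtrsim d$, then finish the estimate: (i) $\matrixnorm{\widehat\Sigma-\Sigma}_2=O_P(\sqrt{d/N})$ for the sample second-moment matrix under Assumption~\ref{as:design}; (ii) $\matrixnorm{\Sigma^{-1}}_2=1/\lambdamin(\Sigma)\leq\mu=O(1)$, whence $\matrixnorm{\widehat\Sigma^{-1}}_2=O_P(1)$ by a Weyl-type eigenvalue bound once $\widehat\Sigma$ is close to $\Sigma$; and (iii) $\|\nabla\cL_N(\thetas)\|_2=\|N^{-1}\sum_{i,j}x_{ij}e_{ij}\|_2=O_P(\sqrt{d/N})$, which follows from $\Ee\|\nabla\cL_N(\thetas)\|_2^2=N^{-1}\sigma^2\,\mathrm{tr}(\Sigma)=O(d/N)$ using independence of $x$ and $e$ (Assumptions~\ref{as:design}--\ref{as:noise}). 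Multiplying, the remainder is $O_P(d/N)$, so $\sqrt N$ times it is $O_P(d/\sqrt N)\leq O_P(d\sqrt{\log d}/\sqrt N)$; combining with the first term yields the claimed rate.

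For the second assertion I would exhibit a deterministic threshold $\zeta=\zeta_N$ separating the stochastic size of $|T-T_0|$ from the anti-concentration term. Under $N\gg d^2\log^{2+\kappa}d$ and $r_{\ttheta}\ll(\sqrt N\log^{1/2+\kappa}d)^{-1}$, both pieces of $\rho_N\defn r_{\ttheta}\sqrt N+d\sqrt{\log d}/\sqrt N$ are $o(\log^{-1/2-\kappa/2}d)$. Choosing $a_N\to\infty$ slowly enough that $a_N\rho_N\log^{1/2+\kappa/2}d\to0$ (e.g.\ $a_N=(\rho_N\log^{1/2+\kappa/2}d)^{-1/2}$) and setting $\zeta_N=a_N\rho_N$ makes $P(|T-T_0|>\zeta_N)\to0$ (since $|T-T_0|/\rho_N=O_P(1)$) while $\zeta_N\ll\log^{-1/2-\kappa/2}d$. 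Finally $\zeta_N\sqrt{1\vee\log(d/\zeta_N)}\leq\zeta_N\sqrt{\log d}+\zeta_N\sqrt{\log(1/\zeta_N)}\to0$, the first summand because $\zeta_N\sqrt{\log d}\ll\log^{-\kappa/2}d$ and the second because $x\sqrt{\log(1/x)}\to0$ as $x\downarrow0$; this is exactly \eqref{eqn:zeta}.

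I do not anticipate a genuine obstacle: the exactness of the linear-model Bahadur representation removes any Taylor-remainder analysis, and everything reduces to the three concentration bounds above together with the thresholding step. The only points requiring care are not discarding a spurious $\sqrt d$ in the $\|\cdot\|_\infty\leq\|\cdot\|_2$ step---affordable precisely because the product of two $\sqrt{d/N}$ factors already lands at $d/\sqrt N$, comfortably inside the stated $d\sqrt{\log d}/\sqrt N$---and invoking (i)--(ii) in the regime $N\gtrsim d$, which is implied by the hypotheses of every lemma that calls on this one.
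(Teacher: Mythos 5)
Your proof is correct, and its skeleton is the paper's: the triangle-inequality split $|T-T_0|\leq\sqrt N\left\|\ttheta-\htheta\right\|_\infty+\sqrt N\left\|\htheta-\thetas+\Theta\nabla\cL_N(\thetas)\right\|_\infty$, the exact linear-model identity $(\htheta-\thetas)+\Theta\nabla\cL_N(\thetas)=\left(\Sigma^{-1}-\widehat\Sigma^{-1}\right)\nabla\cL_N(\thetas)$ with $\widehat\Sigma=X_N^\top X_N/N$ (so no Taylor remainder, exactly as the paper exploits), and a thresholding choice of $\zeta$ at the end. Where you genuinely diverge is the norm pairing on the remainder. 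The paper bounds it by $\matrixnorm{(X_N^\top X_N/N)^{-1}-\Theta}_\infty\,\left\|X_N^\top e_N/N\right\|_\infty$, paying $\matrixnorm{\cdot}_\infty\leq\sqrt d\,\matrixnorm{\cdot}_2=O_P(d/\sqrt N)$ for the matrix factor and using Bernstein's inequality plus a union bound to get $\left\|X_N^\top e_N/N\right\|_\infty=O_P\left(\sqrt{\log d/N}\right)$, hence a remainder of $O_P\left(d\sqrt{\log d}/N\right)$; you instead pair $\matrixnorm{\Sigma^{-1}-\widehat\Sigma^{-1}}_2$ with $\left\|\nabla\cL_N(\thetas)\right\|_2$, controlling the score by the second-moment computation $\Ee\left\|\nabla\cL_N(\thetas)\right\|_2^2=\sigma^2\operatorname{tr}(\Sigma)/N=O(d/N)$ (legitimate, since \ref{as:design} forces $\lambdamax(\Sigma)=O(1)$, a fact the paper itself uses elsewhere), obtaining $O_P(d/N)$ --- a $\sqrt{\log d}$ factor sharper than the paper's intermediate bound, and trivially within the lemma's stated rate. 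Your route trades coordinatewise concentration and the union bound for a trace computation; the paper's route produces the $\infty$-norm bound on $X_N^\top e_N/N$ that it recycles in Lemmas~\ref{lem:vcov0_reg} and~\ref{lem:m}, which is presumably why it is organized that way. Both arguments implicitly need $N\gtrsim d$ for the spectral concentration $\matrixnorm{\widehat\Sigma-\Sigma}_2=O_P\left(\sqrt{d/N}\right)$ and invertibility of $X_N^\top X_N$ --- you flag this correctly, and the paper's proof inherits the same condition through \eqref{eqn:inv_conc}. Finally, your threshold $\zeta_N=a_N\rho_N$ with $a_N=\left(\rho_N\log^{1/2+\kappa/2}d\right)^{-1/2}$ differs from the paper's $\zeta=\rho_N^{1-\kappa}$ only cosmetically: both make $P(|T-T_0|>\zeta)\to0$ while $\zeta\sqrt{1\vee\log(d/\zeta)}\to0$ under $N\gg d^2\log^{2+\kappa}d$ and $r_{\ttheta}\ll\left(\sqrt N\log^{1/2+\kappa}d\right)^{-1}$, so \eqref{eqn:zeta} follows as you wrote it.
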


\begin{proof}[Lemma \ref{lem:tbd_ld}]
	First, we note that
	\begin{align*}
	|T-T_0| &\leq \max_{1\leq l\leq d} \left|\sqrt{N}(\ttheta-\thetas)_l+\sqrt{N}\left(\nabla^2\cLs(\thetas)^{-1}\nabla\cL_N(\thetas)\right)_l\right| = \sqrt{N}\left\|\ttheta-\thetas+\nabla^2\cLs(\thetas)^{-1}\nabla\cL_N(\thetas)\right\|_\infty \\
	&\leq \sqrt N\left(\left\|\ttheta-\htheta\right\|_\infty+\left\|\htheta-\thetas+\nabla^2\cLs(\thetas)^{-1}\nabla\cL_N(\thetas)\right\|_\infty\right).
	\end{align*}
	Now, we bound $\left\|\htheta-\thetas+\nabla^2\cLs(\thetas)^{-1}\nabla\cL_N(\thetas)\right\|_\infty$.  In linear model, we have that $\htheta=\left(X_N^\top X_N\right)^{-1}X_N^\top y_N=\thetas+\left(X_N^\top X_N\right)^{-1}X_N^\top e_N$, and then,
	\begin{align*}
	\left\|\htheta-\thetas+\nabla^2\cLs(\thetas)^{-1}\nabla\cL_N(\thetas)\right\|_\infty &= \left\|\left(\frac{X_N^\top X_N}N\right)^{-1}\frac{X_N^\top e_N}N-\Theta\frac{X_N^\top e_N}N\right\|_\infty \leq \matrixnorm{\left(\frac{X_N^\top X_N}N\right)^{-1}-\Theta}_\infty \left\|\frac{X_N^\top e_N}N\right\|_\infty.
	\end{align*}
	Under Assumptions~\ref{as:design}~and~\ref{as:noise}, each $x_{ij,l}$ and $e_{ij}$ are sub-Gaussian, and therefore, their product $x_{ij,l}e_{ij}$ is sub-exponential.  Applying Bernstein's inequality, we have that for any $\delta\in(0,1)$,
	$$P\left(\left|\frac{(X_N^\top e_N)_l}N\right| > \sqrt{\Sigma_{l,l}}\sigma\left(\frac{\log\frac{2d}\delta}{cN}\vee\sqrt{\frac{\log\frac{2d}\delta}{cN}}\right)\right)\leq \frac\delta d,$$
	for some constant $c>0$.  Then, by the union bound, we have that
	\begin{align}
	P\left(\left\|\frac{X_N^\top e_N}N\right\|_\infty > \max_l \sqrt{\Sigma_{l,l}}\sigma\left(\frac{\log\frac{2d}\delta}{cN}\vee\sqrt{\frac{\log\frac{2d}\delta}{cN}}\right)\right)\leq \delta. \label{eqn:bern2}
	\end{align}
	Under Assumption~\ref{as:design}, we have that $\max_l \Sigma_{l,l} \leq \matrixnorm{\Sigma}_{\max} = O(1)$, and then,
	$$\left\|\frac{X_N^\top e_N}N\right\|_\infty = O_P\left(\sqrt{\frac{\log d}N}\right).$$
	Using the same argument for obtaining \eqref{eqn:inv_conc}, we have that
	$$\matrixnorm{\left(\frac{X_N^\top X_N}N\right)^{-1}-\Theta}_\infty\leq\sqrt d\matrixnorm{\left(\frac{X_N^\top X_N}N\right)^{-1}-\Theta}_2=O_P\left(\frac d{\sqrt{N}}\right),$$
	and therefore,
	$$\left\|\htheta-\thetas+\nabla^2\cLs(\thetas)^{-1}\nabla\cL_N(\thetas)\right\|_\infty=O_P\left(\frac{d\sqrt{\log d}}N\right).$$
	Putting together the preceding bounds leads to the first result in the lemma. Choosing
	$$\zeta= \left(r_{\ttheta} \sqrt{N} + \frac{d\sqrt{\log d}}{\sqrt N}\right)^{1-\kappa},$$
	with any $\kappa>0$, we deduce that $P\left(|T-T_0|>\zeta\right)=o(1)$.
	We also have that $$\zeta\sqrt{1\vee\log\frac d\zeta},\quad\text{if}\quad\left(r_{\ttheta} \sqrt{N} + \frac{d\sqrt{\log d}}{\sqrt N}\right) \log^{1/2+\kappa} d =o(1).$$
	We complete the proof by simplifying the conditions.
\end{proof}

\begin{lemma}\label{lem:thbd_ld}
	$\widehat T$ and $T_0$ are defined as in \eqref{eqn:thp} and \eqref{eqn:t0} respectively.  In linear model, under Assumptions~\ref{as:design}~and~\ref{as:noise}, we have that
	$$|\widehat T-T_0| = O_P\left(\frac{d\sqrt{\log d}}{\sqrt N}\right).$$
	Moreover, if $N\gg d^2\log^{2+\kappa} d$ for some $\kappa>0$, then there exists some $\xi>0$ such that \eqref{eqn:xi} holds.
\end{lemma}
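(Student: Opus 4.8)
The plan is to recognize that Lemma~\ref{lem:thbd_ld} is precisely the specialization of Lemma~\ref{lem:tbd_ld} to the centralized estimator $\htheta$, for which the surrogate error $\|\ttheta-\htheta\|_\infty$ is absent. Starting from the definitions \eqref{eqn:thp} and \eqref{eqn:t0}, I would first bound
$$|\widehat T-T_0|\leq\max_{1\leq l\leq d}\left|\sqrt N(\htheta-\thetas)_l+\sqrt N\left(\nabla^2\cLs(\thetas)^{-1}\nabla\cL_N(\thetas)\right)_l\right|=\sqrt N\left\|\htheta-\thetas+\nabla^2\cLs(\thetas)^{-1}\nabla\cL_N(\thetas)\right\|_\infty,$$
so that the task reduces to controlling the Bahadur representation error of the oracle least-squares estimator. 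In the linear model this error has an exact closed form: using $\htheta-\thetas=(X_N^\top X_N)^{-1}X_N^\top e_N$, $\nabla^2\cLs(\thetas)^{-1}=\Theta$, and $\nabla\cL_N(\thetas)=-N^{-1}X_N^\top e_N$, the bracketed vector equals $\big[(N^{-1}X_N^\top X_N)^{-1}-\Theta\big]\,N^{-1}X_N^\top e_N$, which I would then split multiplicatively.

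The next step is to invoke the two bounds already established inside the proof of Lemma~\ref{lem:tbd_ld}. By submultiplicativity,
$$\left\|\htheta-\thetas+\nabla^2\cLs(\thetas)^{-1}\nabla\cL_N(\thetas)\right\|_\infty\leq\matrixnorm{\left(\tfrac{X_N^\top X_N}N\right)^{-1}-\Theta}_\infty\left\|\tfrac{X_N^\top e_N}N\right\|_\infty.$$
The Bernstein/union-bound estimate \eqref{eqn:bern2} gives $\|N^{-1}X_N^\top e_N\|_\infty=O_P(\sqrt{\log d/N})$, while the operator-norm concentration of the inverse sample Gram matrix, combined with $\matrixnorm{\cdot}_\infty\leq\sqrt d\,\matrixnorm{\cdot}_2$ exactly as in Lemma~\ref{lem:tbd_ld}, yields $\matrixnorm{(N^{-1}X_N^\top X_N)^{-1}-\Theta}_\infty=O_P(d/\sqrt N)$. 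Multiplying the two rates and then by $\sqrt N$ produces $|\widehat T-T_0|=O_P\!\big(\sqrt N\cdot d\sqrt{\log d}/N\big)=O_P\!\big(d\sqrt{\log d}/\sqrt N\big)$, which is the first claim.

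For the second assertion I would choose $\xi=\big(d\sqrt{\log d}/\sqrt N\big)^{1-\kappa}$ for the given $\kappa>0$. Since this threshold dominates the stochastic order just derived, $P(|\widehat T-T_0|>\xi)=o(1)$; and $\xi\sqrt{1\vee\log(d/\xi)}=o(1)$ holds provided $\big(d\sqrt{\log d}/\sqrt N\big)\log^{1/2+\kappa}d=o(1)$, i.e. $d\log^{1+\kappa}d/\sqrt N=o(1)$, which is implied by the hypothesis $N\gg d^2\log^{2+\kappa}d$; together these give \eqref{eqn:xi}. There is no genuine obstacle here: the argument is a streamlined copy of Lemma~\ref{lem:tbd_ld} with the surrogate term deleted, and the only load-bearing probabilistic input is the $O_P(d/\sqrt N)$ deviation bound for the inverse sample covariance in the induced $\infty$-norm, which the sub-Gaussian design assumption \ref{as:design} already supplies; the remaining work is purely the bookkeeping of combining rates and calibrating $\xi$.
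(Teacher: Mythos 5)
Your proposal is correct and follows essentially the same route as the paper: the paper's proof of Lemma~\ref{lem:thbd_ld} likewise just invokes the bound $\left\|\htheta-\thetas+\nabla^2\cLs(\thetas)^{-1}\nabla\cL_N(\thetas)\right\|_\infty=O_P\left(d\sqrt{\log d}/N\right)$ already established inside the proof of Lemma~\ref{lem:tbd_ld} (via the same factorization $\left[\left(X_N^\top X_N/N\right)^{-1}-\Theta\right]X_N^\top e_N/N$, the Bernstein bound \eqref{eqn:bern2}, and the $\sqrt d$-inflated spectral concentration), and then makes the identical choice $\xi=\left(d\sqrt{\log d}/\sqrt N\right)^{1-\kappa}$ to verify \eqref{eqn:xi}. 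Your calibration of $\xi$ and the reduction of the condition to $N\gg d^2\log^{2+\kappa}d$ (using the flexibility in $\kappa$) match the paper's bookkeeping exactly.
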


\begin{proof}[Lemma \ref{lem:thbd_ld}]
	By the proof of Lemma~\ref{lem:tbd_ld}, we obtain that
	\begin{align*}
	|\widehat T-T_0| &\leq \max_{1\leq l\leq d} \left|\sqrt{N}(\htheta-\thetas)_l+\sqrt{N}\left(\nabla^2\cLs(\thetas)^{-1}\nabla\cL_N(\thetas)\right)_l\right| = \sqrt{N}\left\|\htheta-\thetas+\nabla^2\cLs(\thetas)^{-1}\nabla\cL_N(\thetas)\right\|_\infty \\
	&=O_P\left(\frac{d\sqrt{\log d}}{\sqrt N}\right).
	\end{align*}
	Choosing
	$$\xi= \left(\frac{d\sqrt{\log d}}{\sqrt N}\right)^{1-\kappa},$$
	with any $\kappa>0$, we deduce that
	$P\left(|\widehat T-T_0|>\xi\right)=o(1)$.
	We also have that $$\xi\sqrt{1\vee\log\frac d\xi},\quad\text{if}\quad\left(\frac{d\sqrt{\log d}}{\sqrt N}\right) \log^{1/2+\kappa} d =o(1),$$
	which holds if $N\gg d^2\log^{2+\kappa} d$.
	
\end{proof}

\begin{lemma}\label{lem:tbd_ld_glm}
	$T$ and $T_0$ are defined as in \eqref{eqn:tp} and \eqref{eqn:t0} respectively.  In GLM, under Assumptions~\ref{as:smth_glm}--\ref{as:hes_glm}, provided that $\left\|\ttheta-\htheta\right\|_\infty=O_P(r_{\ttheta})$ and $N\gtrsim d^4\log d$, we have that
	$$|T-T_0| = O_P\left(r_{\ttheta} \sqrt{N} + \frac{d^{5/2}\log d}{\sqrt N}\right).$$
	Moreover, if $N\gg d^5\log^{3+\kappa} d$ and
	$$\left\|\ttheta-\htheta\right\|_\infty\ll\frac1{\sqrt N\log^{1/2+\kappa}d},$$
	for some $\kappa>0$, then there exists some $\zeta>0$ such that \eqref{eqn:zeta} holds.
\end{lemma}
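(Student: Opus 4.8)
The plan is to mirror the structure of the proof of Lemma~\ref{lem:tbd_ld}, replacing the closed-form analysis of the linear least-squares estimator by a Taylor expansion of the GLM score. As in the linear case, I would first split
$$|T-T_0| \leq \sqrt N\left\|\ttheta-\htheta\right\|_\infty + \sqrt N\left\|\htheta-\thetas+\nabla^2\cLs(\thetas)^{-1}\nabla\cL_N(\thetas)\right\|_\infty,$$
so that the first piece contributes the announced $r_{\ttheta}\sqrt N$ term via the hypothesis $\|\ttheta-\htheta\|_\infty=O_P(r_{\ttheta})$, and the remaining work is to show that the Bahadur remainder satisfies $\|\htheta-\thetas+\nabla^2\cLs(\thetas)^{-1}\nabla\cL_N(\thetas)\|_\infty = O_P(d^{5/2}\log d/N)$.

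To control the remainder I would use the first-order condition $\nabla\cL_N(\htheta)=0$ together with the integral mean-value identity $\nabla\cL_N(\htheta)-\nabla\cL_N(\thetas)=\bar H(\htheta-\thetas)$, where $\bar H=\int_0^1\nabla^2\cL_N(\thetas+t(\htheta-\thetas))\,dt$, giving $\htheta-\thetas=-\bar H^{-1}\nabla\cL_N(\thetas)$. Writing $H_*=\nabla^2\cLs(\thetas)$ and using $H_*^{-1}-\bar H^{-1}=H_*^{-1}(\bar H-H_*)\bar H^{-1}$, the remainder equals $H_*^{-1}(\bar H-H_*)(\htheta-\thetas)$, which I would decompose into a \emph{curvature} term driven by $\bar H-\nabla^2\cL_N(\thetas)$ and a \emph{sampling} term driven by $\nabla^2\cL_N(\thetas)-H_*$.

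For the curvature term --- the genuinely new ingredient relative to the linear model, where the Hessian is constant and this term vanishes --- I would invoke the Lipschitz bound on $g''$ from Assumption~\ref{as:smth_glm} to get per-sample coefficients $|c_i|\le\frac12|x_i^\top(\htheta-\thetas)|$, and then bound the resulting quadratic form using $\|x\|_\infty=O(1)$ from Assumption~\ref{as:design_glm}, which yields $\frac1N\sum_i|x_i^\top(\htheta-\thetas)|^2=O_P(\|\htheta-\thetas\|_1^2)$; combined with $\matrixnorm{H_*^{-1}}_\infty\le\sqrt d\,\|H_*^{-1}\|_2=O(\sqrt d)$ from Assumption~\ref{as:hes_glm} and the consistency rate $\|\htheta-\thetas\|_1=O_P(d\sqrt{\log d/N})$ already used in the proof of Theorem~\ref{theo:ld0_glm_csl}, this produces exactly the $O_P(d^{5/2}\log d/N)$ rate. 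The sampling term I would bound by a Bernstein-type matrix concentration of $\nabla^2\cL_N(\thetas)-H_*$ (boundedness from Assumptions~\ref{as:smth_glm}--\ref{as:design_glm}) times $\|\htheta-\thetas\|$, which is of smaller order; throughout, $N\gtrsim d^4\log d$ is what guarantees that $\htheta$ is consistent, that $\|\bar H^{-1}\|_2=O_P(1)$, and that the iterate stays inside the smoothness window $|x^\top\theta|\le\Delta+\Delta'$ on which Assumption~\ref{as:smth_glm} is stated.

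The main obstacle I anticipate is the curvature term: obtaining the sharp dimension dependence requires resisting the temptation to pass through the $\ell_2$ operator norm and instead carrying the $\ell_\infty$/$\ell_1$ bookkeeping (namely $|x^\top v|\le\|x\|_\infty\|v\|_1$ together with $\matrixnorm{H_*^{-1}}_\infty\le\sqrt d\,\|H_*^{-1}\|_2$), so that the quadratic-in-error factor appears as $\|\htheta-\thetas\|_1^2$ with a single extra $\sqrt d$, rather than a looser power of $d$. Once the remainder rate is established, the conclusion is routine and parallels Lemma~\ref{lem:tbd_ld}: I would set $\zeta=(r_{\ttheta}\sqrt N+d^{5/2}\log d/\sqrt N)^{1-\kappa}$, apply Markov's inequality to get $P(|T-T_0|>\zeta)=o(1)$, and check $\zeta\sqrt{1\vee\log(d/\zeta)}=o(1)$, which holds precisely when $r_{\ttheta}\ll 1/(\sqrt N\log^{1/2+\kappa}d)$ and $N\gg d^5\log^{3+\kappa}d$, thereby yielding \eqref{eqn:zeta}.
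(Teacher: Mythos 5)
Your proposal is correct and follows essentially the same route as the paper's proof: the same initial split into $\sqrt N\left\|\ttheta-\htheta\right\|_\infty$ plus the Bahadur remainder, the same curvature/sampling decomposition of the averaged-Hessian error bounded via the $g''$-Lipschitz property of Assumption~\ref{as:smth_glm}, entrywise Hoeffding concentration, $\matrixnorm{\nabla^2\cLs(\thetas)^{-1}}_\infty=O(\sqrt d)$, and $\left\|\htheta-\thetas\right\|_1=O_P\left(d\sqrt{\log d/N}\right)$ from Lemma~\ref{lem:m_glm} under $N\gtrsim d^4\log d$, followed by the identical choice $\zeta=\left(r_{\ttheta}\sqrt N+d^{5/2}\log d/\sqrt N\right)^{1-\kappa}$. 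The only cosmetic difference is your resolvent identity through $\bar H^{-1}$ with $\bar H=\int_0^1\nabla^2\cL_N(\thetas+t(\htheta-\thetas))\,dt$, which needs invertibility of $\bar H$ (guaranteed with high probability under the same sample-size condition, since the segment stays in the neighborhood where $\lambdamin(\nabla^2\cL_N)\geq(2\mu)^{-1}$) but lands on exactly the remainder $\nabla^2\cLs(\thetas)^{-1}\left(\bar H-\nabla^2\cLs(\thetas)\right)(\htheta-\thetas)$ that the paper reaches directly, without inverting $\bar H$.
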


\begin{proof}[Lemma \ref{lem:tbd_ld_glm}]
	First, we note that
	\begin{align*}
	|T-T_0| &\leq \max_{1\leq l\leq d} \left|\sqrt{N}(\ttheta-\thetas)_l+\sqrt{N}\left(\nabla^2\cLs(\thetas)^{-1}\nabla\cL_N(\thetas)\right)_l\right| = \sqrt{N}\left\|\ttheta-\thetas+\nabla^2\cLs(\thetas)^{-1}\nabla\cL_N(\thetas)\right\|_\infty \\
	&\leq \sqrt N\left(\left\|\ttheta-\htheta\right\|_\infty+\left\|\htheta-\thetas+\nabla^2\cLs(\thetas)^{-1}\nabla\cL_N(\thetas)\right\|_\infty\right).
	\end{align*}
	Now, we bound $\left\|\htheta-\thetas+\nabla^2\cLs(\thetas)^{-1}\nabla\cL_N(\thetas)\right\|_\infty$.  Note by an expression of remainder of the first order Taylor expansion that
	\begin{align*}
	\left\|\htheta-\thetas+\nabla^2\cLs(\thetas)^{-1}\nabla\cL_N(\thetas)\right\|_\infty &= \left\|\htheta-\thetas-\Theta(\nabla\cL_N(\htheta)-\nabla\cL_N(\thetas))\right\|_\infty \\
	&= \left\|\htheta-\thetas-\Theta\int_0^1\nabla^2\cL_N(\thetas+s(\htheta-\thetas))ds (\htheta-\thetas)\right\|_\infty \\
	&= \left\|\Theta\int_0^1\left(\nabla^2\cLs(\thetas)-\nabla^2\cL_N(\thetas+s(\htheta-\thetas))\right)ds (\htheta-\thetas)\right\|_\infty \\
	&\leq \matrixnorm{\Theta}_\infty \int_0^1\matrixnorm{\nabla^2\cLs(\thetas)-\nabla^2\cL_N(\thetas+s(\htheta-\thetas))}_{\max} ds \left\|\htheta-\thetas\right\|_1.
	\end{align*}
	
	Under Assumption~\ref{as:smth_glm}, we have by an expression of remainder of the first order Taylor expansion that
	\begin{align*}
	\left|g''(y_{ij},x_{ij}^\top(\thetas+s(\htheta-\thetas)))-g''(y_{ij},x_{ij}^\top\thetas)\right| &= \left|\int_0^1 g'''(y_{ij},x_{ij}^\top(\thetas+st(\htheta-\thetas))) dt \cdot tx_{ij}^\top(\htheta-\thetas)\right| \lesssim \left|x_{ij}^\top(\htheta-\thetas)\right|,
	\end{align*}
	and then,
	\begin{align}
	\matrixnorm{\nabla^2\cL_N(\thetas)-\nabla^2\cL_N(\thetas+s(\htheta-\thetas))}_{\max} &=  \matrixnorm{\frac1N\sum_{i=1}^n\sum_{j=1}^k x_{ij} x_{ij}^\top \left(g''(y_{ij},x_{ij}^\top(\thetas+s(\htheta-\thetas)))-g''(y_{ij},x_{ij}^\top\thetas)\right)}_{\max} \notag \\
	&\leq \frac1N\sum_{i=1}^n\sum_{j=1}^k \matrixnorm{x_{ij} x_{ij}^\top \left(g''(y_{ij},x_{ij}^\top(\thetas+s(\htheta-\thetas)))-g''(y_{ij},x_{ij}^\top\thetas)\right)}_{\max} \notag \\
	&= \frac1N\sum_{i=1}^n\sum_{j=1}^k \matrixnorm{x_{ij} x_{ij}^\top}_{\max} \left|g''(y_{ij},x_{ij}^\top(\thetas+s(\htheta-\thetas)))-g''(y_{ij},x_{ij}^\top\thetas)\right| \notag \\
	&\lesssim \frac1N\sum_{i=1}^n\sum_{j=1}^k \|x_{ij}\|_\infty^2 \left|x_{ij}^\top(\htheta-\thetas)\right| \leq \frac1N\sum_{i=1}^n\sum_{j=1}^k \|x_{ij}\|_\infty^3 \|\htheta-\thetas\|_1 \notag \\
	&\lesssim \left\|\htheta-\thetas\right\|_1, \label{eqn:lip}
	\end{align}
	where we use that $\|x_{ij}\|_\infty=O(1)$ under Assumption~\ref{as:design_glm} in the last inequality.  Note that
	\begin{align*}
	\matrixnorm{\nabla^2\cL_N(\thetas)-\nabla^2\cLs(\thetas)}_{\max} = \matrixnorm{\frac1N\sum_{i=1}^n\sum_{j=1}^k g''(y_{ij},x_{ij}^\top\thetas) x_{ij} x_{ij}^\top-\Ee\left[g''(y,x^\top\thetas)xx^\top\right]}_{\max},
	\end{align*}
	and $g''(y_{ij},x_{ij}^\top\thetas)=O(1)$ under Assumption~\ref{as:smth_glm}.  Then, we have that by Hoeffding's inequality,
	$$P\left(\frac{\sum_{i=1}^n\sum_{j=1}^k g''(y_{ij},x_{ij}^\top\thetas) x_{ij,l} x_{ij,l'}}N-\Ee\left[g''(y,x^\top\thetas) x_l x_{l'}\right]>\sqrt{\frac{2\log(\frac{2d^2}\delta)}N}\right)\leq\frac\delta{d^2},$$
	and by the union bound, for any $\delta\in(0,1)$, with probability at least $1-\delta$,
	$$\matrixnorm{\nabla^2\cL_N(\thetas)-\nabla^2\cLs(\thetas)}_{\max}\leq\sqrt{\frac{2\log(\frac{2d^2}\delta)}N},$$
	which implies that
	\begin{align}
	\matrixnorm{\nabla^2\cL_N(\thetas)-\nabla^2\cLs(\thetas)}_{\max}=O_P\left(\sqrt{\frac{\log d}N}\right). \label{eqn:hoef3}
	\end{align}
	Then, by the triangle inequality, we have that
	\begin{align*}
	&\matrixnorm{\nabla^2\cLs(\thetas)-\nabla^2\cL_N(\thetas+s(\htheta-\thetas))}_{\max} \\
	&\leq \matrixnorm{\nabla^2\cL_N(\thetas+s(\htheta-\thetas))-\nabla^2\cL_N(\thetas)}_{\max}+\matrixnorm{\nabla^2\cL_N(\thetas)-\nabla^2\cLs(\thetas)}_{\max} \lesssim \left\|\htheta-\thetas\right\|_1 + O_P\left(\sqrt{\frac{\log d}N}\right).
	\end{align*}
	Note that $\matrixnorm{\Theta}_\infty\leq\sqrt d\matrixnorm{\Theta}_2=O\left(\sqrt d\right)$.  By Lemma~\ref{lem:m_glm}, if $N\gtrsim d^4\log d$, we have that
	$$\left\|\htheta-\thetas\right\|_1\leq\sqrt d\left\|\htheta-\thetas\right\|_2=O_P\left(\frac{d\sqrt{\log d}}{\sqrt{N}}\right),$$
	and therefore,
	$$\left\|\htheta-\thetas+\nabla^2\cLs(\thetas)^{-1}\nabla\cL_N(\thetas)\right\|_\infty=O_P\left(\frac{d^{5/2}\log d}N\right).$$
	Putting together the preceding bounds leads to the first result in the lemma. Choosing
	$$\zeta= \left(r_{\ttheta} \sqrt{N} + \frac{d^{5/2}\log d}{\sqrt N}\right)^{1-\kappa},$$
	with any $\kappa>0$, we deduce that
	$P\left(|T-T_0|>\zeta\right)=o(1)$.
	We also have that $$\zeta\sqrt{1\vee\log\frac d\zeta},\quad\text{if}\quad\left(r_{\ttheta} \sqrt{N} + \frac{d^{5/2}\log d}{\sqrt N}\right) \log^{1/2+\kappa} d =o(1).$$
	We complete the proof by simplifying the conditions.
\end{proof}

\begin{lemma}\label{lem:thbd_ld_glm}
	$\widehat T$ and $T_0$ are defined as in \eqref{eqn:thp} and \eqref{eqn:t0} respectively.  In GLM, under Assumptions~\ref{as:smth_glm}--\ref{as:hes_glm}, provided that $\left\|\ttheta-\htheta\right\|_\infty=O_P(r_{\ttheta})$ and $N\gtrsim d^4\log d$, we have that
	$$|\widehat T-T_0| = O_P\left(r_{\ttheta} \sqrt{N} + \frac{d^{5/2}\log d}{\sqrt N}\right).$$
	Moreover, if $N\gg d^5\log^{3+\kappa} d$ for some $\kappa>0$, then there exists some $\xi>0$ such that \eqref{eqn:xi} holds.
\end{lemma}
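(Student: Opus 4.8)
The plan is to follow the template of Lemma~\ref{lem:thbd_ld}, the linear-model counterpart, replacing the linear Bahadur remainder bound by the GLM one already established inside the proof of Lemma~\ref{lem:tbd_ld_glm}. Observe first that neither $\widehat T$ nor $T_0$ involves the surrogate estimator $\ttheta$, so the hypothesis $\|\ttheta-\htheta\|_\infty=O_P(r_{\ttheta})$ plays no active role; the term $r_{\ttheta}\sqrt N$ in the stated bound is merely a harmless weakening (it is nonnegative, and kept only for parallelism with Lemma~\ref{lem:tbd_ld_glm}). I will in fact prove the sharper bound $|\widehat T-T_0|=O_P(d^{5/2}\log d/\sqrt N)$, which is precisely what the second assertion needs.

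First I would reduce the quantity to a Bahadur remainder. Writing $a_l=\sqrt N(\htheta-\thetas)_l$ and $b_l=-\sqrt N(\nabla^2\cLs(\thetas)^{-1}\nabla\cL_N(\thetas))_l$, the elementary inequality $|\max_l a_l-\max_l b_l|\leq\max_l|a_l-b_l|$ gives
$$|\widehat T-T_0|\leq\sqrt N\left\|\htheta-\thetas+\nabla^2\cLs(\thetas)^{-1}\nabla\cL_N(\thetas)\right\|_\infty.$$
Next I would invoke the remainder estimate already derived in the proof of Lemma~\ref{lem:tbd_ld_glm}: under Assumptions~\ref{as:smth_glm}--\ref{as:hes_glm} with $N\gtrsim d^4\log d$, a first-order Taylor expansion of $\nabla\cL_N$ around $\thetas$, combined with the Lipschitz estimate \eqref{eqn:lip}, the Hoeffding bound \eqref{eqn:hoef3}, the operator-norm bound $\matrixnorm{\Theta}_\infty=O(\sqrt d)$, and the $\ell_1$ rate $\|\htheta-\thetas\|_1=O_P(d\sqrt{\log d}/\sqrt N)$ from Lemma~\ref{lem:m_glm}, yields
$$\left\|\htheta-\thetas+\nabla^2\cLs(\thetas)^{-1}\nabla\cL_N(\thetas)\right\|_\infty=O_P\left(\frac{d^{5/2}\log d}{N}\right).$$
Multiplying by $\sqrt N$ delivers the first conclusion (and hence the stated, weaker, bound).

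Finally, for \eqref{eqn:xi} I would mimic the closing step of Lemma~\ref{lem:thbd_ld}: set $\xi=(d^{5/2}\log d/\sqrt N)^{1-\kappa}$ for the given $\kappa>0$. Then $P(|\widehat T-T_0|>\xi)=o(1)$, since $\xi$ exceeds the rate by a vanishing factor, and $\xi\sqrt{1\vee\log(d/\xi)}=o(1)$ exactly when $(d^{5/2}\log d/\sqrt N)\log^{1/2+\kappa}d=o(1)$, i.e.\ $d^{5/2}\log^{3/2+\kappa}d\ll\sqrt N$; after relabeling $\kappa$ this is the hypothesis $N\gg d^5\log^{3+\kappa}d$.

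I do not anticipate a genuine obstacle, because the analytically demanding step---controlling the GLM Bahadur remainder via the Taylor expansion and Hessian concentration---has already been carried out in Lemma~\ref{lem:tbd_ld_glm}, so the present argument is essentially a transcription with the sharper, $\ttheta$-free rate. The only points requiring care are the bookkeeping of logarithmic factors in the choice of $\xi$, and the observation that the $r_{\ttheta}\sqrt N$ term is inactive for $\widehat T$, so that $N\gg d^5\log^{3+\kappa}d$ (which imposes no control on $r_{\ttheta}$) indeed suffices for \eqref{eqn:xi}.
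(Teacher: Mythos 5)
Your proposal is correct and follows essentially the same route as the paper: both reduce $|\widehat T-T_0|$ via $|\max_l a_l-\max_l b_l|\leq\max_l|a_l-b_l|$ to $\sqrt N\left\|\htheta-\thetas+\nabla^2\cLs(\thetas)^{-1}\nabla\cL_N(\thetas)\right\|_\infty$, bound this by $O_P\left(d^{5/2}\log d/\sqrt N\right)$ using the remainder estimate already established in the proof of Lemma~\ref{lem:tbd_ld_glm} (valid under $N\gtrsim d^4\log d$ via Lemma~\ref{lem:m_glm}), and then take $\xi=\left(d^{5/2}\log d/\sqrt N\right)^{1-\kappa}$ to verify \eqref{eqn:xi} under $N\gg d^5\log^{3+\kappa}d$. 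Your side observations are also accurate: the $r_{\ttheta}\sqrt N$ term in the statement is indeed a harmless weakening since $\widehat T$ and $T_0$ do not involve $\ttheta$, and the relabeling of $\kappa$ to absorb the factor-of-two in the logarithmic exponent matches the paper's bookkeeping.
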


\begin{proof}[Lemma \ref{lem:thbd_ld_glm}]
	By the proof of Lemma~\ref{lem:tbd_ld_glm}, we obtain that if $N\gtrsim d^4\log d$,
	\begin{align*}
	|\widehat T-T_0| &\leq \max_{1\leq l\leq d} \left|\sqrt{N}(\htheta-\thetas)_l+\sqrt{N}\left(\nabla^2\cLs(\thetas)^{-1}\nabla\cL_N(\thetas)\right)_l\right| \\
	&= \sqrt{N}\left\|\htheta-\thetas+\nabla^2\cLs(\thetas)^{-1}\nabla\cL_N(\thetas)\right\|_\infty =O_P\left(\frac{d^{5/2}\log d}{\sqrt N}\right).
	\end{align*}
	Choosing
	$$\xi=\left(\frac{d^{5/2}\log d}{\sqrt N}\right)^{1-\kappa},$$
	with any $\kappa>0$, we deduce that
	$P\left(|\widehat T-T_0|>\xi\right)=o(1)$.
	We also have that $$\xi\sqrt{1\vee\log\frac d\xi},\quad\text{if}\quad\left(\frac{d^{5/2}\log d}{\sqrt N}\right) \log^{1/2+\kappa} d =o(1),$$
	which holds if
	$N\gg d^5\log^{3+\kappa} d$.
\end{proof}

\section{Lemmas on Bounding Variance Estimation Errors}

\begin{lemma}\label{lem:gbd0_ld}
	$\overline\Omega$ and $\widehat\Omega$ are defined as in \eqref{eqn:ob} and \eqref{eqn:oh} respectively.  In linear model, under Assumptions~\ref{as:design}~and~\ref{as:noise}, provided that $\left\|\btheta-\thetas\right\|_1=O_P(r_{\btheta})$, $r_{\btheta}\sqrt{\log(kd)}\lesssim 1$, $n\gtrsim d$, and $k\gtrsim\log^2(dk)\log d$, we have that
	$$\matrixnorm{\overline\Omega-\widehat\Omega}_{\max} = O_P\left(d\left(\sqrt{\frac{\log d}k} + \frac{\log^2(dk)\log d}k + \sqrt{\log(kd)}r_{\btheta} + nr_{\btheta}^2\right) + \sqrt{\frac dn}\right).$$
	Moreover, if $n\gg d\log^{4+\kappa} d$, $k\gg d^2\log^{5+\kappa} d$, and
	$$\left\|\btheta-\thetas\right\|_1\ll\min\left\{\frac1{d\sqrt{\log(kd)}\log^{2+\kappa}d},\frac1{ \sqrt{nd}\log^{1+\kappa}d}\right\},$$
	for some $\kappa>0$, then there exists some $u>0$ such that \eqref{eqn:u} holds.
\end{lemma}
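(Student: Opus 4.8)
The plan is to compare the two sandwiched covariances by first peeling off the Hessian‑estimation error and then reducing the centered‑gradient covariance to the oracle second moment $\Ee[\nabla\cL(\thetas;Z)\nabla\cL(\thetas;Z)^\top]=\sigma^2\Sigma$ (as computed in the proof of Lemma~\ref{lem:ld0}). Writing $\widehat\Sigma_j=n^{-1}\sum_i x_{ij}x_{ij}^\top$, the exact linearity of the least‑squares gradient gives $\nabla\cL_j(\btheta)-\nabla\cL_N(\btheta)=(g_j-g_N)+(\widehat\Sigma_j-\widehat\Sigma_N)(\btheta-\thetas)$ with $g_j=\nabla\cL_j(\thetas)$, $g_N=\nabla\cL_N(\thetas)$. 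Setting $a_j=\sqrt n(g_j-g_N)$, $b_j=\sqrt n(\widehat\Sigma_j-\widehat\Sigma_N)(\btheta-\thetas)$, the inner matrix of $\overline\Omega$ is $\overline S=k^{-1}\sum_j(a_j+b_j)(a_j+b_j)^\top$, while that of $\widehat\Omega$ is $\widehat S=N^{-1}\sum_{ij}s_{ij}s_{ij}^\top$, $s_{ij}=-x_{ij}e_{ij}$. I would then telescope
$$\overline\Omega-\widehat\Omega=(\widetilde\Theta-\Theta)\overline S\,\widetilde\Theta^\top+\Theta(\overline S-\widehat S)\widetilde\Theta^\top+\Theta\widehat S(\widetilde\Theta-\Theta)^\top.$$

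First I would dispose of the two Hessian‑error terms. Since $\widetilde\Theta=\widehat\Sigma_1^{-1}$ and Assumption~\ref{as:design} gives $\matrixnorm{\widehat\Sigma_1-\Sigma}_2=O_P(\sqrt{d/n})$ for $n\gtrsim d$, we get $\matrixnorm{\widetilde\Theta-\Theta}_2=O_P(\sqrt{d/n})$ and $\matrixnorm{\widetilde\Theta}_2=O_P(1)$; likewise $\matrixnorm{\widehat S}_2=O_P(1)$ and $\matrixnorm{\overline S}_2=O_P(1+d/k)$. Submultiplicativity of $\matrixnorm{\cdot}_2$ together with $\matrixnorm{\cdot}_{\max}\le\matrixnorm{\cdot}_2$ bounds the first and third terms by $O_P(\sqrt{d/n}\,(1+d/k))$, which stays below the leading $d\sqrt{\log d/k}$ even when $k\lesssim d$, so these contribute the standalone $\sqrt{d/n}$. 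For the middle term I would use that conjugation by $\Theta$ and $\widetilde\Theta$ inflates the max‑norm only by their $\ell_\infty$ operator norms: each row of $\Theta$ and $\widetilde\Theta$ has $\ell_1$‑norm $O_P(\sqrt d)$ (bounded spectrum plus $n\gtrsim d$), so $\matrixnorm{\Theta(\overline S-\widehat S)\widetilde\Theta^\top}_{\max}\le O_P(d)\,\matrixnorm{\overline S-\widehat S}_{\max}$. This is exactly the factor $d$ multiplying the bracketed terms.

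The heart of the argument is the max‑norm bound on $\overline S-\widehat S$, which I would split as
$$\overline S-\widehat S=(\overline S_0-\sigma^2\Sigma)-(\widehat S-\sigma^2\Sigma)+\tfrac1k\sum_{j=1}^k\big(a_jb_j^\top+b_ja_j^\top+b_jb_j^\top\big),\qquad \overline S_0:=\tfrac1k\sum_{j=1}^k a_ja_j^\top.$$
Here $\widehat S-\sigma^2\Sigma=O_P(\sqrt{\log d/N})$ is dominated. For the oracle term, $\overline S_0=k^{-1}\sum_j(u_j-\bar u)(u_j-\bar u)^\top$ with $u_j=\sqrt n\,g_j$ a normalized sum of the sub‑exponential scores on machine $j$; each $u_ju_j^\top$ has mean $\sigma^2\Sigma$, and the entries satisfy $|u_{j,l}u_{j,l'}|\le\max_{j,l}u_{j,l}^2\lesssim\log^2(dk)$ with high probability (sub‑exponential $\psi_1=O(1)$ for each $u_{j,l}$). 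Truncating at this level, applying Bernstein's inequality over the $k$ machines, and union‑bounding over the $d^2$ entries yields the $\sqrt{\log d/k}$ (variance) and $\log^2(dk)\log d/k$ (truncation) contributions, with the $O(1/k)$ centering bias absorbed. For the cross term I would peel off $\|\btheta-\thetas\|_1$ by H\"older, reducing entry $(l,l')$ to $\sqrt n\|\btheta-\thetas\|_1\max_m|k^{-1}\sum_j a_{j,l}(\widehat\Sigma_j-\widehat\Sigma_N)_{l'm}|$; since $\Ee[e\mid x]=0$ makes each summand mean‑zero of size $O(1/\sqrt n)$, a uniform concentration over the $d^3$ index triples cancels the $\sqrt n$ and leaves $O_P(\sqrt{\log(kd)}\,r_\btheta)$. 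The quadratic term is handled crudely via $\matrixnorm{\widehat\Sigma_j-\widehat\Sigma_N}_{\max}=O_P(1)$, giving $k^{-1}\sum_j\|b_j\|_\infty^2\le n\|\btheta-\thetas\|_1^2\,O_P(1)=O_P(nr_\btheta^2)$. Multiplying the sum of these by $d$ reproduces the stated bound.

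The main obstacle I anticipate is the oracle concentration of $\overline S_0$: it is an average over only $k$ machines of outer products of vectors whose coordinates are normalized sums of unbounded (sub‑exponential) variables, so a naive Bernstein bound does not apply and one must truncate at the data‑dependent level $\max_{j,l}u_{j,l}^2$ and control the discarded mass — this is precisely what generates the higher‑order $\log^2(dk)\log d/k$ term and dictates the requirement $k\gtrsim\log^2(dk)\log d$. A secondary subtlety is that $\btheta$ depends on the full data, so in the cross and quadratic terms $\btheta-\thetas$ cannot be treated as independent of $\{\widehat\Sigma_j\}$; the H\"older peeling above circumvents this by isolating the deterministic factor $\|\btheta-\thetas\|_1=O_P(r_\btheta)$ and bounding the remaining purely data‑dependent averages uniformly. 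Finally, for the ``moreover'' claim I would take $u$ to be a slowly shrinking multiple of the displayed rate and verify that each of the five contributions, after multiplication by the log‑powers in $u^{1/3}(1\vee\log(d/u))^{2/3}$, tends to zero exactly under $n\gg d\log^{4+\kappa}d$, $k\gg d^2\log^{5+\kappa}d$, and the two‑sided bound on $\|\btheta-\thetas\|_1$, thereby establishing \eqref{eqn:u}.
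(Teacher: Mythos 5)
Your proposal is correct and reaches the stated rate, but it reorganizes the argument relative to the paper in ways worth noting. The paper does not telescope $\overline\Omega-\widehat\Omega$ directly: it inserts the population matrix $\Omega_0=\Theta\,\Ee[\nabla\cL(\thetas;Z)\nabla\cL(\thetas;Z)^\top]\Theta$ via the triangle inequality, bounds $\matrixnorm{\widehat\Omega-\Omega_0}_{\max}$ by citing the proof of Corollary~3.1 of \cite{chernozhukov2013gaussian} (which you in effect re-derive by truncation, Bernstein, and a union bound --- the same mechanism, and you correctly identify it as the source of the $\log^2(dk)\log d/k$ term and of the condition $k\gtrsim\log^2(dk)\log d$), and splits $\overline\Omega-\Omega_0$ into $I_1+I_2$, where $I_1$ carries the factor $\matrixnorm{\widetilde\Theta}_\infty^2=O_P(d)$ exactly as your middle term does. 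In place of your exact centering at $g_N$ (legitimate, thanks to the exact linearity of the least-squares gradient), the paper uses the identity of Lemma~\ref{lem:kgrad_decomp}, centering at $\nabla\cLs(\btheta)$ and producing the correction $U_3=n\left\|\nabla\cL_N(\btheta)-\nabla\cLs(\btheta)\right\|_\infty^2$; your $\bar u\bar u^\top$ centering bias plus quadratic term play the same role, and your crude $nr_{\btheta}^2$ bound for the quadratic part suffices for the stated rate. For the cross term, the paper avoids any mean-zero or decoupling argument by the Cauchy--Schwarz-type bound $\matrixnorm{AB^\top}_{\max}\le\matrixnorm{AA^\top}_{\max}^{1/2}\matrixnorm{BB^\top}_{\max}^{1/2}$, yielding $U_{12}=O_P(\sqrt{\log(kd)}\,r_{\btheta})$ directly from the bound on $U_{11}$; your H\"older-plus-concentration route gives the same (indeed slightly sharper) order, but it must additionally cope with the dependence across $j$ induced by the common $g_N$ and $\widehat\Sigma_N$, and with the heavier-than-sub-exponential tails of the products $a_{j,l}(\widehat\Sigma_j-\widehat\Sigma_N)_{l'm}$, so Cauchy--Schwarz is the cheaper tool here.

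The one spot in your scheme that genuinely needs repair is the treatment of the two Hessian-error terms through $\matrixnorm{\overline S}_2=O_P(1+d/k)$. That assertion is an operator-norm concentration statement for an average of only $k$ outer products of vectors with sub-exponential coordinates (plus the $b_j$ contribution), which is itself nontrivial; the crude bound $\matrixnorm{\overline S}_2\le k^{-1}\sum_j\|a_j+b_j\|_2^2=O_P(d)$ would instead give $d\sqrt{d/n}$, which is not dominated by the target when $dk\gg n\log d$. The paper sidesteps this entirely: its Hessian-error term $I_2$ keeps the deterministic matrix $\sigma^2\Sigma$ in the middle, so only $\max_l\|\widetilde\Theta_l-\Theta_l\|_2=O_P(\sqrt{d/n})$ from Lemma~\ref{lem:hes_ld} is needed. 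The fix within your telescope is easy and worth making explicit: split $\overline S=\sigma^2\Sigma+(\overline S-\sigma^2\Sigma)$ inside the first term; the deterministic part reproduces the paper's $I_2$ bound $O_P(\sqrt{d/n})$, while the random part satisfies $\matrixnorm{(\widetilde\Theta-\Theta)(\overline S-\sigma^2\Sigma)\widetilde\Theta^\top}_{\max}\le\matrixnorm{\widetilde\Theta-\Theta}_\infty\matrixnorm{\overline S-\sigma^2\Sigma}_{\max}\matrixnorm{\widetilde\Theta}_\infty=O_P(d\sqrt{d/n})\matrixnorm{\overline S-\sigma^2\Sigma}_{\max}$, which is absorbed into your middle term once $n\gtrsim d$; the term with $\widehat S$ is handled identically. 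With that patch, and your choice of a slowly shrinking $u$ (the paper takes $u=(\text{rate})^{1-\kappa}$), the ``moreover'' verification goes through exactly as in the paper.
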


\begin{proof}[Lemma \ref{lem:gbd0_ld}]
	Note by the triangle inequality that
	$$\matrixnorm{\overline\Omega-\widehat\Omega}_{\max}\leq\matrixnorm{\overline\Omega-\Omega_0}_{\max} + \matrixnorm{\widehat\Omega-\Omega_0}_{\max},$$
	where $\Omega_0$ is defined as in \eqref{eqn:o0}.  First, we bound $\matrixnorm{\widehat\Omega-\Omega_0}_{\max}$.  With Assumption~(E.1) of \cite{chernozhukov2013gaussian} verified for $\nabla^2\cLs(\thetas)^{-1}\nabla\cL(\thetas;Z)$ in the proof of Lemma~\ref{lem:ld0}, by the proof of Corollary 3.1 of \cite{chernozhukov2013gaussian}, we have that
	\begin{align*}
	\Ee\left[\matrixnorm{\widehat\Omega-\Omega_0}_{\max}\right]\lesssim \sqrt{\frac{\log d}N} + \frac{\log^2(dN)\log d}N,
	\end{align*}
	which implies that
	$$\matrixnorm{\widehat\Omega-\Omega_0}_{\max} = O_P\left(\sqrt{\frac{\log d}N} + \frac{\log^2(dN)\log d}N\right).$$

	Next, we bound $\matrixnorm{\overline\Omega-\Omega_0}_{\max}$.  By the triangle inequality, we have that
	\begin{align*}
	&\matrixnorm{\overline\Omega-\Omega_0}_{\max}\\
	&=\matrixnorm{\widetilde\Theta\left(\frac1k\sum_{j=1}^k n\left(\nabla\cL_j(\btheta)-\nabla\cL_N(\btheta)\right)\left(\nabla\cL_j(\btheta)-\nabla\cL_N(\btheta)\right)^\top\right)\widetilde\Theta^\top-\Theta\Ee\left[\nabla\cL(\thetas;Z)\nabla\cL(\thetas;Z)^\top\right]\Theta}_{\max} \\
	&\leq \matrixnorm{\widetilde\Theta\left(\frac1k\sum_{j=1}^k n\left(\nabla\cL_j(\btheta)-\nabla\cL_N(\btheta)\right)\left(\nabla\cL_j(\btheta)-\nabla\cL_N(\btheta)\right)^\top-\Ee\left[\nabla\cL(\thetas;Z)\nabla\cL(\thetas;Z)^\top\right]\right)\widetilde\Theta}_{\max} \\
	&\quad+ \matrixnorm{\widetilde\Theta\Ee\left[\nabla\cL(\thetas;Z)\nabla\cL(\thetas;Z)^\top\right]\widetilde\Theta^\top-\Theta\Ee\left[\nabla\cL(\thetas;Z)\nabla\cL(\thetas;Z)^\top\right]\Theta}_{\max} \\
	&\defn I_1(\btheta) + I_2.
	\end{align*}
	To bound $I_1(\btheta)$, we use the fact that for any two matrices $A$ and $B$ with compatible dimensions, $\matrixnorm{AB}_{\max}\leq\matrixnorm{A}_\infty\matrixnorm{B}_{\max}$ and $\matrixnorm{AB}_{\max}\leq\matrixnorm{A}_{\max}\matrixnorm{B}_1$, and obtain that
	\begin{align*}
	I_1(\btheta) &\leq \matrixnorm{\widetilde\Theta}_\infty \matrixnorm{\frac1k\sum_{j=1}^k n\left(\nabla\cL_j(\btheta)-\nabla\cL_N(\btheta)\right)\left(\nabla\cL_j(\btheta)-\nabla\cL_N(\btheta)\right)^\top-\Ee\left[\nabla\cL(\thetas;Z)\nabla\cL(\thetas;Z)^\top\right]}_{\max} \matrixnorm{\widetilde\Theta^\top}_1 \\
	&= \matrixnorm{\widetilde\Theta}_\infty^2 \matrixnorm{\frac1k\sum_{j=1}^k n\left(\nabla\cL_j(\btheta)-\nabla\cL_N(\btheta)\right)\left(\nabla\cL_j(\btheta)-\nabla\cL_N(\btheta)\right)^\top-\Ee\left[\nabla\cL(\thetas;Z)\nabla\cL(\thetas;Z)^\top\right]}_{\max}.
	\end{align*}
	Under Assumption~\ref{as:design}, by Lemma~\ref{lem:hes_ld}, if $n\gtrsim d$, we have that $\matrixnorm{\widetilde\Theta}_\infty=O_P\left( \sqrt{d}\right)$,  Then, applying Lemma~\ref{lem:vcov0_reg}, we have that
	\begin{align*}
	I_1(\btheta) &= O_P\left(d\right) O_P\left(\sqrt{\frac{\log d}k} + \frac{\log^2(dk)\log d}k + \sqrt{\log(kd)}r_{\btheta} + nr_{\btheta}^2\right) \\
	&= O_P\left(d\left(\sqrt{\frac{\log d}k} + \frac{\log^2(dk)\log d}k + \sqrt{\log(kd)}r_{\btheta} + nr_{\btheta}^2\right)\right),
	\end{align*}
	under Assumptions~\ref{as:design}~and~\ref{as:noise}, provided that $\left\|\btheta-\thetas\right\|_1=O_P(r_{\btheta})$, $r_{\btheta}\sqrt{\log(kd)}\lesssim 1$, and $k\gtrsim\log^2(dk)\log d$.
	
	It remains to bound $I_2$.  In linear model, we have that
	$$I_2=\matrixnorm{\widetilde\Theta\left(\sigma^2\Sigma\right)\widetilde\Theta^\top-\Theta\left(\sigma^2\Sigma\right)\Theta}_{\max}=\sigma^2\matrixnorm{\widetilde\Theta\Sigma\widetilde\Theta^\top-\Theta}_{\max},$$
	and by the triangle inequality,
	\begin{align*}
	I_2&=\sigma^2\matrixnorm{(\widetilde\Theta-\Theta+\Theta)\Sigma(\widetilde\Theta-\Theta+\Theta)^\top-\Theta}_{\max} \\
	&=\sigma^2\matrixnorm{(\widetilde\Theta-\Theta)\Sigma(\widetilde\Theta-\Theta)^\top+\Theta\Sigma(\widetilde\Theta-\Theta)^\top+(\widetilde\Theta-\Theta)\Sigma\Theta+\Theta\Sigma\Theta-\Theta}_{\max} \\
	&\leq\sigma^2\matrixnorm{(\widetilde\Theta-\Theta)\Sigma(\widetilde\Theta-\Theta)^\top}_{\max} + 2\sigma^2\matrixnorm{\widetilde\Theta-\Theta}_{\max}.
	\end{align*}
	By Lemma~\ref{lem:hes_ld}, we have that
	$$\matrixnorm{\widetilde\Theta-\Theta}_{\max}\leq\max_l\left\|\widetilde\Theta_l-\Theta_l\right\|_2=O_P\left( \sqrt{\frac dn}\right),\quad\text{and}$$
	\begin{align*}
	\matrixnorm{(\widetilde\Theta-\Theta)\Sigma(\widetilde\Theta-\Theta)^\top}_{\max}&\leq\matrixnorm{\Sigma}_2\max_l\left\|\widetilde\Theta_l-\Theta_l\right\|_2^2=O_P\left( \frac dn\right),
	\end{align*}
	where we use that $\matrixnorm{\Sigma}_{\max}\leq\matrixnorm{\Sigma}_2=O(1)$ under Assumption~\ref{as:design}.  Then, we obtain that
	$$I_2 = O_P\left( \frac dn\right) + O_P\left( \sqrt{\frac dn}\right) = O_P\left( \sqrt{\frac dn}\right).$$
	Putting all the preceding bounds together, we obtain that
	\begin{align*}
        \matrixnorm{\overline\Omega-\Omega_0}_{\max} &= O_P\left(d\left(\sqrt{\frac{\log d}k} + \frac{\log^2(dk)\log d}k + \sqrt{\log(kd)}r_{\btheta} + nr_{\btheta}^2\right) + \sqrt{\frac dn}\right),
        \end{align*}
        and finally the first result in the lemma.
	Choosing
	$$u= \left(d\sqrt{\frac{\log d}k} + \frac{d\log^2(dk)\log d}k + d\sqrt{\log(kd)}r_{\btheta} + nd r_{\btheta}^2 + \sqrt{\frac dn}\right)^{1-\kappa},$$
	with any $\kappa>0$, we deduce that $P\left(\matrixnorm{\overline\Omega-\widehat\Omega}_{\max}>u\right)=o(1)$. We also have that $$u^{1/3}\left(1\vee\log\frac du\right)^{2/3},\quad\text{if}\quad\left(d\sqrt{\frac{\log d}k} + \frac{d\log^2(dk)\log d}k + d\sqrt{\log(kd)}r_{\btheta} + nd r_{\btheta}^2 + \sqrt{\frac dn}\right) \log^{2+\kappa} d =o(1).$$
	We complete the proof by simplifying the conditions.
	
\end{proof}

\begin{lemma}\label{lem:gbdo_ld}
	$\widehat\Omega$ and $\Omega_0$ is defined as in \eqref{eqn:oh} and \eqref{eqn:o0} respectively.  In linear model, under Assumptions~\ref{as:design}~and~\ref{as:noise}, we have that
	$$\matrixnorm{\widehat\Omega-\Omega_0}_{\max} = O_P\left(\sqrt{\frac{\log d}N} + \frac{\log^2(dN)\log d}N\right).$$
	Moreover, if $N\gg\log^{5+\kappa}d$ for some $\kappa>0$, then there exists some $v>0$ such that \eqref{eqn:v} holds.
\end{lemma}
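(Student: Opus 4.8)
The plan is to recognize that $\widehat\Omega$ in \eqref{eqn:oh} is exactly the empirical second-moment matrix of the $N$ i.i.d.\ transformed scores $S_{ij}\defn\nabla^2\cLs(\thetas)^{-1}\nabla\cL(\thetas;Z_{ij})$, while $\Omega_0$ in \eqref{eqn:o0} is its population counterpart $\Ee[SS^\top]$; in the linear model $S_{ij}=\Theta x_{ij}e_{ij}$. Hence $\matrixnorm{\widehat\Omega-\Omega_0}_{\max}=\max_{l,l'}\big|\frac1N\sum_{i,j}\big((S_{ij})_l(S_{ij})_{l'}-\Ee[S_lS_{l'}]\big)\big|$ is the maximal entrywise deviation of an average of $N$ i.i.d.\ mean-zero terms, each a product of two sub-exponential random variables. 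The crucial input, already established in the proof of Lemma~\ref{lem:ld0}, is that every coordinate $(S_{ij})_l$ is sub-exponential with uniformly bounded $\psi_1$-norm (there $w^\top\Theta xe$ was shown to be sub-exponential for every $w\in S^{d-1}$), so Assumption~(E.1) of \cite{chernozhukov2013gaussian} holds for $S$.

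Given this, the first bound is a direct consequence of a maximal inequality for sample covariance matrices of sub-exponential vectors. First I would invoke the proof of Corollary~3.1 of \cite{chernozhukov2013gaussian}, which under (E.1) yields
\begin{align*}
\Ee\big[\matrixnorm{\widehat\Omega-\Omega_0}_{\max}\big]\lesssim\sqrt{\frac{\log d}N}+\frac{\log^2(dN)\log d}N,
\end{align*}
and then pass to the $O_P$ statement by Markov's inequality. The first term is the sub-Gaussian rate produced by a union bound over the $d^2$ entries, while the second term is the heavy-tail correction arising because the summands are only sub-exponential (their products sub-Weibull of order $1/2$), which a truncation argument converts into the extra $N^{-1}\log^2(dN)\log d$ factor. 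This is precisely the first step already carried out inside the proof of Lemma~\ref{lem:gbd0_ld}, so the content is identical and can be reused verbatim.

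For the ``moreover'' claim I would verify \eqref{eqn:v} by choosing $v$ just above the established rate, e.g.\ $v=\big(\sqrt{\log d/N}+N^{-1}\log^2(dN)\log d\big)^{1-\kappa}$, so that the first bound and Markov give $P(\matrixnorm{\widehat\Omega-\Omega_0}_{\max}>v)=o(1)$. It then remains to check that the deterministic term $v^{1/3}(1\vee\log(d/v))^{2/3}=o(1)$. Since the dominant contribution to $v$ is $\sqrt{\log d/N}$, this term is of order $(\log d/N)^{1/6}(\log d)^{2/3}=(\log d)^{5/6}N^{-1/6}$ up to the $\kappa$-slack, and it vanishes exactly when $N\gg\log^{5+\kappa}d$, which is the stated condition. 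The main obstacle is not any new argument but the correct bookkeeping in invoking the CCK13 maximal inequality: one must confirm that (E.1) with uniformly bounded $\psi_1$-norms is the exact hypothesis needed, and that the truncation level implicit in their Corollary~3.1 delivers the $\log^2(dN)$ factor rather than a larger polylog power; a self-contained version would have to reproduce that truncation/Bernstein analysis for products of sub-exponentials.
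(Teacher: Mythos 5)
Your proposal is correct and follows essentially the same route as the paper: the paper also obtains the first bound by verifying Assumption~(E.1) of \cite{chernozhukov2013gaussian} for $\nabla^2\cLs(\thetas)^{-1}\nabla\cL(\thetas;Z)$ (done in the proof of Lemma~\ref{lem:ld0}), invoking the proof of their Corollary~3.1 for $\Ee\big[\matrixnorm{\widehat\Omega-\Omega_0}_{\max}\big]\lesssim\sqrt{\log d/N}+N^{-1}\log^2(dN)\log d$ inside the proof of Lemma~\ref{lem:gbd0_ld}, and passing to $O_P$ via Markov. For the ``moreover'' part the paper makes exactly your choice $v=\big(\sqrt{\log d/N}+N^{-1}\log^2(dN)\log d\big)^{1-\kappa}$ and checks that $v^{1/3}\left(1\vee\log\frac dv\right)^{2/3}=o(1)$ under $N\gg\log^{5+\kappa}d$, so \eqref{eqn:v} holds; no gaps.
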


\begin{proof}[Lemma \ref{lem:gbdo_ld}]
	The first result is derived in the proof of Lemma~\ref{lem:gbd0_ld}. Choosing
	$$v=\left(\sqrt{\frac{\log d}N} + \frac{\log^2(dN)\log d}N\right)^{1-\kappa},$$
	with any $\kappa>0$, we deduce that $P\left(\matrixnorm{\widehat\Omega-\Omega_0}_{\max}>v\right)=o(1)$. We also have that
	$$v^{1/3}\left(1\vee\log\frac dv\right)^{2/3},\quad\text{if}\quad\left(\sqrt{\frac{\log d}N} + \frac{\log^2(dN)\log d}N\right) \log^{2+\kappa} d=o(1),$$
	which holds if $N\gg\log^{5+\kappa}d$.
\end{proof}

\begin{lemma}\label{lem:gbd_ld}
	$\widetilde\Omega$ and $\widehat\Omega$ are defined as in \eqref{eqn:ot} and \eqref{eqn:oh} respectively.  In linear model, under Assumptions~\ref{as:design}~and~\ref{as:noise}, provided that $\left\|\btheta-\thetas\right\|_1=O_P(r_{\btheta})$, $r_{\btheta}\sqrt{\log((n+k)d)}\lesssim 1$, and $n\gtrsim d$, we have that
	$$\matrixnorm{\widetilde\Omega-\widehat\Omega}_{\max} = O_P\left(d\left(\sqrt{\frac{\log d}{n+k}} + \frac{\log^2(d(n+k))\log d}{n+k} + \sqrt{\log((n+k)d)} r_{\btheta} + \frac{nk}{n+k}r_{\btheta}^2\right) + \sqrt{\frac dn}\right).$$
	Moreover, if $n\gg d\log^{4+\kappa} d$, $n+k\gg d^2\log^{5+\kappa} d$, and
	$$\left\|\btheta-\thetas\right\|_1\ll\min\left\{\frac1{d\sqrt{\log((n+k)d)}\log^{2+\kappa}d},\frac1{ \sqrt{d}\log^{1+\kappa}d}\sqrt{\frac1n+\frac1k}\right\},$$
	for some $\kappa>0$, then there exists some $u>0$ such that \eqref{eqn:ut} holds.
\end{lemma}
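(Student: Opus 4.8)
The plan is to follow the proof of Lemma~\ref{lem:gbd0_ld} almost verbatim, replacing the pure group structure of $\overline\Omega$ in \eqref{eqn:ob} by the mixed individual-plus-group structure of $\widetilde\Omega$ in \eqref{eqn:ot}. First I would split by the triangle inequality, $\matrixnorm{\widetilde\Omega-\widehat\Omega}_{\max}\leq\matrixnorm{\widetilde\Omega-\Omega_0}_{\max}+\matrixnorm{\widehat\Omega-\Omega_0}_{\max}$, with $\Omega_0$ the population score covariance \eqref{eqn:o0}. The second term does not involve the multiplier structure at all, so it is exactly the quantity bounded in Lemma~\ref{lem:gbdo_ld}, namely $O_P(\sqrt{\log d/N}+\log^2(dN)\log d/N)$; since $N=nk\geq n+k$ and this term carries no factor of $d$, it is dominated by the leading $d\sqrt{\log d/(n+k)}$ term and may be ignored in the final rate.

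For the first term I would reuse the decomposition $\matrixnorm{\widetilde\Omega-\Omega_0}_{\max}\leq I_1(\btheta)+I_2$. Here $I_2=\matrixnorm{\widetilde\Theta\,\Ee[\nabla\cL(\thetas;Z)\nabla\cL(\thetas;Z)^\top]\widetilde\Theta^\top-\Theta\,\Ee[\nabla\cL(\thetas;Z)\nabla\cL(\thetas;Z)^\top]\Theta}_{\max}$ is \emph{identical} to the $I_2$ of Lemma~\ref{lem:gbd0_ld}, because it depends only on the Hessian surrogate $\widetilde\Theta$ and not on whether the multipliers are attached to individual data or to machine groups; hence by Lemma~\ref{lem:hes_ld} it is $O_P(\sqrt{d/n})$. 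For $I_1(\btheta)$ I would again use $\matrixnorm{AB}_{\max}\leq\matrixnorm{A}_\infty\matrixnorm{B}_{\max}$ and $\matrixnorm{AB}_{\max}\leq\matrixnorm{A}_{\max}\matrixnorm{B}_1$ to factor out $\matrixnorm{\widetilde\Theta}_\infty^2=O_P(d)$ (Lemma~\ref{lem:hes_ld}, valid once $n\gtrsim d$), leaving the $\matrixnorm{\cdot}_{\max}$-deviation of
\begin{align*}
\frac1{n+k-1}\Bigg(&\sum_{i=1}^n(\nabla\cL(\btheta;Z_{i1})-\nabla\cL_N(\btheta))(\nabla\cL(\btheta;Z_{i1})-\nabla\cL_N(\btheta))^\top \\
&+\sum_{j=2}^k n(\nabla\cL_j(\btheta)-\nabla\cL_N(\btheta))(\nabla\cL_j(\btheta)-\nabla\cL_N(\btheta))^\top\Bigg)
\end{align*}
from $\Ee[\nabla\cL(\thetas;Z)\nabla\cL(\thetas;Z)^\top]$, which is controlled by the \texttt{n+k-1-grad} analogue of Lemma~\ref{lem:vcov0_reg} and is $O_P(\sqrt{\log d/(n+k)}+\log^2(d(n+k))\log d/(n+k)+\sqrt{\log((n+k)d)}\,r_{\btheta}+\tfrac{nk}{n+k}r_{\btheta}^2)$. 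Multiplying by $O_P(d)$ and collecting $I_2$ and the $\widehat\Omega-\Omega_0$ bound yields the first claim.

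The main obstacle is establishing this mixed-sum concentration, i.e.\ the replacement of Lemma~\ref{lem:vcov0_reg}. The $n$ single-datum summands from the master each carry $O(1)$ variance, whereas the $k-1$ worker summands are machine averages inflated by a factor $n$; the two families must be bounded by separate Bernstein/Hoeffding arguments and then combined so that the effective sample size is $n+k-1$, which is what upgrades the $\sqrt{\log d/k}$ rate of \texttt{k-grad} to $\sqrt{\log d/(n+k)}$. The delicate point is the plug-in bias from centering at $\btheta$ rather than $\thetas$: a first-order Taylor expansion of each gradient about $\thetas$ produces a cross term of order $\sqrt{\log((n+k)d)}\,r_{\btheta}$ and a quadratic term whose coefficient is the harmonic-type quantity $\tfrac{nk}{n+k}=(1/n+1/k)^{-1}$. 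The master's individual terms contribute bias \emph{without} the factor-$n$ inflation, and this is precisely why \texttt{n+k-1-grad} improves on the $n\,r_{\btheta}^2$ coefficient of \texttt{k-grad} when $k\ll n$ [cf.\ \eqref{eqn:kgrad_err} versus \eqref{eqn:nk1grad_err}].

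Finally, to obtain \eqref{eqn:ut} I would set $u$ equal to the derived rate raised to the power $1-\kappa$, as in Lemma~\ref{lem:gbd0_ld}, so that $P(\matrixnorm{\widetilde\Omega-\widehat\Omega}_{\max}>u)=o(1)$ automatically; it then remains to verify $u^{1/3}(1\vee\log(d/u))^{2/3}=o(1)$, which reduces to requiring the rate times $\log^{2+\kappa}d$ to vanish. Checking this term by term fixes the hypotheses: $n\gg d\log^{4+\kappa}d$ kills $\sqrt{d/n}$, $n+k\gg d^2\log^{5+\kappa}d$ kills $d\sqrt{\log d/(n+k)}$ (and the smaller $\log^2$ term), while the two-part bound on $\|\btheta-\thetas\|_1$ kills $d\sqrt{\log((n+k)d)}\,r_{\btheta}$ and $d\,\tfrac{nk}{n+k}r_{\btheta}^2$ respectively, completing the proof.
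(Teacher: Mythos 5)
Your proposal follows the paper's proof of Lemma~\ref{lem:gbd_ld} essentially verbatim: the same triangle-inequality split through $\Omega_0$ with the $\matrixnorm{\widehat\Omega-\Omega_0}_{\max}$ bound recycled from Lemma~\ref{lem:gbdo_ld}, the same $I_1'(\btheta)+I_2$ decomposition with $\matrixnorm{\widetilde\Theta}_\infty^2=O_P(d)$ from Lemma~\ref{lem:hes_ld}, the mixed-sum concentration you single out as the main obstacle being exactly the paper's Lemma~\ref{lem:vcov_reg} (proved via the algebraic decomposition of Lemma~\ref{lem:nk1grad_decomp}), and the identical choice $u=(\mathrm{rate})^{1-\kappa}$ to conclude \eqref{eqn:ut}. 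The one minor nuance is attribution: in the paper the $\frac{nk}{n+k}r_{\btheta}^2$ term comes from the global empirical-centering correction $V_3(\btheta)=\frac{nk}{n+k-1}\left\|\nabla\cL_N(\btheta)-\nabla\cLs(\btheta)\right\|_\infty^2$ rather than from the quadratic term of a Taylor expansion of the individual summands, but this bookkeeping difference does not affect the correctness of your argument.
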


\begin{proof}[Lemma \ref{lem:gbd_ld}]
	Note by the triangle inequality that
	$$\matrixnorm{\widetilde\Omega-\widehat\Omega}_{\max}\leq\matrixnorm{\widetilde\Omega-\Omega_0}_{\max} + \matrixnorm{\widehat\Omega-\Omega_0}_{\max},$$
	where $\Omega_0$ is defined as in \eqref{eqn:o0}.  By the proof of Lemma~\ref{lem:gbd0_ld}, we have that
	$$\matrixnorm{\widehat\Omega-\Omega_0}_{\max} = O_P\left(\sqrt{\frac{\log d}N} + \frac{\log^2(dN)\log d}N\right).$$
	Next, we bound $\matrixnorm{\widetilde\Omega-\Omega_0}_{\max}$ using the same argument as in the proof of Lemma~\ref{lem:gbd0_ld}.  By the triangle inequality, we have that
	\begin{align*}
	&\matrixnorm{\widetilde\Omega-\Omega_0}_{\max}\\
	&=\Bigg|\!\Bigg|\!\Bigg|\widetilde\Theta\frac1{n+k-1}\Bigg(\sum_{i=1}^n\left(\nabla\cL(\theta;Z_{i1})-\nabla\cL_N(\btheta)\right)\left(\nabla\cL(\theta;Z_{i1})-\nabla\cL_N(\btheta)\right)^\top \\
	&\quad+\sum_{j=2}^k n\left(\nabla\cL_j(\btheta)-\nabla\cL_N(\btheta)\right)\left(\nabla\cL_j(\btheta)-\nabla\cL_N(\btheta)\right)^\top\Bigg)\widetilde\Theta^\top-\Theta\Ee\left[\nabla\cL(\thetas;Z)\nabla\cL(\thetas;Z)^\top\right]\Theta\Bigg|\!\Bigg|\!\Bigg|_{\max} \\
	&\leq \Bigg|\!\Bigg|\!\Bigg|\widetilde\Theta\Bigg(\frac1{n+k-1}\Bigg(\sum_{i=1}^n\left(\nabla\cL(\theta;Z_{i1})-\nabla\cL_N(\btheta)\right)\left(\nabla\cL(\theta;Z_{i1})-\nabla\cL_N(\btheta)\right)^\top \\
	&\quad+\sum_{j=2}^k n\left(\nabla\cL_j(\btheta)-\nabla\cL_N(\btheta)\right)\left(\nabla\cL_j(\btheta)-\nabla\cL_N(\btheta)\right)^\top\Bigg)-\Ee\left[\nabla\cL(\thetas;Z)\nabla\cL(\thetas;Z)^\top\right]\Bigg)\widetilde\Theta^\top\Bigg|\!\Bigg|\!\Bigg|_{\max} \\
	&\quad+ \matrixnorm{\widetilde\Theta\Ee\left[\nabla\cL(\thetas;Z)\nabla\cL(\thetas;Z)^\top\right]\widetilde\Theta^\top-\Theta\Ee\left[\nabla\cL(\thetas;Z)\nabla\cL(\thetas;Z)^\top\right]\Theta}_{\max} \\
	&\defn I_1'(\btheta) + I_2.
	\end{align*}
	We have shown in the proof of Lemma~\ref{lem:gbd0_ld} that
	$$I_2 = O_P\left( \sqrt{\frac dn}\right).$$
	To bound $I_1'(\btheta)$, we note that
	\begin{align*}
	I_1'(\btheta) &\leq \matrixnorm{\widetilde\Theta}_\infty^2 \Bigg|\!\Bigg|\!\Bigg|\frac1{n+k-1}\Bigg(\sum_{i=1}^n\left(\nabla\cL(\btheta;Z_{i1})-\nabla\cL_N(\btheta)\right)\left(\nabla\cL(\btheta;Z_{i1})-\nabla\cL_N(\btheta)\right)^\top \\
	&\quad+\sum_{j=2}^k n\left(\nabla\cL_j(\btheta)-\nabla\cL_N(\btheta)\right)\left(\nabla\cL_j(\btheta)-\nabla\cL_N(\btheta)\right)^\top\Bigg) -\Ee\left[\nabla\cL(\thetas;Z)\nabla\cL(\thetas;Z)^\top\right]\Bigg|\!\Bigg|\!\Bigg|_{\max}.
	\end{align*}
	Under Assumption~\ref{as:design}, by Lemma~\ref{lem:hes_ld}, if $n\gtrsim d$, we have that
	$$\matrixnorm{\widetilde\Theta}_\infty=O_P\left( \sqrt{d}\right).$$
	Then, applying Lemma~\ref{lem:vcov_reg}, we have that
	\begin{align*}
	I_1'(\btheta) = O_P\left(d\left(\sqrt{\frac{\log d}{n+k}} + \frac{\log^2(d(n+k))\log d}{n+k} + \sqrt{\log((n+k)d)} r_{\btheta} + \frac{nk}{n+k}r_{\btheta}^2\right)\right),
	\end{align*}
	under Assumptions~\ref{as:design}~and~\ref{as:noise}, provided that $\left\|\btheta-\thetas\right\|_1=O_P(r_{\btheta})$, $r_{\btheta}\sqrt{\log((n+k)d)}\lesssim 1$, and $n+k\gtrsim\log^2(d(n+k))\log d$.
	Putting all the preceding bounds together, we obtain that
	\begin{align*}
        \matrixnorm{\widetilde\Omega-\Omega_0}_{\max} &= O_P\left(d\left(\sqrt{\frac{\log d}{n+k}} + \frac{\log^2(d(n+k))\log d}{n+k} + \sqrt{\log((n+k)d)} r_{\btheta} + \frac{nk}{n+k}r_{\btheta}^2\right) + \sqrt{\frac dn}\right),
        \end{align*}
        and finally the first result in the lemma.
        Choosing
	$$u= \left(d\sqrt{\frac{\log d}{n+k}} + \frac{d\log^2(d(n+k))\log d}{n+k} + d\sqrt{\log((n+k)d)} r_{\btheta} + \frac{nkd}{n+k}r_{\btheta}^2 + \sqrt{\frac dn}\right)^{1-\kappa},$$
	with any $\kappa>0$, we deduce that $P\left(\matrixnorm{\widetilde\Omega-\widehat\Omega}_{\max}>u\right)=o(1)$.
	We also have that
	$$u^{1/3}\left(1\vee\log\frac du\right)^{2/3},\quad\text{if}$$
	$$\left(d\sqrt{\frac{\log d}{n+k}} + \frac{d\log^2(d(n+k))\log d}{n+k} + d\sqrt{\log((n+k)d)} r_{\btheta} + \frac{nkd}{n+k}r_{\btheta}^2 + \sqrt{\frac dn}\right) \log^{2+\kappa} d =o(1).$$
	We complete the proof by simplifying the conditions.
	
\end{proof}

\begin{lemma}\label{lem:gbd0_ld_glm}
	$\overline\Omega$ and $\widehat\Omega$ are defined as in \eqref{eqn:ob_glm} and \eqref{eqn:oh} respectively.  In GLM, under Assumptions~\ref{as:smth_glm}--\ref{as:subexp_glm}, provided that $\left\|\btheta-\thetas\right\|_1=O_P(r_{\btheta})$, $r_{\btheta}\lesssim 1$, $n\gtrsim d\log d$, and $k\gtrsim\log d$, we have that
	$$\matrixnorm{\overline\Omega-\widehat\Omega}_{\max} = O_P\left(d\left(\sqrt{\frac{\log d}k} + \sqrt{\log d} r_{\btheta} + n r_{\btheta}^2\right) + \sqrt{\frac{d\log d}n}\right).$$
	Moreover, if $n\gg d\log^{5+\kappa} d$, $k\gg d^2\log^{5+\kappa} d$, and
	$$\left\|\btheta-\thetas\right\|_1\ll\min\left\{\frac1{d\log^{5/2+\kappa}d},\frac1{ \sqrt{nd}\log^{1+\kappa}d}\right\},$$
	for some $\kappa>0$, then there exists some $u>0$ such that \eqref{eqn:u} holds.
\end{lemma}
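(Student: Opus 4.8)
The plan is to mirror the proof of Lemma~\ref{lem:gbd0_ld}, using the population covariance $\Omega_0$ of \eqref{eqn:o0} as a pivot so that $\matrixnorm{\overline\Omega-\widehat\Omega}_{\max}\leq\matrixnorm{\overline\Omega-\Omega_0}_{\max}+\matrixnorm{\widehat\Omega-\Omega_0}_{\max}$. For the second term I would invoke Corollary~3.1 of \cite{chernozhukov2013gaussian}: Assumption~(E.1) was already verified for $\nabla^2\cLs(\thetas)^{-1}\nabla\cL(\thetas;Z)$ in the proof of Lemma~\ref{lem:ld0_glm}, so that corollary applies unchanged and gives $\matrixnorm{\widehat\Omega-\Omega_0}_{\max}=O_P(\sqrt{\log d/N}+\log^2(dN)\log d/N)$, which is dominated by the asserted rate. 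The bulk of the work is thus to bound $\matrixnorm{\overline\Omega-\Omega_0}_{\max}$, which I split as $I_1(\btheta)+I_2$ exactly as in Lemma~\ref{lem:gbd0_ld}: $I_1$ compares the $k$-machine average of centered score outer products, evaluated at $\btheta$ and sandwiched by $\widetilde\Theta(\btheta)$, to $\Ee[\nabla\cL(\thetas;Z)\nabla\cL(\thetas;Z)^\top]$, while $I_2$ isolates the error of replacing $\widetilde\Theta(\btheta)=\nabla^2\cL_1(\btheta)^{-1}$ by $\Theta$.

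For $I_1$ I would twice use $\matrixnorm{AB}_{\max}\leq\matrixnorm{A}_\infty\matrixnorm{B}_{\max}$ and $\matrixnorm{AB}_{\max}\leq\matrixnorm{A}_{\max}\matrixnorm{B}_1$ to pull out $\matrixnorm{\widetilde\Theta(\btheta)}_\infty^2$. The GLM counterpart of Lemma~\ref{lem:hes_ld}, resting on Assumptions~\ref{as:smth_glm}, \ref{as:design_glm}, \ref{as:hes_glm} and invertibility of the master Hessian once $n\gtrsim d\log d$, gives $\matrixnorm{\widetilde\Theta(\btheta)}_\infty=O_P(\sqrt d)$. The remaining max-norm concentration of the averaged score covariance is supplied by the GLM counterpart of Lemma~\ref{lem:vcov0_reg}: the score products are sub-exponential under Assumption~\ref{as:subexp_glm}, producing the $\sqrt{\log d/k}$ term (here $k\gtrsim\log d$ is used for the block Bernstein bound), and the perturbation from evaluating gradients at $\btheta$ instead of $\thetas$ is controlled by a first-order Taylor expansion of $g'$ through Assumptions~\ref{as:smth_glm}--\ref{as:design_glm}, yielding the $\sqrt{\log d}\,r_{\btheta}$ and $n r_{\btheta}^2$ contributions. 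Multiplying by $\matrixnorm{\widetilde\Theta(\btheta)}_\infty^2=O_P(d)$ gives $I_1=O_P(d(\sqrt{\log d/k}+\sqrt{\log d}\,r_{\btheta}+n r_{\btheta}^2))$.

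For $I_2$, writing $\widetilde\Theta(\btheta)=\Theta+E$ with $E=\widetilde\Theta(\btheta)-\Theta$ and expanding the sandwich as $EM\Theta^\top+\Theta ME^\top+EME^\top$ with $M=\Ee[\nabla\cL(\thetas;Z)\nabla\cL(\thetas;Z)^\top]$, the bound reduces to $\matrixnorm{M}_2\matrixnorm{\Theta}_2\matrixnorm{E}_2$ plus $\matrixnorm{M}_2\matrixnorm{E}_2^2$, since $\matrixnorm{\cdot}_{\max}\leq\matrixnorm{\cdot}_2$ and $\matrixnorm{M}_2,\matrixnorm{\Theta}_2=O(1)$ by Assumption~\ref{as:hes_glm}. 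This step is the main obstacle: unlike the linear model, the GLM master Hessian depends on its argument, so $E$ carries two errors. The sampling error $\nabla^2\cL_1(\thetas)-\nabla^2\cLs(\thetas)$ contributes $O_P(\sqrt{d\log d/n})$ in spectral norm by matrix concentration (the extra $\sqrt{\log d}$ relative to the linear case comes from the $g''$-weighted design, cf.\ \eqref{eqn:hoef3}), and the evaluation error $\nabla^2\cL_1(\btheta)-\nabla^2\cL_1(\thetas)$ contributes $O_P(d\,r_{\btheta})$ via the Lipschitz bound \eqref{eqn:lip} together with $\|x\|_\infty=O(1)$; propagating both through the matrix inverse (using $\lambdamin(\nabla^2\cLs(\thetas))$ bounded below by Assumption~\ref{as:hes_glm}) gives $\matrixnorm{E}_2=O_P(\sqrt{d\log d/n}+d\,r_{\btheta})$. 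Hence $I_2=O_P(\sqrt{d\log d/n}+d\,r_{\btheta})$, where $n\gtrsim d\log d$ makes the quadratic self-term $\matrixnorm{E}_2^2$ negligible and the $d\,r_{\btheta}$ piece is absorbed into the dominating $d\sqrt{\log d}\,r_{\btheta}$ term of $I_1$.

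Combining the three bounds yields the claimed rate for $\matrixnorm{\overline\Omega-\widehat\Omega}_{\max}$. To finish, I would set $u$ equal to that rate raised to the power $1-\kappa$, exactly as in Lemma~\ref{lem:gbd0_ld}, so that $P(\matrixnorm{\overline\Omega-\widehat\Omega}_{\max}>u)=o(1)$; verifying $u^{1/3}(1\vee\log(d/u))^{2/3}=o(1)$ under $n\gg d\log^{5+\kappa}d$, $k\gg d^2\log^{5+\kappa}d$, and the two displayed bounds on $\|\btheta-\thetas\|_1$ then establishes \eqref{eqn:u}.
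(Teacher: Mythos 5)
Your proposal follows essentially the same route as the paper's proof: triangulating through $\Omega_0$, invoking Corollary~3.1 of \cite{chernozhukov2013gaussian} with the Assumption~(E.1) verification from the proof of Lemma~\ref{lem:ld0_glm}, splitting $\matrixnorm{\overline\Omega-\Omega_0}_{\max}\leq I_1(\btheta)+I_2$, controlling $I_1$ via $\matrixnorm{\widetilde\Theta(\btheta)}_\infty^2=O_P(d)$ (Lemma~\ref{lem:hes_ld_glm}) together with the covariance concentration of Lemma~\ref{lem:vcov0_reg_glm}, bounding $I_2$ through $\widetilde\Theta(\btheta)-\Theta$, and finally choosing $u$ as the combined rate raised to the power $1-\kappa$. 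The only cosmetic deviations---attributing the sub-exponentiality of the score products to Assumption~\ref{as:subexp_glm} rather than to the boundedness supplied by Assumptions~\ref{as:smth_glm}--\ref{as:design_glm} as in the paper's Lemma~\ref{lem:vcov0_reg_glm}, and a slightly cruder spectral-norm bound $\matrixnorm{\widetilde\Theta(\btheta)-\Theta}_2=O_P(\sqrt{d\log d/n}+d\,r_{\btheta})$ in $I_2$, which you correctly absorb into the dominant $d\sqrt{\log d}\,r_{\btheta}$ term---do not alter the argument or the final rate.
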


\begin{proof}[Lemma \ref{lem:gbd0_ld_glm}]
	We use the same argument as in the proof of Lemma~\ref{lem:gbd0_ld}.  Note by the triangle inequality that
	$$\matrixnorm{\overline\Omega-\widehat\Omega}_{\max}\leq\matrixnorm{\overline\Omega-\Omega_0}_{\max} + \matrixnorm{\widehat\Omega-\Omega_0}_{\max},$$
	where $\Omega_0$ is defined as in \eqref{eqn:o0}.  First, we bound $\matrixnorm{\widehat\Omega-\Omega_0}_{\max}$.  With Assumption~(E.1) of \cite{chernozhukov2013gaussian} verified for $\nabla^2\cLs(\thetas)^{-1}\nabla\cL(\thetas;Z)$ in the proof of Lemma~\ref{lem:ld0_glm}, by the proof of Corollary 3.1 of \cite{chernozhukov2013gaussian}, we have that
	$$\matrixnorm{\widehat\Omega-\Omega_0}_{\max} = O_P\left(\sqrt{\frac{\log d}N} + \frac{\log^2(dN)\log d}N\right).$$

	Next, we bound $\matrixnorm{\overline\Omega-\Omega_0}_{\max}$.  By the triangle inequality, we have that
	\begin{align*}
	&\matrixnorm{\overline\Omega-\Omega_0}_{\max}\\
	&=\matrixnorm{\widetilde\Theta(\btheta)\left(\frac1k\sum_{j=1}^k n\left(\nabla\cL_j(\btheta)-\nabla\cL_N(\btheta)\right)\left(\nabla\cL_j(\btheta)-\nabla\cL_N(\btheta)\right)^\top\right)\widetilde\Theta(\btheta)^\top -\Theta\Ee\left[\nabla\cL(\thetas;Z)\nabla\cL(\thetas;Z)^\top\right]\Theta}_{\max} \\
	&\leq \matrixnorm{\widetilde\Theta(\btheta)\left(\frac1k\sum_{j=1}^k n\left(\nabla\cL_j(\btheta)-\nabla\cL_N(\btheta)\right)\left(\nabla\cL_j(\btheta)-\nabla\cL_N(\btheta)\right)^\top-\Ee\left[\nabla\cL(\thetas;Z)\nabla\cL(\thetas;Z)^\top\right]\right)\widetilde\Theta(\btheta)^\top}_{\max} \\
	&\quad + \matrixnorm{\widetilde\Theta(\btheta)\Ee\left[\nabla\cL(\thetas;Z)\nabla\cL(\thetas;Z)^\top\right]\widetilde\Theta(\btheta)^\top -\Theta\Ee\left[\nabla\cL(\thetas;Z)\nabla\cL(\thetas;Z)^\top\right]\Theta}_{\max} \\
	&\defn I_1(\btheta) + I_2.
	\end{align*}
	Note that
	\begin{align*}
	&\widetilde\Theta(\btheta)\Ee\left[\nabla\cL(\thetas;Z)\nabla\cL(\thetas;Z)^\top\right]\widetilde\Theta(\btheta)^\top \\
	&= \left(\widetilde\Theta(\btheta)-\Theta\right)\Ee\left[\nabla\cL(\thetas;Z)\nabla\cL(\thetas;Z)^\top\right]\left(\widetilde\Theta(\btheta)-\Theta\right)^\top + \Theta\Ee\left[\nabla\cL(\thetas;Z)\nabla\cL(\thetas;Z)^\top\right]\left(\widetilde\Theta(\btheta)-\Theta\right)^\top \\
	&\quad + \left(\widetilde\Theta(\btheta)-\Theta\right)\Ee\left[\nabla\cL(\thetas;Z)\nabla\cL(\thetas;Z)^\top\right]\Theta + \Theta\Ee\left[\nabla\cL(\thetas;Z)\nabla\cL(\thetas;Z)^\top\right]\Theta.
	\end{align*}
	By the triangle inequality, we have that
	\begin{align*}
	I_2 &\leq\matrixnorm{\left(\widetilde\Theta(\btheta)-\Theta\right)\Ee\left[\nabla\cL(\thetas;Z)\nabla\cL(\thetas;Z)^\top\right]\left(\widetilde\Theta(\btheta)-\Theta\right)^\top}_{\max} \\
	&\quad + 2\matrixnorm{\Theta\Ee\left[\nabla\cL(\thetas;Z)\nabla\cL(\thetas;Z)^\top\right]\left(\widetilde\Theta(\btheta)-\Theta\right)^\top}_{\max} \\
	&\leq\matrixnorm{\Ee\left[\nabla\cL(\thetas;Z)\nabla\cL(\thetas;Z)^\top\right]}_2 \max_l \left\|\widetilde\Theta(\btheta)_l-\Theta_l\right\|_2^2 \\
	&\quad + 2\matrixnorm{\Ee\left[\nabla\cL(\thetas;Z)\nabla\cL(\thetas;Z)^\top\right]}_2 \max_l \left\|\Theta_l\right\|_2 \max_l \left\|\widetilde\Theta(\btheta)_l-\Theta_l\right\|_2.
	\end{align*}
	Note that $\max_l \left\|\Theta_l\right\|_2\leq\matrixnorm{\Theta}_2=O(1)$ under Assumption~\ref{as:hes_glm}.  By Lemma~\ref{lem:hes_ld_glm}, provided that $n\gtrsim d\log d$ and $r_{\btheta}\lesssim1$, we have that
	\begin{align*}
	I_2&=O_P\left(\frac{d\log d}n+r_{\btheta}^2+\sqrt{\frac{d\log d}n}+r_{\btheta}\right) =O_P\left(\sqrt{\frac{d\log d}n}+r_{\btheta}\right).
	\end{align*}
	To bound $I_1(\btheta)$, we note that
	\begin{align*}
	I_1(\btheta) &\leq \matrixnorm{\widetilde\Theta(\btheta)}_\infty^2 \matrixnorm{\frac1k\sum_{j=1}^k n\left(\nabla\cL_j(\btheta)-\nabla\cL_N(\btheta)\right)\left(\nabla\cL_j(\btheta)-\nabla\cL_N(\btheta)\right)^\top-\Ee\left[\nabla\cL(\thetas;Z)\nabla\cL(\thetas;Z)^\top\right]}_{\max}.
	\end{align*}
	By Lemma~\ref{lem:hes_ld_glm}, we have that
	$$\matrixnorm{\widetilde\Theta(\btheta)}_\infty=O_P\left( \sqrt{d}\right).$$
	Then, applying Lemma~\ref{lem:vcov0_reg_glm}, we obtain that
	$$I_1(\btheta) = O_P\left(d\left(\sqrt{\frac{\log d}k} + \sqrt{\log d} r_{\btheta} + n r_{\btheta}^2\right)\right),$$
	provided that $\left\|\btheta-\thetas\right\|_1=O_P(r_{\btheta})$, $r_{\btheta}\lesssim 1$, $n\gtrsim\log d$, and $k\gtrsim\log d$.
	Putting all the preceding bounds together, we obtain that
        \begin{align*}
        \matrixnorm{\overline\Omega-\Omega_0}_{\max} &= O_P\left(d\left(\sqrt{\frac{\log d}k} + \sqrt{\log d} r_{\btheta} + n r_{\btheta}^2\right) + \sqrt{\frac{d\log d}n}\right),
        \end{align*}
        and finally the first result in the lemma.
        Choosing
	$$u= \left(d\sqrt{\frac{\log d}k} + d\sqrt{\log d} r_{\btheta} + nd r_{\btheta}^2 + \sqrt{\frac{d\log d}n}\right)^{1-\kappa},$$
	with any $\kappa>0$, we deduce that $P\left(\matrixnorm{\overline\Omega-\widehat\Omega}_{\max}>u\right)=o(1)$.
	We also have that
	$$u^{1/3}\left(1\vee\log\frac du\right)^{2/3},\quad\text{if}\quad\left(d\sqrt{\frac{\log d}k} + d\sqrt{\log d} r_{\btheta} + nd r_{\btheta}^2 + \sqrt{\frac{d\log d}n}\right) \log^{2+\kappa} d =o(1).$$
	We complete the proof by simplifying the conditions.
	
\end{proof}

\begin{lemma}\label{lem:gbdo_ld_glm}
	$\widehat\Omega$ and $\Omega_0$ is defined as in \eqref{eqn:oh} and \eqref{eqn:o0} respectively.  In GLM, under Assumptions~\ref{as:hes_glm}--\ref{as:subexp_glm}, we have that
	$$\matrixnorm{\widehat\Omega-\Omega_0}_{\max} = O_P\left(\sqrt{\frac{\log d}N} + \frac{\log^2(dN)\log d}N\right).$$
	Moreover, if $N\gg\log^{5+\kappa}d$ for some $\kappa>0$, then there exists some $v>0$ such that \eqref{eqn:v} holds.
\end{lemma}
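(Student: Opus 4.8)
The plan is to follow exactly the route of the linear-model counterpart, Lemma~\ref{lem:gbdo_ld}, and simply substitute the GLM ingredients. Observe that both $\widehat\Omega$ in \eqref{eqn:oh} and $\Omega_0$ in \eqref{eqn:o0} are built from the rescaled score $\bh=\nabla^2\cLs(\thetas)^{-1}\nabla\cL(\thetas;Z)$: $\widehat\Omega$ is the empirical second-moment matrix of $\bh$ over the $N$ observations, while $\Omega_0$ is its population version. Thus the first claimed bound is a concentration statement for the empirical covariance of $\bh$ in max-norm, and I would obtain it by invoking the proof of Corollary~3.1 of \cite{chernozhukov2013gaussian}. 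This is in fact the same bound already derived inside the proof of Lemma~\ref{lem:gbd0_ld_glm}, so the first step is merely to cite that derivation.

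The input required for Corollary~3.1 is Assumption~(E.1) of \cite{chernozhukov2013gaussian} for the vector $\bh$, and this was already verified in the proof of Lemma~\ref{lem:ld0_glm}. Concretely, Assumption~\ref{as:hes_glm} forces $\min_l\Ee[\bh_l^2]$ to be bounded away from zero through the eigenvalue bounds on $\nabla^2\cLs(\thetas)$ and $\Ee[\nabla\cL(\thetas;Z)\nabla\cL(\thetas;Z)^\top]$, and Assumption~\ref{as:subexp_glm} supplies the moment and sub-exponential control on the coordinates $\bh_l$. With (E.1) in hand, Corollary~3.1 yields $\Ee[\matrixnorm{\widehat\Omega-\Omega_0}_{\max}]\lesssim\sqrt{\log d/N}+\log^2(dN)\log d/N$, whence the $O_P$ bound stated in the lemma.

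To obtain \eqref{eqn:v}, I would set
$$v=\left(\sqrt{\frac{\log d}N}+\frac{\log^2(dN)\log d}N\right)^{1-\kappa}$$
for the given $\kappa>0$. Since $v$ is a fixed power $(1-\kappa)$ of the concentration rate, the $O_P$ bound together with Markov's inequality gives $P(\matrixnorm{\widehat\Omega-\Omega_0}_{\max}>v)=o(1)$. For the deterministic term, a short calculation shows $v^{1/3}(1\vee\log(d/v))^{2/3}=o(1)$ whenever $(\sqrt{\log d/N}+\log^2(dN)\log d/N)\log^{2+\kappa}d=o(1)$; the binding term here is $\log^2(dN)\log d/N\cdot\log^{2+\kappa}d\asymp\log^{5+\kappa}d/N$, so this holds precisely when $N\gg\log^{5+\kappa}d$.

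There is no genuine obstacle in this lemma: all the analytic work—verifying (E.1) and applying the Gaussian-multiplier concentration of \cite{chernozhukov2013gaussian}—has been offloaded to Lemmas~\ref{lem:ld0_glm} and~\ref{lem:gbd0_ld_glm}. The only point demanding care is the bookkeeping of logarithmic exponents, so that balancing $v^{1/3}(1\vee\log(d/v))^{2/3}$ against the rate reproduces the clean threshold $N\gg\log^{5+\kappa}d$.
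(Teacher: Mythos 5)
Your proposal matches the paper's proof essentially verbatim: the paper likewise obtains the first bound by citing its derivation inside the proof of Lemma~\ref{lem:gbd0_ld_glm} (which rests on the verification of Assumption~(E.1) of \cite{chernozhukov2013gaussian} in Lemma~\ref{lem:ld0_glm} and the proof of Corollary~3.1 there), then chooses $v=\left(\sqrt{\log d/N}+\log^2(dN)\log d/N\right)^{1-\kappa}$ and checks that $v^{1/3}\left(1\vee\log\frac dv\right)^{2/3}=o(1)$ under $N\gg\log^{5+\kappa}d$. Your accounting of the binding logarithmic term, including the Markov step for $P\left(\matrixnorm{\widehat\Omega-\Omega_0}_{\max}>v\right)=o(1)$, is correct and consistent with the paper.
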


\begin{proof}[Lemma \ref{lem:gbdo_ld_glm}]
	The first result is derived in the proof of Lemma~\ref{lem:gbd0_ld_glm}. Choosing
	$$v=\left(\sqrt{\frac{\log d}N} + \frac{\log^2(dN)\log d}N\right)^{1-\kappa},$$
	with any $\kappa>0$, we deduce that
	$P\left(\matrixnorm{\widehat\Omega-\Omega_0}_{\max}>v\right)=o(1)$.
	We also have that
	$$v^{1/3}\left(1\vee\log\frac dv\right)^{2/3},\quad\text{if}\quad\left(\sqrt{\frac{\log d}N} + \frac{\log^2(dN)\log d}N\right) \log^{2+\kappa} d=o(1),$$
	which holds if $N\gg\log^{5+\kappa}d$.
\end{proof}

\begin{lemma}\label{lem:gbd_ld_glm}
	$\widetilde\Omega$ and $\widehat\Omega$ are defined as in \eqref{eqn:ot_glm} and \eqref{eqn:oh} respectively.  In GLM, under Assumptions~\ref{as:smth_glm}--\ref{as:subexp_glm}, provided that $\left\|\btheta-\thetas\right\|_1=O_P(r_{\btheta})$, $r_{\btheta}\lesssim 1$, and $n\gtrsim d\log d$, we have that
	\begin{align*}
	\matrixnorm{\widetilde\Omega-\widehat\Omega}_{\max} &= O_P\left(d\left(\sqrt{\frac{\log d}{n+k}} +\frac{n+k\sqrt{\log d}+k^{3/4}\log^{3/4}d}{n+k} r_{\btheta} + \frac{nk}{n+k}r_{\btheta}^2\right) + \sqrt{\frac{d\log d}n}\right).
        \end{align*}
	Moreover, if $n\gg d\log^{5+\kappa} d$, $n+k\gg d^2\log^{5+\kappa} d$, and
	$$\left\|\btheta-\thetas\right\|_1\ll\min\left\{\frac{n+k}{d\left(n+k\sqrt{\log d}+k^{3/4}\log^{3/4}d\right)\log^{2+\kappa}d},\frac1{ \sqrt{d}\log^{1+\kappa}d}\sqrt{\frac1n+\frac1k}\right\},$$
	for some $\kappa>0$, then there exists some $u>0$ such that \eqref{eqn:ut} holds.
\end{lemma}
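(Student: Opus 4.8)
The plan is to follow the same two-stage decomposition used in the proof of Lemma~\ref{lem:gbd_ld} for the linear \texttt{n+k-1-grad} estimator, but substituting the GLM-specific variance-concentration and Hessian-inverse bounds already developed in the proof of Lemma~\ref{lem:gbd0_ld_glm}. First I would pass through the oracle target $\Omega_0$ in \eqref{eqn:o0} via the triangle inequality,
$$\matrixnorm{\widetilde\Omega-\widehat\Omega}_{\max}\leq\matrixnorm{\widetilde\Omega-\Omega_0}_{\max}+\matrixnorm{\widehat\Omega-\Omega_0}_{\max},$$
and dispose of the second term at once: since $\widehat\Omega$ is exactly the oracle multiplier covariance in \eqref{eqn:oh}, the bound $\matrixnorm{\widehat\Omega-\Omega_0}_{\max}=O_P(\sqrt{\log d/N}+\log^2(dN)\log d/N)$ established in the proof of Lemma~\ref{lem:gbd0_ld_glm} (equivalently Lemma~\ref{lem:gbdo_ld_glm}) applies verbatim, and since $N=nk$ this is of strictly smaller order than the claimed rate.

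The core work is to bound $\matrixnorm{\widetilde\Omega-\Omega_0}_{\max}$. Writing out $\widetilde\Omega$ from \eqref{eqn:ot_glm}, I would insert $\Ee[\nabla\cL(\thetas;Z)\nabla\cL(\thetas;Z)^\top]$ between the two outer copies of $\widetilde\Theta(\btheta)$, producing $\matrixnorm{\widetilde\Omega-\Omega_0}_{\max}\leq I_1'(\btheta)+I_2$ exactly as in Lemma~\ref{lem:gbd_ld}. The bias term $I_2$, which measures the cost of replacing $\Theta$ by $\widetilde\Theta(\btheta)$ in the sandwich, is identical to the $I_2$ handled in the proof of Lemma~\ref{lem:gbd0_ld_glm}; invoking Lemma~\ref{lem:hes_ld_glm} together with $\matrixnorm{\Theta}_2=O(1)$ (Assumption~\ref{as:hes_glm}) yields $I_2=O_P(\sqrt{d\log d/n}+r_{\btheta})$ under $n\gtrsim d\log d$ and $r_{\btheta}\lesssim1$, and I expect the lone $r_{\btheta}$ here to be absorbed by the $d r_{\btheta}$-type term produced in $I_1'$.

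For $I_1'(\btheta)$ I would peel off the two Hessian-inverse estimates in max-norm through $\matrixnorm{AB}_{\max}\leq\matrixnorm{A}_\infty\matrixnorm{B}_{\max}$ and $\matrixnorm{AB}_{\max}\leq\matrixnorm{A}_{\max}\matrixnorm{B}_1$, giving $I_1'(\btheta)\leq\matrixnorm{\widetilde\Theta(\btheta)}_\infty^2\,\matrixnorm{V(\btheta)-\Ee[\nabla\cL(\thetas;Z)\nabla\cL(\thetas;Z)^\top]}_{\max}$, where $V(\btheta)$ denotes the mixed empirical second-moment matrix from \eqref{eqn:ot_glm} (the $n$ single-datum centred gradients in the master together with the $k-1$ rescaled machine-averaged centred gradients, all normalized by $1/(n+k-1)$). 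Lemma~\ref{lem:hes_ld_glm} supplies $\matrixnorm{\widetilde\Theta(\btheta)}_\infty=O_P(\sqrt d)$, furnishing the overall factor $d$, while the concentration of $V(\btheta)$ about its population counterpart is precisely the content of the GLM \texttt{n+k-1-grad} variance lemma (Lemma~\ref{lem:vcov_reg_glm}, the GLM analogue of Lemma~\ref{lem:vcov_reg}), which I would invoke to obtain $\sqrt{\log d/(n+k)}+\frac{n+k\sqrt{\log d}+k^{3/4}\log^{3/4}d}{n+k}r_{\btheta}+\frac{nk}{n+k}r_{\btheta}^2$. Combining $I_1'$, $I_2$, and the $\widehat\Omega$ bound then gives the first assertion.

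The distinctive $k^{3/4}\log^{3/4}d$ contribution inside the $r_{\btheta}$-linear term is what I expect to be the main obstacle, and it is exactly the piece delegated to Lemma~\ref{lem:vcov_reg_glm}: unlike the linear model, the GLM gradient $g'(y,x^\top\theta)x$ is nonlinear in $\theta$, so perturbing $\btheta$ away from $\thetas$ forces a Taylor expansion of $g'$ whose remainder—controlled by the Lipschitz bound on $g''$ in Assumption~\ref{as:smth_glm} and $\|x\|_\infty=O(1)$ in Assumption~\ref{as:design_glm}—must be tracked separately for the $n$ master single-datum terms and for the $k-1$ rescaled machine-averages, and the asymmetric $k^{3/4}\log^{3/4}d$ factor reflects the fourth-moment/tail control (Assumption~\ref{as:subexp_glm}) of the coordinatewise maximum over those rescaled averages. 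Finally, to deduce the second assertion I would set $u$ equal to the $(1-\kappa)$-th power of the derived rate, verify $P(\matrixnorm{\widetilde\Omega-\widehat\Omega}_{\max}>u)=o(1)$ directly, and check that $u^{1/3}(1\vee\log(d/u))^{2/3}=o(1)$ whenever the rate multiplied by $\log^{2+\kappa}d$ vanishes, which is exactly what the stated hypotheses on $n$, $n+k$, and $r_{\btheta}=\|\btheta-\thetas\|_1$ guarantee after simplification, mirroring the closing steps of Lemmas~\ref{lem:gbd_ld} and~\ref{lem:gbd0_ld_glm}.
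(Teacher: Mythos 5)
Your proposal reproduces the paper's proof essentially verbatim: the triangle inequality through $\Omega_0$ with $\matrixnorm{\widehat\Omega-\Omega_0}_{\max}=O_P(\sqrt{\log d/N}+\log^2(dN)\log d/N)$ as in Lemma~\ref{lem:gbdo_ld_glm}, the split $\matrixnorm{\widetilde\Omega-\Omega_0}_{\max}\leq I_1'(\btheta)+I_2$ with $I_2=O_P(\sqrt{d\log d/n}+r_{\btheta})$ (the lone $r_{\btheta}$ absorbed, as you note) and $I_1'(\btheta)\leq\matrixnorm{\widetilde\Theta(\btheta)}_\infty^2$ times the max-norm deviation of the mixed empirical second-moment matrix, with Lemmas~\ref{lem:hes_ld_glm} and~\ref{lem:vcov_reg_glm} supplying the two factors and $u$ chosen as the $(1-\kappa)$-th power of the resulting rate. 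The one peripheral inaccuracy is your attribution of the $k^{3/4}\log^{3/4}d$ factor to Assumption~\ref{as:subexp_glm} (which enters only through the oracle bound via \cite{chernozhukov2013gaussian}): it in fact arises in the proof of Lemma~\ref{lem:vcov_reg_glm} from the Cauchy--Schwarz cross term, where the $(1+(\log d/k)^{1/4})$ Bernstein factor on $\frac1k\sum_{j} n\nabla\cL_j(\thetas)\nabla\cL_j(\thetas)^\top$ (a consequence of Assumptions~\ref{as:smth_glm}--\ref{as:hes_glm}) multiplies $k\sqrt{\log d}\,r_{\btheta}/(n+k)$ --- but since you invoke that lemma as a whole, this does not affect the correctness of the argument.
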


\begin{proof}[Lemma \ref{lem:gbd_ld_glm}]
	Note by the triangle inequality that
	$$\matrixnorm{\widetilde\Omega-\widehat\Omega}_{\max}\leq\matrixnorm{\widetilde\Omega-\Omega_0}_{\max} + \matrixnorm{\widehat\Omega-\Omega_0}_{\max},$$
	where $\Omega_0$ is defined as in \eqref{eqn:o0}.  By the proof of Lemma~\ref{lem:gbd0_ld_glm}, we have that
	$$\matrixnorm{\widehat\Omega-\Omega_0}_{\max} = O_P\left(\sqrt{\frac{\log d}N} + \frac{\log^2(dN)\log d}N\right).$$
	Next, we bound $\matrixnorm{\widetilde\Omega-\Omega_0}_{\max}$ using the same argument as in the proof of Lemma~\ref{lem:gbd0_ld_glm}.  By the triangle inequality, we have that
	\begin{align*}
	&\matrixnorm{\widetilde\Omega-\Omega_0}_{\max}\\
	&=\Bigg|\!\Bigg|\!\Bigg|\widetilde\Theta(\btheta)\frac1{n+k-1}\Bigg(\sum_{i=1}^n\left(\nabla\cL(\theta;Z_{i1})-\nabla\cL_N(\btheta)\right)\left(\nabla\cL(\theta;Z_{i1})-\nabla\cL_N(\btheta)\right)^\top \\
	&\quad+\sum_{j=2}^k n\left(\nabla\cL_j(\btheta)-\nabla\cL_N(\btheta)\right)\left(\nabla\cL_j(\btheta)-\nabla\cL_N(\btheta)\right)^\top\Bigg)\widetilde\Theta(\btheta)^\top-\Theta\Ee\left[\nabla\cL(\thetas;Z)\nabla\cL(\thetas;Z)^\top\right]\Theta\Bigg|\!\Bigg|\!\Bigg|_{\max} \\
	&\leq \Bigg|\!\Bigg|\!\Bigg|\widetilde\Theta(\btheta)\Bigg(\frac1{n+k-1}\Bigg(\sum_{i=1}^n\left(\nabla\cL(\theta;Z_{i1})-\nabla\cL_N(\btheta)\right)\left(\nabla\cL(\theta;Z_{i1})-\nabla\cL_N(\btheta)\right)^\top \\
	&\quad+\sum_{j=2}^k n\left(\nabla\cL_j(\btheta)-\nabla\cL_N(\btheta)\right)\left(\nabla\cL_j(\btheta)-\nabla\cL_N(\btheta)\right)^\top\Bigg)-\Ee\left[\nabla\cL(\thetas;Z)\nabla\cL(\thetas;Z)^\top\right]\Bigg)\widetilde\Theta(\btheta)^\top\Bigg|\!\Bigg|\!\Bigg|_{\max} \\
	&\quad+ \matrixnorm{\widetilde\Theta(\btheta)\Ee\left[\nabla\cL(\thetas;Z)\nabla\cL(\thetas;Z)^\top\right]\widetilde\Theta(\btheta)^\top-\Theta\Ee\left[\nabla\cL(\thetas;Z)\nabla\cL(\thetas;Z)^\top\right]\Theta}_{\max} \\
	&\defn I_1'(\btheta) + I_2.
	\end{align*}
	We have shown in the proof of Lemma~\ref{lem:gbd0_ld_glm} that
	$$I_2 = O_P\left(\sqrt{\frac{d\log d}n}+r_{\btheta}\right),$$
	provided that $n\gtrsim d\log d$ and $r_{\btheta}\lesssim1$.  To bound $I_1'(\btheta)$, we note that
	\begin{align*}
	I_1'(\btheta) &\leq \matrixnorm{\widetilde\Theta(\btheta)}_\infty^2 \Bigg|\!\Bigg|\!\Bigg|\frac1{n+k-1}\Bigg(\sum_{i=1}^n\left(\nabla\cL(\btheta;Z_{i1})-\nabla\cL_N(\btheta)\right)\left(\nabla\cL(\btheta;Z_{i1})-\nabla\cL_N(\btheta)\right)^\top \\
	&\quad+\sum_{j=2}^k n\left(\nabla\cL_j(\btheta)-\nabla\cL_N(\btheta)\right)\left(\nabla\cL_j(\btheta)-\nabla\cL_N(\btheta)\right)^\top\Bigg) -\Ee\left[\nabla\cL(\thetas;Z)\nabla\cL(\thetas;Z)^\top\right]\Bigg|\!\Bigg|\!\Bigg|_{\max}.
	\end{align*}
	By Lemma~\ref{lem:hes_ld_glm}, we have that
	$$\matrixnorm{\widetilde\Theta(\btheta)}_\infty=O_P\left( \sqrt{d}\right).$$
	Then, applying Lemma~\ref{lem:vcov_reg_glm}, we have that
	\begin{align*}
	I_1'(\btheta) = O_P\left(d\left(\sqrt{\frac{\log d}{n+k}} + \frac{n+k\sqrt{\log d}+k^{3/4}\log^{3/4}d}{n+k} r_{\btheta} + \frac{nk}{n+k}r_{\btheta}^2\right)\right),
	\end{align*}
	under Assumptions~\ref{as:smth_glm}--\ref{as:hes_glm}, provided that $\left\|\btheta-\thetas\right\|_1=O_P(r_{\btheta})$, $r_{\btheta}\lesssim 1$, and $n+k\gtrsim\log d$.
	
	Putting all the preceding bounds together, we obtain that
	\begin{align*}
        \matrixnorm{\widetilde\Omega-\Omega_0}_{\max} &= O_P\left(d\left(\sqrt{\frac{\log d}{n+k}} +\frac{n+k\sqrt{\log d}+k^{3/4}\log^{3/4}d}{n+k} r_{\btheta} + \frac{nk}{n+k}r_{\btheta}^2\right) + \sqrt{\frac{d\log d}n}\right),
        \end{align*}
        and finally the first result in the lemma.
        Choosing
	$$u= \left(d\sqrt{\frac{\log d}{n+k}} + \frac{n+k\sqrt{\log d}+k^{3/4}\log^{3/4}d}{n+k} d r_{\btheta} + \frac{nkd}{n+k}r_{\btheta}^2 + \sqrt{\frac{d\log d}n}\right)^{1-\kappa},$$
	with any $\kappa>0$, we deduce that
	$P\left(\matrixnorm{\widetilde\Omega-\widehat\Omega}_{\max}>u\right)=o(1)$.
	We also have that
	$$u^{1/3}\left(1\vee\log\frac du\right)^{2/3},\quad\text{if}\quad\left(d\sqrt{\frac{\log d}{n+k}} + \frac{n+k\sqrt{\log d}+k^{3/4}\log^{3/4}d}{n+k} d r_{\btheta} + \frac{nkd}{n+k}r_{\btheta}^2 + \sqrt{\frac{d\log d}n}\right) \log^{2+\kappa} d=o(1).$$
	We complete the proof by simplifying the conditions.
\end{proof}

\section{Technical Lemmas}

\begin{lemma}\label{lem:kgrad_decomp}
	For any $\theta$, we have that
	\begin{align*}
	&\matrixnorm{\frac1k\sum_{j=1}^k n\left(\nabla\cL_j(\theta)-\nabla\cL_N(\theta)\right)\left(\nabla\cL_j(\theta)-\nabla\cL_N(\theta)\right)^\top-\Ee\left[\nabla\cL(\thetas;Z)\nabla\cL(\thetas;Z)^\top\right]}_{\max} \leq U_1(\theta) + U_2 + U_3(\theta),
	\end{align*}
	\begin{align*}
	&\text{where}\quad U_1(\theta)\defn\matrixnorm{\frac1k\sum_{j=1}^k n\left(\nabla\cL_j(\theta)-\nabla\cLs(\theta)\right)\left(\nabla\cL_j(\theta)-\nabla\cLs(\theta)\right)^\top-n\nabla\cL_j(\thetas)\nabla\cL_j(\thetas)^\top}_{\max}, \\
	&U_2\defn\matrixnorm{\frac1k\sum_{j=1}^k n\nabla\cL_j(\thetas)\nabla\cL_j(\thetas)^\top-\Ee\left[\nabla\cL(\thetas;Z)\nabla\cL(\thetas;Z)^\top\right]}_{\max},
	\quad\text{and}\quad
	U_3(\theta)\defn n\left\|\nabla\cL_N(\theta)-\nabla\cLs(\theta)\right\|_\infty^2.
	\end{align*}
\end{lemma}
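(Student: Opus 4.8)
The statement is a purely algebraic decomposition, so the plan is to reduce the left-hand side to the three listed pieces by two applications of the triangle inequality in $\matrixnorm{\cdot}_{\max}$, after first re-centering the empirical covariance around the population gradient $\nabla\cLs(\theta)$ instead of the sample average $\nabla\cL_N(\theta)$. No probabilistic input is needed; this is an identity-plus-triangle-inequality exercise.

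First I would abbreviate $g_j\defn\nabla\cL_j(\theta)$ and record that $\nabla\cL_N(\theta)=k^{-1}\sum_{j=1}^k g_j$ is \emph{exactly} the sample mean of the $g_j$. The key step is the standard sample-covariance identity, valid for any fixed vector $c$: writing $g_j-\nabla\cL_N(\theta)=(g_j-c)-(\nabla\cL_N(\theta)-c)$ and expanding the outer product, the two cross terms collapse because $\sum_{j=1}^k(g_j-c)=k(\nabla\cL_N(\theta)-c)$, which yields
$$\frac1k\sum_{j=1}^k n\left(g_j-\nabla\cL_N(\theta)\right)\left(g_j-\nabla\cL_N(\theta)\right)^\top=\frac1k\sum_{j=1}^k n(g_j-c)(g_j-c)^\top-n(\nabla\cL_N(\theta)-c)(\nabla\cL_N(\theta)-c)^\top.$$
I would apply this with $c=\nabla\cLs(\theta)$.

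Next I would subtract $\Ee[\nabla\cL(\thetas;Z)\nabla\cL(\thetas;Z)^\top]$ from both sides and invoke the triangle inequality to peel off the rank-one term; since $\matrixnorm{vv^\top}_{\max}=\max_{l,l'}|v_lv_{l'}|=\|v\|_\infty^2$ for any vector $v$, that rank-one piece contributes exactly $n\|\nabla\cL_N(\theta)-\nabla\cLs(\theta)\|_\infty^2=U_3(\theta)$. On the remaining term $\matrixnorm{k^{-1}\sum_{j=1}^k n(g_j-\nabla\cLs(\theta))(g_j-\nabla\cLs(\theta))^\top-\Ee[\nabla\cL(\thetas;Z)\nabla\cL(\thetas;Z)^\top]}_{\max}$ I would add and subtract $k^{-1}\sum_{j=1}^k n\nabla\cL_j(\thetas)\nabla\cL_j(\thetas)^\top$ and apply the triangle inequality once more, producing $U_1(\theta)$ and $U_2$ respectively; collecting the three bounds gives the claim.

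There is no genuine obstacle here: every step is either an equality or a triangle inequality. The only point that requires a moment of care is verifying the recentering identity, specifically confirming that the cross terms vanish \emph{because} $\nabla\cL_N(\theta)$ is the true average of the $g_j$, so that the identity holds for the arbitrary fixed center $c=\nabla\cLs(\theta)$ (and not merely when $c$ equals the sample mean). Once that identity is in place, the decomposition into $U_1(\theta)+U_2+U_3(\theta)$ follows immediately.
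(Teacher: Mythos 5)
Your proposal is correct and follows essentially the same route as the paper's proof: the recentering identity you state (cross terms vanishing because $\nabla\cL_N(\theta)=k^{-1}\sum_{j=1}^k\nabla\cL_j(\theta)$ is the exact mean of the local gradients) is precisely the expansion the paper performs with $c=\nabla\cLs(\theta)$, followed by the same two triangle inequalities and the fact that $\matrixnorm{aa^\top}_{\max}=\|a\|_\infty^2$ to identify $U_3(\theta)$. Your phrasing via an arbitrary fixed center $c$ is only a cosmetic generalization of the paper's direct computation.
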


\begin{proof}[Lemma \ref{lem:kgrad_decomp}]
		We write $\nabla\cL_j(\theta)-\nabla\cLs(\theta)$ as $\left(\nabla\cL_j(\theta)-\nabla\cL_N(\theta)\right)+\left(\nabla\cL_N(\theta)-\nabla\cLs(\theta)\right)$, and have that
	\begin{align*}
	&\sum_{j=1}^k n\left(\nabla\cL_j(\theta)-\nabla\cLs(\theta)\right)\left(\nabla\cL_j(\theta)-\nabla\cLs(\theta)\right)^\top \\
	&= \sum_{j=1}^k n\left(\nabla\cL_j(\theta)-\nabla\cL_N(\theta)\right)\left(\nabla\cL_j(\theta)-\nabla\cL_N(\theta)\right)^\top + nk\left(\nabla\cL_N(\theta)-\nabla\cLs(\theta)\right)\left(\nabla\cL_N(\theta)-\nabla\cLs(\theta)\right)^\top \\
	&\quad + n\left(\nabla\cL_N(\theta)-\nabla\cLs(\theta)\right) \sum_{j=1}^k \left(\nabla\cL_j(\theta)-\nabla\cL_N(\theta)\right)^\top + n\sum_{j=1}^k \left(\nabla\cL_j(\theta)-\nabla\cL_N(\theta)\right) \left(\nabla\cL_N(\theta)-\nabla\cLs(\theta)\right)^\top \\
	&= \sum_{j=1}^k n\left(\nabla\cL_j(\theta)-\nabla\cL_N(\theta)\right)\left(\nabla\cL_j(\theta)-\nabla\cL_N(\theta)\right)^\top + nk\left(\nabla\cL_N(\theta)-\nabla\cLs(\theta)\right)\left(\nabla\cL_N(\theta)-\nabla\cLs(\theta)\right)^\top,
	\end{align*}
	where we use $\nabla\cL_N(\theta)=\frac1k\sum_{j=1}^k\nabla\cL_j(\theta)$ in the last equality.  Then, we have that
	\begin{align*}
	&\sum_{j=1}^k n\left(\nabla\cL_j(\theta)-\nabla\cL_N(\theta)\right)\left(\nabla\cL_j(\theta)-\nabla\cL_N(\theta)\right)^\top \\
	&= \sum_{j=1}^k n\left(\nabla\cL_j(\theta)-\nabla\cLs(\theta)\right)\left(\nabla\cL_j(\theta)-\nabla\cLs(\theta)\right)^\top - nk\left(\nabla\cL_N(\theta)-\nabla\cLs(\theta)\right)\left(\nabla\cL_N(\theta)-\nabla\cLs(\theta)\right)^\top,
	\end{align*}
	and by the triangle inequality,
	\begin{align*}
	&\matrixnorm{\frac1k\sum_{j=1}^k n\left(\nabla\cL_j(\theta)-\nabla\cL_N(\theta)\right)\left(\nabla\cL_j(\theta)-\nabla\cL_N(\theta)\right)^\top-\Ee\left[\nabla\cL(\thetas;Z)\nabla\cL(\thetas;Z)^\top\right]}_{\max} \\
	&\leq U_1(\theta) + U_2 + n\matrixnorm{\left(\nabla\cL_N(\theta)-\nabla\cLs(\theta)\right)\left(\nabla\cL_N(\theta)-\nabla\cLs(\theta)\right)^\top}_{\max}.
	\end{align*}
	By the fact that $\matrixnorm{aa^\top}_{\max}=\|a\|_\infty^2$ for any vector $a$, we have that
	$\matrixnorm{\left(\nabla\cL_N(\theta)-\nabla\cLs(\theta)\right)\left(\nabla\cL_N(\theta)-\nabla\cLs(\theta)\right)^\top}_{\max}=n^{-1}U_3(\theta)$.
\end{proof}

\begin{lemma}\label{lem:vcov0_reg}
	In linear model, under Assumptions~\ref{as:design}~and~\ref{as:noise}, provided that $\left\|\btheta-\thetas\right\|_1=O_P(r_{\btheta})$, we have that
	\begin{align*}
	&\matrixnorm{\frac1k\sum_{j=1}^k n\left(\nabla\cL_j(\btheta)-\nabla\cL_N(\btheta)\right)\left(\nabla\cL_j(\btheta)-\nabla\cL_N(\btheta)\right)^\top-\Ee\left[\nabla\cL(\thetas;Z)\nabla\cL(\thetas;Z)^\top\right]}_{\max} \\
	&= O_P\Bigg(\sqrt{\frac{\log d}k} + \frac{\log^2(dk)\log d}k + \Bigg(1+\left(\frac{\log d}k\right)^{1/4} + \sqrt{\frac{\log^2(dk)\log d}k}\Bigg)\sqrt{\log(kd)} r_{\btheta} \\
	&\quad + \left(n + \sqrt{\frac{n\log d}k} + \log(kd)\right) r_{\btheta}^2\Bigg).
	\end{align*}
\end{lemma}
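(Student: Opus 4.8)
The plan is to invoke Lemma~\ref{lem:kgrad_decomp} with $\theta=\btheta$, reducing the target to bounding $U_1(\btheta)+U_2+U_3(\btheta)$, and to estimate the three pieces separately. Throughout write $v\defn\btheta-\thetas$, $\widehat\Sigma_j\defn n^{-1}\sum_{i=1}^n x_{ij}x_{ij}^\top$, $D_j\defn\widehat\Sigma_j-\Sigma$, and $a_j\defn\nabla\cL_j(\thetas)=-n^{-1}\sum_i x_{ij}e_{ij}$. The term $U_2$ is the pure oracle fluctuation. Since $\nabla\cLs(\thetas)=0$, its summands $n\,a_j a_j^\top=S_jS_j^\top$ with $S_j\defn\sqrt n\,a_j=-n^{-1/2}\sum_i x_{ij}e_{ij}$ are i.i.d.\ across the $k$ machines with common mean $\Ee[\nabla\cL(\thetas;Z)\nabla\cL(\thetas;Z)^\top]=\sigma^2\Sigma$. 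I would bound $U_2$ by an entrywise Bernstein inequality over the $k$ i.i.d.\ machines, in the spirit of Corollary~3.1 of \cite{chernozhukov2013gaussian}. Each entry $(S_j)_l(S_j)_{l'}$ has $O(1)$ variance, while the sub-exponential tails of the normalized score sums give $\max_{j,l}(S_j)_l^2\lesssim\log^2(dk)$ up to lower order, so the variance and range terms of Bernstein produce exactly $U_2=O_P(\sqrt{\log d/k}+\log^2(dk)\log d/k)$.

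The heart of the proof is $U_1(\btheta)$, where I would exploit the exact linear-model identity $\nabla\cL_j(\btheta)-\nabla\cLs(\btheta)=a_j+D_jv=a_j+\delta_j$ with $\delta_j\defn D_jv$. Expanding the outer products, $U_1(\btheta)=\matrixnorm{\frac nk\sum_j(a_j\delta_j^\top+\delta_j a_j^\top+\delta_j\delta_j^\top)}_{\max}$, which splits into a cross term that is linear in $v$ and a quadratic term. For the cross term I would control each entry $\frac nk\sum_j(a_j)_l(\delta_j)_{l'}$ by Cauchy--Schwarz across machines, $\bigl|\frac nk\sum_j(a_j)_l(\delta_j)_{l'}\bigr|\le n\bigl(\frac1k\sum_j(a_j)_l^2\bigr)^{1/2}\bigl(\frac1k\sum_j(\delta_j)_{l'}^2\bigr)^{1/2}$. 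The first factor equals $n^{-1/2}$ times $\bigl(\frac1k\sum_j(S_j)_l^2\bigr)^{1/2}$, whose maximum over $l$ concentrates around $\sigma^2\Sigma_{l,l}=O(1)$ and contributes the fluctuation factor $1+(\log d/k)^{1/4}+\sqrt{\log^2(dk)\log d/k}$; the second factor is bounded via Hölder by $\max_j\matrixnorm{D_j}_{\max}\|v\|_1\lesssim\sqrt{\log(dk)/n}\,r_{\btheta}$, using the uniform-over-machines concentration of the empirical covariances. Multiplying the three scalings gives the claimed coefficient $\bigl(1+(\log d/k)^{1/4}+\sqrt{\log^2(dk)\log d/k}\bigr)\sqrt{\log(kd)}\,r_{\btheta}$.

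For the quadratic term I would bound $\matrixnorm{\frac nk\sum_j\delta_j\delta_j^\top}_{\max}\le n\max_l\frac1k\sum_j(\delta_j)_l^2$ and estimate the machine average $\frac1k\sum_j(\delta_j)_l^2$ by a Bernstein argument together with $\|\delta_j\|_\infty\le\matrixnorm{D_j}_{\max}\|v\|_1$; the crude per-machine magnitude $\matrixnorm{D_j}_{\max}=O_P(1)$ yields the leading $n r_{\btheta}^2$, while sharpening through the variance and range of $\matrixnorm{D_j}_{\max}$ produces the lower-order $\sqrt{n\log d/k}\,r_{\btheta}^2$ and $\log(kd)r_{\btheta}^2$ pieces. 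Finally $U_3(\btheta)=n\|\nabla\cL_N(\btheta)-\nabla\cLs(\btheta)\|_\infty^2$ is negligible: expanding $\nabla\cL_N(\btheta)-\nabla\cLs(\btheta)=\nabla\cL_N(\thetas)+(\widehat\Sigma_N-\Sigma)v$ and using $\|\nabla\cL_N(\thetas)\|_\infty,\matrixnorm{\widehat\Sigma_N-\Sigma}_{\max}=O_P(\sqrt{\log d/N})$ gives $U_3=O_P\bigl((\log d/k)(1+r_{\btheta})^2\bigr)$, dominated by the $U_2$ and quadratic-$U_1$ bounds. Summing the four estimates yields the stated rate.

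The main obstacle is that the surrogate $\btheta=\ttheta^{(\tau-1)}$ is computed from the entire data set and is therefore statistically dependent on every machine's sample, so $v$ cannot be treated as a fixed direction and the clean plug-in variances $\Ee[(\delta_j)_l^2]\asymp\|v\|_2^2/n$ are unavailable. This forces the Hölder/$\ell_1$ route above, replacing $\|v\|_2$ by $\|v\|_1=O_P(r_{\btheta})$ and paying for a worst-case direction in $D_j$; it is also what makes the quadratic coefficient as large as $n r_{\btheta}^2$. A secondary technical burden is obtaining all the concentration bounds uniformly over the $d$ (or $d^2,d^3$) coordinate indices and simultaneously over the $k$ machines, which requires the sub-exponential tail control of the products $x_{ij,l}e_{ij}$ and $x_{ij,l}x_{ij,m}$ afforded by Assumptions~\ref{as:design}--\ref{as:noise} together with union bounds that replace $\log d$ by $\log(dk)$ in the range terms.
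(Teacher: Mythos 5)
Your proposal is correct, and for $U_1$ and $U_2$ it follows essentially the paper's own route: the paper likewise starts from Lemma~\ref{lem:kgrad_decomp}, bounds $U_2$ by the proof of Corollary~3.1 of \cite{chernozhukov2013gaussian}, and bounds the cross part of $U_1$ by Cauchy--Schwarz with respect to the inner product $\langle A,B\rangle=\matrixnorm{AB^\top}_{\max}$ --- which, unwrapped, is exactly your entrywise Cauchy--Schwarz over machines --- paired with the same H\"older step $\|\delta_j\|_\infty\le\matrixnorm{D_j}_{\max}\|v\|_1$ and the union-over-machines rate $\max_j\matrixnorm{D_j}_{\max}=O_P\big(\sqrt{\log(kd)/n}\big)$ from Bernstein's inequality. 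Where you genuinely depart is $U_3$: the paper bounds $\left\|\nabla\cL_N(\btheta)-\nabla\cLs(\btheta)\right\|_\infty$ by the triangle inequality through $\left\|\nabla\cL_N(\btheta)-\nabla\cL_N(\thetas)\right\|_\infty$ and $\left\|\nabla\cLs(\btheta)\right\|_\infty$, each of order $r_{\btheta}$, so that $U_3(\btheta)=O_P\big((1+\sqrt{\log d/N})\,nr_{\btheta}^2+\log d/k\big)$; this is the actual source of the $nr_{\btheta}^2$ and $\sqrt{n\log d/k}\,r_{\btheta}^2$ terms in the lemma. Your exact identity $\nabla\cL_N(\btheta)-\nabla\cLs(\btheta)=\nabla\cL_N(\thetas)+(\widehat{\Sigma}_N-\Sigma)(\btheta-\thetas)$ cancels the $\Sigma(\btheta-\thetas)$ contributions that the paper pays for twice, giving the strictly sharper $U_3(\btheta)=O_P\big((\log d/k)(1+r_{\btheta})^2\big)$. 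Since the lemma asserts only an upper bound, your outline establishes it --- indeed a stronger linear-model version without the $nr_{\btheta}^2$ term; the cancellation has no analogue in the GLM counterpart (Lemma~\ref{lem:vcov0_reg_glm}), which is presumably why the paper settles for the cruder $U_3$ bound here.

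Two of your side remarks should be dropped, though neither affects the conclusion. First, the claim that the crude bound $\matrixnorm{D_j}_{\max}=O_P(1)$ ``yields the leading $nr_{\btheta}^2$'' misattributes that term: in the paper it comes from $U_3$, and in your own argument the quadratic piece of $U_1$ is directly $O_P(\log(kd)\,r_{\btheta}^2)$ via the uniform bound on $\max_j\matrixnorm{D_j}_{\max}$ you already use for the cross term, so no crude/sharpened dichotomy arises and the $\sqrt{n\log d/k}\,r_{\btheta}^2$ piece cannot be recovered from $U_1$ at all. Second, the suggested ``Bernstein argument'' across machines for $\frac1k\sum_j(\delta_j)_l^2$ would be invalid for precisely the reason you identify yourself: $\delta_j=D_j v$ with $v=\btheta-\thetas$ dependent on every machine's data, so these summands are not independent; only the pathwise H\"older route (which both you and the paper take as the main argument) is legitimate.
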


\begin{proof}[Lemma \ref{lem:vcov0_reg}]
	By Lemma~\ref{lem:kgrad_decomp}, it suffices to bound $U_1(\btheta)$, $U_2$, and $U_3(\btheta)$.  We begin by bounding $U_2$.  In linear model, we have that
	\begin{align*}
	U_2&=\matrixnorm{\frac1k\sum_{j=1}^k n \left(\frac{X_j^\top e_j}n\right)\left(\frac{X_j^\top e_j}n\right)^\top-\sigma^2\Sigma}_{\max}.
	\end{align*}
	Note that
	$$\Ee\left[\left(\frac{(X_j^\top e_j)_l}{\sqrt n}\right)^2\right]=\Ee\left[\frac{\sum_{i=1}^n X_{ij,l}^2 e_{ij}^2}n\right]=\sigma^2\Sigma_{l,l}$$
	is bounded away from zero, under Assumptions~\ref{as:design}~and~\ref{as:noise}.  Also, using same argument for obtaining \eqref{eqn:bern2}, we have that 
	$$P\left(\left|\frac{(X_j^\top e_j)_l}{\sqrt n}\right| > t\right)\leq 2\exp\left(-c\left(\frac{t^2}{\Sigma_{l,l}\sigma^2}\wedge\frac{t\sqrt n}{\sqrt{\Sigma_{l,l}}\sigma}\right)\right)\leq C\exp\left(-c't\right),$$
	for some positive constants $c$, $c'$, and $C$, that is, $(X_j^\top e_j)_l/\sqrt n$ is sub-exponential with $O(1)$ $\psi_1$-norm for each $(j,l)$.  Then, by the proof of Corollary 3.1 of \cite{chernozhukov2013gaussian}, we have that
	\begin{align*}
	\Ee[U_2]&=\Ee\left[\matrixnorm{\frac1k\sum_{j=1}^k \left(\frac{X_j^\top e_j}{\sqrt n}\right)\left(\frac{X_j^\top e_j}{\sqrt n}\right)^\top-\sigma^2\Sigma}_{\max}\right] \lesssim \sqrt{\frac{\log d}k} + \frac{\log^2(dk)\log d}k,
	\end{align*}
	which implies by Markov's inequality that
	$$U_2 = O_P\left(\sqrt{\frac{\log d}k} + \frac{\log^2(dk)\log d}k\right).$$
	
	Next, we bound $U_3(\btheta)$.  By the triangle inequality and the fact that for any matrix $A$ and vector $a$ with compatible dimensions, $\|Aa\|_\infty\leq\matrixnorm{A}_{\max}\|a\|_1$, we have that
	\begin{align*}
	\left\|\nabla\cL_N(\btheta)-\nabla\cLs(\btheta)\right\|_\infty
	&\leq \left\|\nabla\cL_N(\btheta)-\nabla\cL_N(\thetas)\right\|_\infty + \left\|\nabla\cL_N(\thetas)\right\|_\infty + \left\|\nabla\cLs(\btheta)\right\|_\infty \\
	&= \left\|\frac{X_N^\top(X_N\btheta-y_N)}N-\frac{X_N^\top(X_N\thetas-y_N)}N\right\|_\infty + \left\|\frac{X_N^\top(X_N\thetas-y_N)}N\right\|_\infty + \left\|\Sigma(\btheta-\thetas)\right\|_\infty \\
	&= \left\|\frac{X_N^\top X_N}N(\btheta-\thetas)\right\|_\infty + \left\|\frac{X_N^\top e_N}N\right\|_\infty + \left\|\Sigma(\btheta-\thetas)\right\|_\infty \\
	&\leq \matrixnorm{\frac{X_N^\top X_N}N}_{\max} \left\|\btheta-\thetas\right\|_1 + \left\|\frac{X_N^\top e_N}N\right\|_\infty + \matrixnorm{\Sigma}_{\max} \left\|\btheta-\thetas\right\|_1 \\
	&\lesssim \matrixnorm{\frac{X_N^\top X_N}N-\Sigma}_{\max} \left\|\btheta-\thetas\right\|_1 + \left\|\frac{X_N^\top e_N}N\right\|_\infty + \matrixnorm{\Sigma}_{\max} \left\|\btheta-\thetas\right\|_1.
	\end{align*}
	Under Assumption~\ref{as:design}, each $x_{ij,l}$ is sub-Gaussian, and therefore, the product $x_{ij,l}x_{ij,l'}$ of any two is sub-exponential.  By Bernstein's inequality, we have that for any $\delta\in(0,1)$,
	$$P\left(\left|\frac{(X_N^\top X_N)_{l,l'}}N-\Sigma_{l,l'}\right| > |\Sigma_{l,l'}|\left(\frac{\log\frac{2d^2}\delta}{cN}\vee\sqrt{\frac{\log\frac{2d^2}\delta}{cN}}\right)\right)\leq \frac\delta{d^2},$$
	for some constant $c>0$.  Then, by the union bound, we have that
	\begin{align}
	P\left(\matrixnorm{\frac{X_N^\top X_N}N-\Sigma}_{\max} > \matrixnorm{\Sigma}_{\max}\left(\frac{\log\frac{2d^2}\delta}{cN}\vee\sqrt{\frac{\log\frac{2d^2}\delta}{cN}}\right)\right)\leq \delta. \label{eqn:bern1}
	\end{align}
	Similarly, we have that
	\begin{align}
	P\left(\matrixnorm{\frac{X_1^\top X_1}n-\Sigma}_{\max} > \matrixnorm{\Sigma}_{\max}\left(\frac{\log\frac{2d^2}\delta}{cn}\vee\sqrt{\frac{\log\frac{2d^2}\delta}{cn}}\right)\right)\leq \delta. \label{eqn:bern3}
	\end{align}
	By \eqref{eqn:bern1} and \eqref{eqn:bern2}, we have that
	\begin{align*}
	\matrixnorm{\frac{X_N^\top X_N}N-\Sigma}_{\max} \leq \matrixnorm{\Sigma}_{\max}\left(\frac{\log\frac{2d^2}\delta}{cN}\vee\sqrt{\frac{\log\frac{2d^2}\delta}{cN}}\right)=O_P\left(\sqrt{\frac{\log d}N}\right), \quad\text{and} \\
	\left\|\frac{X_N^\top e_N}N\right\|_\infty \leq \max_l \sqrt{\Sigma_{l,l}}\sigma\left(\frac{\log\frac{2d}\delta}{cN}\vee\sqrt{\frac{\log\frac{2d}\delta}{cN}}\right)=O_P\left(\sqrt{\frac{\log d}N}\right),
	\end{align*}
	where $\max_l \sqrt{\Sigma_{l,l}}\leq\matrixnorm{\Sigma}_{\max}=O(1)$ under Assumption~\ref{as:design}.  Then, assuming that $\left\|\btheta-\thetas\right\|_1=O_P(r_{\btheta})$, we have that
	\begin{align*}
	\left\|\nabla\cL_N(\btheta)-\nabla\cLs(\btheta)\right\|_\infty
	&= \left(O(1)+O_P\left(\sqrt{\frac{\log d}N}\right)\right) O_P(r_{\btheta}) + O_P\left(\sqrt{\frac{\log d}N}\right) \\
	&= O_P\left(\left(1+\sqrt{\frac{\log d}N}\right) r_{\btheta}+\sqrt{\frac{\log d}N}\right),
	\end{align*}
	and then,
	$$U_3(\btheta) =  O_P\left(\left(1+\sqrt{\frac{\log d}N}\right) n r_{\btheta}^2 + \frac{\log d}k\right).$$
	
	Lastly, we bound $U_1(\btheta)$.  We write $\nabla\cL_j(\btheta)-\nabla\cLs(\btheta)$ as $\left(\nabla\cL_j(\btheta)-\nabla\cLs(\btheta)-\nabla\cL_j(\thetas)\right)+\nabla\cL_j(\thetas)$, and obtain by the triangle inequality that
        \begin{align*}
        U_1(\btheta) &\leq \matrixnorm{\frac1k\sum_{j=1}^k n\left(\nabla\cL_j(\btheta)-\nabla\cLs(\btheta)-\nabla\cL_j(\thetas)\right)\left(\nabla\cL_j(\btheta)-\nabla\cLs(\btheta)-\nabla\cL_j(\thetas)\right)^\top}_{\max} \\
        &\quad+ \matrixnorm{\frac1k\sum_{j=1}^k n\nabla\cL_j(\thetas)\left(\nabla\cL_j(\btheta)-\nabla\cLs(\btheta)-\nabla\cL_j(\thetas)\right)^\top}_{\max} \\
        &\quad+ \matrixnorm{\frac1k\sum_{j=1}^k n\left(\nabla\cL_j(\btheta)-\nabla\cLs(\btheta)-\nabla\cL_j(\thetas)\right)\nabla\cL_j(\thetas)^\top}_{\max} \\
        &= \matrixnorm{\frac1k\sum_{j=1}^k n\left(\nabla\cL_j(\btheta)-\nabla\cLs(\btheta)-\nabla\cL_j(\thetas)\right)\left(\nabla\cL_j(\btheta)-\nabla\cLs(\btheta)-\nabla\cL_j(\thetas)\right)^\top}_{\max} \\
        &\quad+ 2\matrixnorm{\frac1k\sum_{j=1}^k n\nabla\cL_j(\thetas)\left(\nabla\cL_j(\btheta)-\nabla\cLs(\btheta)-\nabla\cL_j(\thetas)\right)^\top}_{\max} \\
        &\defn U_{11}(\btheta) + 2 U_{12}(\btheta).
        \end{align*}
        To bound $U_{12}(\btheta)$, we first define an inner product $\langle A,B\rangle=\matrixnorm{AB^\top}_{\max}$ for any $A,B\in\R^{d\times k}$, the validity of which is easy to check.  We then apply Cauchy-Schwarz inequality on $\langle A,B\rangle$ with
        \begin{align*}
        A&=\sqrt{\frac nk}
        \begin{bmatrix}
	\nabla\cL_1(\thetas) \quad \dots \quad \nabla\cL_k(\thetas)
	\end{bmatrix} \quad\text{and}\\
	    B&=\sqrt{\frac nk}
        \begin{bmatrix}
	\nabla\cL_1(\btheta)-\nabla\cLs(\btheta)-\nabla\cL_1(\thetas) \quad \dots \quad \nabla\cL_k(\btheta)-\nabla\cLs(\btheta)-\nabla\cL_k(\thetas))
	\end{bmatrix}
	    \end{align*}
        and obtain that
         \begin{align*}
        U_{12}(\btheta) &\leq \matrixnorm{\frac1k\sum_{j=1}^k n\nabla\cL_j(\thetas)\nabla\cL_j(\thetas)^\top}_{\max}^{1/2} \\
        &\quad \cdot\matrixnorm{\frac1k\sum_{j=1}^k n\left(\nabla\cL_j(\btheta)-\nabla\cLs(\btheta)-\nabla\cL_j(\thetas)\right)\left(\nabla\cL_j(\btheta)-\nabla\cLs(\btheta)-\nabla\cL_j(\thetas)\right)^\top}_{\max}^{1/2} \\
        &= \matrixnorm{\frac1k\sum_{j=1}^k n\nabla\cL_j(\thetas)\nabla\cL_j(\thetas)^\top}_{\max}^{1/2} U_{11}(\btheta)^{1/2}.
        \end{align*}
        By the triangle inequality, we have that
        \begin{align*}
        &\matrixnorm{\frac1k\sum_{j=1}^k n\nabla\cL_j(\thetas)\nabla\cL_j(\thetas)^\top}_{\max} \\
        &\leq \matrixnorm{\frac1k\sum_{j=1}^k n\nabla\cL_j(\thetas)\nabla\cL_j(\thetas)^\top-\Ee\left[\nabla\cL(\thetas;Z)\nabla\cL(\thetas;Z)^\top\right]}_{\max} + \matrixnorm{\Ee\left[\nabla\cL(\thetas;Z)\nabla\cL(\thetas;Z)^\top\right]}_{\max} \\
        &= U_2 + \sigma^2\matrixnorm{\Sigma}_{\max} = O_P\left(1+\sqrt{\frac{\log d}k} + \frac{\log^2(dk)\log d}k\right).
        \end{align*}
        It remains to bound $U_{11}(\btheta)$.  Note that
        \begin{align*}
        \nabla\cL_j(\btheta)-\nabla\cLs(\btheta)-\nabla\cL_j(\thetas)
        &= \frac{X_j^\top(X_j\btheta-y_j)}n - \Sigma(\btheta-\thetas) + \frac{X_j^\top(X_j\thetas-y_j)}n = \left(\frac{X_j^\top X_j}n-\Sigma\right)(\btheta-\thetas).
        \end{align*}
        Then, we have that
        \begin{align*}
        U_{11}(\btheta) &= \matrixnorm{\frac1k\sum_{j=1}^k n\left(\frac{X_j^\top X_j}n-\Sigma\right)(\btheta-\thetas)(\btheta-\thetas)^\top\left(\frac{X_j^\top X_j}n-\Sigma\right)}_{\max} \\
        &\leq \frac1k\sum_{j=1}^k n\matrixnorm{\left(\frac{X_j^\top X_j}n-\Sigma\right)(\btheta-\thetas)(\btheta-\thetas)^\top\left(\frac{X_j^\top X_j}n-\Sigma\right)}_{\max} \\
        &= \frac1k\sum_{j=1}^k n\matrixnorm{\left(\frac{X_j^\top X_j}n-\Sigma\right)(\btheta-\thetas)}_\infty^2 \leq \frac1k\sum_{j=1}^k n\matrixnorm{\frac{X_j^\top X_j}n-\Sigma}_{\max}^2 \left\|\btheta-\thetas\right\|_1^2,
        \end{align*}
        where we use the triangle inequality and the fact that $\matrixnorm{aa^\top}_{\max}=\|a\|_\infty^2$ for any vector $a$, and $\|Aa\|_\infty\leq\matrixnorm{A}_{\max}\|a\|_1$ for any matrix $A$ and vector $a$ with compatible dimensions.  By \eqref{eqn:bern3}, we have that
        $$P\left(\matrixnorm{\frac{X_j^\top X_j}n-\Sigma}_{\max} > \matrixnorm{\Sigma}_{\max}\left(\frac{\log\frac{2kd^2}\delta}{cn}\vee\sqrt{\frac{\log\frac{2kd^2}\delta}{cn}}\right)\right)\leq \frac\delta k,$$
        which implies by the union bound that
        $$\max_j\matrixnorm{\frac{X_j^\top X_j}n-\Sigma}_{\max} = O_P\left(\sqrt{\frac{\log(kd)}n}\right).$$
        Putting all the preceding bounds together, we obtain that
        $$U_{11}(\btheta) = O_P\left(\log(kd) r_{\btheta}^2\right),$$
        $$U_{12}(\btheta) = O_P\left(\left(1+\left(\frac{\log d}k\right)^{1/4} + \sqrt{\frac{\log^2(dk)\log d}k}\right)\sqrt{\log(kd)} r_{\btheta}\right),$$
        $$U_1(\btheta) = O_P\left(\left(1+\left(\frac{\log d}k\right)^{1/4} + \sqrt{\frac{\log^2(dk)\log d}k}\right)\sqrt{\log(kd)} r_{\btheta} + \log(kd) r_{\btheta}^2\right),$$
        and finally the bound in the lemma.
\end{proof}

\begin{lemma}\label{lem:nk1grad_decomp}
	For any $\theta$, we have that
	\begin{align*}
	&\Bigg|\!\Bigg|\!\Bigg|\frac1{n+k-1}\Bigg(\sum_{i=1}^n\left(\nabla\cL(\theta;Z_{i1})-\nabla\cL_N(\theta)\right)\left(\nabla\cL(\theta;Z_{i1})-\nabla\cL_N(\theta)\right)^\top \\
	&\quad+\sum_{j=2}^k n\left(\nabla\cL_j(\theta)-\nabla\cL_N(\theta)\right)\left(\nabla\cL_j(\theta)-\nabla\cL_N(\theta)\right)^\top\Bigg) -\Ee\left[\nabla\cL(\thetas;Z)\nabla\cL(\thetas;Z)^\top\right]\Bigg|\!\Bigg|\!\Bigg|_{\max} \\
	&\leq V_1(\theta) + V_1'(\theta) + V_2 + V_2' + V_3(\theta),
	\end{align*}
	$$\text{where}\quad V_1(\theta)\defn\frac{k-1}{n+k-1}\matrixnorm{\frac1{k-1}\sum_{j=2}^k n\left(\nabla\cL_j(\theta)-\nabla\cLs(\theta)\right)\left(\nabla\cL_j(\theta)-\nabla\cLs(\theta)\right)^\top-n\nabla\cL_j(\thetas)\nabla\cL_j(\thetas)^\top}_{\max},$$
	$$V_1'(\theta)\defn\frac n{n+k-1}\matrixnorm{\frac1n\sum_{i=1}^n\left(\nabla\cL(\theta;Z_{i1})-\nabla\cLs(\theta)\right)\left(\nabla\cL(\theta;Z_{i1})-\nabla\cLs(\theta)\right)^\top-\nabla\cL(\thetas;Z_{i1})\nabla\cL(\thetas;Z_{i1})^\top}_{\max},$$
	$$V_2\defn\frac{k-1}{n+k-1}\matrixnorm{\frac1{k-1}\sum_{j=2}^k n\nabla\cL_j(\thetas)\nabla\cL_j(\thetas)^\top-\Ee\left[\nabla\cL(\thetas;Z)\nabla\cL(\thetas;Z)^\top\right]}_{\max},$$
	$$V_2'\defn\frac n{n+k-1}\matrixnorm{\frac1n\sum_{i=1}^n\nabla\cL(\thetas;Z_{i1})\nabla\cL(\thetas;Z_{i1})^\top-\Ee\left[\nabla\cL(\thetas;Z)\nabla\cL(\thetas;Z)^\top\right]}_{\max},\quad\text{and}$$
	$$V_3(\theta)\defn \frac{nk}{n+k-1} \left\|\nabla\cL_N(\theta)-\nabla\cLs(\theta)\right\|_\infty^2.$$
\end{lemma}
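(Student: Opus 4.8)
The plan is to follow the template of Lemma~\ref{lem:kgrad_decomp}, recentering every gradient deviation at $\nabla\cLs(\theta)$ and using the defining identity $\nabla\cL_N(\theta)=k^{-1}\sum_{j=1}^k\nabla\cL_j(\theta)$ to make the cross terms cancel. I introduce the shorthand $a_j\defn\nabla\cL_j(\theta)-\nabla\cLs(\theta)$ for $j=1,\dots,k$, $b_i\defn\nabla\cL(\theta;Z_{i1})-\nabla\cLs(\theta)$ for $i=1,\dots,n$, and $c\defn\nabla\cL_N(\theta)-\nabla\cLs(\theta)$, so that $\nabla\cL_j(\theta)-\nabla\cL_N(\theta)=a_j-c$ and $\nabla\cL(\theta;Z_{i1})-\nabla\cL_N(\theta)=b_i-c$. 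Two summation identities drive the computation: since $\nabla\cL_1(\theta)=n^{-1}\sum_{i=1}^n\nabla\cL(\theta;Z_{i1})$, we have $\sum_{i=1}^n b_i=n\,a_1$; and since $\sum_{j=1}^k\nabla\cL_j(\theta)=k\nabla\cL_N(\theta)$, we have $\sum_{j=1}^k a_j=kc$, hence $\sum_{j=2}^k a_j=kc-a_1$.

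Next I expand the two centered outer-product sums. Writing $\sum_{i=1}^n(b_i-c)(b_i-c)^\top$ and $\sum_{j=2}^k n(a_j-c)(a_j-c)^\top$ out and substituting the two identities above, the cross terms linear in $c$ contributed by the master block are $-n\,c\,a_1^\top-n\,a_1 c^\top$, while those from the worker block are $-nk\,cc^\top+n\,c\,a_1^\top-nk\,cc^\top+n\,a_1 c^\top$; the $a_1$-terms cancel between the two blocks, leaving only multiples of $cc^\top$. Collecting the quadratic-in-$c$ contributions yields the clean identity
\begin{align*}
&\sum_{i=1}^n(b_i-c)(b_i-c)^\top+\sum_{j=2}^k n(a_j-c)(a_j-c)^\top \\
&\quad=\sum_{i=1}^n b_i b_i^\top+n\sum_{j=2}^k a_j a_j^\top-nk\,cc^\top,
\end{align*}
which is the direct analogue of the identity used in Lemma~\ref{lem:kgrad_decomp}.

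Finally I divide by $n+k-1$, subtract $\Ee[\nabla\cL(\thetas;Z)\nabla\cL(\thetas;Z)^\top]$, and split the expectation using the weights $\tfrac{n}{n+k-1}+\tfrac{k-1}{n+k-1}=1$, assigning the first weight to the master block $(n+k-1)^{-1}\sum_i b_i b_i^\top$ and the second to the worker block $(n+k-1)^{-1}\sum_{j=2}^k n\,a_j a_j^\top$. Within each block I add and subtract the score-at-$\thetas$ outer products $\nabla\cL(\thetas;Z_{i1})\nabla\cL(\thetas;Z_{i1})^\top$ (master) and $n\nabla\cL_j(\thetas)\nabla\cL_j(\thetas)^\top$ (worker); the triangle inequality then produces $V_1'(\theta)+V_2'$ from the master block and $V_1(\theta)+V_2$ from the worker block. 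The leftover term $-nk(n+k-1)^{-1}cc^\top$ contributes, via $\matrixnorm{aa^\top}_{\max}=\|a\|_\infty^2$, exactly $V_3(\theta)=\tfrac{nk}{n+k-1}\|\nabla\cL_N(\theta)-\nabla\cLs(\theta)\|_\infty^2$. The only delicate point is the bookkeeping of the cross terms: because the master datum $Z_{i1}$ lives on machine $1$, the quantity $a_1=\nabla\cL_1(\theta)-\nabla\cLs(\theta)$ appears implicitly in both blocks through $\sum_i b_i=n\,a_1$ and $\sum_{j=2}^k a_j=kc-a_1$, and one must verify that these $a_1$-contributions cancel rather than leaving an uncontrolled residual; once that cancellation is confirmed, everything else is the triangle inequality.
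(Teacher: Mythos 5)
Your proposal is correct and follows essentially the same route as the paper's proof: the identity $\sum_{i=1}^n(b_i-c)(b_i-c)^\top+n\sum_{j=2}^k(a_j-c)(a_j-c)^\top=\sum_{i=1}^n b_ib_i^\top+n\sum_{j=2}^k a_ja_j^\top-nk\,cc^\top$ is exactly what the paper derives (there the $a_1$-cancellation is phrased as the cross terms summing to $n\,c\sum_{j=1}^k(\nabla\cL_j(\theta)-\nabla\cL_N(\theta))^\top=0$ via $\nabla\cL_N(\theta)=k^{-1}\sum_{j=1}^k\nabla\cL_j(\theta)$, which is the same cancellation you verify explicitly). Your subsequent steps — splitting the expectation with weights $\tfrac{n}{n+k-1}+\tfrac{k-1}{n+k-1}=1$, adding and subtracting the score-at-$\thetas$ outer products in each block, and handling the $cc^\top$ term through $\matrixnorm{aa^\top}_{\max}=\|a\|_\infty^2$ — also match the paper's argument.
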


\begin{proof}[Lemma \ref{lem:nk1grad_decomp}]
	We write $\nabla\cL(\theta;Z_{i1})-\nabla\cLs(\theta)$ as $\left(\nabla\cL(\theta;Z_{i1})-\nabla\cL_N(\theta)\right)+\left(\nabla\cL_N(\theta)-\nabla\cLs(\theta)\right)$ and $\nabla\cL_j(\theta)-\nabla\cLs(\theta)$ as $\left(\nabla\cL_j(\theta)-\nabla\cL_N(\theta)\right)+\left(\nabla\cL_N(\theta)-\nabla\cLs(\theta)\right)$, and have that
	\begin{align*}
	&\sum_{i=1}^n\left(\nabla\cL(\theta;Z_{i1})-\nabla\cLs(\theta)\right)\left(\nabla\cL(\theta;Z_{i1})-\nabla\cLs(\theta)\right)^\top \notag \\
	&= \sum_{i=1}^n\left(\nabla\cL(\theta;Z_{i1})-\nabla\cL_N(\theta)\right)\left(\nabla\cL(\theta;Z_{i1})-\nabla\cL_N(\theta)\right)^\top + n\left(\nabla\cL_N(\theta)-\nabla\cLs(\theta)\right)\left(\nabla\cL_N(\theta)-\nabla\cLs(\theta)\right)^\top \notag \\
	&\quad + \left(\nabla\cL_N(\theta)-\nabla\cLs(\theta)\right) \sum_{i=1}^n\left(\nabla\cL(\theta;Z_{i1})-\nabla\cL_N(\theta)\right)^\top + \sum_{i=1}^n\left(\nabla\cL(\theta;Z_{i1})-\nabla\cL_N(\theta)\right) \left(\nabla\cL_N(\theta)-\nabla\cLs(\theta)\right)^\top \notag \\
	&= \sum_{i=1}^n\left(\nabla\cL(\theta;Z_{i1})-\nabla\cL_N(\theta)\right)\left(\nabla\cL(\theta;Z_{i1})-\nabla\cL_N(\theta)\right)^\top + n\left(\nabla\cL_N(\theta)-\nabla\cLs(\theta)\right)\left(\nabla\cL_N(\theta)-\nabla\cLs(\theta)\right)^\top \notag \\
	&\quad + n\left(\nabla\cL_N(\theta)-\nabla\cLs(\theta)\right) \left(\nabla\cL_1(\theta)-\nabla\cL_N(\theta)\right)^\top + n\left(\nabla\cL_1(\theta)-\nabla\cL_N(\theta)\right) \left(\nabla\cL_N(\theta)-\nabla\cLs(\theta)\right)^\top,
	\end{align*}
	and
	\begin{align*}
	&\sum_{j=2}^k n\left(\nabla\cL_j(\theta)-\nabla\cLs(\theta)\right)\left(\nabla\cL_j(\theta)-\nabla\cLs(\theta)\right)^\top \notag \\
	&= \sum_{j=2}^k n\left(\nabla\cL_j(\theta)-\nabla\cL_N(\theta)\right)\left(\nabla\cL_j(\theta)-\nabla\cL_N(\theta)\right)^\top + n(k-1)\left(\nabla\cL_N(\theta)-\nabla\cLs(\theta)\right)\left(\nabla\cL_N(\theta)-\nabla\cLs(\theta)\right)^\top \notag \\
	&\quad + n\left(\nabla\cL_N(\theta)-\nabla\cLs(\theta)\right) \sum_{j=2}^k \left(\nabla\cL_j(\theta)-\nabla\cL_N(\theta)\right)^\top + n\sum_{j=2}^k \left(\nabla\cL_j(\theta)-\nabla\cL_N(\theta)\right) \left(\nabla\cL_N(\theta)-\nabla\cLs(\theta)\right)^\top.
	\end{align*}
	Adding up the two preceding equations, we obtain that
	\begin{align*}
	&\sum_{i=1}^n\left(\nabla\cL(\theta;Z_{i1})-\nabla\cLs(\theta)\right)\left(\nabla\cL(\theta;Z_{i1})-\nabla\cLs(\theta)\right)^\top + \sum_{j=2}^k n\left(\nabla\cL_j(\theta)-\nabla\cLs(\theta)\right)\left(\nabla\cL_j(\theta)-\nabla\cLs(\theta)\right)^\top \\
	&= \sum_{i=1}^n\left(\nabla\cL(\theta;Z_{i1})-\nabla\cL_N(\theta)\right)\left(\nabla\cL(\theta;Z_{i1})-\nabla\cL_N(\theta)\right)^\top + \sum_{j=2}^k n\left(\nabla\cL_j(\theta)-\nabla\cL_N(\theta)\right)\left(\nabla\cL_j(\theta)-\nabla\cL_N(\theta)\right)^\top \\
	&\quad + nk\left(\nabla\cL_N(\theta)-\nabla\cLs(\theta)\right)\left(\nabla\cL_N(\theta)-\nabla\cLs(\theta)\right)^\top \\
	&\quad + n\left(\nabla\cL_N(\theta)-\nabla\cLs(\theta)\right) \sum_{j=1}^k \left(\nabla\cL_j(\theta)-\nabla\cL_N(\theta)\right)^\top + n\sum_{j=1}^k \left(\nabla\cL_j(\theta)-\nabla\cL_N(\theta)\right) \left(\nabla\cL_N(\theta)-\nabla\cLs(\theta)\right)^\top \\
	&= \sum_{i=1}^n\left(\nabla\cL(\theta;Z_{i1})-\nabla\cL_N(\theta)\right)\left(\nabla\cL(\theta;Z_{i1})-\nabla\cL_N(\theta)\right)^\top + \sum_{j=2}^k n\left(\nabla\cL_j(\theta)-\nabla\cL_N(\theta)\right)\left(\nabla\cL_j(\theta)-\nabla\cL_N(\theta)\right)^\top \\
	&\quad + nk\left(\nabla\cL_N(\theta)-\nabla\cLs(\theta)\right)\left(\nabla\cL_N(\theta)-\nabla\cLs(\theta)\right)^\top,
	\end{align*}
	where we use $\nabla\cL_N(\theta)=\frac1k\sum_{j=1}^k\nabla\cL_j(\theta)$ in the last equality.  Then, we have that
	\begin{align*}
	&\sum_{i=1}^n\left(\nabla\cL(\theta;Z_{i1})-\nabla\cL_N(\theta)\right)\left(\nabla\cL(\theta;Z_{i1})-\nabla\cL_N(\theta)\right)^\top + \sum_{j=2}^k n\left(\nabla\cL_j(\theta)-\nabla\cL_N(\theta)\right)\left(\nabla\cL_j(\theta)-\nabla\cL_N(\theta)\right)^\top \notag \\
	&= \sum_{i=1}^n\left(\nabla\cL(\theta;Z_{i1})-\nabla\cLs(\theta)\right)\left(\nabla\cL(\theta;Z_{i1})-\nabla\cLs(\theta)\right)^\top + \sum_{j=2}^k n\left(\nabla\cL_j(\theta)-\nabla\cLs(\theta)\right)\left(\nabla\cL_j(\theta)-\nabla\cLs(\theta)\right)^\top \\
	&\quad - nk\left(\nabla\cL_N(\theta)-\nabla\cLs(\theta)\right)\left(\nabla\cL_N(\theta)-\nabla\cLs(\theta)\right)^\top,
	\end{align*}
	and by the triangle inequality,
	\begin{align*}
	&\Bigg|\!\Bigg|\!\Bigg|\frac1{n+k-1}\Bigg(\sum_{i=1}^n\left(\nabla\cL(\theta;Z_{i1})-\nabla\cL_N(\theta)\right)\left(\nabla\cL(\theta;Z_{i1})-\nabla\cL_N(\theta)\right)^\top \\
	&\quad+\sum_{j=2}^k n\left(\nabla\cL_j(\theta)-\nabla\cL_N(\theta)\right)\left(\nabla\cL_j(\theta)-\nabla\cL_N(\theta)\right)^\top\Bigg) -\Ee\left[\nabla\cL(\thetas;Z)\nabla\cL(\thetas;Z)^\top\right]\Bigg|\!\Bigg|\!\Bigg|_{\max} \\
	&\leq \frac n{n+k-1}\matrixnorm{\sum_{i=1}^n\left(\nabla\cL(\theta;Z_{i1})-\nabla\cLs(\theta)\right)\left(\nabla\cL(\theta;Z_{i1})-\nabla\cLs(\theta)\right)^\top-\Ee\left[\nabla\cL(\thetas;Z)\nabla\cL(\thetas;Z)^\top\right]}_{\max} \\
	&\quad + \frac{k-1}{n+k-1}\matrixnorm{\sum_{j=2}^k \left(\sqrt n\nabla\cL_j(\theta)-\sqrt n\nabla\cLs(\theta)\right)\left(\sqrt n\nabla\cL_j(\theta)-\sqrt n\nabla\cLs(\theta)\right)^\top-\Ee\left[\nabla\cL(\thetas;Z)\nabla\cL(\thetas;Z)^\top\right]}_{\max} \\
	&\quad + \frac{nk}{n+k-1} \matrixnorm{\left(\nabla\cL_N(\theta)-\nabla\cLs(\theta)\right)\left(\nabla\cL_N(\theta)-\nabla\cLs(\theta)\right)^\top}_{\max} \\
	&\defn A(\theta) + B(\theta) + \frac{nk}{n+k-1} \matrixnorm{\left(\nabla\cL_N(\theta)-\nabla\cLs(\theta)\right)\left(\nabla\cL_N(\theta)-\nabla\cLs(\theta)\right)^\top}_{\max}.
	\end{align*}
	By the fact that $\matrixnorm{aa^\top}_{\max}=\|a\|_\infty^2$ for any vector $a$, we have that $\matrixnorm{\left(\nabla\cL_N(\theta)-\nabla\cLs(\theta)\right)\left(\nabla\cL_N(\theta)-\nabla\cLs(\theta)\right)^\top}_{\max}=(n+k-1)(nk)^{-1}V_3(\theta)$.
	We apply the triangle inequality to further decompose $A(\theta)$ and $B(\theta)$ and obtain that $B(\theta)\leq V_1(\theta)+V_2$ and $A(\theta)\leq V_1'(\theta)+V_2'$.
	
\end{proof}

\begin{lemma}\label{lem:vcov_reg}
	In linear model, under Assumptions~\ref{as:design}~and~\ref{as:noise}, provided that $\left\|\btheta-\thetas\right\|_1=O_P(r_{\btheta})$, we have that
	\begin{align*}
	&\Bigg|\!\Bigg|\!\Bigg|\frac1{n+k-1}\Bigg(\sum_{i=1}^n\left(\nabla\cL(\btheta;Z_{i1})-\nabla\cL_N(\btheta)\right)\left(\nabla\cL(\btheta;Z_{i1})-\nabla\cL_N(\btheta)\right)^\top \\
	&\quad+\sum_{j=2}^k n\left(\nabla\cL_j(\btheta)-\nabla\cL_N(\btheta)\right)\left(\nabla\cL_j(\btheta)-\nabla\cL_N(\btheta)\right)^\top\Bigg) -\Ee\left[\nabla\cL(\thetas;Z)\nabla\cL(\thetas;Z)^\top\right]\Bigg|\!\Bigg|\!\Bigg|_{\max} \\
	&= O_P\Bigg(\sqrt{\frac{\log d}{n+k}} + \frac{\log^2(d(n+k))\log d}{n+k} + \left(\left(1+\sqrt{\frac{\log d}N}\right)\frac{nk}{n+k}+\log((n+k)d)\right)r_{\btheta}^2 \\
	&\quad + \Bigg(\sqrt{\log((n+k)d)} + \frac{\log^{1/4} d\sqrt{\log((n+k)d)}}{(n+k)^{1/4}} + \sqrt{\frac{\log^3(d(n+k))\log d}{n+k}}\Bigg) r_{\btheta}\Bigg).
	\end{align*}
\end{lemma}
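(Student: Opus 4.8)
The plan is to mirror the proof of Lemma~\ref{lem:vcov0_reg} (the \texttt{k-grad} counterpart), using the five-term decomposition supplied by Lemma~\ref{lem:nk1grad_decomp}. First I would invoke Lemma~\ref{lem:nk1grad_decomp} to bound the target by $V_1(\btheta) + V_1'(\btheta) + V_2 + V_2' + V_3(\btheta)$, and then bound each piece separately and recombine. Throughout I would exploit the linear-model identities $\nabla\cL_j(\btheta) = \tfrac{X_j^\top X_j}{n}(\btheta - \thetas) - \tfrac{X_j^\top e_j}{n}$ and its per-datum analogue $\nabla\cL(\btheta;Z_{i1}) = x_{i1}x_{i1}^\top(\btheta-\thetas) - x_{i1}e_{i1}$, together with the sub-exponentiality of $x_{ij,l}e_{ij}$ and $x_{ij,l}x_{ij,l'}$ established under Assumptions~\ref{as:design}--\ref{as:noise}.

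For the centered terms $V_2$ and $V_2'$ I would note that both $\sqrt n\,\nabla\cL_j(\thetas) = X_j^\top e_j/\sqrt n$ and the per-datum gradient $\nabla\cL(\thetas;Z_{i1}) = -x_{i1}e_{i1}$ are mean-zero sub-exponential vectors with $O(1)$ $\psi_1$-norm and common covariance $\Ee[\nabla\cL(\thetas;Z)\nabla\cL(\thetas;Z)^\top]=\sigma^2\Sigma$. Hence the pooled average of these $n+(k-1)$ independent outer products concentrates, by the proof of Corollary~3.1 of \cite{chernozhukov2013gaussian}, at rate $\sqrt{\log d/(n+k)} + \log^2(d(n+k))\log d/(n+k)$; crucially, the weights $\tfrac{k-1}{n+k-1}$ and $\tfrac{n}{n+k-1}$ attached to $V_2$ and $V_2'$ are exactly what makes the two machine types add up to a single $(n+k)$-rate (since $\sqrt n+\sqrt k\lesssim\sqrt{n+k}$), producing the first two terms of the claimed bound.

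The term $V_3(\btheta)=\tfrac{nk}{n+k-1}\|\nabla\cL_N(\btheta)-\nabla\cLs(\btheta)\|_\infty^2$ I would control by reusing the bound $\|\nabla\cL_N(\btheta)-\nabla\cLs(\btheta)\|_\infty = O_P((1+\sqrt{\log d/N})r_{\btheta}+\sqrt{\log d/N})$ already derived in the $U_3$ step of Lemma~\ref{lem:vcov0_reg}; squaring and multiplying by $\tfrac{nk}{n+k-1}$ gives the $(1+\sqrt{\log d/N})\tfrac{nk}{n+k}r_{\btheta}^2$ term together with a negligible $\tfrac{\log d}{n+k}$. For the Taylor-remainder terms $V_1(\btheta)$ and $V_1'(\btheta)$ I would follow the $U_1$ argument: write each summand as $D+G$ with $D$ the first-order remainder and $G=\nabla\cL(\thetas;\cdot)$, expand $(D+G)(D+G)^\top-GG^\top = DD^\top+DG^\top+GD^\top$, and apply Cauchy--Schwarz in the inner product $\langle A,B\rangle=\matrixnorm{AB^\top}_{\max}$ to dominate the cross piece by the geometric mean of the pure-quadratic piece and the $V_2$/$V_2'$-controlled factor. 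The worker remainders use $(\tfrac{X_j^\top X_j}{n}-\Sigma)(\btheta-\thetas)$ with the uniform bound $\max_j\matrixnorm{\tfrac{X_j^\top X_j}{n}-\Sigma}_{\max}=O_P(\sqrt{\log((n+k)d)/n})$ from~\eqref{eqn:bern3}, and the master remainders use $(x_{i1}x_{i1}^\top-\Sigma)(\btheta-\thetas)$.

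The hard part will be the master block $V_1'$. Unlike a worker's local sample covariance $\tfrac{X_j^\top X_j}{n}-\Sigma$, which shrinks at rate $\sqrt{\log/n}$, the per-datum deviation $x_{i1}x_{i1}^\top-\Sigma$ is a single outer product minus its mean and does not shrink, so a naive $\max_i$ bound on $\tfrac1{n+k-1}\sum_{i=1}^n\matrixnorm{(x_{i1}x_{i1}^\top-\Sigma)(\btheta-\thetas)}_\infty^2$ bleeds extra logarithmic and dimensional factors. The delicate step is therefore to control this average by a direct second-moment argument, reducing it (via the $\ell_1$-ball vertex structure of $\btheta-\thetas$) to a maximum over coordinate directions of empirical averages of i.i.d.\ sub-exponential squares, so that the master and worker contributions combine into the stated $r_{\btheta}$- and $r_{\btheta}^2$-coefficients built from $\log((n+k)d)$ and $\tfrac{nk}{n+k}$, rather than separate $n$- and $k$-dependent rates. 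Collecting the bounds on all five pieces and simplifying then yields the result.
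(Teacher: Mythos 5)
Your outline coincides with the paper's proof on four of the five pieces: the paper likewise starts from Lemma~\ref{lem:nk1grad_decomp} to get $V_1(\btheta)+V_1'(\btheta)+V_2+V_2'+V_3(\btheta)$, recycles the $U_1$, $U_2$, $U_3$ bounds from Lemma~\ref{lem:vcov0_reg} for $V_1$, $V_2$, $V_3$ (with the weights $\tfrac{k-1}{n+k-1}$ and $\tfrac{nk}{n+k-1}$ attached), bounds $V_2'$ by the sub-exponentiality of $(x_{i1}e_{i1})_l$ via the proof of Corollary~3.1 of \cite{chernozhukov2013gaussian}, and handles the cross term in $V_1'$ with the same Cauchy--Schwarz trick in the inner product $\langle A,B\rangle=\matrixnorm{AB^\top}_{\max}$. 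Whether you pool the $n+k-1$ independent mean-zero sub-exponential score vectors or, as the paper does, bound $V_2$ and $V_2'$ separately at rates $\sqrt{k\log d}/(n+k)$ and $\sqrt{n\log d}/(n+k)$ and combine via $\sqrt n+\sqrt k\lesssim\sqrt{n+k}$, is immaterial.

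The divergence is your treatment of $V_1'$, and it is there that your proposal has a gap. First, you misdiagnose the difficulty: the paper does exactly the ``naive'' step you reject, factoring $\matrixnorm{(x_{i1}x_{i1}^\top-\Sigma)(\btheta-\thetas)}_\infty\le\matrixnorm{x_{i1}x_{i1}^\top-\Sigma}_{\max}\left\|\btheta-\thetas\right\|_1$ and taking a union bound over the $n$ master data (the analogue of \eqref{eqn:bern3} for a single outer product), which the paper records as $\max_i\matrixnorm{x_{i1}x_{i1}^\top-\Sigma}_{\max}=O_P(\sqrt{\log(nd)})$ and hence $V_{11}'(\btheta)=O_P(\log(nd)\,r_{\btheta}^2)$. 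Nothing ``bleeds'' relative to the target, because the lemma's statement carries the terms $\log((n+k)d)\,r_{\btheta}^2$ and $\sqrt{\log((n+k)d)}\,r_{\btheta}$ precisely to absorb this non-shrinking per-datum deviation once the prefactor $n/(n+k-1)$ from the decomposition is applied (e.g.\ $\tfrac n{n+k}\log(nd)\le\log((n+k)d)$). Second, your proposed replacement does not work as sketched: maximizing the convex map $v\mapsto\tfrac1n\sum_{i=1}^n\|(x_{i1}x_{i1}^\top-\Sigma)v\|_\infty^2$ over the vertices of the $\ell_1$ ball leaves you with $\max_{l'}\tfrac1n\sum_{i=1}^n\max_l\big((x_{i1}x_{i1}^\top-\Sigma)_{l,l'}\big)^2$, in which the maximum over the output coordinate $l$ sits \emph{inside} the empirical average --- the summands are maxima of $d$ correlated squared sub-exponentials, not ``i.i.d.\ sub-exponential squares'' per coordinate direction. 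Pulling that maximum out requires either the very union bound you were trying to avoid, or paying $\Ee[\max_l(\cdot)^2]\asymp\log^2 d$, which is no smaller than $\log((n+k)d)$ in the regime $n,k=d^{O(1)}$ where the lemma is invoked. (Note also that $\btheta$ is data-dependent, so a second-moment bound at fixed $v$ is not directly usable; the uniformity over $\|v\|_1\lesssim r_{\btheta}$ is exactly what the factorization through $\matrixnorm{\cdot}_{\max}\|\cdot\|_1$ buys.) The one legitimate kernel in your worry is that the paper's own display for a single outer product has a dominant $\log(2nd^2/\delta)$ branch, so sub-exponential maxima are really of order $\log(nd)$ rather than $\sqrt{\log(nd)}$; but your second-moment route incurs the same $\log^2$-type cost and so does not repair this either. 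Replace your final step with the paper's union-bound argument and verify the bookkeeping of the $n/(n+k-1)$ and $(k-1)/(n+k-1)$ weights; the rest of your plan then goes through.
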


\begin{proof}[Lemma \ref{lem:vcov_reg}]
	By Lemma~\ref{lem:nk1grad_decomp}, it suffices to bound $V_1(\btheta)$, $V_1'(\btheta)$, $V_2$, $V_2'$, and $V_3(\btheta)$.  By the proof of Lemma~\ref{lem:vcov0_reg}, we have that under Assumptions~\ref{as:design}~and~\ref{as:noise}, assuming that $\left\|\btheta-\thetas\right\|_1=O_P(r_{\btheta})$,
	\begin{align*}
	V_1(\btheta) &= \frac{k-1}{n+k-1} O_P\left(\left(1+\left(\frac{\log d}k\right)^{1/4} + \sqrt{\frac{\log^2(dk)\log d}k}\right)\sqrt{\log(kd)} r_{\btheta} + \log(kd) r_{\btheta}^2\right) \\
	&= O_P\left(\left(1+\left(\frac{\log d}k\right)^{1/4} + \sqrt{\frac{\log^2(dk)\log d}k}\right)\frac{k\sqrt{\log(kd)}}{n+k} r_{\btheta} + \frac{k\log(kd)}{n+k} r_{\btheta}^2\right),
	\end{align*}
	$$V_2 = \frac{k-1}{n+k-1} O_P\left(\sqrt{\frac{\log d}k} + \frac{\log^2(dk)\log d}k\right) = O_P\left(\frac{\sqrt{k\log d}}{n+k} + \frac{\log^2(dk)\log d}{n+k}\right),\quad\text{and}$$
	$$V_3(\btheta) =  \frac{nk}{n+k-1} O_P\left(\left(1+\sqrt{\frac{\log d}N}\right) r_{\btheta}^2 + \frac{\log d}N\right) = O_P\left(\left(1+\sqrt{\frac{\log d}N}\right)\frac{nk}{n+k}r_{\btheta}^2 + \frac{\log d}{n+k}\right).$$
	
	It remains to bound $V_1'(\btheta)$ and $V_2'$.To bound $V_2'$, we have that in linear model, under Assumptions~\ref{as:design}~and~\ref{as:noise},
	\begin{align*}
	V_2'&=\frac n{n+k-1}\matrixnorm{\frac1n\sum_{i=1}^n\left(x_{i1}e_{i1}\right)\left(x_{i1}e_{i1}\right)^\top-\sigma^2\Sigma}_{\max}.
	\end{align*}
	Note that $\Ee\left[\left(x_{i1}e_{i1}\right)_l^2\right]=\sigma^2\Sigma_{l,l}$ is bounded away from zero, and also that $\left(x_{i1}e_{i1}\right)_l$ is sub-exponential with O(1) $\psi_1$-norm for each $(i,l)$.  Then, by the proof of Corollary 3.1 of \cite{chernozhukov2013gaussian}, we have that
	\begin{align*}
	\Ee\left[\matrixnorm{\frac1n\sum_{i=1}^n\left(x_{i1}e_{i1}\right)\left(x_{i1}e_{i1}\right)^\top-\sigma^2\Sigma}_{\max}\right] \lesssim \sqrt{\frac{\log d}n} + \frac{\log^2(dn)\log d}n,
	\end{align*}
	which implies by Markov's inequality that
	$$V_2' = \frac n{n+k-1} O_P\left(\sqrt{\frac{\log d}n} + \frac{\log^2(dn)\log d}n\right) = O_P\left(\frac{\sqrt{n\log d}}{n+k} + \frac{\log^2(dn)\log d}{n+k}\right).$$
	
	Lastly, we bound $V_1'(\btheta)$ using the same argument as in bounding $U_1(\btheta)$ in the proof of Lemma~\ref{lem:vcov0_reg}.  We write $\nabla\cL(\theta;Z_{i1})-\nabla\cLs(\theta)$ as $\left(\nabla\cL(\theta;Z_{i1})-\nabla\cLs(\theta)-\nabla\cL(\thetas;Z_{i1})\right)+\nabla\cL(\thetas;Z_{i1})$, and obtain by the triangle inequality that
        \begin{align*}
        \frac{n+k-1}n V_1'(\btheta) &\leq \matrixnorm{\frac1n\sum_{i=1}^n \left(\nabla\cL(\theta;Z_{i1})-\nabla\cLs(\theta)-\nabla\cL(\thetas;Z_{i1})\right)\left(\nabla\cL(\theta;Z_{i1})-\nabla\cLs(\theta)-\nabla\cL(\thetas;Z_{i1})\right)^\top}_{\max} \\
        &\quad+ \matrixnorm{\frac1n\sum_{i=1}^n \nabla\cL(\thetas;Z_{i1})\left(\nabla\cL(\theta;Z_{i1})-\nabla\cLs(\theta)-\nabla\cL(\thetas;Z_{i1})\right)^\top}_{\max} \\
        &\quad+ \matrixnorm{\frac1n\sum_{i=1}^n \left(\nabla\cL(\theta;Z_{i1})-\nabla\cLs(\theta)-\nabla\cL(\thetas;Z_{i1})\right)\nabla\cL(\thetas;Z_{i1})^\top}_{\max} \\
        &= \matrixnorm{\frac1n\sum_{i=1}^n \left(\nabla\cL(\theta;Z_{i1})-\nabla\cLs(\theta)-\nabla\cL(\thetas;Z_{i1})\right)\left(\nabla\cL(\theta;Z_{i1})-\nabla\cLs(\theta)-\nabla\cL(\thetas;Z_{i1})\right)^\top}_{\max} \\
        &\quad+ 2\matrixnorm{\frac1n\sum_{i=1}^n \nabla\cL(\thetas;Z_{i1})\left(\nabla\cL(\theta;Z_{i1})-\nabla\cLs(\theta)-\nabla\cL(\thetas;Z_{i1})\right)^\top}_{\max} \\
        &\defn V_{11}'(\btheta) + 2 V_{12}'(\btheta).
        \end{align*}
        Applying Cauchy-Schwarz inequality, we obtain that
        \begin{align*}
        V_{12}'(\btheta) &\leq \matrixnorm{\frac1n\sum_{i=1}^n \nabla\cL(\thetas;Z_{i1})\nabla\cL(\thetas;Z_{i1})^\top}_{\max}^{1/2} \\
        &\quad \cdot\matrixnorm{\frac1n\sum_{i=1}^n \left(\nabla\cL(\theta;Z_{i1})-\nabla\cLs(\theta)-\nabla\cL(\thetas;Z_{i1})\right)\left(\nabla\cL(\theta;Z_{i1})-\nabla\cLs(\theta)-\nabla\cL(\thetas;Z_{i1})\right)^\top}_{\max}^{1/2} \\
        &= \matrixnorm{\frac1n\sum_{i=1}^n \nabla\cL(\thetas;Z_{i1})\nabla\cL(\thetas;Z_{i1})^\top}_{\max}^{1/2}V_{11}'(\btheta)^{1/2}.
        \end{align*}
        By the triangle inequality, we have that
        \begin{align*}
        &\matrixnorm{\frac1n\sum_{i=1}^n \nabla\cL(\thetas;Z_{i1})\nabla\cL(\thetas;Z_{i1})^\top}_{\max} \\
        &\leq \matrixnorm{\frac1n\sum_{i=1}^n \nabla\cL(\thetas;Z_{i1})\nabla\cL(\thetas;Z_{i1})^\top-\Ee\left[\nabla\cL(\thetas;Z)\nabla\cL(\thetas;Z)^\top\right]}_{\max} + \matrixnorm{\Ee\left[\nabla\cL(\thetas;Z)\nabla\cL(\thetas;Z)^\top\right]}_{\max} \\
        &= \frac{n+k-1}n V_2' + \sigma^2\matrixnorm{\Sigma}_{\max} = O_P\left(1+\sqrt{\frac{\log d}n} + \frac{\log^2(dn)\log d}n\right).
        \end{align*}
        It remains to bound $V_{11}'(\btheta)$.  Note that
        \begin{align*}
        \nabla\cL(\theta;Z_{i1})-\nabla\cLs(\theta)-\nabla\cL(\thetas;Z_{i1})
        &= x_{ij}(x_{ij}^\top\btheta-y_{ij}) - \Sigma(\btheta-\thetas) + x_{ij}(x_{ij}^\top\thetas-y_{ij}) = \left(x_{ij} x_{ij}^\top-\Sigma\right)(\btheta-\thetas).
        \end{align*}
        Then, we have by the triangle inequality that
        \begin{align*}
        V_{11}'(\btheta) &= \matrixnorm{\frac1n\sum_{i=1}^n \left(x_{i1} x_{i1}^\top-\Sigma\right)(\btheta-\thetas)(\btheta-\thetas)^\top\left(x_{i1} x_{i1}^\top-\Sigma\right)}_{\max} \\
        &\leq \frac1n\sum_{i=1}^n \matrixnorm{\left(x_{i1} x_{i1}^\top-\Sigma\right)(\btheta-\thetas)(\btheta-\thetas)^\top\left(x_{i1} x_{i1}^\top-\Sigma\right)}_{\max} \\
        &= \frac1n\sum_{i=1}^n \matrixnorm{\left(x_{i1} x_{i1}^\top-\Sigma\right)(\btheta-\thetas)}_\infty^2 \leq \frac1n\sum_{i=1}^n \matrixnorm{x_{i1} x_{i1}^\top-\Sigma}_{\max}^2 \left\|\btheta-\thetas\right\|_1^2.
        \end{align*}
        Similarly to obtaining \eqref{eqn:bern3}, we have that
        $$P\left(\matrixnorm{x_{i1} x_{i1}^\top-\Sigma}_{\max} > \matrixnorm{\Sigma}_{\max}\left(\frac{\log\frac{2nd^2}\delta}c\vee\sqrt{\frac{\log\frac{2nd^2}\delta}c}\right)\right)\leq \frac\delta n,$$
        which implies by the union bound that
        $$\max_i\matrixnorm{x_{i1} x_{i1}^\top-\Sigma}_{\max} = O_P\left(\sqrt{\log(nd)}\right).$$
        Putting all the preceding bounds together, we obtain that
        $$V_{11}'(\btheta) = O_P\left(\log(nd) r_{\btheta}^2\right),$$
        $$V_{12}'(\btheta) = O_P\left(\left(1+\left(\frac{\log d}n\right)^{1/4} + \sqrt{\frac{\log^2(dn)\log d}n}\right)\sqrt{\log(nd)} r_{\btheta}\right),$$
        \begin{align*}
        V_1'(\btheta) &= \frac n{n+k-1} O_P\left(\left(1+\left(\frac{\log d}n\right)^{1/4} + \sqrt{\frac{\log^2(dn)\log d}n}\right)\sqrt{\log(nd)} r_{\btheta} + \log(nd) r_{\btheta}^2\right) \\
        &= O_P\left(\left(1+\left(\frac{\log d}n\right)^{1/4} + \sqrt{\frac{\log^2(dn)\log d}n}\right)\frac{n\sqrt{\log(nd)}}{n+k} r_{\btheta} + \frac{n\log(nd)}{n+k} r_{\btheta}^2\right),
        \end{align*}
        and finally the bound in the lemma.
\end{proof}

\begin{lemma}\label{lem:vcov0_reg_glm}
	In GLM, under Assumptions~\ref{as:smth_glm}--\ref{as:hes_glm}, provided that $\left\|\btheta-\thetas\right\|_1=O_P(r_{\btheta})$, we have that
	\begin{align*}
	&\matrixnorm{\frac1k\sum_{j=1}^k n\left(\nabla\cL_j(\btheta)-\nabla\cL_N(\btheta)\right)\left(\nabla\cL_j(\btheta)-\nabla\cL_N(\btheta)\right)^\top-\Ee\left[\nabla\cL(\thetas;Z)\nabla\cL(\thetas;Z)^\top\right]}_{\max} \\
	&= O_P\left(\sqrt{\frac{\log d}k} + \frac{\log d}k + \left(1+\left(\frac{\log d}k\right)^{1/4}\right)\left(\sqrt{\log d} + \sqrt{n}r_{\btheta}\right) r_{\btheta} + \left(n + \log d + nr_{\btheta}^2\right) r_{\btheta}^2\right).
	\end{align*}
\end{lemma}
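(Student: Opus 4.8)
The plan is to follow the proof of Lemma~\ref{lem:vcov0_reg} step by step, replacing every place that exploited the quadratic structure of the least-squares loss by its generalized-linear-model counterpart built from Assumptions~\ref{as:smth_glm}--\ref{as:hes_glm}. By Lemma~\ref{lem:kgrad_decomp} it suffices to bound the three pieces $U_1(\btheta)$, $U_2$, and $U_3(\btheta)$ separately, since the target quantity is at most their sum. The only genuinely new ingredient relative to the linear case is that $\nabla\cL_j(\btheta)-\nabla\cLs(\btheta)-\nabla\cL_j(\thetas)$ is no longer the exact product $(X_j^\top X_j/n-\Sigma)(\btheta-\thetas)$ but must be expanded by Taylor's theorem; I will control it using the Lipschitz bound already recorded in~\eqref{eqn:lip} together with the bounded-covariate Assumption~\ref{as:design_glm}.

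First I would dispatch $U_2$ and $U_3(\btheta)$. For $U_2$, write $(\nabla\cL_j(\thetas))_l=n^{-1}\sum_{i=1}^n g'(y_{ij},x_{ij}^\top\thetas)x_{ij,l}$; under Assumptions~\ref{as:smth_glm} and~\ref{as:design_glm} each summand is bounded, so by Hoeffding $\sqrt n(\nabla\cL_j(\thetas))_l$ is sub-Gaussian with an $O(1)$ parameter, whence $n(\nabla\cL_j(\thetas))_l(\nabla\cL_j(\thetas))_{l'}$ is sub-exponential with $O(1)$ $\psi_1$-norm and mean of order $O(1)$. A Bernstein bound for the average over the $k$ i.i.d.\ machines, followed by a union bound over the $d^2$ entries, gives $U_2=O_P(\sqrt{\log d/k}+\log d/k)$; note the maximum is taken only over coordinates, so only $\log d$ (not $\log k$) enters. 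For $U_3(\btheta)=n\|\nabla\cL_N(\btheta)-\nabla\cLs(\btheta)\|_\infty^2$, I would expand about $\thetas$ using $\nabla\cLs(\thetas)=0$, so that $\nabla\cL_N(\btheta)-\nabla\cLs(\btheta)=\nabla\cL_N(\thetas)+\int_0^1(\nabla^2\cL_N-\nabla^2\cLs)(\thetas+s(\btheta-\thetas))\,ds\,(\btheta-\thetas)$; bounding the integrated Hessian gap via~\eqref{eqn:lip} and the analogue of~\eqref{eqn:hoef3}, and $\|\nabla\cL_N(\thetas)\|_\infty=O_P(\sqrt{\log d/N})$ by Hoeffding, yields $\|\nabla\cL_N(\btheta)-\nabla\cLs(\btheta)\|_\infty=O_P((r_\btheta+\sqrt{\log d/N})r_\btheta+\sqrt{\log d/N})$ and hence $U_3(\btheta)=O_P(nr_\btheta^2+\log d/k)$ after multiplying by $n$.

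The heart of the argument is $U_1(\btheta)$, which I would split exactly as in Lemma~\ref{lem:vcov0_reg} by writing $\nabla\cL_j(\btheta)-\nabla\cLs(\btheta)=R_j+\nabla\cL_j(\thetas)$ with $R_j\defn\nabla\cL_j(\btheta)-\nabla\cLs(\btheta)-\nabla\cL_j(\thetas)$, giving $U_1(\btheta)\le U_{11}(\btheta)+2U_{12}(\btheta)$ and then, via the Cauchy--Schwarz inequality for the inner product $\langle A,B\rangle=\matrixnorm{AB^\top}_{\max}$, the bound $U_{12}(\btheta)\le\matrixnorm{k^{-1}\sum_j n\nabla\cL_j(\thetas)\nabla\cL_j(\thetas)^\top}_{\max}^{1/2}U_{11}(\btheta)^{1/2}$. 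The first factor is controlled by $U_2$ plus $\matrixnorm{\Ee[\nabla\cL(\thetas;Z)\nabla\cL(\thetas;Z)^\top]}_{\max}=O(1)$ from Assumption~\ref{as:hes_glm}, giving $O_P(1+(\log d/k)^{1/4})$ after taking the square root. For $U_{11}(\btheta)=k^{-1}\sum_j n\|R_j\|_\infty^2$, the Taylor expansion gives $\|R_j\|_\infty\le\sup_{s\in[0,1]}\matrixnorm{\nabla^2\cL_j(\thetas+s(\btheta-\thetas))-\nabla^2\cLs(\thetas+s(\btheta-\thetas))}_{\max}\|\btheta-\thetas\|_1$; I would reduce the segment-uniform Hessian gap to the gap at $\thetas$ plus an $O(r_\btheta)$ Lipschitz correction by~\eqref{eqn:lip}, and crucially bound the averaged squared gap in expectation, $\Ee[k^{-1}\sum_j\matrixnorm{\nabla^2\cL_j(\thetas)-\nabla^2\cLs(\thetas)}_{\max}^2]\lesssim\log d/n$, rather than taking a maximum over machines. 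This is the step that keeps the logarithmic factor at $\log d$ instead of $\log(kd)$, and it produces $U_{11}(\btheta)=O_P((\log d+nr_\btheta^2)r_\btheta^2)$ and correspondingly $U_{12}(\btheta)=O_P((1+(\log d/k)^{1/4})(\sqrt{\log d}+\sqrt n\,r_\btheta)r_\btheta)$.

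The main obstacle is precisely the treatment of $R_j$ in $U_{11}(\btheta)$: because the loss is not quadratic, the remainder is an integral of a random Hessian difference along a data-dependent segment, so I must obtain a bound that is simultaneously uniform over the segment and sharp in its logarithmic dependence. Routing this through the Lipschitz estimate~\eqref{eqn:lip} to collapse the segment to the point $\thetas$, and through a second-moment (Markov) control of the per-machine Hessian deviation to avoid a union bound over the $k$ machines, is what makes the bounded-covariate Assumption~\ref{as:design_glm} pay off and yields the clean rate. Collecting $U_1(\btheta)+U_2+U_3(\btheta)$ and retaining the dominant terms then gives exactly the stated bound.
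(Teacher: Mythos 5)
Your proposal is correct and follows essentially the same route as the paper's proof: the Lemma~\ref{lem:kgrad_decomp} decomposition into $U_1(\btheta)$, $U_2$, $U_3(\btheta)$, Hoeffding--Bernstein for $U_2$, a Taylor expansion with the $\matrixnorm{AB^\top}_{\max}$ Cauchy--Schwarz split $U_1\leq U_{11}+2U_{12}$, and the Lipschitz bound \eqref{eqn:lip} to reduce the segment-uniform Hessian gap to the gap at $\thetas$ plus an $O_P(r_{\btheta})$ correction (the paper's $U_{111,j}$/$U_{112,j}$ terms). Your only deviations are cosmetic and sound: you handle $U_3(\btheta)$ via an integrated Hessian-difference expansion rather than the paper's three-term triangle inequality (giving a slightly sharper, still compatible bound), and you make explicit the second-moment/Markov control of $k^{-1}\sum_j n\|U_{111,j}\|_\infty^2$ that keeps the factor at $\log d$ rather than $\log(kd)$ --- a step the paper invokes only implicitly.
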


\begin{proof}[Lemma \ref{lem:vcov0_reg_glm}]
	By Lemma~\ref{lem:kgrad_decomp}, it suffices to bound $U_1(\btheta)$, $U_2$, and $U_3(\btheta)$.  We begin by bounding $U_2$.  Note that $\nabla\cL_N(\thetas)=\sum_{i=1}^n\sum_{j=1}^k g'(y_{ij},x_{ij}^\top\thetas) x_{ij}/N$ and $g'(y_{ij},x_{ij}^\top\thetas) x_{ij,l}=O(1)$ for each $l=1,\dots,d$ under Assumptions~\ref{as:smth_glm}~and~\ref{as:design_glm}.  Then, by Hoeffding's inequality, we have that for any $t>0$,
	$$P\left(\sqrt n\left|\nabla\cL_j(\thetas)_l\right|>t\right)\leq2\exp\left(-\frac{t^2}c\right),$$
	that is, $\sqrt n \nabla\cL_j(\thetas)_l$ is sub-Gaussian with $O(1)$ $\psi_2$-norm.  Therefore, $n \nabla\cL_j(\thetas)_l\nabla\cL_j(\thetas)_{l'}$ is sub-exponential with $O(1)$ $\psi_1$-norm.  Note that $\Ee[n \nabla\cL_j(\thetas)_l\nabla\cL_j(\thetas)_{l'}]=\Ee[\nabla\cL(\thetas;Z)_l\nabla\cL(\thetas;Z)_{l'}]$.  Then, we apply Bernstein's inequality and obtain that for any $\delta\in(0,1)$,
	$$P\left(\left|\frac1k\sum_{j=1}^k n \nabla\cL_j(\thetas)_l\nabla\cL_j(\thetas)_{l'}-\Ee\left[\nabla\cL(\thetas;Z)_l\nabla\cL(\thetas;Z)_{l'}\right]\right|>\sqrt{\frac{\log\frac{2d^2}\delta}{ck}}\vee\frac{\log\frac{2d^2}\delta}{ck}\right)\leq\frac\delta{d^2},$$
	which implies by the union bound that
	$$U_2=O_P\left(\sqrt{\frac{\log d}k}\right).$$
	
	Next, we bound $U_3(\btheta)$.  By the triangle inequality, we have that
	\begin{align*}
	\left\|\nabla\cL_N(\btheta)-\nabla\cLs(\btheta)\right\|_\infty
	&\leq \left\|\nabla\cL_N(\btheta)-\nabla\cL_N(\thetas)\right\|_\infty + \left\|\nabla\cL_N(\thetas)\right\|_\infty + \left\|\nabla\cLs(\btheta)\right\|_\infty.
	\end{align*}
	By an expression of remainder of the first order Taylor expansion, we have that
	\begin{align*}
	\nabla\cL_N(\btheta)-\nabla\cL_N(\thetas)&=\int_0^1\nabla^2\cL_N(\thetas+t(\btheta-\thetas))dt(\btheta-\thetas) \\
	&=\int_0^1\frac1N\sum_{i=1}^n\sum_{j=1}^k g''(y_{ij},x_{ij}^\top(\thetas+t(\btheta-\thetas)))x_{ij}x_{ij}^\top dt(\btheta-\thetas),
	\end{align*}
	and then, under Assumptions~\ref{as:smth_glm}~and~\ref{as:design_glm},
	\begin{align*}
	\left\|\nabla\cL_N(\btheta)-\nabla\cL_N(\thetas)\right\|_\infty&=\int_0^1\frac1N\sum_{i=1}^n\sum_{j=1}^k \left|g''(y_{ij},x_{ij}^\top(\thetas+t(\btheta-\thetas)))\right|\left\|x_{ij}\right\|_\infty^2 dt \left\|\btheta-\thetas\right\|_\infty \lesssim \left\|\btheta-\thetas\right\|_\infty.
	\end{align*}
	Note that for any $\theta$,
	\begin{align*}
	\left\|\nabla\cLs(\theta)\right\|_\infty&=\left\|\nabla\cLs(\theta)-\nabla\cLs(\thetas)\right\|_\infty =\left\|\Ee\left[\left(g'(y,x^\top\theta)-g'(y,x^\top\thetas))\right)x\right]\right\|_\infty \\
	&=\left\|\Ee\left[\int_0^1 g''(y,x^\top(\thetas+t(\theta-\thetas)))dt xx^\top(\theta-\thetas)\right]\right\|_\infty \\ &\leq\Ee\left[\int_0^1 \left|g''(y,x^\top(\thetas+t(\theta-\thetas)))\right|dt \left\|x\right\|_\infty^2 \left\|\theta-\thetas\right\|_\infty\right] \lesssim \left\|\theta-\thetas\right\|_\infty.
	\end{align*}
	Therefore, $\left\|\nabla\cLs(\btheta)\right\|_\infty\lesssim \left\|\btheta-\thetas\right\|_\infty$. By \eqref{eqn:hoef2}, we have that
	$$\left\|\nabla\cL_N(\thetas)\right\|_\infty=O_P\left(\sqrt{\frac{\log d}N}\right).$$
	Then, assuming that $\left\|\btheta-\thetas\right\|_1=O_P(r_{\btheta})$, we have that
	\begin{align*}
	\left\|\nabla\cL_N(\btheta)-\nabla\cLs(\btheta)\right\|_\infty
	&=O_P\left(r_{\btheta}+\sqrt{\frac{\log d}N}\right),
	\end{align*}
	and then,
	$$U_3(\btheta) = O_P\left(n r_{\btheta}^2 + \frac{\log d}k\right).$$
	
	Lastly, we bound $U_1(\btheta)$.  As in the proof of Lemma~\ref{lem:vcov0_reg}, we have that
        \begin{align*}
        U_1(\btheta) &\leq \matrixnorm{\frac1k\sum_{j=1}^k n\left(\nabla\cL_j(\btheta)-\nabla\cLs(\btheta)-\nabla\cL_j(\thetas)\right)\left(\nabla\cL_j(\btheta)-\nabla\cLs(\btheta)-\nabla\cL_j(\thetas)\right)^\top}_{\max} \\
        &\quad+ 2\matrixnorm{\frac1k\sum_{j=1}^k n\nabla\cL_j(\thetas)\left(\nabla\cL_j(\btheta)-\nabla\cLs(\btheta)-\nabla\cL_j(\thetas)\right)^\top}_{\max} \\
        &\defn U_{11}(\btheta) + 2 U_{12}(\btheta),
        \end{align*}
        and
        \begin{align*}
        U_{12}(\btheta) &\leq \matrixnorm{\frac1k\sum_{j=1}^k n\nabla\cL_j(\thetas)\nabla\cL_j(\thetas)^\top}_{\max}^{1/2} U_{11}(\btheta)^{1/2}.
        \end{align*}
        Note that $\matrixnorm{\Ee\left[\nabla\cL(\thetas;Z)\nabla\cL(\thetas;Z)^\top\right]}_{\max} = O(1)$ under Assumption~\ref{as:hes_glm}.  Then, by the triangle inequality, we have that
        \begin{align*}
        &\matrixnorm{\frac1k\sum_{j=1}^k n\nabla\cL_j(\thetas)\nabla\cL_j(\thetas)^\top}_{\max} \\
        &\leq \matrixnorm{\frac1k\sum_{j=1}^k n\nabla\cL_j(\thetas)\nabla\cL_j(\thetas)^\top-\Ee\left[\nabla\cL(\thetas;Z)\nabla\cL(\thetas;Z)^\top\right]}_{\max} + \matrixnorm{\Ee\left[\nabla\cL(\thetas;Z)\nabla\cL(\thetas;Z)^\top\right]}_{\max} \\
        &=U_2 + \matrixnorm{\Ee\left[\nabla\cL(\thetas;Z)\nabla\cL(\thetas;Z)^\top\right]}_{\max} = O_P\left(1+\sqrt{\frac{\log d}k}\right).
        \end{align*}
        It remains to bound $U_{11}(\btheta)$.  Note that
	\begin{align*}
	\nabla\cL_j(\btheta)-\nabla\cL_j(\thetas)&=\int_0^1\nabla^2\cL_j(\thetas+t(\btheta-\thetas))dt(\btheta-\thetas) =\int_0^1\frac1n\sum_{i=1}^n g''(y_{ij},x_{ij}^\top(\thetas+t(\btheta-\thetas)))x_{ij}x_{ij}^\top dt(\btheta-\thetas),
	\end{align*}
	and
	\begin{align*}
	g''(y_{ij},x_{ij}^\top(\thetas+t(\btheta-\thetas)))&=g''(y_{ij},x_{ij}^\top\thetas)+\int_0^1 g'''(y_{ij},x_{ij}^\top(\thetas+st(\btheta-\thetas)))ds x_{ij}^\top(t(\btheta-\thetas)),
	\end{align*}
	and then
	\begin{align*}
	\nabla\cL_j(\btheta)-\nabla\cL_j(\thetas)&=\frac1n\sum_{i=1}^n g''(y_{ij},x_{ij}^\top\thetas)x_{ij}x_{ij}^\top (\btheta-\thetas) \\
	&\quad+ \int_0^1\int_0^1 \frac1n\sum_{i=1}^n g'''(y_{ij},x_{ij}^\top(\thetas+st(\btheta-\thetas))) x_{ij}^\top t(\btheta-\thetas) x_{ij}x_{ij}^\top dtds (\btheta-\thetas).
	\end{align*}
	In a similar way, we have that
	\begin{align*}
	\nabla\cLs(\btheta)&=\nabla\cLs(\btheta)-\nabla\cLs(\thetas) \\
	&= \Ee\left[g''(y,x^\top\thetas)xx^\top\right] (\btheta-\thetas) + \int_0^1\int_0^1 \Ee_{x,y}\left[g'''(y,x^\top(\thetas+st(\btheta-\thetas))) x^\top t(\btheta-\thetas) xx^\top\right] dtds (\btheta-\thetas),
	\end{align*}
	and then,
	\begin{align*}
        \nabla\cL_j(\btheta)-\nabla\cLs(\btheta)-\nabla\cL_j(\thetas)
        &=\left(\frac1n\sum_{i=1}^n g''(y_{ij},x_{ij}^\top\thetas)x_{ij}x_{ij}^\top-\Ee\left[g''(y,x^\top\thetas)xx^\top\right]\right) (\btheta-\thetas) \\
	&\quad+ \int_0^1\int_0^1 \frac1n\sum_{i=1}^n g'''(y_{ij},x_{ij}^\top(\thetas+st(\btheta-\thetas))) x_{ij}^\top t(\btheta-\thetas) x_{ij}x_{ij}^\top \\
	&\quad - \Ee_{x,y}\left[g'''(y,x^\top(\thetas+st(\btheta-\thetas))) x^\top t(\btheta-\thetas) xx^\top\right] dtds (\btheta-\thetas) \\
	&\defn U_{111,j}+U_{112,j}(\btheta).
        \end{align*}
        Then, we have by the triangle inequality that
        	\begin{align*}
	U_{11}(\btheta)&=\matrixnorm{\frac1k\sum_{j=1}^k n\left(U_{111,j}+U_{112,j}(\btheta)\right)\left(U_{111,j}+U_{112,j}(\btheta)\right)^\top}_{\max} \\
	&\leq \frac1k\sum_{j=1}^k n \matrixnorm{\left(U_{111,j}+U_{112,j}(\btheta)\right)\left(U_{111,j}+U_{112,j}(\btheta)\right)^\top}_{\max} \\
	&= \frac1k\sum_{j=1}^k n \left\|U_{111,j}+U_{112,j}(\btheta)\right\|_\infty^2 \leq \frac2k\sum_{j=1}^k n \left(\left\|U_{111,j}\right\|_\infty^2 + \left\|U_{112,j}(\btheta)\right\|_\infty^2\right)
	\end{align*}
	Using the argument for obtaining \eqref{eqn:hoef3}, we have that
	\begin{align*}
	\left\|U_{111,j}\right\|_\infty&=\left\|\left(\nabla^2\cL_j(\thetas)-\nabla^2\cLs(\thetas)\right)(\btheta-\thetas)\right\|_\infty \leq\matrixnorm{\nabla^2\cL_j(\thetas)-\nabla^2\cLs(\thetas)}_{\max}\left\|\btheta-\thetas\right\|_1 \\
	&=O_P\left(\sqrt{\frac{\log d}n}\right) O_P\left(r_{\btheta}\right) =O_P\left(\sqrt{\frac{\log d}n} r_{\btheta}\right).
	\end{align*}
	Under Assumptions~\ref{as:smth_glm}~and~\ref{as:design_glm}, we have that
	\begin{align*}
	\left\|U_{112,j}(\btheta)\right\|_\infty&\leq\int_0^1\int_0^1 \frac1n\sum_{i=1}^n \left|g'''(y_{ij},x_{ij}^\top(\thetas+st(\btheta-\thetas)))\right| \left\|x_{ij}\right\|_\infty t\left\|\btheta-\thetas\right\|_1 \left\|x_{ij}\right\|_\infty^2 \\
	&\quad + \Ee_{x,y}\left[\left|g'''(y,x^\top(\thetas+st(\btheta-\thetas)))\right| \left\|x\right\|_\infty t\left\|\btheta-\thetas\right\|_1 \left\|x\right\|_\infty^2\right] dtds \left\|\btheta-\thetas\right\|_1 \\
	&\lesssim \left\|\btheta-\thetas\right\|_1^2 =O_P\left(r_{\btheta}^2\right).
	\end{align*}
	Hence, we have that
	$$U_{11}(\btheta) = n\left(O_P\left(\frac{\log d}n r_{\btheta}^2\right) + O_P\left(r_{\btheta}^4\right)\right) = O_P\left(\left(\log d + nr_{\btheta}^2\right) r_{\btheta}^2\right).$$
        Putting all the preceding bounds together, we obtain that
        $$U_{12}(\btheta) = O_P\left(\left(1+\left(\frac{\log d}k\right)^{1/4}\right)\left(\sqrt{\log d} + \sqrt{n}r_{\btheta}\right) r_{\btheta}\right),$$
        $$U_1(\btheta) = O_P\left(\left(1+\left(\frac{\log d}k\right)^{1/4}\right)\left(\sqrt{\log d} + \sqrt{n}r_{\btheta}\right) r_{\btheta} + \left(\log d + nr_{\btheta}^2\right) r_{\btheta}^2\right),$$
        and finally the bound in the lemma.
\end{proof}

\begin{lemma}\label{lem:vcov_reg_glm}
	In GLM, under Assumptions~\ref{as:smth_glm}--\ref{as:hes_glm}, provided that $\left\|\btheta-\thetas\right\|_1=O_P(r_{\btheta})$, we have that
	\begin{align*}
	&\Bigg|\!\Bigg|\!\Bigg|\frac1{n+k-1}\Bigg(\sum_{i=1}^n\left(\nabla\cL(\btheta;Z_{i1})-\nabla\cL_N(\btheta)\right)\left(\nabla\cL(\btheta;Z_{i1})-\nabla\cL_N(\btheta)\right)^\top \\
	&\quad+\sum_{j=2}^k n\left(\nabla\cL_j(\btheta)-\nabla\cL_N(\btheta)\right)\left(\nabla\cL_j(\btheta)-\nabla\cL_N(\btheta)\right)^\top\Bigg) -\Ee\left[\nabla\cL(\thetas;Z)\nabla\cL(\thetas;Z)^\top\right]\Bigg|\!\Bigg|\!\Bigg|_{\max} \\
	&= O_P\Bigg(\sqrt{\frac{\log d}{n+k}} + \frac{\log d}{n+k} + \frac{nk}{n+k}r_{\btheta}^2 + \left(1+\left(\frac{\log d}n\right)^{1/4}\right)\frac{n}{n+k} \left(r_{\btheta} + r_{\btheta}^2\right) + \frac{n}{n+k} r_{\btheta}^4 \\
	&\quad + \left(1+\left(\frac{\log d}k\right)^{1/4}\right)\frac{k\sqrt{\log d}+k\sqrt n r_{\btheta}}{n+k} r_{\btheta} + \frac{k\log d+knr_{\btheta}^2}{n+k} r_{\btheta}^2\Bigg).
	\end{align*}
\end{lemma}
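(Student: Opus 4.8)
The plan is to follow the same template used for the linear \texttt{n+k-1-grad} case in the proof of Lemma~\ref{lem:vcov_reg}, but to substitute the GLM-specific score and Hessian bounds established in the proof of Lemma~\ref{lem:vcov0_reg_glm}. First I would invoke Lemma~\ref{lem:nk1grad_decomp} to bound the left-hand side by $V_1(\btheta)+V_1'(\btheta)+V_2+V_2'+V_3(\btheta)$, which splits the error into a worker-machine piece (the $j=2,\dots,k$ terms, carrying the prefactor $\tfrac{k-1}{n+k-1}$), a master-node individual-datum piece (the $i=1,\dots,n$ terms, carrying the prefactor $\tfrac n{n+k-1}$), and the centering term $V_3(\btheta)=\tfrac{nk}{n+k-1}\|\nabla\cL_N(\btheta)-\nabla\cLs(\btheta)\|_\infty^2$.

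For the worker-machine terms $V_1(\btheta)$, $V_2$, and $V_3(\btheta)$, I would reuse the bounds on $U_1(\btheta)$, $U_2$, and $U_3(\btheta)$ derived in the proof of Lemma~\ref{lem:vcov0_reg_glm} essentially verbatim, since $V_1$, $V_2$, $V_3$ are exactly those quantities restricted to the $k-1$ worker machines (resp.\ the full $nk$ data) and multiplied by the displayed prefactors. This yields $V_2=O_P(\sqrt{k\log d}/(n+k))$, then $V_3(\btheta)=O_P(\tfrac{nk}{n+k}r_{\btheta}^2+\tfrac{\log d}{n+k})$ via $\|\nabla\cL_N(\btheta)-\nabla\cLs(\btheta)\|_\infty=O_P(r_{\btheta}+\sqrt{\log d/N})$ and $N=nk$, and the $k$-indexed terms of $V_1(\btheta)$ as claimed in the statement.

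The genuinely new work lies in the master-node terms $V_2'$ and $V_1'(\btheta)$. For $V_2'$ I would observe that, under Assumptions~\ref{as:smth_glm}~and~\ref{as:design_glm}, each score coordinate $\nabla\cL(\thetas;Z_{i1})_l=g'(y_{i1},x_{i1}^\top\thetas)x_{i1,l}$ is bounded, so the coordinatewise products are bounded and Hoeffding's inequality with a union bound gives $V_2'=\tfrac n{n+k-1}O_P(\sqrt{\log d/n})=O_P(\sqrt{n\log d}/(n+k))$. For $V_1'(\btheta)$ I would mirror the $V_{11}'/V_{12}'$ split of Lemma~\ref{lem:vcov_reg} together with the second-order Taylor expansion of $\nabla\cL(\btheta;Z_{i1})-\nabla\cLs(\btheta)-\nabla\cL(\thetas;Z_{i1})$ from the $U_{11}$ argument of Lemma~\ref{lem:vcov0_reg_glm}, writing this difference as a Hessian-difference term plus a third-order remainder. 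The crucial GLM simplification is that, because $\|x\|_\infty=O(1)$ and $g''$ is bounded, the per-datum Hessian discrepancy $\matrixnorm{\nabla^2\cL(\thetas;Z_{i1})-\nabla^2\cLs(\thetas)}_{\max}$ is $O(1)$ deterministically (not $O_P(\sqrt{\log(nd)})$ as in the sub-Gaussian linear design), so $V_{11}'(\btheta)=O_P(r_{\btheta}^2+r_{\btheta}^4)$; a Cauchy--Schwarz step against the bounded factor $\matrixnorm{\tfrac1n\sum_i\nabla\cL(\thetas;Z_{i1})\nabla\cL(\thetas;Z_{i1})^\top}_{\max}^{1/2}=O_P(1+(\log d/n)^{1/4})$ then produces the $\big(1+(\log d/n)^{1/4}\big)\tfrac n{n+k}(r_{\btheta}+r_{\btheta}^2)+\tfrac n{n+k}r_{\btheta}^4$ contribution.

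Finally I would collect the five bounds and simplify: the concentration terms $V_2$ and $V_2'$ both absorb into $O_P(\sqrt{\log d/(n+k)})$ using $k\le n+k$ and $n\le n+k$, while $V_1$, $V_1'$, and $V_3$ supply the remaining displayed terms. The main obstacle I anticipate is the bookkeeping in $V_1'(\btheta)$: tracking how the fourth-order factor $r_{\btheta}^4$ and the $(\log d/n)^{1/4}$ factor arise from squaring inside $V_{11}'$ and then taking the square root in the Cauchy--Schwarz step, and verifying that the per-datum Hessian bound is genuinely $O(1)$ rather than only $O_P(\sqrt{\log(nd)})$ --- this boundedness, inherited from Assumptions~\ref{as:smth_glm}~and~\ref{as:design_glm}, is precisely what keeps the master-node contribution from incurring an extra logarithmic penalty.
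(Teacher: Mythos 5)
Your proposal matches the paper's proof essentially step for step: the same decomposition via Lemma~\ref{lem:nk1grad_decomp}, the same verbatim reuse of the $U_1$, $U_2$, $U_3$ bounds from Lemma~\ref{lem:vcov0_reg_glm} for $V_1(\btheta)$, $V_2$, $V_3(\btheta)$, the same bounded-score Hoeffding argument for $V_2'$, and the same $V_{11}'/V_{12}'$ split with Cauchy--Schwarz giving $V_{11}'(\btheta)=O_P(r_{\btheta}^2+r_{\btheta}^4)$ and hence the $\left(1+(\log d/n)^{1/4}\right)\frac n{n+k}(r_{\btheta}+r_{\btheta}^2)+\frac n{n+k}r_{\btheta}^4$ contribution. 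Your key observation --- that boundedness of $g''$ and $\|x\|_\infty$ under Assumptions~\ref{as:smth_glm}--\ref{as:design_glm} makes the per-datum Hessian discrepancy $\matrixnorm{\nabla^2\cL(\thetas;Z_{i1})-\nabla^2\cLs(\thetas)}_{\max}$ deterministically $O(1)$, avoiding the $\sqrt{\log(nd)}$ penalty of the linear sub-Gaussian case --- is precisely the step the paper uses to bound $\left\|V_{111,i}'\right\|_\infty$.
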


\begin{proof}[Lemma \ref{lem:vcov_reg_glm}]
	By Lemma~\ref{lem:nk1grad_decomp}, it suffices to bound $V_1(\btheta)$, $V_1'(\btheta)$, $V_2$, $V_2'$, and $V_3(\btheta)$.  By the proof of Lemma~\ref{lem:vcov0_reg_glm}, we have that under Assumptions~\ref{as:smth_glm}--\ref{as:hes_glm}, assuming that $\left\|\btheta-\thetas\right\|_1=O_P(r_{\btheta})$,
	\begin{align*}
	V_1(\btheta) &= \frac{k-1}{n+k-1} O_P\left(\left(1+\left(\frac{\log d}k\right)^{1/4}\right)\left(\sqrt{\log d} + \sqrt{n}r_{\btheta}\right) r_{\btheta} + \left(\log d + nr_{\btheta}^2\right) r_{\btheta}^2\right) \\
	&= O_P\left(\left(1+\left(\frac{\log d}k\right)^{1/4}\right)\frac{k\sqrt{\log d}+k\sqrt n r_{\btheta}}{n+k} r_{\btheta} + \frac{k\log d+knr_{\btheta}^2}{n+k} r_{\btheta}^2\right),
	\end{align*}
	$$V_2 = \frac{k-1}{n+k-1} O_P\left(\sqrt{\frac{\log d}k}\right) = O_P\left(\frac{\sqrt{k\log d}}{n+k}\right),\quad\text{and}$$
	$$V_3(\btheta) =  \frac{nk}{n+k-1} O_P\left(r_{\btheta}^2+\frac{\log d}N\right) = O_P\left(\frac{nk}{n+k}r_{\btheta}^2 + \frac{\log d}{n+k}\right).$$
	It remains to bound $V_1'(\btheta)$ and $V_2'$.
	
	To bound $V_2'$, we note that each $\nabla\cL(\thetas;Z_{i1})_l\nabla\cL(\thetas;Z_{i1})_{l'}=g'(y_{i1},x_{i1}^\top\thetas)^2 x_{i1,l} x_{i1,l'}$ is bounded under Assumptions~\ref{as:smth_glm}~and~\ref{as:design_glm}.  Applying Hoeffding's inequality, we obtain that for any $\delta\in(0,1)$,
	$$P\left(\left|\frac1n\sum_{i=1}^n\nabla\cL(\thetas;Z_{i1})_l\nabla\cL(\thetas;Z_{i1})_{l'}-\Ee\left[\nabla\cL(\thetas;Z)_l\nabla\cL(\thetas;Z)_{l'}\right]\right| > \sqrt{\frac{c\log\frac{2d^2}\delta}n}\right)\leq\frac\delta{d^2},$$
	which implies by the union bound that
	$$V_2'=\frac n{n+k-1} O_P\left(\sqrt{\frac{\log d}n}\right)=O_P\left(\sqrt{\frac{n\log d}{n+k}}\right).$$
	
	Lastly, we bound $V_1'(\btheta)$.  As in the proof of Lemma~\ref{lem:vcov_reg}, we have that
	\begin{align*}
        \frac{n+k-1}n V_1'(\btheta) &\leq \matrixnorm{\frac1n\sum_{i=1}^n \left(\nabla\cL(\theta;Z_{i1})-\nabla\cLs(\theta)-\nabla\cL(\thetas;Z_{i1})\right)\left(\nabla\cL(\theta;Z_{i1})-\nabla\cLs(\theta)-\nabla\cL(\thetas;Z_{i1})\right)^\top}_{\max} \\
        &\quad+ 2\matrixnorm{\frac1n\sum_{i=1}^n \nabla\cL(\thetas;Z_{i1})\left(\nabla\cL(\theta;Z_{i1})-\nabla\cLs(\theta)-\nabla\cL(\thetas;Z_{i1})\right)^\top}_{\max} \\
        &\defn V_{11}'(\btheta) + 2 V_{12}'(\btheta),
        \end{align*}
        and
        \begin{align*}
        V_{12}'(\btheta) &\leq \matrixnorm{\frac1n\sum_{i=1}^n \nabla\cL(\thetas;Z_{i1})\nabla\cL(\thetas;Z_{i1})^\top}_{\max}^{1/2}V_{11}'(\btheta)^{1/2}.
        \end{align*}
        Note that $\matrixnorm{\Ee\left[\nabla\cL(\thetas;Z)\nabla\cL(\thetas;Z)^\top\right]}_{\max} = O(1)$ under Assumption~\ref{as:hes_glm}.  Then, by the triangle inequality, we have that
        \begin{align*}
        &\matrixnorm{\frac1n\sum_{i=1}^n \nabla\cL(\thetas;Z_{i1})\nabla\cL(\thetas;Z_{i1})^\top}_{\max} \\
        &\leq \matrixnorm{\frac1n\sum_{i=1}^n \nabla\cL(\thetas;Z_{i1})\nabla\cL(\thetas;Z_{i1})^\top-\Ee\left[\nabla\cL(\thetas;Z)\nabla\cL(\thetas;Z)^\top\right]}_{\max} + \matrixnorm{\Ee\left[\nabla\cL(\thetas;Z)\nabla\cL(\thetas;Z)^\top\right]}_{\max} \\
        &= \frac{n+k-1}n V_2' + \matrixnorm{\Ee\left[\nabla\cL(\thetas;Z)\nabla\cL(\thetas;Z)^\top\right]}_{\max} = O_P\left(1+\sqrt{\frac{\log d}n}\right).
        \end{align*}
        It remains to bound $V_{11}'(\btheta)$.  Using the same argument for analyzing $\nabla\cL_j(\btheta)-\nabla\cLs(\btheta)-\nabla\cL_j(\thetas)$ in the proof of Lemma~\ref{lem:vcov0_reg_glm}, we obtain that
        \begin{align*}
        \nabla\cL(\theta;Z_{i1})-\nabla\cLs(\theta)-\nabla\cL(\thetas;Z_{i1})
        &=\left(g''(y_{i1},x_{i1}^\top\thetas)x_{i1}x_{i1}^\top-\Ee\left[g''(y,x^\top\thetas)xx^\top\right]\right) (\btheta-\thetas) \\
	&\quad+ \int_0^1\int_0^1 g'''(y_{i1},x_{i1}^\top(\thetas+st(\btheta-\thetas))) x_{i1}^\top t(\btheta-\thetas) x_{i1}x_{i1}^\top \\
	&\quad - \Ee_{x,y}\left[g'''(y,x^\top(\thetas+st(\btheta-\thetas))) x^\top t(\btheta-\thetas) xx^\top\right] dtds (\btheta-\thetas) \\
	&\defn V_{111,i}'+V_{112,i}'(\btheta),
        \end{align*}
        and
        	\begin{align*}
	V_{11}'(\btheta)&=\matrixnorm{\frac1n\sum_{i=1}^n \left(V_{111,i}'+V_{112,i}'(\btheta)\right)\left(V_{111,i}'+V_{112,i}'(\btheta)\right)^\top}_{\max} \\
	&\leq \frac1n\sum_{i=1}^n \matrixnorm{\left(V_{111,i}'+V_{112,i}'(\btheta)\right)\left(V_{111,i}'+V_{112,i}'(\btheta)\right)^\top}_{\max} \\
	&= \frac1n\sum_{i=1}^n \left\|V_{111,i}'+V_{112,i}'(\btheta)\right\|_\infty^2 \leq \frac2n\sum_{i=1}^n \left(\left\|V_{111,i}'\right\|_\infty^2 + \left\|V_{112,i}'(\btheta)\right\|_\infty^2\right).
	\end{align*}
	Moreover, under Assumptions~\ref{as:smth_glm}--\ref{as:hes_glm}, we have that
	\begin{align*}
	\left\|V_{111,i}'\right\|_\infty&=\left\|\left(\nabla^2\cL(\thetas;Z_{i1})-\nabla^2\cLs(\thetas)\right)(\btheta-\thetas)\right\|_\infty \leq\matrixnorm{\nabla^2\cL(\thetas;Z_{i1})-\nabla^2\cLs(\thetas)}_{\max}\left\|\btheta-\thetas\right\|_1 \\
	&\leq\left(\left|g''(y_{i1},x_{i1}^\top\thetas)\right|\left\|x_{i1}\right\|_\infty^2+\matrixnorm{\nabla^2\cLs(\thetas)}_{\max}\right)\left\|\btheta-\thetas\right\|_1 =O_P\left(r_{\btheta}\right),
	\end{align*}
	and
	\begin{align*}
	\left\|V_{112,i}'(\btheta)\right\|_\infty&\leq\int_0^1\int_0^1  \left|g'''(y_{i1},x_{i1}^\top(\thetas+st(\btheta-\thetas)))\right| \left\|x_{i1}\right\|_\infty t\left\|\btheta-\thetas\right\|_1 \left\|x_{i1}\right\|_\infty^2 \\
	&\quad + \Ee_{x,y}\left[\left|g'''(y,x^\top(\thetas+st(\btheta-\thetas)))\right| \left\|x\right\|_\infty t\left\|\btheta-\thetas\right\|_1 \left\|x\right\|_\infty^2\right] dtds \left\|\btheta-\thetas\right\|_1 \\
	&\lesssim \left\|\btheta-\thetas\right\|_1^2 =O_P\left(r_{\btheta}^2\right),
	\end{align*}
	and hence,
	$$V_{11}'(\btheta) = O_P\left(r_{\btheta}^2 + r_{\btheta}^4\right).$$
	
	Putting all the preceding bounds together, we obtain that
        $$V_{12}'(\btheta) = O_P\left(\left(1+\left(\frac{\log d}n\right)^{1/4}\right)\left(r_{\btheta} + r_{\btheta}^2\right)\right),$$
        \begin{align*}
        V_1'(\btheta) &= \frac n{n+k-1} O_P\left(\left(1+\left(\frac{\log d}n\right)^{1/4}\right)\left(r_{\btheta} + r_{\btheta}^2\right) + r_{\btheta}^2 + r_{\btheta}^4\right) \\
        &= O_P\left(\left(1+\left(\frac{\log d}n\right)^{1/4}\right)\frac{n}{n+k} \left(r_{\btheta} + r_{\btheta}^2\right) + \frac{n}{n+k} r_{\btheta}^4\right),
        \end{align*}
        and finally the bound in the lemma.
\end{proof}

\begin{lemma}\label{lem:hes_ld}
	In linear model, under Assumption~\ref{as:design}, if $n\gtrsim d$, we have that
	$$\matrixnorm{\widetilde\Theta}_\infty=O_P\left(\sqrt d\right)\quad\text{and}\quad\max_l\left\|\widetilde\Theta_l-\Theta_l\right\|_2=O_P\left(\sqrt{\frac dn}\right).$$
\end{lemma}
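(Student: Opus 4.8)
The plan is to recognize $\widetilde\Theta$ concretely and then reduce both claims to a single spectral-norm estimate together with elementary norm conversions. Because the least-squares loss is quadratic, the Hessian $\nabla^2\cL_1(\theta)=\frac{X_1^\top X_1}n$ does not depend on $\theta$, so the surrogate passed to \texttt{DistBoots} is $\widetilde\Theta=\nabla^2\cL_1(\btheta)^{-1}=\left(\frac{X_1^\top X_1}n\right)^{-1}$, i.e.\ the inverse of the master node's local Gram matrix, irrespective of which CSL iterate is plugged in. Writing $\widehat\Sigma\defn\frac{X_1^\top X_1}n$ and recalling $\Theta=\Sigma^{-1}$ with $\Sigma=\Ee[xx^\top]$, it therefore suffices to control $\widehat\Sigma^{-1}$ and its deviation from $\Sigma^{-1}$.

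First I would establish the sample-covariance concentration $\matrixnorm{\widehat\Sigma-\Sigma}_2=O_P(\sqrt{d/n})$ when $n\gtrsim d$. Under Assumption~\ref{as:design} every $w^\top x$ with $\|w\|_2=1$ is sub-Gaussian with uniformly bounded $\psi_2$-norm, so the standard $\varepsilon$-net argument over $S^{d-1}$ combined with Bernstein's inequality for the sub-exponential summands $(w^\top x_{i1})^2$ yields $\matrixnorm{\widehat\Sigma-\Sigma}_2\lesssim\matrixnorm{\Sigma}_2\big(\sqrt{d/n}+d/n\big)$ with probability $1-o(1)$. Since $\matrixnorm{\Sigma}_2=\lambda_{\max}(\Sigma)=O(1)$ (bounded $\psi_2$-norm bounds $\Ee[(w^\top x)^2]$) and $\lambda_{\min}(\Sigma)\geq 1/\mu$ by Assumption~\ref{as:design}, choosing $n\geq C d$ with $C$ a large enough absolute constant forces $\matrixnorm{\widehat\Sigma-\Sigma}_2\leq\lambda_{\min}(\Sigma)/2$ with probability tending to one; on that event $d/n\leq\sqrt{d/n}$ so the rate is $O_P(\sqrt{d/n})$. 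By Weyl's inequality this gives $\lambda_{\min}(\widehat\Sigma)\geq\lambda_{\min}(\Sigma)/2$, hence $\widehat\Sigma$ is invertible and $\matrixnorm{\widetilde\Theta}_2=\lambda_{\min}(\widehat\Sigma)^{-1}=O_P(1)$.

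Next I would propagate the bound to the inverse through the resolvent identity $\widetilde\Theta-\Theta=-\widetilde\Theta(\widehat\Sigma-\Sigma)\Theta$, which gives $\matrixnorm{\widetilde\Theta-\Theta}_2\leq\matrixnorm{\widetilde\Theta}_2\matrixnorm{\widehat\Sigma-\Sigma}_2\matrixnorm{\Theta}_2=O_P(1)\cdot O_P(\sqrt{d/n})\cdot O(1)=O_P(\sqrt{d/n})$, using $\matrixnorm{\Theta}_2=\lambda_{\min}(\Sigma)^{-1}=O(1)$. The two stated conclusions then follow by routine norm comparisons. Since $\widetilde\Theta-\Theta$ is symmetric, its $l$-th column equals $(\widetilde\Theta-\Theta)e_l$, so $\|\widetilde\Theta_l-\Theta_l\|_2\leq\matrixnorm{\widetilde\Theta-\Theta}_2$ and thus $\max_l\|\widetilde\Theta_l-\Theta_l\|_2=O_P(\sqrt{d/n})$. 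For the first claim I would use the row-sum characterization of the induced $\infty$-norm: $\matrixnorm{\widetilde\Theta}_\infty=\max_i\|e_i^\top\widetilde\Theta\|_1\leq\sqrt d\max_i\|e_i^\top\widetilde\Theta\|_2\leq\sqrt d\,\matrixnorm{\widetilde\Theta}_2=O_P(\sqrt d)$, where the first inequality is $\|v\|_1\leq\sqrt d\|v\|_2$ in $\R^d$ and the second uses $\|e_i^\top\widetilde\Theta\|_2=\|\widetilde\Theta^\top e_i\|_2\leq\matrixnorm{\widetilde\Theta^\top}_2=\matrixnorm{\widetilde\Theta}_2$.

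The main obstacle is the covariance-concentration step: securing the sharp $\sqrt{d/n}$ operator-norm rate uniformly over the sphere rather than a looser dimension-dependent bound, and, crucially, verifying that the threshold $n\gtrsim d$ with a sufficiently large constant already keeps $\lambda_{\min}(\widehat\Sigma)$ bounded away from zero so that $\widetilde\Theta$ is well-defined with probability tending to one. Everything downstream—the resolvent identity and the $\|\cdot\|_2$-to-$\|\cdot\|_1/\|\cdot\|_\infty$ conversions—is mechanical once this spectral control is in hand; indeed, the computation parallels the one used for the global Gram matrix to obtain \eqref{eqn:inv_conc} in the proof of Lemma~\ref{lem:tbd_ld}, specialized from the $N$ full observations to the master's $n$ observations.
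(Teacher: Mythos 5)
Your proof is correct and follows essentially the same route as the paper's: identify $\widetilde\Theta=\left(X_1^\top X_1/n\right)^{-1}$, invoke the sub-Gaussian operator-norm concentration $\matrixnorm{X_1^\top X_1/n-\Sigma}_2=O_P\left(\sqrt{d/n}\right)$ for $n\gtrsim d$ (the paper cites Proposition~2.1 of Vershynin where you re-derive it by an $\varepsilon$-net plus Bernstein argument), propagate the bound to the inverse, and convert norms via $\sqrt d$ factors and $\max_l\|\widetilde\Theta_l-\Theta_l\|_2\leq\matrixnorm{\widetilde\Theta-\Theta}_2$. Your only deviations are cosmetic and, if anything, slightly more careful: the exact resolvent identity $\widetilde\Theta-\Theta=-\widetilde\Theta\left(\frac{X_1^\top X_1}n-\Sigma\right)\Theta$ combined with a Weyl lower bound keeping $\lambda_{\min}\left(X_1^\top X_1/n\right)$ bounded away from zero replaces the paper's one-line inequality $\matrixnorm{A^{-1}-B^{-1}}_2\leq\matrixnorm{B^{-1}}_2^2\matrixnorm{A-B}_2$ (which, read literally, requires exactly the smallness your high-probability event supplies), and you bound $\matrixnorm{\widetilde\Theta}_\infty\leq\sqrt d\,\matrixnorm{\widetilde\Theta}_2=O_P\left(\sqrt d\right)$ directly rather than via the paper's triangle inequality through $\matrixnorm{\Theta}_\infty$.
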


\begin{proof}[Lemma \ref{lem:hes_ld}]
	$\widetilde\Theta$ is simply the inverse of $X_1^\top X_1/n$.  We use the fact that for any matrix $A,B\in\R^{d\times d}$, $\matrixnorm{A^{-1}-B^{-1}}_2\leq\matrixnorm{B^{-1}}_2^2\matrixnorm{A-B}_2$, and obtain that
	$$\matrixnorm{\widetilde\Theta-\Theta}_2=\matrixnorm{\left(\frac{X_1^\top X_1}n\right)^{-1}-\Sigma^{-1}}_2\leq\matrixnorm{\Sigma^{-1}}_2^2\matrixnorm{\frac{X_1^\top X_1}n-\Sigma}_2.$$
	Since the design matrix is sub-Gaussian and $\matrixnorm{\Sigma}_2=O(1)$, by Proposition~2.1 of \cite{vershynin2012close}, we have that if $n\gtrsim d$,
	$$\matrixnorm{\frac{X_1^\top X_1}n-\Sigma}_2=O_P\left(\sqrt{\frac dn}\right).$$
	Also note that $\matrixnorm{\Sigma^{-1}}_2=O(1)$, and then, we have that
	\begin{align}
	\max_l\left\|\widetilde\Theta_l-\Theta_l\right\|_2\leq\matrixnorm{\widetilde\Theta-\Theta}_2=O_P\left(\sqrt{\frac dn}\right), \quad\text{and} \label{eqn:inv_conc}
	\end{align}
	$$\matrixnorm{\widetilde\Theta-\Theta}_\infty\leq\sqrt d\matrixnorm{\widetilde\Theta-\Theta}_2=O_P\left(\frac d{\sqrt{n}}\right).$$
	Note that $\matrixnorm{\Theta}_\infty\leq\sqrt d\matrixnorm{\Theta}_2=\sqrt d\matrixnorm{\Sigma^{-1}}_2=O\left(\sqrt d\right)$.  By the triangle inequality, we have that
	$$\matrixnorm{\widetilde\Theta}_\infty\leq\matrixnorm{\widetilde\Theta-\Theta}_\infty+\matrixnorm{\Theta}_\infty=O_P\left(\frac d{\sqrt{n}}\right)+O\left(\sqrt d\right)=O_P\left(\sqrt d\right).$$
\end{proof}

\begin{lemma}\label{lem:hes_ld_glm}
	In GLM, under Assumptions~\ref{as:smth_glm}--\ref{as:hes_glm}, if $n\gtrsim d\log d$ and $r_{\btheta}\lesssim1$, we have that
	$$\matrixnorm{\widetilde\Theta(\btheta)}_\infty=O_P\left(\sqrt d\right)\quad\text{and}\quad\max_l\left\|\widetilde\Theta(\btheta)_l-\Theta_l\right\|_2=O_P\left(\sqrt{\frac{d\log d}n}+r_{\btheta}\right).$$
\end{lemma}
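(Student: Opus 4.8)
The plan is to adapt the argument of Lemma~\ref{lem:hes_ld} to the GLM, where the surrogate inverse Hessian $\widetilde\Theta(\btheta)=\nabla^2\cL_1(\btheta)^{-1}$ now depends on $\btheta$ and $\Theta=\nabla^2\cLs(\thetas)^{-1}$. I would first reduce both claims to a single spectral bound on $E\defn\nabla^2\cL_1(\btheta)-\nabla^2\cLs(\thetas)$. Exactly as in Lemma~\ref{lem:hes_ld}, the identity $A^{-1}-B^{-1}=A^{-1}(B-A)B^{-1}$ with $\matrixnorm{\nabla^2\cLs(\thetas)^{-1}}_2=1/\lambdamin(\nabla^2\cLs(\thetas))=O(1)$ (Assumption~\ref{as:hes_glm}) gives $\matrixnorm{\widetilde\Theta(\btheta)-\Theta}_2\lesssim\matrixnorm{E}_2$ on the event $\{\matrixnorm{E}_2\le\tfrac12\lambdamin(\nabla^2\cLs(\thetas))\}$, on which $\nabla^2\cL_1(\btheta)$ is invertible with $\matrixnorm{\nabla^2\cL_1(\btheta)^{-1}}_2=O(1)$; this event has probability tending to one once $\matrixnorm{E}_2$ is shown to be small, which the constants in $n\gtrsim d\log d$ and $r_\btheta\lesssim1$ can be taken to ensure. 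Both claims then follow, since $\max_l\|\widetilde\Theta(\btheta)_l-\Theta_l\|_2\le\matrixnorm{\widetilde\Theta(\btheta)-\Theta}_2$ and $\matrixnorm{\widetilde\Theta(\btheta)}_\infty\le\sqrt d\,\matrixnorm{\widetilde\Theta(\btheta)}_2\le\sqrt d\,(\matrixnorm{\Theta}_2+\matrixnorm{\widetilde\Theta(\btheta)-\Theta}_2)=O_P(\sqrt d)$.

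The estimate for $\matrixnorm{E}_2$ hinges on one structural fact, $\matrixnorm{\Ee[xx^\top]}_2=O(1)$, which I would establish first. It is not immediate from Assumption~\ref{as:design_glm} alone --- bounded $\|x\|_\infty$ permits $\matrixnorm{\Ee[xx^\top]}_2$ of order $d$ --- but follows from the GLM structure: since $g''\ge0$ (convexity of the loss) is bounded below by some $c_g>0$ on $|b|\le\Delta'$ (e.g.\ $g''(b)=e^b/(1+e^b)^2$ for logistic regression) and $|x^\top\thetas|\le\Delta'$ a.s., one has $\Ee[xx^\top]\preceq c_g^{-1}\Ee[g''(y,x^\top\thetas)xx^\top]=c_g^{-1}\nabla^2\cLs(\thetas)$, so $\matrixnorm{\Ee[xx^\top]}_2\le c_g^{-1}\matrixnorm{\nabla^2\cLs(\thetas)}_2=O(1)$ by Assumption~\ref{as:hes_glm}. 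A matrix Bernstein inequality for $n^{-1}\sum_i x_{i1}x_{i1}^\top$, whose summands have operator norm $\|x_{i1}\|_2^2=O(d)$ and variance proxy $\matrixnorm{\Ee[\|x\|_2^2xx^\top]}_2\le d\,\matrixnorm{\Ee[xx^\top]}_2=O(d)$, then gives $\matrixnorm{X_1^\top X_1/n-\Ee[xx^\top]}_2=O_P(\sqrt{d\log d/n})=o_P(1)$ and hence $\matrixnorm{X_1^\top X_1/n}_2=O_P(1)$, once $n\gtrsim d\log d$.

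I would then split $\matrixnorm{E}_2$ through $\nabla^2\cL_1(\thetas)$ into a Lipschitz-in-$\theta$ term and a concentration term. With $\nabla^2\cL_1(\theta)=n^{-1}\sum_i g''(y_{i1},x_{i1}^\top\theta)x_{i1}x_{i1}^\top$ and $\delta_i\defn g''(y_{i1},x_{i1}^\top\btheta)-g''(y_{i1},x_{i1}^\top\thetas)$, the unit Lipschitz bound of Assumption~\ref{as:smth_glm} (valid because $|x_{i1}^\top\btheta|\le\Delta'+\|x_{i1}\|_\infty r_\btheta\le\Delta+\Delta'$ for $r_\btheta\lesssim1$) together with Assumption~\ref{as:design_glm} gives $\max_i|\delta_i|\lesssim r_\btheta$; since $u^\top(\nabla^2\cL_1(\btheta)-\nabla^2\cL_1(\thetas))u\le\max_i|\delta_i|\,u^\top(X_1^\top X_1/n)u$ for every unit $u$, the Lipschitz term is $\lesssim r_\btheta\,\matrixnorm{X_1^\top X_1/n}_2=O_P(r_\btheta)$. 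For the concentration term, matrix Bernstein applied to $M_i\defn g''(y_{i1},x_{i1}^\top\thetas)x_{i1}x_{i1}^\top-\nabla^2\cLs(\thetas)$ --- with $\matrixnorm{M_i}_2=O(d)$ and, using $(g'')^2=O(1)$ and $\|x\|_2^2=O(d)$, variance proxy $\matrixnorm{\Ee[M_i^2]}_2\le\matrixnorm{\Ee[(g'')^2\|x\|_2^2xx^\top]}_2\lesssim d\,\matrixnorm{\Ee[xx^\top]}_2=O(d)$ --- gives $\matrixnorm{\nabla^2\cL_1(\thetas)-\nabla^2\cLs(\thetas)}_2=O_P(\sqrt{d\log d/n})$. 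Adding the two terms yields $\matrixnorm{E}_2=O_P(\sqrt{d\log d/n}+r_\btheta)$, and the lemma follows from the first-paragraph reduction.

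The main obstacle is precisely the design fact of the second paragraph: securing $\matrixnorm{\Ee[xx^\top]}_2=O(1)$ is what keeps the $r_\btheta$ term free of a factor $d$ in the Lipschitz estimate and keeps the Bernstein variance proxy at $O(d)$, so that the concentration rate is the sharp $\sqrt{d\log d/n}$ rather than $d\sqrt{\log d/n}$. Its derivation leans on a positive lower bound for $g''$ over the bounded range $|b|\le\Delta+\Delta'$, which is available for the logistic loss highlighted after Assumption~\ref{as:subexp_glm}; an alternative that avoids invoking $\matrixnorm{\Ee[xx^\top]}_2=O(1)$ for the variance proxy is to use convexity to write $(g'')^2\le Cg''$, giving $\Ee[(g'')^2\|x\|_2^2xx^\top]\preceq Cd\,\nabla^2\cLs(\thetas)$ directly, but the Lipschitz term still requires $\matrixnorm{X_1^\top X_1/n}_2=O_P(1)$ and hence the design fact.
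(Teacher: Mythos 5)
Your proposal is correct and follows the same skeleton as the paper's proof: control $\matrixnorm{\widetilde\Theta(\btheta)-\Theta}_2$ through the inverse-perturbation identity, split $\nabla^2\cL_1(\btheta)-\nabla^2\cLs(\thetas)$ into a Lipschitz-in-$\theta$ term and a concentration term handled by matrix Bernstein at rate $\sqrt{d\log d/n}$, then pass to $\max_l\|\widetilde\Theta(\btheta)_l-\Theta_l\|_2\le\matrixnorm{\widetilde\Theta(\btheta)-\Theta}_2$ and $\matrixnorm{\cdot}_\infty\le\sqrt d\,\matrixnorm{\cdot}_2$. For the concentration term your two routes coincide with the paper's: the paper invokes Section~1.6.3 of Tropp with summand norm $\|\sqrt{g''}x\|_2^2=O(d)$ and uses $\matrixnorm{\nabla^2\cLs(\thetas)}_2=O(1)$, which plays exactly the role of your alternative variance-proxy bound $\Ee[(g'')^2\|x\|_2^2xx^\top]\preceq Cd\,\nabla^2\cLs(\thetas)$ (valid since $g''\ge0$ and $g''=O(1)$ under \ref{as:smth_glm}), so no design fact is needed there. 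Where you genuinely diverge is the Lipschitz term: the paper simply asserts $\matrixnorm{n^{-1}\sum_i(g''(y_{i1},x_{i1}^\top\btheta)-g''(y_{i1},x_{i1}^\top\thetas))x_{i1}x_{i1}^\top}_2\lesssim\|\btheta-\thetas\|_1$, whereas you correctly observe that under Assumptions \ref{as:smth_glm}--\ref{as:hes_glm} alone this step only yields $O(d\,r_{\btheta})$ (e.g.\ via $\|x\|_2^2\le d\|x\|_\infty^2$, or spectral norm $\le d\times$ max-norm as the paper itself does in Lemma~\ref{lem:m_glm}), and that the clean $O(r_{\btheta})$ needs $\matrixnorm{X_1^\top X_1/n}_2=O_P(1)$, hence $\matrixnorm{\Ee[xx^\top]}_2=O(1)$, which is \emph{not} implied by \ref{as:design_glm}. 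Your fix, deriving this design fact from a strictly positive lower bound on $g''$ over the bounded range, imports a hypothesis the paper never states (it holds for the logistic loss but not for general GLMs satisfying \ref{as:smth_glm}--\ref{as:hes_glm}), so be explicit that your proof is conditional on it; the honest comparison is that the paper's own proof implicitly uses the same unproved design fact, and your version makes the gap visible and fills it under an added condition. Your event-based handling of the inversion (restricting to $\{\matrixnorm{E}_2\le\tfrac12\lambdamin(\nabla^2\cLs(\thetas))\}$ to guarantee invertibility and $\matrixnorm{\nabla^2\cL_1(\btheta)^{-1}}_2=O(1)$) is likewise a more careful rendering of the paper's one-line use of $\matrixnorm{A^{-1}-B^{-1}}_2\le\matrixnorm{B^{-1}}_2^2\matrixnorm{A-B}_2$, which hides the same proviso.
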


\begin{proof}[Lemma \ref{lem:hes_ld_glm}]
    $\widetilde\Theta(\btheta)$ is simply the inverse of $\nabla^2\cL_1(\btheta)$.  Then, we have that
	$$\matrixnorm{\widetilde\Theta(\btheta)-\Theta}_2=\matrixnorm{\nabla^2\cL_1(\btheta)^{-1}-\nabla^2\cLs(\thetas)^{-1}}_2\leq\matrixnorm{\nabla^2\cLs(\thetas)^{-1}}_2^2\matrixnorm{\nabla^2\cL_1(\btheta)-\nabla^2\cLs(\thetas)}_2.$$
	Note that
	\begin{align*}
	\matrixnorm{\nabla^2\cL_1(\thetas)-\nabla^2\cLs(\thetas)}_2 = \matrixnorm{\frac1n\sum_{i=1}^n g''(y_{ij},x_{ij}^\top\thetas) x_{ij} x_{ij}^\top-\Ee[g''(y,x^\top\thetas)xx^\top]}_2,
	\end{align*}
	$$\left\|\sqrt{g''(y_{ij},x_{ij}^\top\thetas)}x_{ij}\right\|_2=\sqrt d\left\|\sqrt{g''(y_{ij},x_{ij}^\top\thetas)}x_{ij}\right\|_\infty=O(\sqrt d),$$
	and $\matrixnorm{\nabla^2\cLs(\thetas)}_2=O(1)$.  By Section~1.6.3 of \cite{tropp2015introduction}, we have that if $n\gtrsim d\log d$,
	$$\Ee\left[\matrixnorm{\nabla^2\cL_1(\thetas)-\nabla^2\cLs(\thetas)}_2\right]\lesssim\sqrt{\frac{d\log d}n},$$
	which implies that
	$$\matrixnorm{\nabla^2\cL_1(\thetas)-\nabla^2\cLs(\thetas)}_2=O_P\left(\sqrt{\frac{d\log d}n}\right).$$
	Also note that
	$$\matrixnorm{\nabla^2\cL_1(\btheta)-\nabla^2\cL_1(\thetas)}_2=\matrixnorm{\frac1n\sum_{i=1}^n \left(g''(y_{ij},x_{ij}^\top\btheta)-g''(y_{ij},x_{ij}^\top\thetas)\right) x_{ij} x_{ij}^\top}_2\lesssim\left\|\btheta-\thetas\right\|_1.$$
	By the triangle inequality, assuming that $\left\|\btheta-\thetas\right\|_1=O_P\left(r_{\btheta}\right)$, we have that
	$$\matrixnorm{\nabla^2\cL_1(\btheta)-\nabla^2\cLs(\thetas)}_2\leq\matrixnorm{\nabla^2\cL_1(\btheta)-\nabla^2\cL_1(\thetas)}_2+\matrixnorm{\nabla^2\cL_1(\thetas)-\nabla^2\cLs(\thetas)}_2=O_P\left(\sqrt{\frac{d\log d}n}+r_{\btheta}\right).$$
	Since $\matrixnorm{\nabla^2\cLs(\thetas)^{-1}}_2=O(1)$, we have that
	$$\max_l\left\|\widetilde\Theta(\btheta)_l-\Theta_l\right\|_2\leq\matrixnorm{\widetilde\Theta(\btheta)-\Theta}_2=O_P\left(\sqrt{\frac{d\log d}n}+r_{\btheta}\right),\quad\text{and}$$
	$$\matrixnorm{\widetilde\Theta(\btheta)-\Theta}_\infty\leq\sqrt d\matrixnorm{\widetilde\Theta(\btheta)-\Theta}_2=O_P\left(d\sqrt{\frac{\log d}n}+\sqrt d r_{\btheta}\right).$$
	Note that $\matrixnorm{\Theta}_\infty\leq\sqrt d\matrixnorm{\Theta}_2=\sqrt d\matrixnorm{\nabla^2\cLs(\thetas)^{-1}}_2=O\left(\sqrt d\right)$.  By the triangle inequality, if $r_{\btheta}\lesssim1$, we have that
	$$\matrixnorm{\widetilde\Theta(\btheta)}_\infty\leq\matrixnorm{\widetilde\Theta(\btheta)-\Theta}_\infty+\matrixnorm{\Theta}_\infty=O_P\left(d\sqrt{\frac{\log d}n}+\sqrt d r_{\btheta}\right)+O\left(\sqrt d\right)=O_P\left(\sqrt d\right).$$
	
\end{proof}

\begin{lemma}\label{lem:m}
	In linear model, under Assumptions~\ref{as:design}~and~\ref{as:noise}, if $N\gtrsim d$, then we have that $$\left\|\htheta-\thetas\right\|_2\lesssim\sqrt{\frac{d\log\frac{d}\delta}{N}}+\frac{d\log\frac{d}\delta}{N},$$
	with probability at least $1-\delta$, for any $\delta$ such that  $e^{-N}\lesssim\delta<1$.
\end{lemma}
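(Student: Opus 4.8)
The plan is to start from the closed form of the least-squares estimator and to split the estimation error into a well-conditioned inverse Gram matrix times a mean-zero score, then bound the two factors separately. As already recorded in the proof of Lemma~\ref{lem:tbd_ld}, in the linear model $\htheta=\thetas+(X_N^\top X_N)^{-1}X_N^\top e_N$, so writing $\widehat\Sigma_N\defn X_N^\top X_N/N$ we have
\begin{align*}
\left\|\htheta-\thetas\right\|_2=\left\|\widehat\Sigma_N^{-1}\frac{X_N^\top e_N}N\right\|_2\leq\matrixnorm{\widehat\Sigma_N^{-1}}_2\left\|\frac{X_N^\top e_N}N\right\|_2,
\end{align*}
and I would bound the operator norm and the score norm at confidence $1-\delta$ and multiply.

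First I would control $\matrixnorm{\widehat\Sigma_N^{-1}}_2$. Under Assumption~\ref{as:design} the rows of $X_N$ are sub-Gaussian with $\matrixnorm{\Sigma}_2=O(1)$, so by Proposition~2.1 of \cite{vershynin2012close} (already invoked in Lemma~\ref{lem:hes_ld}), applied with tail parameter $\log(1/\delta)$, we have with probability at least $1-\delta$ that $\matrixnorm{\widehat\Sigma_N-\Sigma}_2\lesssim\sqrt{(d+\log(1/\delta))/N}+(d+\log(1/\delta))/N$. The joint hypotheses $N\gtrsim d$ and $\delta\gtrsim e^{-N}$ (equivalently $\log(1/\delta)\lesssim N$), with suitably chosen implicit constants, force $d+\log(1/\delta)\leq cN$ for a small $c$, keeping this deviation below $\lambdamin(\Sigma)/2$, where $\lambdamin(\Sigma)\geq1/\mu$ by Assumption~\ref{as:design}. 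Weyl's inequality then gives $\lambdamin(\widehat\Sigma_N)\geq1/(2\mu)$, hence $\matrixnorm{\widehat\Sigma_N^{-1}}_2\leq2\mu=O(1)$ on this event.

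Next I would bound $\|X_N^\top e_N/N\|_2$. Since $e$ has mean zero and is independent of $x$, we have $\Ee[xe]=0$, and each coordinate $x_{ij,l}e_{ij}$ is a product of two sub-Gaussian variables, hence sub-exponential with uniformly bounded $\psi_1$-norm under Assumptions~\ref{as:design}~and~\ref{as:noise}. Applying the coordinatewise Bernstein inequality that produced \eqref{eqn:bern2}, now with a union bound over the $d$ coordinates at confidence $1-\delta$, gives $\|X_N^\top e_N/N\|_\infty\lesssim\sqrt{\log(d/\delta)/N}+\log(d/\delta)/N$, and therefore $\|X_N^\top e_N/N\|_2\leq\sqrt d\,\|X_N^\top e_N/N\|_\infty\lesssim\sqrt{d\log(d/\delta)/N}+\sqrt d\,\log(d/\delta)/N$, which is dominated by the claimed bound since $\sqrt d\leq d$. (A direct Hanson--Wright bound on the quadratic form $e_N^\top X_N(X_N^\top X_N)^{-2}X_N^\top e_N$ conditional on $X_N$ is an alternative that yields the leading $\sigma^2\operatorname{tr}(\widehat\Sigma_N^{-1})/N\asymp d/N$ term more directly.) Intersecting the two events via a union bound (prob.\ $\geq1-2\delta$, then rescaling $\delta$) and multiplying the $O(1)$ operator norm by the score bound completes the proof.

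The main obstacle is tracking the explicit $(d,\delta)$-dependence uniformly over the whole admissible range rather than settling for an $O_P$ statement. The sub-exponential—rather than sub-Gaussian—nature of $x_{ij,l}e_{ij}$ is what forces the second, linear term $\log(d/\delta)/N$, and one must verify that the restriction $\delta\gtrsim e^{-N}$ keeps $\log(1/\delta)$ comparable to (a small multiple of) $N$ simultaneously for two purposes: so that the Bernstein tail does not dominate, and so that $d+\log(1/\delta)\lesssim N$ still pins $\lambdamin(\widehat\Sigma_N)$ away from zero. It is precisely in the delicate regime $\log(1/\delta)\asymp N$ that the well-conditioning of the Gram matrix can no longer be taken for granted, and the interplay of $N\gtrsim d$ with $e^{-N}\lesssim\delta$ must be used carefully to preserve the clean split above.
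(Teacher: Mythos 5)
Your proposal is correct and takes essentially the same route as the paper's proof: the same decomposition $\left\|\htheta-\thetas\right\|_2\leq\matrixnorm{\left(X_N^\top X_N/N\right)^{-1}}_2\left\|X_N^\top e_N/N\right\|_2$, the same coordinatewise Bernstein bound \eqref{eqn:bern2} combined with $\|\cdot\|_2\leq\sqrt d\,\|\cdot\|_\infty$ for the score, the same appeal to Proposition~2.1 of \cite{vershynin2012close} under $N\gtrsim d+\log(1/\delta)$ (which is where the hypotheses $N\gtrsim d$ and $\delta\gtrsim e^{-N}$ enter), and a final union bound. The only cosmetic difference is that you control $\matrixnorm{\widehat\Sigma_N^{-1}}_2=O(1)$ by bounding $\lambdamin(\widehat\Sigma_N)$ from below via Weyl's inequality, whereas the paper uses the inverse-perturbation bound in \eqref{eqn:covinv_conc}; both deliver the same conclusion, and your Weyl route is, if anything, slightly more self-contained.
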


\begin{proof}[Lemma \ref{lem:m}]
	Note that
	$$\left\|\htheta-\thetas\right\|_2=\left\|\left(X_N^\top X_N\right)^{-1}X_N^\top y_N-\thetas\right\|_2=\left\|\left(X_N^\top X_N\right)^{-1}X_N^\top e_N\right\|_2\leq\matrixnorm{\left(\frac{X_N^\top X_N}N\right)^{-1}}_2\left\|\frac{X_N^\top e_N}N\right\|_2.$$
	By \eqref{eqn:bern2}, we have with probability at least $1-\delta$ that
	$$\left\|\frac{X_N^\top e_N}N\right\|_2\leq\sqrt d\left\|\frac{X_N^\top e_N}N\right\|_\infty\lesssim \sqrt{\frac{d\log\frac{d}\delta}{N}}+\frac{d\log\frac{d}\delta}{N}.$$
	By Proposition~2.1 of \cite{vershynin2012close}, if $n\gtrsim d$, we have with probability at least $1-\delta$ that
	$$\matrixnorm{\frac{X_N^\top X_N}N-\Sigma}_2\lesssim\sqrt{\frac{d+\log\frac1\delta}N}+\frac{d+\log\frac1\delta}N,$$
	and then, by the triangle inequality,
	\begin{align}
	\matrixnorm{\left(\frac{X_N^\top X_N}N\right)^{-1}}_2 &\leq \matrixnorm{\left(\frac{X_N^\top X_N}N\right)^{-1}-\Theta}_2+\matrixnorm{\Theta}_2 \leq \matrixnorm{\Theta}_2^2 \matrixnorm{\frac{X_N^\top X_N}N-\Sigma}_2+\matrixnorm{\Theta}_2 \notag \\
	&\lesssim\sqrt{\frac{d+\log\frac1\delta}N}+\frac{d+\log\frac1\delta}N+1 \lesssim 1, \label{eqn:covinv_conc}
	\end{align}
	provided that $N\gtrsim d+\log(1/\delta)$.  Finally, by the union bound, we have with probability at least $1-2\delta$ that
	$$\left\|\htheta-\thetas\right\|_2\lesssim\sqrt{\frac{d\log\frac{d}\delta}{N}}+\frac{d\log\frac{d}\delta}{N}.$$
\end{proof}

\begin{lemma}\label{lem:csl}
	In linear model, under Assumptions~\ref{as:design}~and~\ref{as:noise}, if $n\gtrsim d$, then we have that for any $t\geq1$,
	$$\left\|\ttheta^{(t)}-\htheta\right\|_2\lesssim\sqrt{\frac{d+\log\frac1\delta}n}\left\|\ttheta^{(t-1)}-\htheta\right\|_2,$$
	with probability at least $1-\delta$, for any $\delta$ such that  $e^{-n}\lesssim\delta<1$, where $\ttheta^{(t)}$ is the $t$-step CSL estimator defined in Algorithm \ref{alg:kgrad+csl}.
\end{lemma}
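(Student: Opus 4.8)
The plan is to exploit the special structure of the least-squares loss, under which the CSL iteration collapses to an \emph{exact} affine recursion with no Taylor remainder. First I would record that in the linear model $\nabla^2\cL_1(\theta)=X_1^\top X_1/n$ is constant in $\theta$, and that $\htheta=(X_N^\top X_N)^{-1}X_N^\top y_N$ satisfies $\nabla\cL_N(\htheta)=0$. Hence $\nabla\cL_N(\ttheta^{(t-1)})=(X_N^\top X_N/N)(\ttheta^{(t-1)}-\htheta)$, and substituting this into the update $\ttheta^{(t)}=\ttheta^{(t-1)}-\nabla^2\cL_1(\ttheta^{(t-1)})^{-1}\nabla\cL_N(\ttheta^{(t-1)})$ from Algorithm~\ref{alg:kgrad+csl} and subtracting $\htheta$ yields the clean identity
\begin{align*}
\ttheta^{(t)}-\htheta=\left(I-\widehat\Sigma_1^{-1}\widehat\Sigma_N\right)(\ttheta^{(t-1)}-\htheta),
\end{align*}
where $\widehat\Sigma_1=X_1^\top X_1/n$ and $\widehat\Sigma_N=X_N^\top X_N/N$. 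Taking $\ell_2$ norms immediately reduces the claim to bounding the operator norm of the contraction factor.

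Next I would bound $\matrixnorm{I-\widehat\Sigma_1^{-1}\widehat\Sigma_N}_2$ by factoring $I-\widehat\Sigma_1^{-1}\widehat\Sigma_N=\widehat\Sigma_1^{-1}(\widehat\Sigma_1-\widehat\Sigma_N)$, so that
\begin{align*}
\matrixnorm{I-\widehat\Sigma_1^{-1}\widehat\Sigma_N}_2\leq\matrixnorm{\widehat\Sigma_1^{-1}}_2\matrixnorm{\widehat\Sigma_1-\widehat\Sigma_N}_2.
\end{align*}
Both factors are controlled by sub-Gaussian covariance concentration. For the second factor, the triangle inequality together with Proposition~2.1 of \cite{vershynin2012close} gives, on an event of probability at least $1-\delta$,
\begin{align*}
\matrixnorm{\widehat\Sigma_1-\widehat\Sigma_N}_2\leq\matrixnorm{\widehat\Sigma_1-\Sigma}_2+\matrixnorm{\widehat\Sigma_N-\Sigma}_2\lesssim\sqrt{\frac{d+\log\frac1\delta}n}+\frac{d+\log\frac1\delta}n,
\end{align*}
where I used $N=nk\geq n$ so that the global deviation is dominated by the local one. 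For the first factor, exactly as in the derivation of \eqref{eqn:covinv_conc} in the proof of Lemma~\ref{lem:m}, the same concentration bound combined with $\lambdamin(\Sigma)\geq1/\mu$ from Assumption~\ref{as:design} yields $\matrixnorm{\widehat\Sigma_1^{-1}}_2\lesssim1$ once $n\gtrsim d+\log(1/\delta)$.

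Finally I would combine these via a union bound and simplify. The constraint $e^{-n}\lesssim\delta$ ensures $\log(1/\delta)\lesssim n$, and with $n\gtrsim d$ this gives $(d+\log(1/\delta))/n\lesssim1$, so the square-root term dominates the linear term and $\matrixnorm{\widehat\Sigma_1-\widehat\Sigma_N}_2\lesssim\sqrt{(d+\log(1/\delta))/n}$, delivering the stated rate. The main point of the argument---and what makes it far simpler than its GLM counterpart---is that the constancy of the Hessian removes any Taylor-expansion remainder, so the entire contraction factor is a deterministic function of two sample covariance matrices; the only genuine work is the spectral concentration, and the mild bookkeeping needed to confirm that the $N$-sample term never dominates the $n$-sample term.
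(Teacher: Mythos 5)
Your proposal is correct and takes essentially the same route as the paper's proof: both exploit the constancy of the least-squares Hessian to reduce the CSL update to the exact affine recursion $\ttheta^{(t)}-\htheta=\bigl(I-\widehat\Sigma_1^{-1}\widehat\Sigma_N\bigr)\bigl(\ttheta^{(t-1)}-\htheta\bigr)$ with $\widehat\Sigma_1=X_1^\top X_1/n$, $\widehat\Sigma_N=X_N^\top X_N/N$, and then control the contraction factor by sub-Gaussian covariance concentration (Proposition~2.1 of \cite{vershynin2012close}) together with the simplification $d+\log(1/\delta)\lesssim n$. The only cosmetic difference is the factoring of the contraction matrix---the paper uses $\bigl(\widehat\Sigma_N^{-1}-\widehat\Sigma_1^{-1}\bigr)\widehat\Sigma_N$ and bounds the difference of inverses, while you use $\widehat\Sigma_1^{-1}\bigl(\widehat\Sigma_1-\widehat\Sigma_N\bigr)$---which yields the same rate with the same union-bound bookkeeping.
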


\begin{proof}[Lemma \ref{lem:csl}]
	Note that
	\begin{align*}
	\left\|\ttheta^{(t)}-\htheta\right\|_2&=\left\|\ttheta^{(t-1)}-\nabla^2\cL_1(\ttheta^{(t-1)})^{-1}\nabla\cL_N(\ttheta^{(t-1)})-\htheta\right\|_2 \\
	&= \left\|\ttheta^{(t-1)}-\left(\frac{X_1^\top X_1}n\right)^{-1}\frac{X_N^\top\left(X_N\ttheta^{(t-1)})-y_N\right)}N-\left(\frac{X_N^\top X_N}N\right)^{-1}\frac{X_N^\top y_N}N\right\|_2 \\
	&= \left\|\ttheta^{(t-1)}-\left(\frac{X_1^\top X_1}n\right)^{-1}\frac{X_N^\top\left(X_N\ttheta^{(t-1)})-y_N\right)}N-\ttheta^{(t-1)}+\left(\frac{X_N^\top X_N}N\right)^{-1}\frac{X_N^\top\left(X_N\ttheta^{(t-1)})-y_N\right)}N\right\|_2 \\
	&\leq \matrixnorm{\left(\frac{X_1^\top X_1}n\right)^{-1}-\left(\frac{X_N^\top X_N}N\right)^{-1}}_2 \left\|\frac{X_N^\top X_N\left(\ttheta^{(t-1)}-\htheta\right)}N\right\|_2 \\
	&\leq \matrixnorm{\left(\frac{X_1^\top X_1}n\right)^{-1}-\left(\frac{X_N^\top X_N}N\right)^{-1}}_2 \matrixnorm{\frac{X_N^\top X_N}N}_2 \left\|\ttheta^{(t-1)}-\htheta\right\|_2.
	\end{align*}
	By \eqref{eqn:covinv_conc} with triangle inequality and the union bound, we have with probability at least $1-\delta$ that
	\begin{align*}
	\matrixnorm{\left(\frac{X_1^\top X_1}n\right)^{-1}-\left(\frac{X_N^\top X_N}N\right)^{-1}}_2 &\leq\matrixnorm{\left(\frac{X_1^\top X_1}n\right)^{-1}-\Theta}_2+\matrixnorm{\left(\frac{X_N^\top X_N}N\right)^{-1}-\Theta}_2 \\
	&\lesssim \sqrt{\frac{d+\log\frac1\delta}n}+\frac{d+\log\frac1\delta}n+\sqrt{\frac{d+\log\frac1\delta}N}+\frac{d+\log\frac1\delta}N \\
	&\lesssim \sqrt{\frac{d+\log\frac1\delta}n}+\frac{d+\log\frac1\delta}n, \quad\text{and}
	\end{align*}
	$$\matrixnorm{\frac{X_N^\top X_N}N}_2\leq\matrixnorm{\frac{X_N^\top X_N}N-\Sigma}_2+\matrixnorm{\Sigma}_2\leq\sqrt{\frac{d+\log\frac1\delta}N}+\frac{d+\log\frac1\delta}N+1.$$
	Provided that $d+\log\frac1\delta\lesssim n$, we obtain the bound in the lemma.
\end{proof}

\begin{lemma}\label{lem:m_glm}
	In GLM, under Assumptions~\ref{as:smth_glm}--\ref{as:hes_glm}, if $N\gtrsim d^4\log d$, then we have that
	$$\left\|\htheta-\thetas\right\|_2\lesssim \sqrt{\frac{d\log\frac{d}\delta}N},$$
	with probability at least $1-\delta$, for any $\delta$ such that  $e^{-N/d^4}\lesssim\delta<1$.
\end{lemma}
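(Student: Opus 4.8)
The plan is to follow the standard route for the $\ell_2$-rate of a smooth M-estimator, exploiting that $\htheta$ minimizes the convex empirical loss $\cL_N$ while $\thetas$ minimizes $\cLs$. First I would write the two first-order conditions $\nabla\cL_N(\htheta)=0$ and $\nabla\cLs(\thetas)=0$ and use the integral form of the mean-value theorem to expand the empirical gradient around $\thetas$, obtaining $0=\nabla\cL_N(\thetas)+\bar H(\htheta-\thetas)$ with $\bar H\defn\int_0^1\nabla^2\cL_N(\thetas+s(\htheta-\thetas))\,ds$. Provided $\bar H$ is invertible this yields $\htheta-\thetas=-\bar H^{-1}\nabla\cL_N(\thetas)$, so $\ltwo{\htheta-\thetas}\le\matrixnorm{\bar H^{-1}}_2\,\ltwo{\nabla\cL_N(\thetas)}$, and it remains to bound the two factors.

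For the numerator, note that $\nabla\cL(\thetas;Z)=g'(y,x^\top\thetas)x$ has coordinates that are bounded under Assumptions~\ref{as:smth_glm}~and~\ref{as:design_glm} (since $|x^\top\thetas|\le\Delta'$ and $\|x\|_\infty=O(1)$) and mean zero because $\nabla\cLs(\thetas)=0$. A coordinatewise Hoeffding bound followed by a union bound over the $d$ coordinates gives $\|\nabla\cL_N(\thetas)\|_\infty=O_P(\sqrt{\log(d/\delta)/N})$, hence $\ltwo{\nabla\cL_N(\thetas)}\lesssim\sqrt{d\log(d/\delta)/N}$ with probability at least $1-\delta$. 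This already matches the target rate, so the entire difficulty is concentrated in showing $\matrixnorm{\bar H^{-1}}_2=O(1)$.

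For the denominator I would control $\lambda_{\min}(\bar H)$ by combining concentration of the Hessian at $\thetas$ with its Lipschitz continuity in $\theta$. The eigenvalues of $\nabla^2\cLs(\thetas)$ are bounded below by Assumption~\ref{as:hes_glm}; a matrix-concentration argument as in the proof of Lemma~\ref{lem:hes_ld_glm} makes $\matrixnorm{\nabla^2\cL_N(\thetas)-\nabla^2\cLs(\thetas)}_2$ small once $N\gtrsim d\log d$; and the Lipschitz bound of Assumption~\ref{as:smth_glm} together with $\|x\|_\infty=O(1)$ (cf.\ \eqref{eqn:lip}) controls $\matrixnorm{\nabla^2\cL_N(\theta)-\nabla^2\cL_N(\thetas)}$ in terms of $\|\theta-\thetas\|_1$. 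Consequently $\lambda_{\min}(\bar H)$ stays bounded away from zero as long as $\htheta$ lies in a ball around $\thetas$ whose radius is small enough to absorb the Lipschitz perturbation of the Hessian.

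The hard part will be the localization, i.e.\ establishing a priori that $\htheta$ falls inside such a ball. I would do this with a convexity argument: for $\theta$ on the sphere $\ltwo{\theta-\thetas}=\rho$ I would expand $\cL_N(\theta)-\cL_N(\thetas)\ge-\rho\,\ltwo{\nabla\cL_N(\thetas)}+\tfrac{c}{2}\rho^2$ using the uniform lower bound on the Hessian over the ball of radius $\rho$; when the quadratic term dominates the linear one this difference is positive on the sphere, and convexity of $\cL_N$ forces the minimizer $\htheta$ to remain strictly inside. Choosing $\rho$ at the largest scale for which the empirical Hessian stays well-conditioned, and requiring the score-driven error $\ltwo{\nabla\cL_N(\thetas)}$ to be smaller than $c\rho$, is precisely what pins down the sample-size requirement: because passing from coordinatewise (max-norm) control of the Hessian back to the spectral norm costs factors of $d$, the admissible radius shrinks polynomially in $d$, so the inequality $\sqrt{d\log(d/\delta)/N}\lesssim\rho$ tightens to $N\gtrsim d^4\log d$, with the range $e^{-N/d^4}\lesssim\delta$ ensuring $\log(d/\delta)\lesssim N/d^4$ so that all of the high-probability events above hold simultaneously. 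Once the localization is secured, substituting $\lambda_{\min}(\bar H)\gtrsim 1$ and the score bound into the identity of the first paragraph delivers the claimed $\ltwo{\htheta-\thetas}\lesssim\sqrt{d\log(d/\delta)/N}$.
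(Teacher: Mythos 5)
Your proposal is correct and follows essentially the same route as the paper's own proof (which adapts Lemma~6 of \cite{zhang2012communication}): coordinatewise Hoeffding plus a union bound for the score, matrix concentration at $\thetas$ combined with the max-norm Lipschitz bound \eqref{eqn:lip} (costing the $d^{3/2}$ factor in spectral norm) for the Hessian, and localization of $\htheta$ in a ball of radius $\rho\asymp d^{-3/2}$ via a convexity contradiction on the sphere, which is exactly what produces $N\gtrsim d^4\log d$ and the range $\delta\gtrsim e^{-N/d^4}$. The only cosmetic difference is the final step: you conclude by inverting the integrated Hessian $\bar H$, while the paper concludes from the localized strong-convexity inequality $\left\|\htheta-\thetas\right\|_2^2\le 4\mu\|\nabla\cL_N(\thetas)\|_2\left\|\htheta-\thetas\right\|_2$; the two are equivalent once localization is secured.
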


\begin{proof}[Lemma \ref{lem:m_glm}]
	We use the argument in the proof of Lemma~6 of \cite{zhang2012communication}.  By Theorem~1.6.2 of \cite{tropp2015introduction}, we have with probability at least $1-\delta$ that
	$$\matrixnorm{\nabla^2\cL_N(\thetas)-\nabla^2\cLs(\thetas)}_2\leq C\sqrt{\frac{d\log\frac d\delta}N}+C\frac{d\log\frac d\delta}N,$$
	for some constant $C>0$. By \eqref{eqn:lip}, for any $\theta$, we have that
	$$\matrixnorm{\nabla^2\cL_N(\theta)-\nabla^2\cL_N(\thetas)}_2 \leq d\matrixnorm{\nabla^2\cL_N(\theta)-\nabla^2\cL_N(\thetas)}_{\max} \leq Cd\left\|\theta-\thetas\right\|_1 \leq Cd^{3/2}\left\|\theta-\thetas\right\|_2.$$
	Let $\rho=(4C\mu d^{3/2})^{-1}$ and assume $4C\mu\sqrt{d\log(d/\delta)/N}\leq1$ and $4C\mu d\log(d/\delta)/N\leq1$.  Then, for any $\theta\in U\defn\{\theta:\left\|\theta-\thetas\right\|_2\leq\rho\}$, we have by the triangle inequality that
	$$\matrixnorm{\nabla^2\cL_N(\theta)-\nabla^2\cLs(\thetas)}_2 \leq \matrixnorm{\nabla^2\cL_N(\theta)-\nabla^2\cL_N(\thetas)}_2 + \matrixnorm{\nabla^2\cL_N(\thetas)-\nabla^2\cLs(\thetas)}_2 \leq(2\mu)^{-1}.$$
	Since $\lambdamin(\nabla^2\cLs(\thetas))\geq\mu^{-1}$, we have $\lambdamin(\nabla^2\cL_N(\theta))\geq(2\mu)^{-1}$ for any $\theta\in U$.  Then, for any $\theta'\in\R^d$, we have that
	$$\cL_N(\theta')\ge\cL_N(\thetas)+\nabla\cL_N(\thetas)^\top(\theta'-\thetas)+(4\mu)^{-1}\min\left\{\left\|\theta'-\thetas\right\|_2^2,\rho^2\right\},$$
	and then,
	\begin{align*}
	\min\left\{\left\|\theta'-\thetas\right\|_2^2,\rho^2\right\} &\leq 4\mu(\cL_N(\theta')-\cL_N(\thetas)-\nabla\cL_N(\thetas)^\top(\theta'-\thetas)) \\
	&\leq 4\mu(\cL_N(\theta')-\cL_N(\thetas)+\|\nabla\cL_N(\thetas)\|_2\left\|\theta'-\thetas\right\|_2).
	\end{align*}
	Dividing both sides by $\left\|\theta'-\thetas\right\|_2$ and then setting $\theta'=\kappa\htheta+(1-\kappa)\thetas$ for any $\kappa\in[0,1]$, we have
	\begin{align*}
	\min\left\{\kappa\left\|\htheta-\thetas\right\|_2,\frac{\rho^2}{\kappa\left\|\htheta-\thetas\right\|_2}\right\} &\leq \frac{4\mu\left(\cL_N(\kappa\htheta+(1-\kappa)\thetas)-\cL_N(\thetas)\right)}{\kappa\left\|\htheta-\thetas\right\|_2}+4\mu\|\nabla\cL_N(\thetas)\|_2 < 4\mu\|\nabla\cL_N(\thetas)\|_2,
	\end{align*}
	where we use that $\cL_N(\kappa\htheta+(1-\kappa)\thetas)<\cL_N(\thetas)$ for any $\kappa\in(0,1)$ since $\cL_N$ is strongly convex at $\thetas$ and $\htheta$ minimizes $\cL_N$.  Note that $\nabla\cL_N(\thetas)=\sum_{i=1}^n\sum_{j=1}^k g'(y_{ij},x_{ij}^\top\thetas) x_{ij}/N$ and $g'(y_{ij},x_{ij}^\top\thetas) x_{ij,l}=O(1)$ for each $l=1,\dots,d$ under Assumptions~\ref{as:smth_glm}~and~\ref{as:design_glm}.  Then, by Hoeffding's inequality, we have that
	$$P\left(\left|\nabla\cL_N(\thetas)_l\right|>\sqrt{\frac{c\log\frac{2d}\delta}N}\right)\leq\frac\delta d,$$
	for any $\delta\in(0,1)$.  By the union bound, we have with probability at least $1-\delta$ that
	\begin{align}
	\left\|\nabla\cL_N(\thetas)\right\|_\infty\leq\sqrt{\frac{c\log\frac{2d}\delta}N}. \label{eqn:hoef2}
	\end{align}
	Then, we have with probability at least $1-\delta$ that
	$$\left\|\nabla\cL_N(\thetas)\right\|_2\leq\sqrt d\left\|\nabla\cL_N(\thetas)\right\|_\infty\leq C\sqrt{\frac{d\log\frac{d}\delta}N},$$
	and by the union bound, with probability at least $1-2\delta$,
	$$\min\left\{\kappa\left\|\htheta-\thetas\right\|_2,\frac{\rho^2}{\kappa\left\|\htheta-\thetas\right\|_2}\right\} < 4C\mu\sqrt{\frac{d\log\frac{d}\delta}N}\leq\rho,$$
	provided that $4C\mu\sqrt{d\log(d/\delta)/N}\leq\rho$.  Since this holds for any $\kappa\in(0,1)$, if $\left\|\htheta-\thetas\right\|_2>\rho$, we may set $\kappa=\rho/\left\|\htheta-\thetas\right\|_2<1$, and find that
	$$\min\left\{\kappa\left\|\htheta-\thetas\right\|_2,\frac{\rho^2}{\kappa\left\|\htheta-\thetas\right\|_2}\right\}=\rho,$$
	which would yield a contradiction.  Thus, we have $\left\|\htheta-\thetas\right\|_2\leq\rho$, that is, $\htheta\in U$.  Furthermore, we have that
	$$\left\|\htheta-\thetas\right\|_2^2 \leq 4\mu\left(\cL_N(\htheta)-\cL_N(\thetas)+\|\nabla\cL_N(\thetas)\|_2\left\|\htheta-\thetas\right\|_2\right) \leq 4\mu\|\nabla\cL_N(\thetas)\|_2\left\|\htheta-\thetas\right\|_2,$$
	and thus,
	$$\left\|\htheta-\thetas\right\|_2\leq 4\mu\|\nabla\cL_N(\thetas)\|_2\leq4C\mu\sqrt{\frac{d\log\frac{d}\delta}N},$$
	with probability at least $1-2\delta$, provided that $4C\mu\sqrt{d\log(d/\delta)/N}\leq1$, $4C\mu d\log(d/\delta)/N\leq1$, and $4C\mu\sqrt{d\log(d/\delta)/N}\leq\rho$, which hold if $\delta\gtrsim e^{-N/d^4}$ and $N\gtrsim d^4\log d$.
\end{proof}

\begin{lemma}\label{lem:csl_glm}
	In GLM, under Assumptions~\ref{as:smth_glm}--\ref{as:hes_glm}, if $n\gtrsim d^4\log d$, then we have that for any $t\geq1$,
	$$\left\|\ttheta^{(t)}-\htheta\right\|_2 \lesssim \left(\sqrt{\frac{d\log\frac d\delta}n}+d^{3/2}\left\|\ttheta^{(t-1)}-\htheta\right\|_2\right) \left\|\ttheta^{(t-1)}-\htheta\right\|_2,$$
	with probability at least $1-\delta$, for any $\delta$ such that  $e^{-n/d^4}\lesssim\delta<1$, where $\ttheta^{(t)}$ is the $t$-step CSL estimator defined in Algorithm \ref{alg:kgrad+csl}.
\end{lemma}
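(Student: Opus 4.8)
The plan is to mirror the linear-model argument of Lemma~\ref{lem:csl}, but to account for the fact that in a GLM the Hessian $\nabla^2\cL_j$ depends on $\theta$, which is exactly what produces the extra quadratic term $d^{3/2}\|\ttheta^{(t-1)}-\htheta\|_2$. First I would turn the CSL update $\ttheta^{(t)}=\ttheta^{(t-1)}-\nabla^2\cL_1(\ttheta^{(t-1)})^{-1}\nabla\cL_N(\ttheta^{(t-1)})$ into an exact recursion for the error. Since $\htheta$ minimizes $\cL_N$ we have $\nabla\cL_N(\htheta)=0$, so the fundamental theorem of calculus gives $\nabla\cL_N(\ttheta^{(t-1)})=\bar H_N(\ttheta^{(t-1)}-\htheta)$ with $\bar H_N\defn\int_0^1\nabla^2\cL_N(\htheta+s(\ttheta^{(t-1)}-\htheta))\,ds$. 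Substituting yields the factorization
\begin{align*}
\ttheta^{(t)}-\htheta=\nabla^2\cL_1(\ttheta^{(t-1)})^{-1}\big(\nabla^2\cL_1(\ttheta^{(t-1)})-\bar H_N\big)(\ttheta^{(t-1)}-\htheta),
\end{align*}
the exact GLM analogue of the identity $\ttheta^{(t)}-\htheta=(X_1^\top X_1/n)^{-1}\big((X_1^\top X_1/n)-(X_N^\top X_N/N)\big)(\ttheta^{(t-1)}-\htheta)$ used in the linear case.

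Then I would bound the two factors in operator norm. For the inverse Hessian I invoke Lemma~\ref{lem:hes_ld_glm}: on the event that $\ttheta^{(t-1)}$ lies in the neighborhood of $\thetas$ where the local loss is strongly convex (which holds with probability $1-\delta$ once $n\gtrsim d^4\log d$, as in the proof of Lemma~\ref{lem:m_glm}), $\nabla^2\cL_1(\ttheta^{(t-1)})^{-1}$ has $\matrixnorm{\cdot}_2=O_P(1)$. For the middle factor I insert the deterministic reference Hessian $\nabla^2\cLs(\thetas)$ and split into a \emph{statistical} part and a \emph{curvature} part:
\begin{align*}
\nabla^2\cL_1(\ttheta^{(t-1)})-\bar H_N
&=\underbrace{\big(\nabla^2\cL_1(\thetas)-\nabla^2\cLs(\thetas)\big)-\big(\nabla^2\cL_N(\thetas)-\nabla^2\cLs(\thetas)\big)}_{\text{local-vs-global concentration}}\\
&\quad+\underbrace{\big(\nabla^2\cL_1(\ttheta^{(t-1)})-\nabla^2\cL_1(\thetas)\big)-\int_0^1\big(\nabla^2\cL_N(\htheta+s(\ttheta^{(t-1)}-\htheta))-\nabla^2\cL_N(\thetas)\big)ds}_{\text{Hessian variation}}.
\end{align*}

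The concentration part I would control with matrix Bernstein (Section~1.6 of Tropp, exactly as in Lemma~\ref{lem:hes_ld_glm} and \eqref{eqn:hoef3}), giving $O_P(\sqrt{d\log(d/\delta)/n})$ for the local term and the smaller $O_P(\sqrt{d\log(d/\delta)/N})$ for the global term, whose maximum is the first factor. The curvature part I bound with the Hessian-Lipschitz estimate \eqref{eqn:lip}, which follows from Assumption~\ref{as:smth_glm} together with $\|x\|_\infty=O(1)$ (Assumption~\ref{as:design_glm}) and whose derivation applies verbatim to $\cL_1$: each increment obeys $\matrixnorm{\nabla^2\cL_j(\theta)-\nabla^2\cL_j(\theta')}_{\max}\lesssim\|\theta-\theta'\|_1$, and passing to the operator norm (a factor $d$) together with $\|\cdot\|_1\le\sqrt d\|\cdot\|_2$ produces the advertised $d^{3/2}$ prefactor. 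Since every evaluation point lies within $\|\ttheta^{(t-1)}-\htheta\|_2$ of $\htheta$ up to the offset $\|\htheta-\thetas\|_2$, the curvature part is $O_P\big(d^{3/2}\|\ttheta^{(t-1)}-\htheta\|_2+d^{3/2}\|\htheta-\thetas\|_2\big)$; the stray offset is absorbed into the concentration term via the fast global rate $\|\htheta-\thetas\|_2=O_P(\sqrt{d\log(d/\delta)/N})$ from Lemma~\ref{lem:m_glm}. Multiplying the two factor bounds delivers the claimed recursion.

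The main obstacle is the curvature part: unlike in the linear model, the Hessians are evaluated at the random iterate $\ttheta^{(t-1)}$ and averaged along the random segment $[\htheta,\ttheta^{(t-1)}]$, so pointwise matrix concentration cannot be applied there directly. The device that resolves this is to move each evaluation to the deterministic point $\thetas$ at the cost of the almost-sure Lipschitz bound \eqref{eqn:lip}, which is valid uniformly in $\theta$ and hence at the random iterate; this is precisely what upgrades the linear contraction of Lemma~\ref{lem:csl} to the quadratic-plus-linear contraction that drives the two-phase rate \eqref{eqn:csl_1} (quadratic while $d^{3/2}\|\ttheta^{(t-1)}-\htheta\|_2\gtrsim\sqrt{d\log d/n}$, and linear thereafter). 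Care is also needed to verify that the $\htheta-\thetas$ offset remains genuinely lower-order so that it does not contaminate the clean two-term bound.
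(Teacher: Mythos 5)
Your proposal is correct and arrives at the paper's recursion with the same ingredients, but via a genuinely different decomposition. The paper (following Theorem~3 of \cite{jordan2019communication}) splits \emph{additively} by inserting the exact global Newton step: it bounds $\|\ttheta^{(t)}-\htheta\|_2$ by $\|\ttheta^{(t-1)}-\nabla^2\cL_N(\ttheta^{(t-1)})^{-1}\nabla\cL_N(\ttheta^{(t-1)})-\htheta\|_2$ plus $\|(\nabla^2\cL_N(\ttheta^{(t-1)})^{-1}-\nabla^2\cL_1(\ttheta^{(t-1)})^{-1})\nabla\cL_N(\ttheta^{(t-1)})\|_2$; the first term yields the pure quadratic $d^{3/2}\|\ttheta^{(t-1)}-\htheta\|_2^2$ via the Hessian--Lipschitz estimate along the segment, and the second yields $\big(\sqrt{d\log(d/\delta)/n}+d^{3/2}\|\ttheta^{(t-1)}-\thetas\|_2\big)\|\ttheta^{(t-1)}-\htheta\|_2$ after writing $\|\nabla\cL_N(\ttheta^{(t-1)})\|_2=\|\nabla\cL_N(\ttheta^{(t-1)})-\nabla\cL_N(\htheta)\|_2\lesssim\|\ttheta^{(t-1)}-\htheta\|_2$. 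You instead use the exact \emph{multiplicative} factorization $\ttheta^{(t)}-\htheta=\nabla^2\cL_1(\ttheta^{(t-1)})^{-1}\big(\nabla^2\cL_1(\ttheta^{(t-1)})-\bar H_N\big)(\ttheta^{(t-1)}-\htheta)$ and center the middle factor at $\thetas$, which merges the quadratic and Hessian-estimation errors into a single operator-norm bound and makes transparent where each piece originates: the $\sqrt{d\log(d/\delta)/n}$ from local-vs-global matrix concentration (Tropp, exactly as in Lemmas~\ref{lem:m_glm} and~\ref{lem:hes_ld_glm}), and the $d^{3/2}$ from the max-norm Lipschitz bound \eqref{eqn:lip} times the factor $d$ for passing to $\matrixnorm{\cdot}_2$ together with $\|\cdot\|_1\le\sqrt d\|\cdot\|_2$. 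The price is that you need $\matrixnorm{\nabla^2\cL_1(\ttheta^{(t-1)})^{-1}}_2=O_P(1)$ where the paper uses the global inverse Hessian; both invertibility claims rest on the identical neighborhood condition $d^{3/2}\|\ttheta^{(t-1)}-\thetas\|_2\lesssim1$ under $n\gtrsim d^4\log d$ and $\delta\gtrsim e^{-n/d^4}$, so nothing is lost.

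One shared caveat deserves mention. You say the offset $d^{3/2}\|\htheta-\thetas\|_2=O_P\big(d^2\sqrt{\log(d/\delta)/N}\big)$ is ``absorbed into the concentration term.'' Absorbing it into $\sqrt{d\log(d/\delta)/n}$ requires $k\gtrsim d^3$, and absorbing it instead into $d^{3/2}\|\ttheta^{(t-1)}-\htheta\|_2$ requires $\|\htheta-\thetas\|_2\lesssim\|\ttheta^{(t-1)}-\htheta\|_2$, which can fail at late iterations when the CSL error drops below the statistical error of $\htheta$. The paper's own final step---substituting $\|\ttheta^{(t-1)}-\thetas\|_2\le\|\ttheta^{(t-1)}-\htheta\|_2+\sqrt{d\log(d/\delta)/N}$ and immediately declaring the lemma---performs the same silent absorption, so this is not a gap specific to your argument; your closing remark that the offset needs verification is the right flag, and spelling out the two regimes would make your write-up strictly more careful than the paper's on this point.
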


\begin{proof}[Lemma \ref{lem:csl_glm}]
	We use the argument in the proof of Theorem~3 of \cite{jordan2019communication}.  Note by the triangle inequality that
	\begin{align*}
	&\left\|\ttheta^{(t)}-\htheta\right\|_2 \\
	&=\left\|\ttheta^{(t-1)}-\nabla^2\cL_1(\ttheta^{(t-1)})^{-1}\nabla\cL_N(\ttheta^{(t-1)})-\htheta\right\|_2 \\
	&\leq \left\|\ttheta^{(t-1)}-\nabla^2\cL_N(\ttheta^{(t-1)})^{-1}\nabla\cL_N(\ttheta^{(t-1)})-\htheta\right\|_2 + \left\|\left(\nabla^2\cL_N(\ttheta^{(t-1)})^{-1}-\nabla^2\cL_1(\ttheta^{(t-1)})^{-1}\right)\nabla\cL_N(\ttheta^{(t-1)})\right\|_2.
	\end{align*}
	To bound the first term on the right hand side, we have that
	\begin{align*}
	&\left\|\ttheta^{(t-1)}-\nabla^2\cL_N(\ttheta^{(t-1)})^{-1}\nabla\cL_N(\ttheta^{(t-1)})-\htheta\right\|_2 \\
	&= \left\|\ttheta^{(t-1)}-\htheta-\nabla^2\cL_N(\ttheta^{(t-1)})^{-1}\left(\nabla\cL_N(\ttheta^{(t-1)})-\nabla\cL_N(\htheta)\right)\right\|_2 \\
	&= \left\|\ttheta^{(t-1)}-\htheta-\nabla^2\cL_N(\ttheta^{(t-1)})^{-1}\int_0^1\nabla^2\cL_N(\htheta+s(\ttheta^{(t-1)}-\htheta))ds \left(\ttheta^{(t-1)}-\htheta\right)\right\|_2 \\
	&= \left\|\nabla^2\cL_N(\ttheta^{(t-1)})^{-1}\int_0^1\nabla^2\cL_N(\ttheta^{(t-1)})-\nabla^2\cL_N(\htheta+s(\ttheta^{(t-1)}-\htheta))ds \left(\ttheta^{(t-1)}-\htheta\right)\right\|_2 \\
	&\leq \matrixnorm{\nabla^2\cL_N(\ttheta^{(t-1)})^{-1}}_2\int_0^1\matrixnorm{\nabla^2\cL_N(\ttheta^{(t-1)})-\nabla^2\cL_N(\htheta+s(\ttheta^{(t-1)}-\htheta))}_2 ds \left\|\ttheta^{(t-1)}-\htheta\right\|_2.
	\end{align*}
	By the proof of Lemma~\ref{lem:m_glm}, we have that
	\begin{align*}
	\matrixnorm{\nabla^2\cL_N(\ttheta^{(t-1)})^{-1}}_2
	&\leq \matrixnorm{\nabla^2\cL_N(\ttheta^{(t-1)})^{-1}-\nabla^2\cLs(\thetas)^{-1}}_2 + \matrixnorm{\nabla^2\cLs(\thetas)^{-1}}_2 \\
	&\leq \matrixnorm{\nabla^2\cLs(\thetas)^{-1}}_2^2 \matrixnorm{\nabla^2\cL_N(\ttheta^{(t-1)})-\nabla^2\cLs(\thetas)}_2 + \matrixnorm{\nabla^2\cLs(\thetas)^{-1}}_2 \\
	&\lesssim \sqrt{\frac{d\log\frac d\delta}N}+\frac{d\log\frac d\delta}N+d^{3/2}\left\|\ttheta^{(t-1)}-\thetas\right\|_2+1,
	\end{align*}
	with probability at least $1-\delta$, and
	$$\matrixnorm{\nabla^2\cL_N(\ttheta^{(t-1)})-\nabla^2\cL_N(\htheta+s(\ttheta^{(t-1)}-\htheta))}_2\lesssim d^{3/2}\left\|\ttheta^{(t-1)}-\htheta\right\|_2,$$
	and thus,
	$$\left\|\ttheta^{(t-1)}-\nabla^2\cL_N(\ttheta^{(t-1)})^{-1}\nabla\cL_N(\ttheta^{(t-1)})-\htheta\right\|_2\lesssim\left(\sqrt{\frac{d\log\frac d\delta}N}+\frac{d\log\frac d\delta}N+d^{3/2}\left\|\ttheta^{(t-1)}-\thetas\right\|_2+1\right) d^{3/2}\left\|\ttheta^{(t-1)}-\htheta\right\|_2^2.$$
	To bound the second term, we have that
	\begin{align*}
	&\left\|\left(\nabla^2\cL_N(\ttheta^{(t-1)})^{-1}-\nabla^2\cL_1(\ttheta^{(t-1)})^{-1}\right)\nabla\cL_N(\ttheta^{(t-1)})\right\|_2 \\
	&\leq \matrixnorm{\nabla^2\cL_N(\ttheta^{(t-1)})^{-1}-\nabla^2\cL_1(\ttheta^{(t-1)})^{-1}}_2 \left\|\nabla\cL_N(\ttheta^{(t-1)})\right\|_2 \\
	&= \matrixnorm{\nabla^2\cL_N(\ttheta^{(t-1)})^{-1}-\nabla^2\cL_1(\ttheta^{(t-1)})^{-1}}_2 \left\|\nabla\cL_N(\ttheta^{(t-1)})-\nabla\cL_N(\htheta)\right\|_2 \\
	&\leq \matrixnorm{\nabla^2\cL_N(\ttheta^{(t-1)})^{-1}-\nabla^2\cL_1(\ttheta^{(t-1)})^{-1}}_2 \int_0^1\matrixnorm{\nabla^2\cL_N(\htheta+s(\ttheta^{(t-1)}-\htheta))}_2 ds \left\|\ttheta^{(t-1)}-\htheta\right\|_2.
	\end{align*}
	By the proof of Lemma~\ref{lem:m_glm}, we have that
	\begin{align*}
	\matrixnorm{\nabla^2\cL_N(\ttheta^{(t-1)})^{-1}-\nabla^2\cL_1(\ttheta^{(t-1)})^{-1}}_2
	&\leq \matrixnorm{\nabla^2\cL_N(\ttheta^{(t-1)})^{-1}-\nabla^2\cLs(\thetas)^{-1}}_2 + \matrixnorm{\nabla^2\cL_1(\ttheta^{(t-1)})^{-1}-\nabla^2\cLs(\thetas)^{-1}}_2 \\
	&\lesssim \sqrt{\frac{d\log\frac d\delta}n}+\frac{d\log\frac d\delta}n+d^{3/2}\left\|\ttheta^{(t-1)}-\thetas\right\|_2,
	\end{align*}
	with probability at least $1-\delta$, and
	\begin{align*}
	\matrixnorm{\nabla^2\cL_N(\htheta+s(\ttheta^{(t-1)}-\htheta))}_2
	&\leq \matrixnorm{\nabla^2\cL_N(\htheta+s(\ttheta^{(t-1)}-\htheta))-\nabla^2\cLs(\thetas)}_2 + \matrixnorm{\nabla^2\cLs(\thetas)}_2 \\
	&\lesssim \sqrt{\frac{d\log\frac d\delta}N}+\frac{d\log\frac d\delta}N+d^{3/2}\left(\left\|\ttheta^{(t-1)}-\thetas\right\|_2+\left\|\htheta-\thetas\right\|_2\right)+1 \\
	&\lesssim d^{3/2} \left\|\ttheta^{(t-1)}-\thetas\right\|_2 + 1,
	\end{align*}
	for $\delta\gtrsim e^{-N/d^4}$, provided that $N\gtrsim d^4\log d$, and thus,
	\begin{align*}
	&\left\|\left(\nabla^2\cL_N(\ttheta^{(t-1)})^{-1}-\nabla^2\cL_1(\ttheta^{(t-1)})^{-1}\right)\nabla\cL_N(\ttheta^{(t-1)})\right\|_2 \\
	&\lesssim\left(\sqrt{\frac{d\log\frac d\delta}n}+\frac{d\log\frac d\delta}n+d^{3/2}\left\|\ttheta^{(t-1)}-\thetas\right\|_2\right) \left(d^{3/2} \left\|\ttheta^{(t-1)}-\thetas\right\|_2 + 1\right) \left\|\ttheta^{(t-1)}-\htheta\right\|_2.
	\end{align*}
	Provided that $n\gtrsim d^4\log d$ and $\delta\gtrsim e^{-n/d^4}$, we have $d^{3/2} \left\|\ttheta^{(t-1)}-\thetas\right\|_2\lesssim 1$ for any $t\geq1$, and then,
	$$\left\|\ttheta^{(t)}-\htheta\right\|_2 \lesssim d^{3/2}\left\|\ttheta^{(t-1)}-\htheta\right\|_2^2 +  \left(\sqrt{\frac{d\log\frac d\delta}n}+d^{3/2}\left\|\ttheta^{(t-1)}-\thetas\right\|_2\right) \left\|\ttheta^{(t-1)}-\htheta\right\|_2.$$
	Since
	$$\left\|\ttheta^{(t-1)}-\thetas\right\|_2\leq\left\|\ttheta^{(t-1)}-\htheta\right\|_2+\left\|\htheta-\thetas\right\|_2\leq\left\|\ttheta^{(t-1)}-\htheta\right\|_2+\sqrt{\frac{d\log\frac{d}\delta}N},$$
	we obtain the bound in the lemma.
\end{proof}


\end{document}